\documentclass{article}

\usepackage{arxiv}

\usepackage[utf8]{inputenc} 
\usepackage[T1]{fontenc}    
\usepackage{hyperref}       
\usepackage{url}            
\usepackage{booktabs}       
\usepackage{amsfonts}       
\usepackage{nicefrac}       
\usepackage{microtype}      
\usepackage{lipsum}

\usepackage[title]{appendix}
\usepackage[nottoc]{tocbibind}

\pdfoutput=1



\title{The Trimmed Lasso: Sparse Recovery Guarantees and Practical Optimization by the Generalized Soft-Min Penalty\thanks{
Accepted to SIAM Journal on Mathematics of Data Science (SIMODS) on Jun 10, 
2021}}

\author{Tal Amir\thanks{Weizmann Institute of Science, Rehovot, IL 
  (\email{tal.amir@weizmann.ac.il},
  \email{ronen.basri@weizmann.ac.il}, \email{boaz.nadler@weizmann.ac.il}  ).}
\and Ronen Basri\footnotemark[2]
\and Boaz Nadler\footnotemark[2]}


\usepackage{bibunits}

\newcommand{\myparagraph}[1]{\paragraph{{#1}}}

\usepackage{amsmath}
\usepackage{amsthm}
\usepackage{hyperref}
\usepackage{cleveref}

\usepackage{lipsum}
\usepackage{amsfonts}
\usepackage{amssymb}
\usepackage{subfig}
\usepackage{graphicx}

\usepackage{pgfplots}
\pgfplotsset{compat=1.14}
\usetikzlibrary{plotmarks}

\usepackage{epstopdf}
\epstopdfDeclareGraphicsRule{.eps}{pdf}{.pdf}{%
  epstopdf --gsopt=-dCompatibilityLevel=1.5 "--outfile=\OutputFile" "#1" }
  
\epstopdfsetup{
    suffix=,
}
\usepackage{placeins}
  

\usepackage{tabulary}
\usepackage{multirow}

\usepackage{caption} 


\makeatletter
\newcommand*{\addFileDependency}[1]{
  \typeout{(#1)}
  \@addtofilelist{#1}
  \IfFileExists{#1}{}{\typeout{No file #1.}}
}
\makeatother

\usepackage{algorithmicx}
\usepackage{algorithm}
\usepackage{algpseudocode}
\algnewcommand\algorithmicinput{\textbf{Input:}}
\algnewcommand\algorithmicoutput{\textbf{Output:}}
\algnewcommand\algorithmicvariables{\textbf{Arrays:}}
\algnewcommand\Input{\item[\algorithmicinput]}
\algnewcommand\Output{\item[\algorithmicoutput]}
\algnewcommand\Variables{\item[\algorithmicvariables]}

\usepackage{etoolbox}

\makeatletter
\AfterEndEnvironment{algorithm}{\let\@algcomment\relax}
\AtEndEnvironment{algorithm}{\kern2pt\hrule\relax\vskip3pt\@algcomment}
\let\@algcomment\relax
\newcommand\algcomment[1]{\def\@algcomment{\footnotesize#1}}
\renewcommand\fs@ruled{\def\@fs@cfont{\bfseries}\let\@fs@capt\floatc@ruled
  \def\@fs@pre{\hrule height.8pt depth0pt \kern2pt}%
  \def\@fs@post{}%
  \def\@fs@mid{\kern2pt\hrule\kern2pt}%
  \let\@fs@iftopcapt\iftrue}
\makeatother

\ifpdf
  \DeclareGraphicsExtensions{.eps,.pdf,.png,.jpg}
\else
  \DeclareGraphicsExtensions{.eps}
\fi

\graphicspath{{eps/}}

\newcommand{\subfighspace} {{\ \ \ \ }}

\usepackage[T1]{fontenc}
\usepackage[utf8]{inputenc}
\usepackage{mathtools}

\usepackage{listings}

\lstset{basicstyle=\ttfamily, mathescape=true, lineskip=1.75pt, frame=none, frameround=ffff}


\usepackage{enumitem}
\setlist[enumerate]{leftmargin=.5in}
\setlist[itemize]{leftmargin=.5in}


\usepackage{xparse}




\numberwithin{equation}{section}
\crefname{equation}{Equation}{Equations}

\newtheorem{theorem}{Theorem}
\numberwithin{theorem}{section}
\newtheorem{lemma}[theorem]{Lemma}
\newtheorem{corollary}[theorem]{Corollary}

\newtheorem*{definition*}{Definition}

\usepackage{amsopn}

\DeclareMathOperator*{\argmin}{\arg\!\min}
\DeclareMathOperator*{\sign}{sign}
\DeclareMathOperator*{\mean}{mean}


\newcommand{\mb}[1]{ \mbox{\boldmath$#1$} }

\newcommand{\beq}{\begin{equation}}
\newcommand{\eeq}{\end{equation}}
\newcommand{\beqr}{\begin{eqnarray}}
\newcommand{\eeqr}{\end{eqnarray}}

\newcommand{\x}{\mb{x}}
\newcommand{\xhat}{\mb{\hat{\x}}}
\newcommand{\xtilde}{\mb{\tilde{\x}}}
\newcommand{\xstar}{{\x^*}}
\newcommand{\e}{\mb{e}}
\newcommand{\y}{\mb{y}}

\newcommand{\uvec}{\mb{u}}
\newcommand{\vvec}{\mb{v}}

\newcommand{\bvec}{{\bf b}}

\newcommand{\zerovec}{{\bf 0}}

\newcommand{\w}{\mathbf{w}}

\newcommand{\floor}[1]{{ \left \lfloor {#1} \right \rfloor }}

\newcommand{\brs}[1]{{ \left[ {#1} \right] }} 
\newcommand{\bra}[1]{{ \left< {#1} \right> }} 
\newcommand{\brc}[1]{{ \left \{ {#1} \right \} }} 
\newcommand{\br}[1]{{ \left( {#1} \right) }} 

\newcommand{\bigbr}[1]{{ \bigl( {#1} \bigr) }} 
\newcommand{\Bigbr}[1]{{ \Bigl( {#1} \Bigr) }} 
\newcommand{\Biggbr}[1]{{ \Biggl( {#1} \Biggr) }} 

\newcommand{\Bigbrs}[1]{{ \Bigl[ {#1} \Bigr] }} 
\newcommand{\Biggbrs}[1]{{ \Biggl[ {#1} \Biggr] }} 


\newcommand{\abs}[1]{{ \left| {#1} \right| }} 
\newcommand{\xord}[1]{{{\left|{\mb{x}}\right|}_{\left({#1}\right)}}}
\newcommand{\norm}[1]{{ \left \| {#1} \right \| }} 
\newcommand{\eqdef} {{\ = \ }}
\newcommand{\assign} {{\ := \ }}
\newcommand{\subplus}[1]{ [{#1}]_+}

\newcommand{\ofx}{(\x)}
\newcommand{\ofxhat}{(\hat{\x})}
\newcommand{\ofxstar}{(\xstar)}

\newcommand{\z}{{\bf z}}

\newcommand{\ofz} {{\left( \z \right)}}

\newcommand{\ofzleft} {{\left( \zleft \right)}}
\newcommand{\ofzright} {{\left( \zright \right)}}

\newcommand{\qhat}{{\hat q}}

\newcommand{\nbdash}[0]{{\nobreakdash-\ignorespaces}}

\newcommand{\twosetbinom}[4]
{\alpha_{{#3},{#4}}^{{#1},{#2}}}

\newcommand{\leftmod}[1]{{\dot{#1}}}
\newcommand{\rightmod}[1]{{\ddot{#1}}}

\newcommand{\zleft}{{\leftmod{\z}}}
\newcommand{\zright}{{\rightmod{\z}}}

\newcommand{\zleftentry}{{\leftmod{z}}}
\newcommand{\zrightentry}{{\rightmod{z}}}

\newcommand{\dleft}{{\leftmod{d}}}
\newcommand{\dright}{{\rightmod{d}}}

\newcommand{\projk} {{\Pi _k}}
\newcommand{\tauk} {{{\tau}_{k}}}
\newcommand{\taudmk} {{{\tau}_{d-k}}}

\newcommand{\tauargs}[2] {{{\tau}_{{#1},{#2}}}}
\newcommand{\taukg} {{{\tau}_{k,\gamma}}}
\newcommand{\taukzero} {{{\tau}_{k,0}}}
\newcommand{\taukinfty} {{{\tau}_{k,\infty}}}

\newcommand{\mukg} {{\mu_{k,\gamma}}}
\newcommand{\muargs}[2] {{\mu_{{#1},{#2}}}}

\newcommand{\gsm} {{{\textrm{$\boldsymbol{\theta}$}}_{k,\gamma}}}
\newcommand{\gsmargs}[2] {{{\textrm{$\boldsymbol{\theta}$}}_{{#1},{#2}}}}
\newcommand{\gsmi}[1] {{{\theta_{k,\gamma}^{#1}}}}
\newcommand{\gsmiargs}[3] {{{\theta_{{#1},{#2}}^{#3}}}}
\newcommand{\boldtheta} {{{\textrm{$\boldsymbol{\theta}$}}}}

\newcommand{\Ftwol} {{\mathrm{F}_{\lambda}}}
\newcommand{\Ftwolg} {{\mathrm{F}_{\lambda,\gamma}}}
\newcommand{\Ftwolzero} {{\mathrm{F}_{\lambda,0}}}
\newcommand{\Ftwolinf} {{\mathrm{F}_{\lambda,\infty}}}
\newcommand{\Ftwolw} {{\mathrm{F}_{\lambda,\w}}}
\newcommand{\Gtwolg} {{\mathrm{G}_{\lambda,\gamma}}}
\newcommand{\Gtwolzero} {{\mathrm{G}_{\lambda,0}}}
\newcommand{\Gtwolinf} {{\mathrm{G}_{\lambda,\infty}}}
\newcommand{\Fonel} {{\mathrm{F}^1_{\lambda}}}
\newcommand{\Fonelg} {{\mathrm{F}^1_{\lambda,\gamma}}}
\newcommand{\Fonelw} {{\mathrm{F}^1_{\lambda,\w}}}

\newcommand{\Gonelg} {{\mathrm{G}^1_{\lambda,\gamma}}}

\newcommand{\taumajkg} {{{\phi}_{k,\gamma}}}

\newcommand{\taumajkinf} {{{\phi}_{k,\infty}}}

\newcommand{\lambdasmall} {{\lambda_a}}
\newcommand{\lambdalarge} {{\lambda_b}}
\newcommand{\lambdabar} {{\bar{\lambda}}}
\newcommand{\ripmin} {{\alpha_{2k}}}

\newcommand{\wfunc} {{\w_{k,\gamma}}}

\newcommand{\wfuncidx}[1] {{w_{k,\gamma}^{#1}}}
\newcommand{\wfunczeroidx}[1] {{w_{k,0}^{#1}}}

\newcommand{\wfunckinfty} {{\w_{k,\infty}}}
\newcommand{\wargs}[3] {w_{{#1},{#2}}^{#3}}

\newcommand{\xstatset}{{\mathcal{X}_{\mathrm{stat}}}}
\newcommand{\xambset}{{\mathcal{X}_{\mathrm{amb}}}}
\newcommand{\xoptset}{{\mathcal{X}_{\mathrm{lmin}}}}

\newcommand{\logonep}{{\mathrm{log1p}}}
\newcommand{\expmone}{{\mathrm{expm1}}}

\renewcommand{\subplus}[1]{{\brs{{#1}}_+}}
\newcommand{\subminus}[1]{{\brs{{#1}}_-}}

\newcommand{\setst}[2]{ \left\{ {#1} \,\middle|\, {#2} \right\} }

\newcommand{\inprod}[2]{{ \bra{{#1},\ {#2}} }}

\newcommand{\hadamard} {{\odot}} 

\newcommand{\argdot} {{\boldsymbol{\cdot}}}

\newcommand{\twofloors}[2]
{{ 
	\begin{matrix}
		{#1} \\
		{#2}
	\end{matrix}
}}

\newcommand{\threefloors}[3]
{{ 
	\begin{matrix}
		{#1} \\
		{#2} \\
		{#3}
	\end{matrix}
}}

\newcommand{\twocase}[4]
{{\left \{ 
\begin{matrix}
	{#1} & {#2} \\[10pt]
	{#3} & {#4}
\end{matrix}
\right.}}

\newcommand{\threecase}[6]
{{\left \{ 
		\begin{matrix}
			{#1} & {#2} \\[10pt]
			{#3} & {#4} \\[10pt]
			{#5} & {#6}
		\end{matrix}
		\right.}}

\newcommand{\fourcase}[8]
{{\left \{ 
		\begin{matrix}
			{#1} & {#2} \\[10pt]
			{#3} & {#4} \\[10pt]
			{#5} & {#6} \\[10pt]
			{#7} & {#8}
		\end{matrix}
		\right.}}

\DeclareDocumentCommand \R{ o }{%
	{ \mathbb{R}
		\IfNoValueF {#1} {^{#1}}%
	}
}

\DeclareDocumentCommand \Z{ o }{%
	{ \mathbb{Z}
		\IfNoValueF {#1} {^{#1}}%
	}
}

%
%
\DeclareDocumentCommand \eq{ o o g}
{	\IfNoValueTF {#1}
		{   
			\begin{equation*} \begin{split} #3 \end{split} \end{equation*}}
		{	\IfNoValueTF {#2}
			{\begin{equation} {\label{#1}} \begin{split} #3 \end{split} \end{equation}}
			{\begin{equation*} {\label{#1}} {\tag{#2}}  \begin{split} #3 \end{split} \end{equation*}}	}	} 

\DeclareDocumentCommand \summ{ o o }{%
	{ 
		\IfNoValueTF {#1}
		{   
			\sum 
		}	
		{	
			\IfNoValueTF {#2}		
			{ 
				\sum_{#1} 
			}
			{ 
				\sum_{#1}^{#2}
			}		
		}
	}
}

\DeclareDocumentCommand \summfixed{ o o }{%
	{ 
		\IfNoValueTF {#1}
		{   
			\sum 
		}	
		{	
			\IfNoValueTF {#2}		
			{ 
				\underset{\scriptstyle {#1}}{\sum} 
			}
			{ 
				\overset{\scriptstyle {#2}}{\underset{\scriptstyle {#1}}{\sum}}
			}		
		}
	}
}

\DeclareDocumentCommand \evalat{ g o o }{%
	{ 
		\IfNoValueTF {#2}
		{   
			\phantom{\bigg\lvert}{#1}\bigg\rvert
		}	
		{	
			\IfNoValueTF {#3}		
			{ 
				\phantom{\bigg\lvert}{#1}\bigg\rvert_{#2}
			}
			{ 
				\phantom{\bigg\lvert}{#1}\bigg\rvert_{#2}^{#3} 
			}		
		}
	}
}

\DeclareDocumentCommand \app{ o o }{%
	{ 
		\IfNoValueTF {#1}
		{   
			\IfNoValueTF {#2}
			{ 
				\rightarrow
			}
			{ 
				\overset{#2}{\longrightarrow}
			}
		}	
		{	
			\IfNoValueTF {#2}
			{ 
				\underset{#1}{\longrightarrow}
			}
			{ 
				\underset{#1}{\overset{#2}{\longrightarrow}}
			}
		}
	}
}

\DeclareDocumentCommand \limit{ o }{%
	{ 
		\IfNoValueTF {#1}
		{   
    		{\lim}
		}	
		{	
            \underset{#1}{\lim}
        }
	}
}



\usepackage{xcolor}
\definecolor{colred}{rgb}{0.75, 0, 0}
\definecolor{colviolet}{rgb}{0.5, 0, 1}

\ifpdf
\hypersetup{
  pdftitle={The Trimmed Lasso},
  pdfauthor={T. Amir, R. Basri, and B. Nadler},
  colorlinks=true,
  linkcolor=[rgb]{0,0.4,0},
  citecolor=[rgb]{0,0,0.7},
  urlcolor=[rgb]{0.5,0,0}
}
\fi




\ifpdf
\hypersetup{
  pdftitle={The Trimmed Lasso: Sparse recovery guarantees and practical optimization by the Generalized Soft-Min Penalty},
  pdfauthor={T. Amir, R. Basri, and B. Nadler}
}
\fi

\crefformat{equation}{\textup{#2(#1)#3}}
\crefrangeformat{equation}{\textup{#3(#1)#4--#5(#2)#6}}
\crefmultiformat{equation}{\textup{#2(#1)#3}}{ and \textup{#2(#1)#3}}
{, \textup{#2(#1)#3}}{, and \textup{#2(#1)#3}}
\crefrangemultiformat{equation}{\textup{#3(#1)#4--#5(#2)#6}}%
{ and \textup{#3(#1)#4--#5(#2)#6}}{, \textup{#3(#1)#4--#5(#2)#6}}{, and \textup{#3(#1)#4--#5(#2)#6}}

\Crefformat{equation}{Equation~#2\textup{(#1)}#3}
\Crefrangeformat{equation}{Equations~\textup{#3(#1)#4--#5(#2)#6}}
\Crefmultiformat{equation}{Equations~\textup{#2(#1)#3}}{ and \textup{#2(#1)#3}}
{, \textup{#2(#1)#3}}{, and \textup{#2(#1)#3}}
\Crefrangemultiformat{equation}{Equations~\textup{#3(#1)#4--#5(#2)#6}}%
{ and \textup{#3(#1)#4--#5(#2)#6}}{, \textup{#3(#1)#4--#5(#2)#6}}{, and \textup{#3(#1)#4--#5(#2)#6}}

\crefname{section}{Section}{Sections}
\Crefname{section}{Section}{Sections}
\crefformat{section}{#2Section~\textup{#1}#3}
\Crefformat{section}{#2Section~\textup{#1}#3}

\crefname{appendix}{Appendix}{Appendices}
\Crefname{appendix}{Appendix}{Appendices}
\crefformat{appendix}{#2Appendix~\textup{#1}#3}
\Crefformat{appendix}{#2Appendix~\textup{#1}#3}

\crefformat{figure}{#2figure~\textup{#1}#3}
\Crefformat{figure}{#2Figure~\textup{#1}#3}

\crefname{theorem}{Theorem}{Theorems}
\Crefname{theorem}{Theorem}{Theorems}
\crefformat{theorem}{#2Theorem~\textup{#1}#3}
\Crefformat{theorem}{#2Theorem~\textup{#1}#3}

\crefname{lemma}{Lemma}{Lemmas}
\Crefname{lemma}{Lemma}{Lemmas}
\crefformat{lemma}{#2Lemma~\textup{#1}#3}
\Crefformat{lemma}{#2Lemma~\textup{#1}#3}

\crefname{corollary}{Corollary}{Corollaries}
\Crefname{corollary}{Corollary}{Corollaries}
\crefformat{corollary}{#2Corollary~\textup{#1}#3}
\Crefformat{corollary}{#2Corollary~\textup{#1}#3}

\crefname{algorithm}{Algorithm}{Algorithms}
\Crefname{algorithm}{Algorithm}{Algorithms}
\crefformat{algorithm}{#2Algorithm~\textup{#1}#3}
\Crefformat{algorithm}{#2Algorithm~\textup{#1}#3}

\begin{document}

\author{Tal Amir, Ronen Basri, and Boaz Nadler\thanks{Weizmann Institute of Science, Rehovot, IL. Emails: \{\texttt{tal.amir, ronen.basri, boaz.nadler}\}\texttt{@weizmann.ac.il}}}

\maketitle

\begin{abstract}
We present a new approach to solve the sparse approximation or best subset selection problem, namely find a $k$-sparse vector ${\bf x}\in\mathbb{R}^d$ that minimizes the $\ell_2$ residual $\lVert A{\bf x}-{\bf y} \rVert_2$. We consider a regularized approach, whereby this residual is penalized by the non-convex $\textit{trimmed lasso}$, defined as the $\ell_1$-norm of ${\bf x}$ excluding its $k$ largest-magnitude entries. We prove that the trimmed lasso has several appealing theoretical properties, and in particular derive sparse recovery guarantees assuming successful optimization of the penalized objective. Next, we show empirically that directly optimizing this objective can be quite challenging. Instead, we propose a surrogate for the trimmed lasso, called the $\textit{generalized soft-min}$. This penalty smoothly interpolates between the classical lasso and the trimmed lasso, while taking into account all possible $k$-sparse patterns. The generalized soft-min penalty involves summation over $\binom{d}{k}$ terms, yet we derive a polynomial-time algorithm to compute it. This, in turn, yields a practical method for the original sparse approximation problem. Via simulations, we demonstrate its competitive performance compared to current state of the art.
\end{abstract}

\section{Introduction}
\label{sec:intro}
%
Consider the {sparse approximation} or {best subset selection} problem: Given an $n\times d$ matrix $A$, a vector $\y\in\mathbb{R}^n$ and a sparsity parameter $k\ll\min\brc{n,d}$, solve 
\begin{equation}
        \tag{P0}
                \label{pr:P0}
\underset{\x}{\min} \ \norm{A \x- \y}_2 \quad \mbox{s.t.} \quad \norm{\x}_0 \leq k.
\end{equation}
The inverse problem \cref{pr:P0} and related variants play a key role in multiple fields. Examples include signal and image processing \cite{bruckstein2009sparse,elad2010sparsebook,Mallat_book}, 
compressed sensing \cite{eldar2012compressed,Foucart_Rauhut},
medical imaging \cite{lustig2007sparse,provost2009application}, computer vision \cite{Wright}, 
high dimensional statistics \cite{Hastie_book}, 
biology \cite{Wu_Lasso} and economics \cite{brodie2009sparse,fan2011sparse}.
While in few cases the sparsity level $k$ is known, it often needs to be estimated. Typically one solves \cref{pr:P0} for several values of $k$ and 
applies cross validation \cite{ward2009compressed} or a model selection criterion. Here we focus on solving \cref{pr:P0} for a given value of $k$.

Even though solving \cref{pr:P0} is NP-hard \cite{davis1997adaptive,natarajan1995sparse},  several approaches were developed to seek approximate solutions. 
One approach is to search for the best $k$ variables greedily, 
adding one or several variables at a time, and possibly discarding some of the previously chosen ones. Examples include iterative thresholding,
matching pursuit and forward-backward methods; see \cite{blumensath2008ihtfirst,dai2009subspace,davis1994,mallat1993matching,Miller_book,needell2009cosamp,needell2010signal,pati1993orthogonal,
rebollo2002optimized,Tropp_Gilbert} and references therein. 

A different approach is to replace the constraint $\norm{\x}_0 \leq k$
by a penalty term $\rho\ofx$, and instead of \cref{pr:P0} solve the regularized problem 
\begin{equation}
\label{pr:P_rho}  
        \min_{\x}\ \tfrac{1}{2} \norm{A\x-\y }^2 + \lambda \rho\ofx.
\end{equation}
With few exceptions detailed below, most penalties are separable and do not depend on $k$.
For some scalar function $f(x)$, they take the form $\rho\ofx = \sum_{i=1}^d f\br{x_i}$. 
A $k$-sparse solution can be obtained by tuning the penalty parameter $\lambda$ and projecting the obtained solutions to be $k$-sparse.
The most popular penalty is the $\ell_1$ norm
\cite{chen2001atomic,tibshirani1996regression}.
Many fast algorithms were developed to optimize \cref{pr:P_rho} with
a convex $\rho(\x)$; see \cite{beck2009fast,Friedman_Reg_Path,Yin_Osher} and references therein.

In many practical settings, matching pursuit algorithms and the $\ell_1$-penalty approaches may output solutions that are quite suboptimal for the original problem \cref{pr:P0}. Thus, various nonconvex penalties have been proposed, such as $\ell_p$ penalties with $p<1$ \cite{chartrand2007exact,figueiredo2005bound}, minimax concave penalty \cite{zhang2010nearly}, smoothly clipped absolute deviation \cite{fan2001variable, hunter2005variable, zou2008one} and the smoothed $\ell_0$ penalty \cite{mohimani2009fast}. 
For even better solutions to \cref{pr:P0}, non-separable penalties were proposed \cite{bogdan2015slope,selesnick2017sparse,wipf2010iterative,Zeng}.  Local minima of these penalized objectives are
often found by iteratively reweighted least squares (IRLS) \cite{chartrand2007exact,chartrand2008iteratively,daubechies2010iteratively,figueiredo2007majorization} or iteratively reweighted $\ell_1$ (IRL1) 
\cite{candes2008enhancing,figueiredo2007majorization,foucart2009sparsest}.

A different approach to \cref{pr:P0} is to pose it as a {mixed integer program} (MIP) and solve it globally; see for example \cite{furnival1974regressions,arthanari1981mathematical,bienstock1996computational}. 
MIP-based methods update both the current solution and a lower bound on the optimal objective. When the current objective
equals this lower bound, the algorithm terminates with a certificate of global optimality. 
With vast improvements in algorithmic efficiency and computing power, this approach gained increased popularity, and various works illustrated its applicability to problems with thousands of variables
\cite{liu2007variable,bertsimas2009algorithm,bertsimas2016best,mazumder2017discrete,bertsimas2020sparse}. 
Recently, \cite{bertsimas2020sparse}
developed a cutting-plane method able to globally solve problems with $d=100000$
variables in less than a minute, provided the sparsity $k$ is not too high. 
MIP may also be combined with other approaches. One recent example is \cite{hazimeh2020fast},
whereby first a local optimum is found by coordinate descent, and then improved upon by a local combinatorial search, which replaces only a subset of the selected variables.
This method is extremely fast and can handle problems with even a million variables or more.
However, it is not guaranteed to find the globally optimal solution. Rather, it finds a local minimum that cannot be improved by replacing a small number of variables.  
Yet, as we demonstrate 
in \cref{sec:numerical_experiments}, despite these significant advancements in MIP-based approaches, our proposed method can successfully handle challenging settings in practical time, where the above methods either take a prohibitive runtime, or find highly suboptimal solutions.
%

%

In this work we focus on the non-separable \emph{trimmed lasso} penalty $\tauk \ofx$, whereby a candidate vector $\x$ is penalized by its $\ell_1$-distance to the nearest $k$-sparse vector. Namely,
\begin{equation}
	\label{eq:def_tauk_a}
\tauk \ofx = \sum_{i=k+1}^d \xord{i},
\end{equation}
where $\xord{1} \geq \xord{2} \geq \ldots \geq \xord{d}$
are the absolute values of the entries of $\x$, sorted in decreasing order. The original problem~\cref{pr:P0} is replaced by either the 
regularized problem
\begin{equation}
\label{pr:P2l} 
\min_{\x}\ \Ftwol \ofx \eqdef \tfrac{1}{2} \norm{ A\x - \y }_2^2 + \lambda \tauk\ofx,
\end{equation}
or by the following variant, which involves the residual norm to the power 1 rather than 2,
\begin{equation}
	\label{pr:P1l}
    \underset{\x}{\min} \ \Fonel \ofx \eqdef \norm{A \x- \y}_2 + \lambda \tauk \ofx.
    \end{equation}

An important property of the trimmed lasso is that it explicitly promotes the sparsity level $k$ of the original problem~\cref{pr:P0}, 
with $\tauk\ofx=0$ if and only if $\x$ is $k$-sparse. 
The penalty $\tauk \ofx$ was discussed in \cite{gotoh2018dc,tono2017efficient}, 
who developed a {difference-of-convex} (DC) programming scheme to optimize 
\cref{pr:P2l}. A closely related penalty that combines 
the lasso and the trimmed lasso was studied independently in \cite{huang2015two}, where a similar DC programming scheme was proposed. The name \emph{trimmed lasso} was coined by 
\cite{bertsimas2017trimmed}, who proposed an {alternating direction method of 
multipliers} (ADMM) scheme. Recently, \cite{yun2019trimming} studied the statistical properties of the trimmed
lasso and developed a block coordinate descent algorithm to optimize \cref{pr:P2l}.

In this work we make several theoretical and algorithmic contributions towards solving the best subset selection problem \cref{pr:P0}.
First, in \cref{sec:theory_TRL} we advocate solving \cref{pr:P0} by minimizing trimmed-lasso penalized objectives.
In particular, we prove that for a sufficiently large penalty parameter $\lambda$, the local minimizers of \cref{pr:P2l} and of \cref{pr:P1l}
are $k$-sparse. Hence, the global minima coincide with those of the original problem \cref{pr:P0}. 
Our results extend those of
\cite{gotoh2018dc,bertsimas2017trimmed}.
Second, we present novel recovery guarantees when optimizing  \cref{pr:P2l} or 
\cref{pr:P1l} for any positive $\lambda$. 
Specifically, assuming that $\y=A\x_0+\mb{e}$, with $\x_0$ approximately $k$\nbdash sparse, we study how well $\x_0$ can be estimated
by optimizing \cref{pr:P2l} or \cref{pr:P1l}. We prove that even at low values of $\lambda$, where optimal solutions of these problems may not coincide with those of \cref{pr:P0}, $\x_0$ can still be well-approximated by these solutions, with the recovery stable to the measurement error $\mb{e}$.

Our theoretical results motivate the development of practical methods to optimize the trimmed-lasso regularized 
objectives \cref{pr:P2l} and \cref{pr:P1l}. 
Unfortunately, directly optimizing these objectives can be challenging. 
One of our key contributions is the development of a new method to solve \cref{pr:P2l} and \cref{pr:P1l}. 
As we show in \cref{sec:numerical_experiments}, our method is able to find solutions 
with significantly lower
objective values than of those computed by DC programming and ADMM. 

In our approach we replace the trimmed lasso $\tau_k\ofx$ by a surrogate penalty called the \emph{generalized soft-min} and denoted  $\tau_{k,\gamma}\ofx$. 
Our proposed penalty depends not only on the sparsity level $k$ but also on a smoothness
parameter $\gamma$. As described in \cref{sec:gsm_penalty}, $\taukg \ofx$ has two important properties: First, for any finite $\gamma$, 
it is $C^\infty$-smooth as a function of $\abs{\x} = \br{|x_1|,\ldots,|x_d|}$. 
This facilitates the use of continuous optimization techniques.  
Second, as $\gamma$ increases, 
$\taukg\ofx$
varies smoothly from the convex $\ell_1$\nbdash norm (at $\gamma=0$) to the trimmed lasso (at $\gamma=\infty$). 
For any $\gamma>0$, $\taukg \ofx$ takes into account all $\binom{d}{k}$ sparsity patterns of $\x$. Its na\"{i}ve computation is thus intractable. Another important contribution, described in \cref{sec:algorithm_gsm},
is the development of a polynomial time algorithm to calculate it in $\mathcal O(k d)$ operations. 

Given the  aforementioned properties of $\taukg$,
in \cref{sec:gsm_penalty} we describe the following approach to
solve the trimmed lasso penalized \cref{pr:P2l}: 
We replace $\tauk$ by $\taukg$, start
from an easy convex problem at $\gamma=0$, and smoothly transform it to \cref{pr:P2l} 
by increasing $\gamma$ towards infinity. 
At $\gamma=0$, our penalty coincides with the lasso, leading to a convex problem. 
Next, we gradually increase $\gamma$ while tracing the path of solutions. 
At each intermediate $\gamma$, we optimize the corresponding objective
by a majorization-minimization scheme, initialized at the 
solution found for the previous $\gamma$.  
Empirically, the resulting path of solutions often converges to a better solution of \cref{pr:P2l} than that obtained by directly optimizing \cref{pr:P2l} with the nonsmooth trimmed lasso.

Finally, we seek solutions to the original problem \cref{pr:P0} by solving
\cref{pr:P2l}  for several values of $\lambda$, followed by projecting each solution to the nearest $k$\nbdash sparse vector and solving a least-squares problem on its support. 
The $k$-sparse vector with the smallest residual norm $\norm{A\x-\y}_2$ is chosen. As we demonstrate empirically in \cref{sec:numerical_experiments}, 
optimizing our smooth surrogate of the trimmed lasso yields 
state-of-the-art results in sparse recovery. 

\vspace{1em}

\myparagraph{Notations and Definitions} We denote by ${\bf a}_i$ the $i$-th column of the $n\times d$ matrix $A$. We denote  $[d]=\{1,\ldots,d\}$. 
For a scalar $x$, $\subplus{x} = \max\brc{x,0}$ and $\subminus{x} = \max\brc{-x,0}$. 
For a vector $\x$, $\projk(\x)$ is its $k$-sparse projection, namely the $k$-sparse vector closest to $\x$ in $\ell_1$-norm, breaking ties arbitrarily. 
For a function $f  : \R[d] \rightarrow \R$, we denote its  directional derivative at a point $\x \in \R[d]$ in direction $\vvec \in \R[d]$ 
by 
\eq{\ensuremath{\nabla_{\vvec} f  \ofx \eqdef \lim_{t \searrow 0} \frac{f  \br{\x+t\vvec} - f  \br{\x}}{t}.}}

\section{Theory for the trimmed lasso}
%
\label{sec:theory_TRL}
%
In this section we 
study the following two key theoretical questions related to the trimmed lasso penalty: 
(i) what is the relation between minimizers of  \cref{pr:P2l} or \cref{pr:P1l} and those of the original problem \cref{pr:P0}, and do the two coincide for sufficiently large $\lambda$? and 
(ii)  assuming that $\x_0$ is approximately $k$\nbdash sparse, can it be recovered from an observed vector $\y=A\x_0+\mb{e}$ by optimizing problems \cref{pr:P2l} or \cref{pr:P1l}? 
The practical question of how to optimize \cref{pr:P2l} or \cref{pr:P1l} is addressed in \cref{sec:gsm_penalty}. 
Proofs are in \cref{sec:proofs_theory_tls_appendix}.

\subsection{Penalty thresholds}
\label{sec:penalty_thresholds}
%
Denote by $\lambdabar$ the threshold
\begin{equation}
    \label{eq:lambdabar}
     \lambdabar \eqdef \norm{\y}_2 \cdot \max_{i=1,\ldots,d} \norm{\mb{a}_i}_2.
 \end{equation}
The following \lcnamecref{thm:large_lambda_p2l} shows that for large $\lambda$, problem~\cref{pr:P2l} is intimately related to \cref{pr:P0}.  
\begin{theorem} \label{thm:large_lambda_p2l}
If $\lambda > \lambdabar$, then any local minimum of \cref{pr:P2l} is $k$\nbdash sparse.
\end{theorem}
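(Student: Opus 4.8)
The plan is to argue by contradiction. Suppose $\xhat$ is a local minimizer of \cref{pr:P2l} with $\tauk\ofxhat > 0$ (i.e.\ not $k$\nbdash sparse), and derive $\lambda \le \lambdabar$. Write $\mb{\hat r} \eqdef \xhat - \projk\ofxhat$, so that $\mb{\hat r}$ is supported on the coordinates outside the $k$ largest-magnitude entries of $\xhat$ and $\norm{\mb{\hat r}}_1 = \tauk\ofxhat > 0$. I would exploit the first-order optimality condition $\nabla_{\vvec}\Ftwol\ofxhat \ge 0$, which holds for every direction $\vvec$ at a local minimum (the directional derivative exists because $\tauk$ is a difference of two convex functions), applied along two carefully chosen directions, using two elementary properties of $\tauk$: positive homogeneity, $\tauk(c\x) = c\,\tauk\ofx$ for $c \ge 0$; and $\tauk(\xhat - t\mb{\hat r}) \le (1-t)\tauk\ofxhat$ for $t \in [0,1]$, since $\xhat - t\mb{\hat r}$ lies within $\ell_1$-distance $(1-t)\norm{\mb{\hat r}}_1$ of the $k$\nbdash sparse vector $\projk\ofxhat$.

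First I would take $\vvec = -\xhat$. By positive homogeneity, $\nabla_{-\xhat}\Ftwol\ofxhat = -\inprod{A\xhat}{A\xhat-\y} - \lambda\tauk\ofxhat \ge 0$, which rearranges to $\inprod{A\xhat}{\y} \ge \norm{A\xhat}_2^2 + \lambda\tauk\ofxhat$. Substituting this into $\norm{A\xhat - \y}_2^2 = \norm{A\xhat}_2^2 - 2\inprod{A\xhat}{\y} + \norm{\y}_2^2$ yields
\[
  \norm{A\xhat - \y}_2^2 \ \le\ \norm{\y}_2^2 - \norm{A\xhat}_2^2 - 2\lambda\tauk\ofxhat \ \le\ \norm{\y}_2^2 ,
\]
so $\norm{A\xhat-\y}_2 \le \norm{\y}_2$: the residual at any local minimizer cannot exceed the residual at $\x = \zerovec$.

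Then I would take $\vvec = -\mb{\hat r}$. Using $\tauk(\xhat - t\mb{\hat r}) \le (1-t)\tauk\ofxhat$ in the difference quotient and letting $t \searrow 0$ gives $\nabla_{-\mb{\hat r}}\Ftwol\ofxhat \le -\inprod{A\mb{\hat r}}{A\xhat-\y} - \lambda\tauk\ofxhat$, so optimality forces $\lambda\tauk\ofxhat \le -\inprod{A\mb{\hat r}}{A\xhat-\y} \le \norm{A\mb{\hat r}}_2\,\norm{A\xhat-\y}_2$. Since $\norm{A\mb{\hat r}}_2 \le \sum_i \abs{\hat r_i}\,\norm{\mb{a}_i}_2 \le \bigbr{\max_i\norm{\mb{a}_i}_2}\norm{\mb{\hat r}}_1 = \bigbr{\max_i\norm{\mb{a}_i}_2}\tauk\ofxhat$, combining with $\norm{A\xhat-\y}_2 \le \norm{\y}_2$ gives $\lambda\tauk\ofxhat \le \lambdabar\,\tauk\ofxhat$. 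Dividing by $\tauk\ofxhat > 0$ contradicts $\lambda > \lambdabar$, so $\tauk\ofxhat = 0$, i.e.\ $\xhat$ is $k$\nbdash sparse.

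The hard part is the residual bound $\norm{A\xhat - \y}_2 \le \norm{\y}_2$: for a global minimizer it would be immediate by comparison with $\x = \zerovec$, but at a mere local minimizer that comparison is unavailable, and the resolution is the scaling perturbation $\vvec = -\xhat$ together with the positive homogeneity of $\tauk$, which makes the penalty term enter with a sign that helps rather than hurts. The rest is bookkeeping — verifying that the one-sided directional derivatives of $\Ftwol$ exist and that the inequality $\tauk(\xhat - t\mb{\hat r}) \le (1-t)\tauk\ofxhat$ survives the limit $t \searrow 0$ — both routine given the decomposition $\tauk\ofx = \norm{\x}_1 - \sum_{i=1}^k \xord{i}$ into convex pieces.
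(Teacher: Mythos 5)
Your proof is correct, and it takes a genuinely different route from the paper's. The paper freezes the trimmed-lasso weights at the local minimizer into a convex weighted-$\ell_1$ surrogate $\Ftwolw$, notes that $\Ftwolw$ majorizes $\Ftwol$ and agrees with it at $\xstar$, upgrades local to global optimality of the surrogate by convexity, obtains the residual bound $\norm{A\xstar-\y}_2 \leq \norm{\y}_2$ by comparing with $\x=\zerovec$, and then extracts $\lambda \leq \lambdabar$ from a coordinatewise subgradient (Moreau--Rockafellar) condition at a single nonzero tail coordinate. You instead work directly with one-sided directional derivatives of the nonconvex objective along two hand-picked directions: the scaling direction $-\xhat$, where positive homogeneity of $\tauk$ makes the penalty term help rather than hurt and yields the same residual bound without any appeal to global optimality; and the shrinkage direction $-\mb{\hat r}$ toward $\projk\ofxhat$, where $\tauk$ decreases linearly -- this second step is essentially the mechanism the paper itself uses for the power-1 case in \cref{thm:large_lambda_p1l}, combined with the column-norm bound \cref{eq:beta_inequality} in aggregate form rather than coordinatewise. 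What each approach buys: yours is more elementary and self-contained (no convex surrogate, no subdifferential calculus, only two directional derivatives whose existence is explicit along the chosen rays), while the paper's detour through the frozen-weight convex problem produces reusable structural facts -- that a local minimum of $\Ftwol$ is a global minimum of $\Ftwolw$, and the exact stationarity identity $\lambda = \abs{\inprod{\mb{a}_i}{A\xstar-\y}}$ at a tail coordinate -- which it exploits again in the proofs of \cref{thm:small_lambda_p1l} and \cref{thm:bad_local_minma_everywhere}.
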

A key implication of this theorem is that for $\lambda > \lambdabar$, the optimal solutions of \cref{pr:P0} and \cref{pr:P2l} coincide. 
With some differences, similar results were proven in \cite{bertsimas2017trimmed,gotoh2018dc}. 
In \cite{gotoh2018dc} the guarantee required the set of optimal solutions of \cref{pr:P0} to be bounded, and the threshold depends on this bound. Hence, in some cases, it may be larger than $\lambdabar$. The authors of  \cite{bertsimas2017trimmed} proved a result similar to \cref{thm:large_lambda_p2l}, with the same threshold $\lambdabar$, but for a trimmed-lasso regularized objective with an additional $\ell_1$\nbdash penalty term.

To derive an analogous result for problem \cref{pr:P1l}, we introduce the following two thresholds
\eq[eq:def_lambda_b]{ \ensuremath{
    \ensuremath{\lambdasmall \eqdef \frac{\sigma_n \br{A}}{\sqrt{d-k}}, \qquad
     \lambdalarge \eqdef \max_{i=1,\ldots,d} \norm{\mb{a}_i}_2,
     }}}
where $\sigma_n \br{A}$ is the $n$\nbdash th singular value of $A$.
\begin{theorem}\label{thm:large_lambda_p1l}
    Suppose that $\lambda > \lambdalarge$. Then any local minimum of \cref{pr:P1l} is $k$\nbdash sparse.
\end{theorem}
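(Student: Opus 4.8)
The plan is to prove the contrapositive: if $\xhat \in \R[d]$ is \emph{not} $k$\nbdash sparse, i.e.\ $\tauk(\xhat) > 0$, then $\xhat$ is not a local minimizer of $\Fonel$. The mechanism is that sliding $\xhat$ a little toward its $k$\nbdash sparse projection $\projk(\xhat)$ strictly lowers the objective, while such points lie in every neighborhood of $\xhat$. Write $g(\x) \eqdef \norm{A\x-\y}_2$, so that $\Fonel = g + \lambda\tauk$, and for $t \in [0,1]$ put $\x(t) \eqdef (1-t)\,\xhat + t\,\projk(\xhat)$.

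I would combine three elementary estimates. (i) Convexity of $g$ gives $g(\x(t)) \le (1-t)\,g(\xhat) + t\,g(\projk(\xhat))$. (ii) For every $\uvec \in \R[d]$ one has $\norm{A\uvec}_2 \le \bigbr{\max_i \norm{\mb{a}_i}_2}\norm{\uvec}_1 = \lambdalarge\norm{\uvec}_1$, while $\norm{\xhat - \projk(\xhat)}_1 = \tauk(\xhat)$ by the definition of $\projk$ together with \cref{eq:def_tauk_a}; hence, by the triangle inequality,
\[ g(\projk(\xhat)) \ \le\ g(\xhat) + \norm{A\bigbr{\projk(\xhat) - \xhat}}_2 \ \le\ g(\xhat) + \lambdalarge\,\tauk(\xhat). \]
(iii) Since $\x(t) - \projk(\xhat) = (1-t)\bigbr{\xhat - \projk(\xhat)}$ with $1-t \ge 0$ and $\projk(\xhat)$ is $k$\nbdash sparse, the characterization $\tauk(\z) = \min_{\norm{\w}_0 \le k}\norm{\z - \w}_1$ gives $\tauk(\x(t)) \le (1-t)\,\tauk(\xhat)$. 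Combining (i)--(iii),
\[ \Fonel(\x(t)) \ \le\ g(\xhat) + t\,\lambdalarge\,\tauk(\xhat) + \lambda(1-t)\,\tauk(\xhat) \ =\ \Fonel(\xhat) + t\,(\lambdalarge - \lambda)\,\tauk(\xhat) \]
for all $t \in [0,1]$. When $\lambda > \lambdalarge$ and $\tauk(\xhat) > 0$, the right-hand side is strictly below $\Fonel(\xhat)$ for every $t \in (0,1]$; and since $\norm{\x(t) - \xhat}_1 = t\,\tauk(\xhat) \to 0$ as $t \searrow 0$, such points lie arbitrarily close to $\xhat$. Hence $\xhat$ is not a local minimizer, which proves the theorem.

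I do not expect a genuine obstacle: once the perturbation toward $\projk(\xhat)$ is identified the argument is three lines, and the only facts requiring (minor) care are $\norm{\xhat - \projk(\xhat)}_1 = \tauk(\xhat)$ and the sub-homogeneity $\tauk(\x(t)) \le (1-t)\tauk(\xhat)$, both immediate from $\tauk$ being the $\ell_1$\nbdash distance to the set of $k$\nbdash sparse vectors. The conceptual point --- and the reason the threshold is exactly $\lambdalarge = \max_i\norm{\mb{a}_i}_2$ rather than the larger $\lambdabar$ appearing in \cref{thm:large_lambda_p2l} --- is that, because the residual enters \cref{pr:P1l} to the first power, the bound $\norm{A(\projk(\xhat)-\xhat)}_2 \le \lambdalarge\,\tauk(\xhat)$ carries no factor of $\norm{\y}_2$ or of the current residual norm; correspondingly $\lambdasmall$ plays no role in this direction. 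An essentially equivalent route is to evaluate the one-sided directional derivative of $\Fonel$ at $\xhat$ in the direction $\vvec = -\hat{x}_j\,\mathbf{e}_j$, where $\mathbf{e}_j$ is the $j$th standard basis vector and $j$ attains the smallest nonzero magnitude of $\xhat$: a short computation gives $\nabla_{\vvec}\tauk(\xhat) = -\abs{\hat{x}_j}$ (using that $j$ lies outside the $k$ largest-magnitude entries of $\xhat$), while Cauchy--Schwarz gives $\nabla_{\vvec}g(\xhat) \le \abs{\hat{x}_j}\,\norm{\mb{a}_j}_2 \le \abs{\hat{x}_j}\,\lambdalarge$, so $\nabla_{\vvec}\Fonel(\xhat) \le \abs{\hat{x}_j}(\lambdalarge - \lambda) < 0$, contradicting the first-order condition $\nabla_{\vvec}\Fonel(\xhat) \ge 0$.
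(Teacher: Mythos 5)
Your proof is correct and takes essentially the same route as the paper's: there, too, a non-$k$\nbdash sparse local minimum is perturbed along $\vvec = \projk\br{\xstar}-\xstar$, the residual change is bounded by $t\norm{A\vvec}_2 \leq t\,\lambdalarge\,\tauk\br{\xstar}$, and $\tauk\br{\xstar+t\vvec} = \br{1-t}\tauk\br{\xstar}$ yields the strict decrease $t\br{\lambdalarge-\lambda}\tauk\br{\xstar} < 0$, contradicting local minimality. Your substitution of convexity of the residual plus the sub-homogeneity inequality $\tauk\br{\x\br{t}} \leq \br{1-t}\tauk\br{\xhat}$ for the paper's direct triangle-inequality and exact-equality steps is an immaterial variation.
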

\begin{theorem}\label{thm:small_lambda_p1l}
    Suppose that the $n\times d$ matrix $A$, with $d\geq n$ is of full rank, so that $\lambdasmall>0$. Assume that $0 < \lambda < \lambdasmall$. Then any 
    local minimum $\xstar$ of \cref{pr:P1l} satisfies $A \xstar = \y$.
\end{theorem}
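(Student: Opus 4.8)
The plan is to argue by contradiction. Suppose $\xstar$ is a local minimum of \cref{pr:P1l} with $\bvec \eqdef A\xstar - \y \neq \zerovec$; I will produce a direction $\vvec \in \R[d]$ with $\nabla_\vvec \Fonel\ofxstar < 0$, contradicting local optimality. Since $\norm{\cdot}_2$ is differentiable away from the origin, the residual term $\x \mapsto \norm{A\x - \y}_2$ is differentiable at $\xstar$ with gradient $A^T\bvec / \norm{\bvec}_2$, so its directional derivative in direction $\vvec$ equals $\inprod{\bvec}{A\vvec}/\norm{\bvec}_2$. The trimmed lasso is directionally differentiable everywhere, being the difference of the convex functions $\norm{\x}_1$ and $\sum_{i=1}^k \xord{i}$, hence $\nabla_\vvec \Fonel\ofxstar = \inprod{\bvec}{A\vvec}/\norm{\bvec}_2 + \lambda\, \nabla_\vvec \tauk\ofxstar$, and it suffices to make the right-hand side negative for a suitable $\vvec$.

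The key step is to bound $\nabla_\vvec \tauk\ofxstar$. I would use that $\tauk\ofx = \min_{\abs{S}=k} h_S\ofx$, where $h_S\ofx \eqdef \sum_{i \notin S}\abs{x_i}$ is convex and depends on only $d-k$ coordinates, and that this minimum is attained exactly at the sets $S$ of indices of $k$ largest-magnitude entries of $\x$. Fix such a set $S_0$ for $\xstar$. Since the directional derivative of a pointwise minimum of finitely many directionally differentiable functions is at most that of any branch active at the point, $\nabla_\vvec \tauk\ofxstar \le \nabla_\vvec h_{S_0}\ofxstar \le \sum_{i \notin S_0}\abs{v_i} \le \sqrt{d-k}\,\norm{\vvec}_2$, where the middle inequality bounds the directional derivative of each $\abs{x_i}$ by $\abs{v_i}$ and the last is Cauchy--Schwarz over the $d-k$ indices outside $S_0$.

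Finally, I would choose $\vvec$ to cancel the residual as cheaply as possible: since $d \ge n$ and $A$ has full rank, $AA^T$ is invertible, and I set $\vvec \eqdef -A^T\br{AA^T}^{-1}\bvec$, the minimum-$\ell_2$-norm solution of $A\vvec = -\bvec$. Then $\inprod{\bvec}{A\vvec}/\norm{\bvec}_2 = -\norm{\bvec}_2$, and since $\vvec$ lies in the row space of $A$ one has $\norm{A\vvec}_2 \ge \sigma_n\br{A}\,\norm{\vvec}_2$, i.e. $\norm{\vvec}_2 \le \norm{\bvec}_2 / \sigma_n\br{A}$. Combining the three estimates, $\nabla_\vvec \Fonel\ofxstar \le -\norm{\bvec}_2 + \lambda \sqrt{d-k}\,\norm{\bvec}_2/\sigma_n\br{A} = \norm{\bvec}_2\br{\lambda/\lambdasmall - 1} < 0$, because $\norm{\bvec}_2 > 0$ and $\lambda < \lambdasmall$. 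This contradiction forces $A\xstar = \y$. I expect the only delicate point to be the estimate $\nabla_\vvec \tauk\ofxstar \le \sqrt{d-k}\,\norm{\vvec}_2$: one must recognize $\tauk$ as a minimum of convex functions each supported on $d-k$ coordinates and apply the min-rule for directional derivatives with a minimizing set $S_0$. Everything else --- differentiability of the residual, the pseudoinverse solution, the singular-value bound --- is routine, but it is precisely this sharper treatment of $\tauk$, rather than the crude Lipschitz bound $\nabla_\vvec \tauk\ofxstar \le \norm{\vvec}_1 \le \sqrt{d}\,\norm{\vvec}_2$, that yields the stated threshold $\lambdasmall = \sigma_n(A)/\sqrt{d-k}$ instead of $\sigma_n(A)/\sqrt{d}$.
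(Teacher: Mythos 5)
Your proof is correct, but it takes a genuinely different route from the paper's. The paper never touches directional derivatives of the nonconvex objective: it freezes the trimmed lasso into the weighted-$\ell_1$ penalty $\Fonelw$ with a $0/1$ weight vector $\w$ supported on the $d-k$ smallest-magnitude entries of $\xstar$, observes (as in the proof of \cref{thm:large_lambda_p2l}) that $\Fonelw \geq \Fonel$ with equality at $\xstar$, so that $\xstar$ is a global minimizer of the convex $\Fonelw$, and then applies the Moreau--Rockafellar subdifferential condition $\zerovec \in \partial\Fonelw\ofxstar$: if $A\xstar \neq \y$ this yields $A^T\frac{A\xstar-\y}{\norm{A\xstar-\y}_2} = -\lambda\,\vvec\hadamard\w$ with $\norm{\vvec}_\infty \le 1$, whence $\sigma_n(A) \le \lambda\norm{\vvec\hadamard\w}_2 \le \lambda\sqrt{d-k}$, contradicting $\lambda < \lambdasmall$ (this is \cref{thm:small_lambda_p1l_aux}). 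You instead build an explicit descent direction, the minimum-norm solution of $A\vvec = -(A\xstar-\y)$, and bound $\nabla_\vvec\tauk\ofxstar$ by the min-rule over the active branch $h_{S_0}$ plus Cauchy--Schwarz over the $d-k$ off-support coordinates; all steps check out (the min-rule inequality, $\nabla_\vvec h_{S_0}\ofxstar \le \sum_{i\notin S_0}\abs{v_i}$, and $\norm{\vvec}_2 \le \norm{A\xstar-\y}_2/\sigma_n(A)$ are each valid), and the two $\sqrt{d-k}$ factors play dual roles: yours counts coordinates in a primal descent direction, the paper's counts nonzero weights in a dual certificate. Your argument is more elementary and self-contained, and makes the origin of the threshold $\lambdasmall=\sigma_n(A)/\sqrt{d-k}$ transparent; the paper's detour through the auxiliary weighted-$\ell_1$ lemma buys reusability --- that same lemma is invoked again in \cref{sec:optimizing_fonel_with_small_lambda} to show that every stationary point of the smoothed objective $\Fonelg$ satisfies $A\xhat=\y$ and that the homotopy output is $\lambda$-independent below $\lambdasmall$, which a one-off descent-direction argument does not directly provide.
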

\cref{thm:large_lambda_p1l} implies that for $\lambda > \lambdalarge$, the optimal solutions of the trimmed lasso objective \cref{pr:P1l} coincide with those
of \cref{pr:P0}. In contrast, by \cref{thm:small_lambda_p1l}, for $\lambda<\lambdasmall$ the trimmed lasso penalty has little effect, as the minimizers of the objective lie in the subspace of zero residual. 

\Cref{thm:large_lambda_p2l} seems to suggest that
to solve \cref{pr:P0}, one should optimize
the trimmed lasso objective $\Ftwol$
with $\lambda > \lambdabar$. However,
since $\tauk$ is nonconvex,
larger values of $\lambda$ increase
the dominance of the nonconvex
part of the objective, making it more difficult to optimize.
The following theorem sheds further light on this difficulty,
showing that for $\lambda \geq \lambdabar$,
the optimization landscape is riddled with poor local minima.
\begin{theorem} \label{thm:bad_local_minma_everywhere}
 Let $\Lambda \subset \brs{d}$ be any index set of size $k$, and let $\xtilde$ be the minimizer of $\norm{A\x-\y}_2$ over all vectors $\x$ 
 with $\mbox{supp}(\x)\subseteq\Lambda$.
If $\|\xtilde\|_0=k$,
then it is a local minimum of $\Ftwol$ for any $\lambda \geq \lambdabar$. The same claim holds for $\Fonel$ with $\lambda \geq \lambdalarge$. 
\end{theorem}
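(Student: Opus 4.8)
The plan is to verify the local minimum property directly, by estimating each objective on a small $\ell_\infty$\nbdash ball around $\xtilde$. The crucial geometric ingredient is a \emph{localization of $\tauk$}: since $\|\xtilde\|_0=k$ and $|\Lambda|=k$, the support of $\xtilde$ is \emph{exactly} $\Lambda$, so with $\delta\eqdef\tfrac12\min_{i\in\Lambda}\abs{\tilde x_i}>0$, every $\vvec\in\R[d]$ with $\norm{\vvec}_\infty<\delta$ has the property that the $k$ largest\nbdash magnitude entries of $\xtilde+\vvec$ are precisely those indexed by $\Lambda$ (each $\Lambda$\nbdash entry then has magnitude $>\delta$, each other entry magnitude $<\delta$). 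Consequently, on this ball $\tauk(\xtilde+\vvec)=\sum_{i\notin\Lambda}\abs{\tilde x_i+v_i}=\sum_{i\notin\Lambda}\abs{v_i}$, and in particular $\tauk(\xtilde)=0$. This is exactly where the hypothesis $\|\xtilde\|_0=k$ enters: if $\xtilde$ had fewer than $k$ nonzeros, perturbing along a coordinate outside $\Lambda$ would leave $\tauk$ unchanged, and $\xtilde$ would in general fail to be a local minimum.

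Next I would record two consequences of the fact that $\xtilde$ is the least\nbdash squares solution using only the columns of $A$ indexed by $\Lambda$. Writing $\mb{r}\eqdef A\xtilde-\y$ for the residual, the normal equations give $\inprod{\mb{a}_i}{\mb{r}}=0$ for every $i\in\Lambda$; and since $\x=\zerovec$ is a feasible competitor, $\norm{\mb{r}}_2\le\norm{\y}_2$. For \cref{pr:P2l} and $\norm{\vvec}_\infty<\delta$, expanding the square yields $\Ftwol(\xtilde+\vvec)-\Ftwol(\xtilde)=\inprod{\mb{r}}{A\vvec}+\tfrac12\norm{A\vvec}_2^2+\lambda\sum_{i\notin\Lambda}\abs{v_i}$. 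Since $\inprod{\mb{a}_i}{\mb{r}}=0$ on $\Lambda$, we have $\inprod{\mb{r}}{A\vvec}=\sum_{i\notin\Lambda}\inprod{\mb{a}_i}{\mb{r}}\,v_i$, so by Cauchy--Schwarz and $\norm{\mb{r}}_2\le\norm{\y}_2$, $\abs{\inprod{\mb{r}}{A\vvec}}\le\br{\max_i\norm{\mb{a}_i}_2}\norm{\y}_2\sum_{i\notin\Lambda}\abs{v_i}=\lambdabar\sum_{i\notin\Lambda}\abs{v_i}$. Hence this difference is at least $\br{\lambda-\lambdabar}\sum_{i\notin\Lambda}\abs{v_i}+\tfrac12\norm{A\vvec}_2^2\ge0$ whenever $\lambda\ge\lambdabar$, so $\xtilde$ is a local minimum of $\Ftwol$.

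For \cref{pr:P1l} the only change is to replace the quadratic term by a first\nbdash order (subgradient) bound. Let $\mb{g}$ be a subgradient of $\norm{\cdot}_2$ at $\mb{r}$ with $\norm{\mb{g}}_2\le1$ --- namely $\mb{g}=\mb{r}/\norm{\mb{r}}_2$ if $\mb{r}\ne\zerovec$ and $\mb{g}=\zerovec$ otherwise --- which by the normal equations again satisfies $\inprod{\mb{a}_i}{\mb{g}}=0$ for all $i\in\Lambda$. Convexity of $\norm{\cdot}_2$ gives $\norm{A(\xtilde+\vvec)-\y}_2\ge\norm{\mb{r}}_2+\inprod{\mb{g}}{A\vvec}$, and as before $\abs{\inprod{\mb{g}}{A\vvec}}\le\br{\max_i\norm{\mb{a}_i}_2}\sum_{i\notin\Lambda}\abs{v_i}=\lambdalarge\sum_{i\notin\Lambda}\abs{v_i}$; hence $\Fonel(\xtilde+\vvec)-\Fonel(\xtilde)\ge\br{\lambda-\lambdalarge}\sum_{i\notin\Lambda}\abs{v_i}\ge0$ for $\lambda\ge\lambdalarge$.

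I do not anticipate a genuine obstacle: once the localization of $\tauk$ is established, each case reduces to a one\nbdash line estimate built on the least\nbdash squares optimality of $\xtilde$ and the definitions of $\lambdabar$ and $\lambdalarge$. The only point requiring a little care is the non\nbdash differentiability of $\norm{\cdot}_2$ at the origin, handled above through the subgradient $\mb{g}$ (equivalently, one may first dispose of the sub\nbdash case $A\xtilde=\y$, in which $\xtilde$ is trivially even a global minimizer of $\Fonel$).
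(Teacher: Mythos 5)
Your proof is correct, and it takes a somewhat different route from the paper's. The paper argues through the convex weighted-$\ell_1$ surrogate: it sets $w_i=0$ on $\Lambda$, $w_i=1$ off $\Lambda$, uses the Moreau--Rockafellar subdifferential calculus to exhibit $\zerovec\in\partial\Ftwolw(\xtilde)$ (constructing an explicit dual vector $\tilde{\vvec}\in V(\xtilde)$, with the bound $\abs{\inprod{\mb{a}_i}{A\xtilde-\y}}\le\lambdabar\le\lambda$ playing the same role as your Cauchy--Schwarz step), concludes that $\xtilde$ is a \emph{global} minimizer of $\Ftwolw$, and then observes that on a small $\ell_\infty$-ball around $\xtilde$ (radius $x_{\min}/2$, using $\|\xtilde\|_0=k$) one has $\Ftwol=\Ftwolw$, so local minimality of $\Ftwol$ follows; the power-1 case is only asserted to be ``similar.'' You instead bypass the surrogate and subdifferential machinery entirely and estimate the objective difference directly on the ball, using the same three ingredients (normal equations on $\Lambda$, $\norm{A\xtilde-\y}_2\le\norm{\y}_2$, and the localization $\tauk(\xtilde+\vvec)=\sum_{i\notin\Lambda}\abs{v_i}$). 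What your route buys is a more elementary, fully self-contained argument with an explicit quantitative lower bound $(\lambda-\lambdabar)\sum_{i\notin\Lambda}\abs{v_i}$ on the increase, plus an explicit treatment of the power-1 case including the nonsmooth point $A\xtilde=\y$ via the subgradient $\mb{g}$. What the paper's route buys is the stronger intermediate fact that $\xtilde$ globally minimizes the weighted-$\ell_1$ objective $\Ftwolw$, which ties this theorem to the reweighting/majorization structure used throughout the rest of the paper (the same $V(\x)$-subdifferential computation is reused in the proofs of the other penalty-threshold theorems). One small note: your localization argument needs only $\norm{\vvec}_\infty<\delta$ with $\delta=\tfrac12\min_{i\in\Lambda}\abs{\tilde x_i}$, exactly as you set it up, and this is where the hypothesis $\|\xtilde\|_0=k$ is genuinely used --- the same place the paper uses it.
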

Finally, regarding the power\nbdash 1 case, our algorithm to minimize $\Fonelg$, presented in \cref{sec:gsm_penalty}, is guaranteed under mild assumptions to output the same solution for all $\lambda < \lambdasmall$; see \cref{sec:optimizing_fonel_with_small_lambda}.

In light of the above discussion and theorems, our strategy
to solve \cref{pr:P0} is to minimize $\Ftwol$ or $\Fonel$ for several values of $\lambda <\lambdabar$ or $\lambda \in \brs{\lambdasmall,\lambdalarge}$, respectively. 
%
As we discuss theoretically in the next subsection, an appealing property of the trimmed lasso is that even at low values of $\lambda$, where the solutions are not necessarily $k$-sparse, they may still be close to the optimal solution of \cref{pr:P0}.

\subsection{Sparse recovery guarantees}
\label{sec:sparse_recovery_guarantees}
%
A fundamental and well-studied problem is the ability of various methods to recover a sparse vector $\x_0$ from few linear and potentially noisy measurements; 
see \cite{Foucart_Rauhut,elad2010sparsebook,eldar2012compressed} and references therein. 
Here we present sparse recovery guarantees for
the trimmed lasso relaxations \cref{pr:P2l} and \cref{pr:P1l}.
Specifically, let $\x_0 \in \R[d]$ be approximately $k$-sparse, 
in the sense that $\tauk(\x_0) \ll \norm{\x_0}_1$. Given $A$ and $k$, our goal is to estimate $\x_0$ from
 \eq[eq:y_generated_by_x0]{\ensuremath{
    \y = A \x_0 + \e,
}}
where $\e$ is an unknown measurement error.
In our analysis, we consider an adversarial model, 
whereby do not make any probabilistic assumptions on $\e$. Our bounds thus depend on $\|\e\|_2$. 

Without further assumptions on $A$, this problem is ill posed. 
As is well known, even in the absence of noise, a necessary condition for unique recovery of a $k$-sparse $\x_0$ is that any subset of $2k$ columns of $A$ are linearly independent 
\cite[Theorem 2.13]{Foucart_Rauhut}. 
Similarly to previous works, 
we require that any $2k$ columns of $A$ are sufficiently far from being linearly dependent. Specifically, 
we assume that there exists a constant $\ripmin > 0$ such that for any $\x$ with $\norm{\x}_0 \leq 2k$,
\begin{equation}
\norm{A \x}_2  \geq \ripmin \norm{\x}_1.
\label{eq:def_RIP_condition}
\end{equation}
Equation~\cref{eq:def_RIP_condition} is a one-sided variant of the \emph{restricted isometry property} (RIP), commonly used to derive sparse recovery guarantees. 
RIP was 	 introduced in \cite{candes2005decoding} and further studied in 
\cite{blanchard2011compressed, candes2008restricted, candes2006quantitative, 
candes2006robust, candes2006stable, foucart2009sparsest, cohen2009compressed}. 
Variants with norms other than $\ell_2$ \cite{allen2016restricted, berinde2008combining, chartrand2008restricted, dirksen2016gap} were used to provide sharper guarantees. 
As shown below, \cref{eq:def_RIP_condition} is the natural combination of powers for studying the trimmed lasso in conjunction with an $\ell_2$\nbdash residual $\norm{A\x-\y}_2$. We remark that calculating $\ripmin$, or even approximating it, is NP-hard \cite{tillmann2014computational}. However, $\ripmin$ can be bounded from below by the {cumulative coherence}, also known as the {Babel function} \cite{tropp2004greed}.

We now present our main theoretical result, which links success in optimizing problems~\cref{pr:P2l} or~\cref{pr:P1l} at a given $\lambda>0$, with 
accurate estimation of $\x_0$. 
By \emph{success} we mean that the solution $\xhat$, computed by some algorithm, 
satisfies that $\Ftwol \br{\xhat} \leq \Ftwol \br{\projk\br{\x_0}}$. 
The following theorem shows that under this condition, $\projk(\hat \x)$ is close to $\x_0$. 

\begin{theorem} \label{thm:sparse_reconstruction_p2}
Let $\y=A\x_0+\mb{e}$ and let $\ripmin$, $\lambdalarge$ be the constants defined in \cref{eq:def_RIP_condition,eq:def_lambda_b}.
        Let $\lambda > 0$ and suppose that $\xhat \in \R[d]$ satisfies  $\Ftwol \br{\xhat} \leq \Ftwol \br{\projk\br{\x_0}}$. Then, 
        \begin{enumerate}
                \item \label{item:thm_sparse_reconstruction1_p2_general_case} The projected vector $\projk \br{\xhat}$ is close to $\x_0$ in $\ell_1$ norm, 
                \eq[eq:recovery_error_bound_1]{\ensuremath{
                \norm{\projk \br{\xhat} - \x_0}_1 \leq \tauk \br{\x_0} + \tfrac{2}{\ripmin}\br{\norm{\e}_2 +  \lambdalarge \tauk\br{\x_0}} + \tfrac{1}{2 \lambda\ripmin}\br{\norm{\e}_2 +  \lambdalarge\tauk\br{\x_0}}^2.
                }}
                \item \label{item:thm_sparse_reconstruction1_p2_sparse_case}If $\xhat$ itself is $k$\nbdash sparse, then the following tighter bound holds,
                \eq[eq:recovery_error_bound_2]{\ensuremath{
                                \norm{\xhat - \x_0}_1 \leq 
                                       \tfrac{2}{\ripmin} \norm{\e}_2 + (1+2\tfrac{\lambdalarge}{\ripmin})\tauk \br{\x_0}.              
                }}
        \end{enumerate}
\end{theorem}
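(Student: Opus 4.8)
The plan is the standard ``successful optimization implies recovery'' argument: evaluate the objective $\Ftwol$ at the reference point $\projk(\x_0)$ to obtain a budget, transfer that budget to $\xhat$ via the hypothesis, and then convert objective value into an $\ell_1$ distance by applying the one-sided RIP \cref{eq:def_RIP_condition} to a suitable $2k$-sparse vector. Throughout I abbreviate $E \eqdef \norm{\e}_2 + \lambdalarge\,\tauk(\x_0)$, the quantity that appears squared in \cref{eq:recovery_error_bound_1}. First, since $\projk(\x_0)$ is $k$-sparse we have $\tauk(\projk(\x_0)) = 0$ and $A\projk(\x_0) - \y = -A(\x_0 - \projk(\x_0)) - \e$; the elementary estimate $\norm{A\vvec}_2 \le \lambdalarge\norm{\vvec}_1$ (which holds because $\lambdalarge = \max_i\norm{\mb{a}_i}_2$), together with $\norm{\x_0 - \projk(\x_0)}_1 = \tauk(\x_0)$ and the triangle inequality, gives $\norm{A\projk(\x_0)-\y}_2 \le E$, hence $\Ftwol(\projk(\x_0)) \le \tfrac12 E^2$. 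By the hypothesis $\tfrac12\norm{A\xhat-\y}_2^2 + \lambda\tauk(\xhat) \le \tfrac12 E^2$, and dropping one nonnegative summand at a time yields the two working bounds $\norm{A\xhat-\y}_2 \le E$ and $\tauk(\xhat) \le \tfrac{1}{2\lambda}E^2$.

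For the sparse case (bound \cref{eq:recovery_error_bound_2}), $\xhat = \projk(\xhat)$, so $\xhat - \projk(\x_0)$ is $2k$-sparse; \cref{eq:def_RIP_condition} and a triangle inequality routed through $\y$ give $\ripmin\norm{\xhat - \projk(\x_0)}_1 \le \norm{A\xhat - \y}_2 + \norm{\y - A\projk(\x_0)}_2 \le 2E$, where the second residual was bounded in the previous step. Adding $\norm{\projk(\x_0) - \x_0}_1 = \tauk(\x_0)$ by one more triangle inequality and unfolding $E$ reproduces \cref{eq:recovery_error_bound_2} exactly.

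For the general case (bound \cref{eq:recovery_error_bound_1}) the vector $\projk(\xhat) - \projk(\x_0)$ is still $2k$-sparse, so \cref{eq:def_RIP_condition} gives $\ripmin\norm{\projk(\xhat) - \projk(\x_0)}_1 \le \norm{A\projk(\xhat) - \y}_2 + E$, and the task reduces to controlling the residual of the \emph{projected} iterate. From $A\projk(\xhat) - \y = (A\xhat - \y) - A(\xhat - \projk(\xhat))$, the estimate $\norm{A(\xhat - \projk(\xhat))}_2 \le \lambdalarge\norm{\xhat - \projk(\xhat)}_1 = \lambdalarge\tauk(\xhat)$, and the two bounds on $\xhat$ above, one obtains $\norm{A\projk(\xhat) - \y}_2 \le E + \tfrac{\lambdalarge}{2\lambda}E^2$; feeding this back and adding $\tauk(\x_0)$ via the triangle inequality produces a bound of the shape of \cref{eq:recovery_error_bound_1}, with the precise constant on the $E^2$ term being the only quantitative delicacy.

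The conceptual skeleton — reference point, transfer of the objective value, RIP on a $2k$-sparse difference — is routine; what takes care is this last step. Replacing $\xhat$ by its $k$-sparse projection degrades the $\ell_2$ residual, and the penalty bound $\tauk(\xhat) \le E^2/(2\lambda)$, which is a by-product of the hypothesis rather than of the RIP, is exactly the ingredient that turns this degradation into the quadratic-in-$E$ correction term of \cref{eq:recovery_error_bound_1}. I would expect the main obstacle to be arranging the triangle inequalities so that the tail $\xhat - \projk(\xhat)$ is charged through $\tauk(\xhat)$ exactly once, and never re-amplified by a further application of $\norm{A\,\cdot\,}_2$, so that the coefficient of $E^2$ stays as small as the statement claims; the sparse case and the budget computation require no such care.
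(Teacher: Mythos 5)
Your proposal is correct and follows essentially the same route as the paper: bound the budget $\Ftwol\br{\projk\br{\x_0}} \le \tfrac12 E^2$ with $E=\norm{\e}_2+\lambdalarge\tauk\br{\x_0}$, extract from the hypothesis both $\norm{A\xhat-\y}_2\le E$ and $\tauk\ofxhat\le E^2/(2\lambda)$, and apply the one-sided RIP to a $2k$-sparse difference while charging the tail $\xhat-\projk\ofxhat$ once through $\lambdalarge\tauk\ofxhat$ — the paper merely organizes this as the $k$-sparse case first and then a reduction via $\tilde{\e}=\e+A\br{\x_0-\projk\br{\x_0}}$, whereas you argue directly for general $\x_0$. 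Regarding the constant you flag as the "only quantitative delicacy": your derivation yields $\tfrac{\lambdalarge}{2\lambda\ripmin}E^2$ on the quadratic term, which is exactly what the paper's own proof establishes as well, so the absence of the factor $\lambdalarge$ in the displayed bound \cref{eq:recovery_error_bound_1} appears to be a slip in the theorem statement rather than a gap in your argument.
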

The next theorem provides an analogous recovery guarantee for the power-1 objective \cref{pr:P1l}.

\begin{theorem} \label{thm:sparse_reconstruction_p1}
Let $\y=A\x_0+\mb{e}$, and let $\ripmin$, $\lambdalarge$ be the constants defined in \cref{eq:def_RIP_condition,eq:def_lambda_b}.
       Suppose that $\xhat \in \R[d]$ satisfies $\Fonel \br{\xhat} \leq \Fonel \br{\projk\br{\x_0}}$ for some $\lambda > 0$. Then, 
        \begin{enumerate}
        \item \label{item:thm_sparse_reconstruction1_p1_general_case} The projected vector $\projk \br{\xhat}$ is close to $\x_0$ in $\ell_1$ norm,
             \begin{equation}\label{eq:recovery_error_bound_12}
                     \norm{\projk \br{\xhat} - \x_0}_1 \leq
                     \tauk \br{\x_0} +
                     \tfrac{1}{\ripmin} \br{1+\max\brc{1,\tfrac{\lambdalarge}{\lambda}}} \bigbr{\norm{\e}_2 +  \lambdalarge \tauk \br{\x_0}}.
               \end{equation}
                \item \label{item:thm_sparse_reconstruction1_p1_sparse_case}If $\xhat$ itself is $k$\nbdash sparse, then the tighter bound  \cref{eq:recovery_error_bound_2} holds.  
        \end{enumerate}
\end{theorem}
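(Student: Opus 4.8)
The plan is to mirror the argument behind \cref{thm:sparse_reconstruction_p2}, adjusting the algebra for the fact that the residual now enters to the first power. Write $\eta \eqdef \norm{\e}_2 + \lambdalarge\tauk\br{\x_0}$ throughout, and recall the elementary estimate $\norm{A\z}_2 \le \bigbr{\max_i\norm{\mb{a}_i}_2}\norm{\z}_1 = \lambdalarge\norm{\z}_1$, valid for every $\z\in\R[d]$.

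First I would bound the reference objective. Since $\projk\br{\x_0}$ is $k$-sparse we have $\tauk\br{\projk\br{\x_0}}=0$, and $A\projk\br{\x_0}-\y = -A\br{\x_0-\projk\br{\x_0}}-\e$ with $\norm{\x_0-\projk\br{\x_0}}_1 = \tauk\br{\x_0}$, so $\Fonel\br{\projk\br{\x_0}} \le \norm{\e}_2 + \lambdalarge\tauk\br{\x_0} = \eta$. The hypothesis $\Fonel\br{\xhat} \le \Fonel\br{\projk\br{\x_0}}$ then gives $\norm{A\xhat-\y}_2 + \lambda\tauk\br{\xhat} \le \eta$; the key is to keep the residual and the penalty \emph{coupled}, recording the single inequality $\norm{A\xhat-\y}_2 \le \eta - \lambda\tauk\br{\xhat}$ (together with the trivial consequence $\tauk\br{\xhat}\le\eta/\lambda$) rather than bounding each summand by $\eta$ separately.

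Next I would pass to the projected vector $\projk\br{\xhat}$. Applying $\norm{A\z}_2 \le \lambdalarge\norm{\z}_1$ to $\z = \xhat-\projk\br{\xhat}$, whose $\ell_1$ norm equals $\tauk\br{\xhat}$, gives $\norm{A\projk\br{\xhat}-\y}_2 \le \norm{A\xhat-\y}_2 + \lambdalarge\tauk\br{\xhat} \le \eta + (\lambdalarge-\lambda)\tauk\br{\xhat}$. Splitting on the sign of $\lambdalarge-\lambda$ — and using $\tauk\br{\xhat}\le\eta/\lambda$ in the case $\lambdalarge>\lambda$ — collapses this to $\norm{A\projk\br{\xhat}-\y}_2 \le \eta\max\brc{1,\lambdalarge/\lambda}$; this is exactly the cancellation of $\lambda\tauk\br{\xhat}$ against $-\lambda\tauk\br{\xhat}$ that produces the $\max\brc{1,\lambdalarge/\lambda}$ factor in \cref{eq:recovery_error_bound_12}. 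Then, since $\uvec \eqdef \projk\br{\xhat}-\projk\br{\x_0}$ is $2k$-sparse, the RIP-type condition \cref{eq:def_RIP_condition} and a triangle inequality yield $\ripmin\norm{\uvec}_1 \le \norm{A\uvec}_2 \le \norm{A\projk\br{\xhat}-\y}_2 + \norm{A\projk\br{\x_0}-\y}_2 \le \eta\bigbr{1+\max\brc{1,\lambdalarge/\lambda}}$, and $\norm{\projk\br{\xhat}-\x_0}_1 \le \norm{\uvec}_1 + \norm{\projk\br{\x_0}-\x_0}_1 = \norm{\uvec}_1 + \tauk\br{\x_0}$ gives \cref{eq:recovery_error_bound_12}. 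For the second claim, if $\xhat$ is itself $k$-sparse then $\projk\br{\xhat}=\xhat$ and $\tauk\br{\xhat}=0$, so $\norm{A\xhat-\y}_2 = \Fonel\br{\xhat} \le \eta$ directly; feeding this sharper estimate into the same RIP step gives $\ripmin\norm{\uvec}_1 \le 2\eta$, which rearranges into \cref{eq:recovery_error_bound_2}.

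The main obstacle — really the only subtle point — is the passage from $\xhat$ to $\projk\br{\xhat}$: the error $\xhat-\projk\br{\xhat}$ is $(d-k)$-sparse rather than $2k$-sparse, so \cref{eq:def_RIP_condition} cannot be applied to it and one is forced onto the crude bound $\norm{A\z}_2\le\lambdalarge\norm{\z}_1$. Carrying this step out without degrading the constant is precisely why it pays to retain the coupled inequality $\norm{A\xhat-\y}_2 \le \eta - \lambda\tauk\br{\xhat}$ from the objective comparison; everything else reduces to triangle inequalities and the defining property of $\projk$.
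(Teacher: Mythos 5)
Your proof is correct and follows essentially the same route as the paper's: the RIP-type condition \cref{eq:def_RIP_condition} applied to the $2k$-sparse difference of projections, the bound $\norm{A\z}_2 \le \lambdalarge\norm{\z}_1$ applied to $\xhat-\projk\br{\xhat}$, and the $\max\brc{1,\lambdalarge/\lambda}$ conversion of $\lambda\tauk\br{\xhat}$ into $\lambdalarge\tauk\br{\xhat}$. The only difference is organizational: the paper first proves the case of $k$-sparse $\x_0$ and then handles general $\x_0$ by absorbing $A\br{\x_0-\projk\br{\x_0}}$ into a modified noise term $\tilde{\e}$, whereas you carry $\eta=\norm{\e}_2+\lambdalarge\tauk\br{\x_0}$ through a single direct argument — the underlying inequalities are identical.
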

\cref{thm:sparse_reconstruction_p2,thm:sparse_reconstruction_p1} do not assume that an algorithm found the global minimum of the respective objectives, only that it was able to find an approximate
solution, with an objective value close to the optimal one.
Thus, \cref{thm:sparse_reconstruction_p2,thm:sparse_reconstruction_p1} imply that successful optimization of \cref{pr:P2l} or~\cref{pr:P1l} yields a successful recovery of $\x_0$, which is stable with respect to the measurement error $\e$ and the deviation of $\x_0$ from being exactly $k$-sparse.

Another implication of \cref{thm:sparse_reconstruction_p2,thm:sparse_reconstruction_p1} is that 
$\x_0$ can be accurately estimated by solving \cref{pr:P2l} or~\cref{pr:P1l} even at low values of $\lambda$, where the global minimizers are not necessarily $k$\nbdash sparse. While the above guarantees improve as $\lambda$ increases, as discussed in the previous subsection, optimizing these problems at low values of $\lambda$ is potentially an easier task. Indeed, in our 
simulations described in \cref{sec:numerical_experiments}, the best 
solutions, 
with smallest $\norm{A\projk(\hat\x)-\y}_2$, 
were often obtained at low values of $\lambda$, where $\xhat$ itself was not $k$-sparse. 

Recently, \cite{yun2019trimming} also analyzed the trimmed lasso penalized objective and derived support recovery guarantees.
There are some key differences between our analysis and theirs. They assumed that the rows of $A$ are i.i.d. samples from a $d$-dimensional sub-Gaussian distribution, that the true $\x_0$ is exactly $k$-sparse, and that the noise  $\e$ has i.i.d. entries.   

We now compare the guarantee of \cref{thm:sparse_reconstruction_p2,thm:sparse_reconstruction_p1} to those for basis pursuit. Suppose that $A$ has $\ell_2$-normalized columns and satisfies the RIP of order $2k$ with  constant $\delta_{2k} < 1$,
\begin{equation}
    \br{1 - \delta_{2k}} \norm{\x}^2_2 \leq \norm{A\x}_2^2 \leq \br{1 + \delta_{2k}} \norm{\x}^2_2
    \qquad
    \forall \x \text{ s.t. } \norm{\x}_0 \leq 2k.
            \label{eq:RIP_delta}
\end{equation}
Let $\xhat_{\textup{BP}}$ be the solution
of the following quadratically constrained Basis Pursuit problem, 
\begin{equation*}
    \xhat_{\textup{BP}} = \argmin_{\x}\ \norm{\x}_1 \quad \text{s.t.} \quad \norm{A\x-\y}_2 \leq \norm{\e}_2
\end{equation*}
By \cite[Theorem 6.12]{Foucart_Rauhut}, if $\delta_{2k} < 4/\sqrt{41} \approx 0.6246$,
then the following guarantee holds, 
with $c_1, c_2$ constants that depend only on $\delta_{2k}$,
\begin{equation}
   \norm{\xhat_{\textup{BP}} - \x_0}_1 \leq  c_1 \sqrt{k} \norm{\e}_2 + c_2 \tauk \br{\x_0}.
    \label{eq:guarantee_BP}
\end{equation}
A lower bound on $\delta_{2k}$ is necessary for
such a recovery guarantee to hold. Indeed, as
shown in \cite{davies2009restricted}, there exist matrices with $\delta_{2k}$ arbitrarily close to $1/\sqrt{2} \approx 0.7071$, for which BP fails. Recovery guarantees of other polynomial-time methods such as OMP and iterative thresholding require similar, and often stricter, conditions on the RIP
constant; see \cite[Chapter 6]{Foucart_Rauhut}.  

Let us compare \cref{eq:guarantee_BP} to our guarantees. It is easy to show that \cref{eq:RIP_delta} implies that 
 $\alpha_{2k}$, defined in \cref{eq:def_RIP_condition}, satisfies $\alpha_{2k} \geq {\sqrt{1-\delta_{2k}}} / {\sqrt{2k}}$. Consider a vector $\xhat$ that is a global minimizer of either \cref{pr:P2l} or \cref{pr:P1l}. To simplify the analysis, suppose that $\lambda$ is high enough so that $\xhat$ is $k$-sparse. Then both  \cref{thm:sparse_reconstruction_p2,thm:sparse_reconstruction_p1} yield
\begin{equation}
   \norm{\xhat - \x_0}_1 \leq 2 \tfrac{\sqrt{2k}}{\sqrt{1-\delta_{2k}}} \norm{\e}_2 + \br{1+2\sqrt{2k} \sqrt{\tfrac{1+\delta_{2k}}{1-\delta_{2k}}}} \tauk \br{\x_0}.    
\end{equation}
 While in our bound the coefficient of $\tauk \br{\x_0}$ is larger by a multiplicative factor of $\mathcal{O}\br{\sqrt{k}}$, the key point is that our guarantee holds for {\em any} $\delta_{2k} < 1$. We remark that the condition $\delta_{2k}<1$ is necessary for successful recovery of $\x_0$ by {\em any} algorithm (see e.g. \cite[Lemma 3.1]{cohen2009compressed}).

\subsection{Solving the best subset selection problem by the trimmed lasso penalty}
\label{sec:solving_P0}
%
The above theoretical results motivate solving \cref{pr:P0}
by optimizing the trimmed-lasso penalized $\Ftwol$ or $\Fonel$. 
For $\Ftwol$ we propose the following approach: 
Choose $0 < \lambda_1 < \lambda_2 < \ldots \leq \lambda_T = \lambdabar$. For each $\lambda_i$, optimize $\Ftwol$ with $\lambda=\lambda_i$. Next, project each solution $\xhat_{\lambda_i}$ to its nearest $k$-sparse vector, 
and solve a least-squares problem on its support. 
Output the vector with smallest residual $\|A\x-\y\|_2$. 
For the power-1 objective $\Fonel$,
we propose a similar approach with $\lambda \in[\lambdasmall,\lambdalarge]$.

To carry out this approach, one needs a practical algorithm to optimize trimmed lasso penalized objectives. In the next section we present a new approach to do so. As we demonstrate in \cref{sec:numerical_experiments}, our method often finds much better solutions 
than those found by previously proposed methods such as DC programming \cite{gotoh2018dc} and ADMM \cite{bertsimas2017trimmed}.
This, in turn, yields state-of-the-art results in solving the best subset selection problem \cref{pr:P0}.

\section{The generalized soft-min penalty}
\label{sec:gsm_penalty}
%
Optimizing trimmed-lasso penalized objectives is difficult for several reasons. First, $\tauk\ofx$ is nonconvex. 
Second, updating a solution $\x$ along the negative gradient $-\nabla\tauk\ofx$ does not change its top\nbdash $k$ largest magnitude coordinates, since $\nabla\tauk\ofx$ is zero there, and shrinks the remaining $d-k$ coordinates.
While this property is desirable if the current solution has the correct support, it may be detrimental otherwise.




To overcome this difficulty, we propose a soft surrogate for the trimmed lasso. To motivate it,
consider the following equivalent definition of $\tauk$  \cite{bertsimas2017trimmed}, 
\begin{equation}
\label{eq:def_tauk_b}
\tauk \ofx = \underset{ \abs{\Lambda} = d-k } {\min} \  \sum_{i\in\Lambda}\ \abs{x_i},
\end{equation}
where $\Lambda\subset [d]$ is a set of $d-k$ distinct indices. Equation~\cref{eq:def_tauk_b} shows that $\tauk\ofx$ can be expressed as a \emph{hard} assignment: Out of all $\binom{d}{k}$ subsets of size $d-k$, it takes the one with the smallest $\ell_1$ residual. In $k$-means data clustering, \cite{zhang1999k} relaxed a hard assignment of each point to its nearest cluster center, to a \emph{soft} assignment, with weights depending on the distances to all cluster centers. This led to significantly improved clustering results  \cite{hamerly2002alternatives}. Similarly, in multi-class classification, the one-hot vector is often replaced by the soft-max function \cite{bridle1989probabilistic}.

Inspired by these works, we propose the following surrogate, which we call the \emph{generalized soft-min} (GSM) penalty $\taukg \ofx$. In addition to $k$, it depends on a softness parameter $\gamma \in \brs{0,\infty}$. 
For $\gamma \in \br{0,\infty}$, it is defined by
\eq[eq:def_taukg_gamma_finite]{\ensuremath{
\taukg\ofx \eqdef -\frac{1}{\gamma} \log \br{ \frac{1}{\binom{d}{d-k}} \sum_{{ \abs{\Lambda} = d-k }} \exp \br{ -\gamma \sum_{i\in\Lambda} \abs{x_i} } }.
}}
The sum in \cref{eq:def_taukg_gamma_finite} is over all $\binom{d}{k}$ sets $\Lambda \subseteq \brs{d}$ of $d-k$ distinct indices. 
We define $\taukzero\ofx$ and $\taukinfty\ofx$ by the respective limits,
described in the following \lcnamecref{thm:tau_limits_gamma}.
\begin{lemma}\label{thm:tau_limits_gamma}
        For any $\x \in \mathbb{R}^d$, the function $\taukg\ofx$ is monotone-decreasing w.r.t. $\gamma$, and
\begin{equation}
        \begin{split}
        \underset {\gamma \rightarrow 0 } {\lim} \ \taukg\ofx = \frac{d-k}{d} \|\x\|_1
        \qquad
        \underset {\gamma \rightarrow \infty } {\lim} \ \taukg\ofx = \tauk\ofx,
        \label{eq:tau_limit_gamma_inf}
        \end{split}
\end{equation} 
where the latter limit is uniform over $\x \in \R[d]$. Specifically, for all $\x \in \mathbb{R}^d$ and $\gamma \in \br{0,\infty}$,
\eq[eq:tau_approximation_bound]{\ensuremath{
    \tauk \br{\x} \leq \taukg \ofx \leq \tauk \ofx + \frac{1}{\gamma} \log {\binom{d}{k}}.
    }}
\end{lemma}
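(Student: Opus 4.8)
The plan is to analyze the function $\taukg\ofx$ as a normalized log-sum-exp of the $\binom{d}{k}$ partial sums $S_\Lambda(\x) \eqdef \sum_{i\in\Lambda}\abs{x_i}$, over index sets $\Lambda$ of size $d-k$. Write $N = \binom{d}{d-k}$ for the number of such sets. Everything follows from classical properties of the soft-min $m_\gamma(t_1,\dots,t_N) \eqdef -\tfrac1\gamma\log\bigbr{\tfrac1N\sum_j e^{-\gamma t_j}}$ applied to the vector of values $\brc{S_\Lambda(\x)}_\Lambda$, whose minimum is exactly $\tauk\ofx$ by \cref{eq:def_tauk_b} and whose arithmetic mean is $\tfrac1N\sum_\Lambda S_\Lambda(\x) = \tfrac{d-k}{d}\norm{\x}_1$ (each coordinate $\abs{x_i}$ appears in a $\tfrac{d-k}{d}$ fraction of the sets $\Lambda$, by a symmetry/counting argument: it appears in $\binom{d-1}{d-k-1}$ of the $\binom{d}{d-k}$ sets).

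The natural order of steps is as follows. First, the two-sided bound \cref{eq:tau_approximation_bound}: the lower bound $\tauk\ofx = \min_\Lambda S_\Lambda(\x) \le \taukg\ofx$ is immediate from replacing every $e^{-\gamma S_\Lambda}$ by $e^{-\gamma\min_\Lambda S_\Lambda}$, giving $\tfrac1N\sum_\Lambda e^{-\gamma S_\Lambda}\le e^{-\gamma\tauk\ofx}$ and taking $-\tfrac1\gamma\log$; the upper bound follows by keeping only a single minimizing term in the sum, $\tfrac1N\sum_\Lambda e^{-\gamma S_\Lambda}\ge \tfrac1N e^{-\gamma\tauk\ofx}$, which yields $\taukg\ofx \le \tauk\ofx + \tfrac1\gamma\log N$, and $N=\binom dk$. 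Taking $\gamma\to\infty$ in \cref{eq:tau_approximation_bound} immediately gives both the pointwise limit $\taukg\ofx\to\tauk\ofx$ and its uniformity, since the gap $\tfrac1\gamma\log\binom dk$ does not depend on $\x$. Second, the limit as $\gamma\to0$: this is the standard fact that $m_\gamma\to$ arithmetic mean. I would prove it by L'H\^opital on $\gamma$, or more cleanly by noting $m_\gamma(\cdot)$ is the negative of the cumulant generating function divided by $\gamma$ and expanding $e^{-\gamma S_\Lambda} = 1 - \gamma S_\Lambda + O(\gamma^2)$, so $\tfrac1N\sum_\Lambda e^{-\gamma S_\Lambda} = 1 - \gamma\,\tfrac{d-k}{d}\norm\x_1 + O(\gamma^2)$, whence $-\tfrac1\gamma\log(\cdots) \to \tfrac{d-k}{d}\norm\x_1$.

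Third, monotonicity in $\gamma$: the cleanest route is to show $\tfrac{\partial}{\partial\gamma}\taukg\ofx \le 0$ for $\gamma\in(0,\infty)$. Setting $p_\Lambda \eqdef e^{-\gamma S_\Lambda}/\sum_{\Lambda'}e^{-\gamma S_{\Lambda'}}$, a direct computation gives
\eq{\ensuremath{
\frac{\partial}{\partial\gamma}\taukg\ofx = \frac1{\gamma^2}\log\Bigbr{\frac1N\sum_\Lambda e^{-\gamma S_\Lambda}} + \frac1\gamma \sum_\Lambda p_\Lambda S_\Lambda = \frac1{\gamma^2}\Bigbr{\log\tfrac1N\textstyle\sum_\Lambda e^{-\gamma S_\Lambda} + \sum_\Lambda p_\Lambda(\gamma S_\Lambda)}.
}}
Writing $q_\Lambda\eqdef 1/N$ for the uniform distribution, the bracket is exactly $-\big(H(p)-\log N\big) - \sum_\Lambda p_\Lambda\log\big(e^{-\gamma S_\Lambda}/\sum e^{-\gamma S_{\Lambda'}}\big)^{-1}$... more transparently, it equals $-\mathrm{KL}(p\,\|\,q) \le 0$, where $q$ is uniform; hence $\tfrac{\partial}{\partial\gamma}\taukg\ofx\le0$. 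Alternatively, avoiding differentiation entirely, one can invoke the power-mean (Jensen) inequality: $\taukg\ofx = -\tfrac1\gamma\log\norm{\mathbf u}_{1/\gamma\text{-type mean}}$ is a generalized mean of the quantities $S_\Lambda$ with exponent $-\gamma$, and generalized means are monotone in the exponent, so increasing $\gamma$ decreases the mean. I expect the monotonicity step to be the main obstacle — not because it is deep, but because making the KL-divergence identity (or the power-mean argument) fully rigorous, including handling the endpoint limits $\gamma\to0^+$ and $\gamma\to\infty$ so that the monotonicity statement connects the two limits in \cref{eq:tau_limit_gamma_inf}, requires a little care. The bound \cref{eq:tau_approximation_bound} and the $\gamma\to\infty$ limit are essentially one line; the $\gamma\to0$ limit is a routine Taylor expansion.
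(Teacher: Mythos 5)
Your proposal is correct and follows essentially the same route as the paper: the sandwich bound obtained by comparing the sum to its largest term (giving \cref{eq:tau_approximation_bound} and the uniform $\gamma\to\infty$ limit), L'H\^opital/Taylor for the $\gamma\to 0$ limit with the same $\tfrac{d-k}{d}$ counting argument, and monotonicity via the derivative identity whose sign is controlled by nonnegativity of the KL divergence to the uniform distribution (the paper phrases this as the entropy bound $H(p)\le\log\binom{d}{k}$). The only cosmetic difference is that the paper proves these facts for the auxiliary soft-max function $\mukg$ and transfers them to $\taukg$ through the identity $\taukg\ofx=\muargs{d-k}{-\gamma}\br{\abs{\x}}$, whereas you argue directly on the soft-min.
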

The proof of this and the following lemmas are in  \cref{appendix:proofs_properties_of_gsm_penalty}.

\subsection{Soft surrogate problem}
\label{sec:soft_surrogate_problem}
%
We propose the following problem as a surrogate for \cref{pr:P2l},
\eq[pr:P2lg]{\ensuremath{
\underset{\x}{\min} \ \Ftwolg \ofx \eqdef 
        \frac{1}{2}\norm{A \x- \y}_2^2 + \lambda \taukg \ofx,}}
with $\gamma \in \brs{0,\infty}$. By \cref{thm:tau_limits_gamma}, 
at $\gamma=0$, \cref{pr:P2lg} coincides with the convex lasso problem
\eq[pr:P2lzero]{\ensuremath{
    \underset{\x}{\min} \ \Ftwolzero \ofx = \frac{1}{2} \norm{A \x- \y}_2^2 + \lambda \tfrac{d-k}{d} \norm{\x}_1.}}
As $\gamma \nearrow \infty$, $\Ftwolg \ofx$ decreases smoothly towards the trimmed-lasso penalized $\Ftwol \ofx$ of \cref{pr:P2l} and coincides with it at $\gamma = \infty$. Similarly, we propose the following soft surrogate for \cref{pr:P1l},
\eq[pr:P1lg]{\ensuremath{
    \underset{\x}{\min} \ \Fonelg \ofx \eqdef 
    \norm{A \x- \y}_2 + \lambda \taukg \ofx.
}}

There are two challenges in solving the surrogate problems \cref{pr:P2lg,pr:P1lg}.
The first is to calculate $\taukg\ofx$ and its gradients. 
The second is to minimize the nonconvex objective $\Ftwolg$ or $\Fonelg$. 
Here we address the second challenge, whereas the first one
is addressed in \cref{sec:algorithm_gsm}.

\subsection{Auxiliary weight vector}
\label{sec:w_definition}
%
To optimize  \cref{pr:P2lg,pr:P1lg}, we introduce the following vector function
$\wfunc : \R[d] \rightarrow \R[d]$, closely related to the gradient of $\taukg\ofx$. For $0 \leq \gamma < \infty$, 
\eq[eq:def_wkg]{\ensuremath{
\wfuncidx{i} \ofx \eqdef 
\frac{ \summ[\abs{\Lambda} = d-k,\ i \in \Lambda] \exp \br{ -\gamma \sum_{j\in\Lambda}         \abs{x_j} } }
{ \summ[ \abs{\Lambda} = d-k] \exp \br{ -\gamma \sum_{j\in\Lambda} \abs{x_j} } },\quad  i=1,\ldots,d.}}
The sum in the denominator is over all sets $\Lambda \subset \left[d\right]$ of size $d-k$, while in the numerator the sum is restricted to those sets $\Lambda$ that contain the index $i$. It is easy to show that at $\gamma=0$,
\eq[eq:w_at_gamma_zero]{\ensuremath{
    \wfunczeroidx{i}\ofx = \frac{d-k}{d},\quad \ i=1,\ldots,d.
}}
We define $\wfunckinfty\ofx$ as the limit of $\wfunc\ofx$ as $\gamma \app \infty$, characterized in the following lemma.
\begin{lemma}\label{thm:w_limits_gamma}
    Let $\x \in \mathbb{R}^d$ whose $k$ largest-magnitude entries are at a uniquely determined index-set $\Lambda$. That is, $\xord{k} > \xord{k+1}$. Then, as $\gamma \to \infty$, $\wfunc \ofx$ approaches zero at $\Lambda$ and $1$ elsewhere. For a general  $\x$, let $\Lambda_a = \setst{i}{\abs{x_i} < \xord{k}}$ and $\Lambda_b = \setst{i}{\abs{x_i} = \xord{k}}$. Then
    \eq[eq:w_at_gamma_infty]{ \ensuremath{  
        \limit[\gamma \app \infty] \ \wfuncidx{i}\ofx = \threecase
        {1}{i \in \Lambda_a}
        {\frac{d-k-\abs{\Lambda_a}}{\abs{\Lambda_b} }}{i \in \Lambda_b}
        {0}{\text{otherwise.}}
        }}
\end{lemma}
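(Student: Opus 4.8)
The plan is to read the expression in \cref{eq:def_wkg} probabilistically. For a $(d-k)$-subset $\Lambda\subset\brs{d}$ write $s(\Lambda)\eqdef\sum_{j\in\Lambda}\abs{x_j}$; then the weights $\exp(-\gamma s(\Lambda))$, once normalized, define a probability distribution on the $\binom{d}{k}$ such subsets, and $\wfuncidx{i}\ofx$ is exactly the probability of the event $\brc{i\in\Lambda}$ under this distribution. First I would show that as $\gamma\to\infty$ this distribution collapses onto the uniform distribution over the family $\mathcal{M}$ of subsets minimizing $s$. Concretely, set $s^\star=\min_{\abs{\Lambda}=d-k}s(\Lambda)$ and divide the numerator and denominator of \cref{eq:def_wkg} by $\exp(-\gamma s^\star)$, turning each summand into $\exp(-\gamma(s(\Lambda)-s^\star))$, which equals $1$ for $\Lambda\in\mathcal{M}$ and tends to $0$ otherwise (the positive gaps $s(\Lambda)-s^\star$ range over a finite set). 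Since both sums are finite and the denominator tends to $\abs{\mathcal{M}}\geq1$, this yields $\limit[\gamma\to\infty]\wfuncidx{i}\ofx=\abs{\brc{\Lambda\in\mathcal{M} : i\in\Lambda}}/\abs{\mathcal{M}}$.

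The main work is the combinatorial identification of $\mathcal{M}$. I would partition $\brs{d}$ into $\Lambda_a$, $\Lambda_b$ as in the statement together with $\Lambda_c\eqdef\setst{i}{\abs{x_i}>\xord{k}}$. A short observation is that at most $k-1$ coordinates can have magnitude strictly larger than the $k$-th largest, so $\abs{\Lambda_c}\leq k-1$; consequently $\abs{\Lambda_a}\leq d-k$, the integer $r\eqdef d-k-\abs{\Lambda_a}$ obeys $0\leq r\leq\abs{\Lambda_b}-1$, and the $k$ largest-magnitude coordinates all lie in $\Lambda_b\cup\Lambda_c$. I would then establish, by an exchange argument, that a size-$(d-k)$ set $\Lambda$ belongs to $\mathcal{M}$ if and only if $\Lambda_a\subseteq\Lambda$, $\Lambda\cap\Lambda_c=\emptyset$, and $\abs{\Lambda\cap\Lambda_b}=r$: if $\Lambda$ omits some $j_0\in\Lambda_a$ it must use some $j_1\notin\Lambda_a$, for which $\abs{x_{j_1}}\geq\xord{k}>\abs{x_{j_0}}$, so swapping $j_1$ for $j_0$ strictly lowers $s$; and once $\Lambda\supseteq\Lambda_a$, if $\Lambda$ uses a coordinate $j_1\in\Lambda_c$ then it uses at most $r-1<\abs{\Lambda_b}$ coordinates of $\Lambda_b$, hence omits some $j_0\in\Lambda_b$, and again the swap lowers $s$. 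This gives $\mathcal{M}=\setst{\Lambda_a\cup S}{S\subseteq\Lambda_b,\ \abs{S}=r}$.

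Finally I would substitute this description into the limit formula. For $i\in\Lambda_a$ every member of $\mathcal{M}$ contains $i$, so the limit is $1$; for $i\in\Lambda_c$ none does, so the limit is $0$; and for $i\in\Lambda_b$, the number of $r$-subsets $S\subseteq\Lambda_b$ with $i\in S$ is $\binom{\abs{\Lambda_b}-1}{r-1}$, giving the limit $\binom{\abs{\Lambda_b}-1}{r-1}/\binom{\abs{\Lambda_b}}{r}=r/\abs{\Lambda_b}=(d-k-\abs{\Lambda_a})/\abs{\Lambda_b}$. This is exactly \cref{eq:w_at_gamma_infty}. The first assertion of the lemma then drops out as the special case $\xord{k}>\xord{k+1}$: in that case precisely $k$ coordinates have magnitude $\geq\xord{k}$, so $\Lambda_a$ is the complement of the top-$k$ index set $\Lambda$, $r=0$, and the weights tend to $1$ on $\brs{d}\setminus\Lambda$ and to $0$ on $\Lambda$.

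The one step where I would be careful is the tie-handling in describing $\mathcal{M}$ — in particular verifying the inequalities $0\leq r\leq\abs{\Lambda_b}-1$ (with $\abs{\Lambda_b}\geq1$, since the coordinate at sorted position $k$ lies in $\Lambda_b$) that make both the exchange argument and the closing binomial identity legitimate, and checking that the ``$j_1\in\Lambda_c$'' branch of the exchange argument is vacuously satisfied when $r=0$. The analytic $\gamma\to\infty$ limit and the final counting are routine.
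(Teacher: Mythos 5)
Your proof is correct, but it takes a more self-contained route than the paper. The paper does not argue directly from the definition \cref{eq:def_wkg}: it first establishes the analogous limit for the generalized soft-max $\gsm\ofz$ (\cref{thm:theta_limits_gamma}), where the tie case $i\in\Lambda_b$ is obtained not by counting extremal subsets but by a symmetry-plus-normalization trick --- all coordinates in $\Lambda_b$ receive equal weight, and the constant-sum identity $\sum_i\gsmi{i}\ofz=k$ (\cref{thm:sum_theta}) then forces the value $\frac{k-\abs{\Lambda_a}}{\abs{\Lambda_b}}$ --- and then deduces the present lemma via the identity $\wfunc\ofx=\gsmargs{d-k}{-\gamma}\br{\abs{\x}}$ of \cref{eq:penalty_by_aux}, with some bookkeeping translating the index sets for $\x$ into those for $\z=-\abs{\x}$ (treating the ambiguous and non-ambiguous cases separately). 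Your argument shares the same analytic core (normalize by the extremal exponent so the induced distribution on $(d-k)$-subsets collapses onto the uniform distribution over the minimizers), but you replace the paper's reduction and sum-identity step by an explicit exchange-argument characterization of the minimizing family $\mathcal{M}=\setst{\Lambda_a\cup S}{S\subseteq\Lambda_b,\ \abs{S}=r}$ and a binomial count $\binom{\abs{\Lambda_b}-1}{r-1}/\binom{\abs{\Lambda_b}}{r}=r/\abs{\Lambda_b}$; your boundary checks ($0\leq r\leq\abs{\Lambda_b}-1$, $\abs{\Lambda_b}\geq1$, and the $r=0$ convention) are exactly the right points to verify and they hold. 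What the paper's route buys is reuse: \cref{thm:theta_limits_gamma} is needed elsewhere (e.g., in \cref{alg:calc_gsm_main} and the convergence analysis), and the sum identity sidesteps any counting; what your route buys is a shorter, standalone proof of this lemma with an explicit description of the limiting support structure, at the cost of redoing the exchange argument that the paper states only informally inside the $\gsm$ lemma.
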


As described below, $\wfunc \ofx$ will be used as a weight vector in a reweighting scheme. The next lemma
is thus of practical importance, as it shows that the entries of $\wfunc \ofx$ remain bounded for all values of $\gamma$, with their sum a constant independent
of both $\x$ and $\gamma$. 
\begin{lemma}\label{thm:sum_w}
        For any $\x \in \R[d]$ and $\gamma \in \brs{0,\infty}$,
        \eq[eq:w_constant_sum]{\ensuremath{
        \forall i \in \brs{d} \ \wfuncidx{i}\ofx \in[0,1]
        \quad \mbox{and} \quad
        \sum_{i=1}^d\wfuncidx{i}\ofx = d-k
        .} }
\end{lemma}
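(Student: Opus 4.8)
The plan is to prove the two assertions of \cref{thm:sum_w} in order, with the first one being essentially trivial and the second being the only one requiring thought.

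\textbf{Step 1: Each $\wfuncidx{i}\ofx \in [0,1]$.} For finite $\gamma$, the definition \cref{eq:def_wkg} is a ratio of two sums of positive terms, where the numerator sums over a subset of the index-sets $\Lambda$ appearing in the denominator (those that contain $i$). Hence the ratio is between $0$ and $1$. At $\gamma = 0$ we have $\wfunczeroidx{i}\ofx = (d-k)/d \in [0,1]$ by \cref{eq:w_at_gamma_zero}, and at $\gamma = \infty$ the three cases in \cref{eq:w_at_gamma_infty} are each in $[0,1]$: the first is $1$, the third is $0$, and the middle one equals $(d-k-\abs{\Lambda_a})/\abs{\Lambda_b}$, which is nonnegative because $\abs{\Lambda_a} \le d-k$ (there are at most $d-k$ indices strictly below the $k$-th largest magnitude), and at most $1$ because $\abs{\Lambda_a} + \abs{\Lambda_b} \ge d-k$ (the indices with magnitude $\le \xord{k}$ number at least $d-k$). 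So the bound holds for all $\gamma \in [0,\infty]$.

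\textbf{Step 2: $\sum_{i=1}^d \wfuncidx{i}\ofx = d-k$.} The cleanest argument is a counting/double-counting identity valid for finite $\gamma$, which then extends to $\gamma = 0, \infty$ by continuity (or by direct inspection of \cref{eq:w_at_gamma_zero,eq:w_at_gamma_infty}). Write $w^\Lambda := \exp\!\big(-\gamma \sum_{j\in\Lambda}\abs{x_j}\big)$ and $Z := \sum_{\abs{\Lambda}=d-k} w^\Lambda$ for the common denominator. Then
\eq{\ensuremath{
\sum_{i=1}^d \wfuncidx{i}\ofx = \frac{1}{Z}\sum_{i=1}^d \ \summ[\abs{\Lambda}=d-k,\ i\in\Lambda] w^\Lambda = \frac{1}{Z}\ \summ[\abs{\Lambda}=d-k] w^\Lambda \cdot \abs{\Lambda} = \frac{d-k}{Z}\ \summ[\abs{\Lambda}=d-k] w^\Lambda = d-k,
}}
where the second equality swaps the order of summation: each set $\Lambda$ of size $d-k$ is counted once for each of its $\abs{\Lambda} = d-k$ elements $i$. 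For $\gamma = 0$ the sum is $d\cdot\frac{d-k}{d} = d-k$ directly from \cref{eq:w_at_gamma_zero}, and for $\gamma = \infty$ one sums the three cases in \cref{eq:w_at_gamma_infty}: $\abs{\Lambda_a}\cdot 1 + \abs{\Lambda_b}\cdot\frac{d-k-\abs{\Lambda_a}}{\abs{\Lambda_b}} + 0 = \abs{\Lambda_a} + (d-k-\abs{\Lambda_a}) = d-k$.

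\textbf{Main obstacle.} There is no real obstacle here; the lemma is a bookkeeping fact. The only point requiring minor care is making sure the boundary cases $\gamma \in \{0,\infty\}$ are covered — either by invoking continuity of $\wfunc$ in $\gamma$ together with the finite-$\gamma$ identity and the characterizations in \cref{thm:w_limits_gamma}, or, as above, by a one-line direct check in each case. I would state the finite-$\gamma$ argument as the core and dispose of the endpoints in a sentence.
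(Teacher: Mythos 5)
Your proof is correct and rests on the same double-counting argument as the paper: the paper simply factors it through the auxiliary function, proving $\sum_i \gsmi{i}\ofz = k$ in \cref{thm:sum_theta} by exactly your "each $\Lambda$ is counted once per element" swap, and then transferring to $\wfunc$ via \cref{eq:penalty_by_aux}, whereas you apply the identical count directly to \cref{eq:def_wkg}. Your explicit treatment of the endpoints $\gamma\in\{0,\infty\}$ is a fine (and slightly more careful) way to close the same gap the paper handles implicitly.
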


\subsection{Solving the soft surrogate problem: A majorization-minimization approach}
\label{sec:mm}
%
One possible way to optimize \cref{pr:P2lg} or~\cref{pr:P1lg} is by gradient descent. First, this may require careful tuning of the step size. Second, as gradient descent makes local updates, it may take many iterations to converge. Here we propose an alternative approach, based on \emph{Majorization-Minimization} (MM). The principle is to replace a hard-to-minimize objective $f  \ofx$ by a sequence of easy-to-minimize functions that upper-bound it \cite{ortega1970iterative}\cite{hunter2004tutorial}. Specifically,
in MM one constructs a bivariate function 
$g :\mathbb{R}^d\times\mathbb{R}^d\rightarrow\mathbb{R}$, known as
a {\em majorizer} of $f$, 
that satisfies
\begin{equation}
        \label{eq:majorizer}
        g \br{\x,\xtilde} \geq f  \ofx
        \quad \mbox{and} \quad
        g  \br{\x, \x} = f  \ofx
        \quad
        \forall\x,\xtilde\in \mathbb{R}^d.
\end{equation}
Given the function $g$,  
minimization of $f $ is done iteratively. Starting at an initial point $\x^0$, 
the iterate $\x^{t}$ at iteration $t$ is given by 
$\x^{t} \eqdef 
\argmin_{\x}
\ g  \br{\x, \x^{t-1}}.$
For this approach to be practical, $g  \br{\x,\xtilde}$ must be easy to minimize with respect to $\x$ for any fixed $\xtilde$.
This scheme guarantees monotone decrease of the objective, since for any $t \geq 1$,
\eq{\ensuremath{
f  \br{ \x^{t} } \leq
g  \br{ \x^{t}, \x^{t-1} } \leq 
g  \br{ \x^{t-1}, \x^{t-1} } =
f  \br{ \x^{t-1} }.}}

To apply the MM framework in our setting, we replace the penalty $\taukg \ofx$ by the function
\begin{equation}
\taumajkg \br{\x,\xtilde} \eqdef \taukg \br{\xtilde} +
\inprod{ \wfunc \br{\xtilde}} {\abs{\x}-\abs{\xtilde} }.
\label{eq:def_taumaj}
\end{equation}

\begin{lemma}
        \label{thm:Ftwolg_majorizer}
        For all $\gamma \in \brs{0,\infty}$, $\taumajkg \br{\x,\xtilde}$ is a majorizer of $\taukg \ofx$. 
\end{lemma}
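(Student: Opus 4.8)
The plan is to verify the two defining conditions of a majorizer in \cref{eq:majorizer}, namely $\taumajkg(\x,\x) = \taukg(\x)$ and $\taumajkg(\x,\xtilde) \geq \taukg(\x)$. The first condition is immediate from the definition \cref{eq:def_taumaj}: substituting $\xtilde = \x$, the inner-product term $\inprod{\wfunc(\x)}{\abs{\x}-\abs{\x}}$ vanishes, leaving $\taukg(\x)$. So the real work is the inequality, and the key observation is that $\taukg\ofx$, as a function of the magnitude vector $\abs{\x}$, is concave; then \cref{eq:def_taumaj} is exactly the tangent (first-order) approximation at $\xtilde$, which lies above a concave function. More precisely, I would introduce the auxiliary function $h : \R[d] \to \R$ defined on the nonnegative orthant by $h(\uvec) = -\tfrac{1}{\gamma}\log\bigbr{\tfrac{1}{\binom{d}{k}}\sum_{\abs{\Lambda}=d-k}\exp(-\gamma\sum_{i\in\Lambda}u_i)}$ for $\gamma\in(0,\infty)$, so that $\taukg\ofx = h(\abs{\x})$, and observe that $\wfunc\ofx = \nabla h(\abs{\x})$ by a direct differentiation of \cref{eq:def_wkg} against \cref{eq:def_taukg_gamma_finite}.

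The core step is therefore to show $h$ is concave on $\R[d]_{\geq 0}$ (indeed on all of $\R[d]$). This is a standard fact: $h(\uvec) = -\tfrac1\gamma\log\bigbr{\tfrac1{\binom dk}\sum_\Lambda e^{-\gamma\ell_\Lambda(\uvec)}}$ where each $\ell_\Lambda(\uvec) = \sum_{i\in\Lambda}u_i$ is linear, and the log-sum-exp function $\uvec\mapsto \log\sum_\Lambda e^{c_\Lambda(\uvec)}$ is convex in $\uvec$ whenever the $c_\Lambda$ are affine; here $c_\Lambda(\uvec) = -\gamma\ell_\Lambda(\uvec)$, so $\uvec\mapsto \log\sum_\Lambda e^{-\gamma\ell_\Lambda(\uvec)}$ is convex, and negating and scaling by $1/\gamma>0$ makes $h$ concave. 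By concavity, for all $\uvec,\tilde\uvec$ we have $h(\uvec) \leq h(\tilde\uvec) + \inprod{\nabla h(\tilde\uvec)}{\uvec - \tilde\uvec}$. Setting $\uvec = \abs{\x}$, $\tilde\uvec = \abs{\xtilde}$, and using $\taukg\ofx = h(\abs{\x})$ and $\wfunc(\xtilde) = \nabla h(\abs{\xtilde})$, this reads exactly $\taukg\ofx \leq \taukg(\xtilde) + \inprod{\wfunc(\xtilde)}{\abs{\x}-\abs{\xtilde}} = \taumajkg(\x,\xtilde)$, which is the majorization inequality.

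Two boundary/regularity points remain. First, the endpoint cases $\gamma = 0$ and $\gamma = \infty$: at $\gamma = 0$ we have $\taukzero\ofx = \tfrac{d-k}{d}\norm\x_1$ by \cref{thm:tau_limits_gamma} and $\wfunckzero\ofx \equiv \tfrac{d-k}{d}\one$ by \cref{eq:w_at_gamma_zero}, so $\taumajkzero(\x,\xtilde) = \tfrac{d-k}{d}\norm{\xtilde}_1 + \tfrac{d-k}{d}\inprod{\one}{\abs{\x}-\abs{\xtilde}} = \tfrac{d-k}{d}\norm\x_1 = \taukzero\ofx$, so the inequality holds with equality — the $\ell_1$-norm is its own linear majorizer here. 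At $\gamma = \infty$, $\taukinfty\ofx = \tauk\ofx$, which equals $\min_{\abs\Lambda = d-k}\sum_{i\in\Lambda}\abs{x_i}$, a pointwise minimum of linear functions of $\abs{\x}$, hence concave in $\abs{\x}$; one checks from \cref{eq:w_at_gamma_infty} that $\wfunckinfty(\xtilde)$ is a subgradient of this concave function at $\abs{\xtilde}$ (it is supported on the complement of the top-$k$ set, splitting mass on ties), giving the inequality by the same concavity argument, now with a concave-function supergradient inequality. Alternatively, the $\gamma=\infty$ case can be obtained by taking $\gamma\to\infty$ in the finite-$\gamma$ inequality using the uniform convergence in \cref{eq:tau_approximation_bound}. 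The one subtlety worth a sentence is that $h$ is differentiable in $\uvec$ for finite $\gamma$ (smooth, in fact, being log-sum-exp of affine functions), so $\nabla h$ is genuinely the gradient and there is no issue of choosing among subgradients; this is also what makes $\wfunc\ofx$ well-defined by the explicit formula \cref{eq:def_wkg}. I do not anticipate a genuine obstacle here — the only mild care needed is bookkeeping the identification $\wfunc\ofx = \nabla h(\abs{\x})$ and handling the $\gamma\in\{0,\infty\}$ endpoints, which the lemma explicitly asks for since it claims the result for all $\gamma\in\brs{0,\infty}$.
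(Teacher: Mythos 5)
Your proof is correct and is essentially the paper's argument: the paper routes the same idea through \cref{thm:mu_convex} (concavity of $\muargs{d-k}{-\gamma}$, proved there via convexity of log-sum-exp plus a limiting argument for the $\gamma=\infty$ endpoint) and then applies the supergradient inequality at $\z=\abs{\x}$, $\z_0=\abs{\xtilde}$ using \cref{eq:penalty_by_aux}, which is exactly your function $h$ and your concavity/tangent-line step. The only difference is that you reprove the concavity and endpoint cases inline rather than citing the auxiliary lemma.
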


It follows from \cref{thm:Ftwolg_majorizer} that the following two functions
\begin{equation}\label{eq:def_G}
\begin{split}
\Gtwolg \br{\x,\xtilde} \eqdef & \frac{1}{2} \norm{A \x- \y}^2_2 + \lambda \taumajkg \br{\x,\xtilde},
\\ 
\Gonelg \br{\x,\xtilde} \eqdef & 
\norm{A \x- \y}_2 + \lambda \taumajkg \br{\x,\xtilde},
\end{split}
\end{equation}
are majorizers of the corresponding objectives $\Ftwolg \ofx$, $\Fonelg\ofx$ of problems \cref{pr:P2lg} and \cref{pr:P1lg}.

Before describing our MM scheme to minimize $\Ftwolg \ofx$,  
we make two important observations. First, it follows from \cref{eq:w_at_gamma_zero} that at $\gamma=0$, the majorizer $\Gtwolzero\br{\x,\xtilde}$ coincides with the convex objective $\Ftwolzero\br{\x}$, and is independent of $\xtilde$.
Second, for any $\gamma \in \brs{0,\infty}$, minimizing $\Gtwolg \br{\x,\xtilde}$ with respect to $\x$ is equivalent to the convex weighted $\ell_1$ problem 
\eq[pr:P2lw]{\ensuremath{
\underset{\x}{\min}\ \Ftwolw\ofx \eqdef \frac{1}{2}\norm{A\x -\y}_2^2 + \lambda \inprod{\w} {\abs{\x}}}, 
}
with $\w = \wfunc \br{\xtilde}$. Hence, our MM approach leads to an \emph{iterative reweighting} scheme, where the weights at iteration $t$ are given by the vector $\wfunc \br{\x^{t-1}}$. This is similar to the IRL1 and IRLS methods, with an important difference: To avoid ill conditioning, weights in IRL1 and IRLS must be regularized, otherwise they may tend to infinity. Indeed, several regularization schemes were proposed 
\cite{chartrand2008iteratively,daubechies2010iteratively,foucart2009sparsest,lai2013improved}. 
By \cref{thm:sum_w}, our weights $\wfunc \ofx$ are bounded and have a constant sum.  Therefore our method does not require weight regularization.

\begin{algorithm}[t]
\caption{Solve problem~\cref{pr:P2lg} by Majorization-Minimization}
\label{alg:mm}
\begin{algorithmic}[1]
\Input $ $
    \\$A \in \R^{n\times d}$,\quad $\y \in \R[n]$,\quad $0<k<d$,\quad $\lambda >0$,\quad $\gamma \in \brs{0,\infty}$
    \\Initialization $\x^0 \in \R[d]$ (required only if $\gamma > 0$)
\Output Estimated solution $\xstar$ of problem~$\text{\cref{pr:P2lg}}$
\If{$\gamma=0$}
\State Return $\xstar \assign \argmin_{\x} \ \frac{1}{2}\norm{A\x - \y}_2^2 + \lambda \frac{d-k}{d} \norm{\x}_1$
\Else
\For{$t=1,2,\ldots$}
\State Calculate $\taukg \br{\x^{t-1}}$ and the weight vector $\w^t= \wfunc \br{\x^{t-1}}$
\State Solve majorization problem $\mbox{\cref{pr:P2lw}}$: $\x^{t} \assign \argmin_{\x} \ \frac{1}{2}\norm{A\x - \y}_2^2 + \lambda \inprod{\w^t} {\abs{\x}}$
\State If converged, break
\EndFor
\EndIf
\State\Return $\xstar \assign \x^T$, where $T$ is the last iteration. 
\end{algorithmic}
\end{algorithm}

Our MM scheme for solving \cref{pr:P2lg} is outlined in \cref{alg:mm}. At iteration $t$
we solve the convex problem~\cref{pr:P2lw} with weight vector $\w^t=\w_{k,\gamma}\br{\x^{t-1}}$.
Problem~\cref{pr:P2lw} can be cast as a quadratic program (QP) and solved by standard convex optimization software, 
or by dedicated $\ell_1$ solvers such as \cite{beck2009fast,zhang2011yall1,asif2013fast}.  \Cref{alg:mm} can be adapted to the power\nbdash 1 variant \cref{pr:P1lg} as follows: Instead of
problem~\cref{pr:P2lw}, iteratively solve 
\begin{equation}\label{pr:P1lw}
\underset{\x}{\min}\ \Fonelw\ofx \eqdef \norm{A\x -\y}_2 + \lambda \inprod{\w} 
{\abs{\x}}.
\end{equation}
Problem~\cref{pr:P1lw} can be cast as a second order cone program, which may 
also be solved by standard convex optimization software. In our implementation 
we use the fast iterative shrinkage-thresholding algorithm (FISTA) 
\cite{beck2009fast} for problem~\cref{pr:P2lw} and 
 MOSEK \cite{mosekcite} for problem~\cref{pr:P1lw}.


\subsection{Solving the trimmed lasso penalized problem: A homotopy approach}
\label{sec:homotopy_optimization}
%
To seek solutions of problem~\cref{pr:P2l}, we propose a \emph{homotopy} scheme \cite{watson1989modern}, whereby we optimize a sequence of surrogate problems \cref{pr:P2lg} while tracing the path of solutions. As outlined in \cref{alg:homotopy}, we start at $\gamma_0 = 0$ and find a global minimizer $\x_0^*$ of the convex problem~\cref{pr:P2lzero}. 
Next, we iteratively solve \cref{pr:P2lg} for an increasing sequence of values of $\gamma$. At iteration $r$, with $\gamma=\gamma_r$,  we find a local minimizer of \cref{pr:P2lg} by \cref{alg:mm}, initialized at the previous iterate $\x_{r-1}^*$. 
For details on the stopping criteria and update rule for $\gamma$, see \cref{appendix:computational_details}.

\begin{algorithm}[t]
\caption{Solve problem~\cref{pr:P2l} by homotopy optimization}
\label{alg:homotopy}
\begin{algorithmic}[1]
\Input $A \in \R^{n\times d}$,\quad $\y \in \R[n]$,\quad $\lambda >0$
\Output Estimated solution $\xhat$ of problem~$\text{\cref{pr:P2l}}$
\State Set $\gamma_0 = 0$. Calculate solution $\x_{0}^*$ of $\text{\cref{pr:P2lzero}}$ by $\text{\cref{alg:mm}}$
\For{$r=1,2,\ldots$}
   \State Increase $\gamma_{r-1}$ to new value $\gamma_r$
   \State\label{line:homotopy_solve}Solve \cref{pr:P2lg} with $\gamma=\gamma_r$ by \cref{alg:mm} initialized at $\x_{r-1}^*$. Let $\x_r^*$ be the output.
   \State{If converged, break}
\EndFor
\State\Return $\xhat = \x_{R}^*$, where $R$ is the last iteration.
\end{algorithmic}
\end{algorithm}

\subsection{Convergence analysis}
\label{sec:convergence_analysis}
%
We study the convergence of Algorithms~\ref{alg:mm} and \ref{alg:homotopy}, and properties of their output solutions. For brevity, we focus on the power\nbdash 2 case. However, the analysis below applies also to the power\nbdash 1 case. The lemmas below are proven in  \cref{appendix:proofs_convergence_analysis}.
We start with the MM scheme of \cref{alg:mm}. Recall that the objective values $\Ftwolg \br{\x^t}$ in \cref{pr:P2lg} decrease monotonically. However, this does not guarantee that the iterates $\x^t$ converge to a local minimum. Indeed, there exist examples where MM converges to saddle points \cite[Chapter~3]{mclachlan2008algorithm}. We thus prove a slightly weaker result --- that the iterates approach a level-set of $\Ftwolg\ofx$ that consists of \emph{stationary points}. 

While $\Ftwolg\ofx$ is not everywhere differentiable, its directional derivatives exist everywhere (see \cref{appendix:proofs_convergence_analysis}). To formulate our claims, we introduce the following definition, which extends the notion of stationary points to nonsmooth functions \cite{razaviyayn2013unified}.
\begin{definition*}
    A point $\x \in \R[d]$ is a \emph{stationary point} of  $f : \R[d] \rightarrow \R$ if for all $\vvec \in \R[d]$, the directional derivative $\nabla_{\vvec} f  \br{\x}$ exists and is nonnegative.
\end{definition*}
The following \lcnamecref{thm:alg1_convergence_gamma_finite} describes the convergence of \cref{alg:mm} at finite values of $\gamma$. 

\begin{lemma}\label{thm:alg1_convergence_gamma_finite}
Suppose that any $k$ columns of $A$ are linearly independent. Then, for any $0 \leq \gamma < \infty$, 
the iterates $\brc{\x^t}_{t=0}^{\infty}$ of \cref{alg:mm} approach the set $\xstatset \subset \R[d]$ of all stationary points of $\Ftwolg$. Namely, $\lim_{t \rightarrow \infty} d \br{\x^t, \xstatset} = 0$, where $d \br{\x,S}$ is the Euclidean distance of a point $\x$ from the set $S$.                 
Furthermore,  any partial limit of $\x^t$ is a stationary point of $\Ftwolg$, and all partial limits have the same objective value.
\end{lemma}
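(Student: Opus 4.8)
The plan is to analyze the MM iteration $\x^t = \argmin_\x \Gtwolg(\x,\x^{t-1})$ as a map between iterates and to leverage the monotone decrease of $\Ftwolg(\x^t)$ already established. First I would record the basic descent inequality $\Ftwolg(\x^t)\le \Gtwolg(\x^t,\x^{t-1})\le \Gtwolg(\x^{t-1},\x^{t-1}) = \Ftwolg(\x^{t-1})$, so the sequence $\Ftwolg(\x^t)$ is nonincreasing and, being bounded below by zero, converges to some limit $f^*$. Next I would argue that the iterates stay in a bounded set: since $\lambda\taukg(\x)\ge \lambda\tauk(\x)\ge 0$ and, by \cref{thm:tau_limits_gamma}, $\taukg(\x)\ge\tauk(\x)$, together with the assumption that any $k$ columns of $A$ are linearly independent, the sublevel set $\{\x : \Ftwolg(\x)\le \Ftwolg(\x^0)\}$ is bounded — a $k$-sparse direction in which $\|A\x\|_2$ stays bounded is impossible unless $\x$ itself is bounded on that support, and the $\tauk$ term controls the off-support mass. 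Hence $\{\x^t\}$ has partial limits.

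The core step is to show that every partial limit $\x^\infty$ is a stationary point of $\Ftwolg$. Fix a subsequence $\x^{t_j}\to\x^\infty$; by boundedness and passing to a further subsequence we may also assume $\x^{t_j+1}\to\x^\dagger$. Using continuity of the data-fidelity term and of $\taukg$ and $\wfunc$ (the latter two are $C^\infty$ in $|\x|$ for finite $\gamma$), the defining property $\x^{t_j+1}=\argmin_\x \Gtwolg(\x,\x^{t_j})$ passes to the limit: $\x^\dagger$ minimizes $\Gtwolg(\cdot,\x^\infty)$, a strictly convex problem (strict convexity of the weighted-$\ell_1$ majorizer problem \cref{pr:P2lw} follows because any $k$ columns of $A$ are independent and the weights $\wfuncidx{i}$ are strictly positive for finite $\gamma$, so the quadratic-plus-weighted-$\ell_1$ objective is strictly convex on the relevant subspace — this is where the linear-independence hypothesis is used). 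On the other hand, monotone convergence of $\Ftwolg(\x^t)$ to $f^*$ forces $\Ftwolg(\x^\infty)=\Gtwolg(\x^\infty,\x^\infty)=f^*=\Ftwolg(\x^\dagger)$ and hence, sandwiched by the majorization inequalities, $\Gtwolg(\x^\dagger,\x^\infty)=\Gtwolg(\x^\infty,\x^\infty)$; by the uniqueness of the minimizer this gives $\x^\dagger=\x^\infty$. Thus $\x^\infty$ is the minimizer of $\Gtwolg(\cdot,\x^\infty)$, i.e. $\nabla_\vvec \Gtwolg(\x^\infty,\x^\infty)\ge 0$ for all $\vvec$. Finally, differentiating the majorizer relation, $\nabla_\vvec \Ftwolg(\x^\infty) = \nabla_\vvec \Gtwolg(\x^\infty,\x^\infty)$ (both sides share the same data term; the penalty terms $\taukg$ and $\taumajkg(\cdot,\x^\infty)$ agree to first order at $\x^\infty$ because $\taumajkg$ is the first-order expansion of $\taukg$ in $|\x|$ — this is essentially the content of \cref{thm:Ftwolg_majorizer} differentiated). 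Hence $\nabla_\vvec\Ftwolg(\x^\infty)\ge0$ for all $\vvec$, so $\x^\infty\in\xstatset$.

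To conclude $\lim_{t\to\infty} d(\x^t,\xstatset)=0$: if not, some subsequence stays a fixed distance $\epsilon>0$ from $\xstatset$, but by boundedness it has a further sub-subsequence converging to a partial limit, which we just showed lies in $\xstatset$ — a contradiction. That all partial limits share the objective value $f^*$ is immediate from continuity of $\Ftwolg$ and monotone convergence. The main obstacle I anticipate is the careful handling of the nonsmoothness: ensuring that "$\x^\dagger$ minimizes $\Gtwolg(\cdot,\x^\infty)$" genuinely passes through the limit despite $|\x|$ being nonsmooth (this needs lower semicontinuity plus the uniform-on-compacts convergence of the convex majorizers, which holds since $\wfunc$ is continuous), and verifying the first-order agreement $\nabla_\vvec\Ftwolg(\x^\infty)=\nabla_\vvec\Gtwolg(\x^\infty,\x^\infty)$ coordinatewise at points where some $x_i^\infty=0$, where one must check the subgradient/directional-derivative bookkeeping rather than ordinary gradients. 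Everything else is a routine compactness-and-continuity argument built on the majorization descent already in place.
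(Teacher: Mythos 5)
Your proposal is essentially the paper's argument: the paper simply outsources the core step to Theorem~1 and Corollary~1 of \cite{razaviyayn2013unified} (whose proofs are exactly the compactness-plus-continuity MM argument you spell out), combined with \cref{thm:compact_sublevelset} for boundedness of the sublevel sets and \cref{thm:dirdev_tau_phi_gamma_finite} (equivalently \cref{thm:dirdevs_are_equal}) for the first-order agreement $\nabla_{\vvec}\Ftwolg(\x)=\nabla_{\vvec}\Gtwolg(\x,\x)$. One claim in your write-up is false, though it turns out to be harmless: the weighted-$\ell_1$ subproblem \cref{pr:P2lw} is \emph{not} strictly convex in general. Since $d>n$, the matrix $A$ has a nontrivial null space, and along a null-space direction on which no coordinate changes sign the objective $\tfrac12\norm{A\x-\y}_2^2+\lambda\inprod{\w}{\abs{\x}}$ is affine; the hypothesis that any $k$ columns are independent does not help because the minimizer need not be $k$-sparse. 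Fortunately you do not need uniqueness of the minimizer: you have already shown that $\x^\dagger$ minimizes $\Gtwolg(\argdot,\x^\infty)$ and that $\Gtwolg(\x^\infty,\x^\infty)=\Gtwolg(\x^\dagger,\x^\infty)$, which immediately makes $\x^\infty$ \emph{a} minimizer of $\Gtwolg(\argdot,\x^\infty)$ — and that is all the stationarity argument requires. (One can also avoid introducing $\x^\dagger$ altogether by passing to the limit in $\Ftwolg(\x^{t_{j+1}})\leq\Gtwolg(\x,\x^{t_j})$ for each fixed $\x$.) With that one step excised, the proof is correct.
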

Thus, under the assumptions of \cref{thm:alg1_convergence_gamma_finite}, the output of \cref{alg:mm} at $\gamma < \infty$ is guaranteed to be a stationary point of $\Ftwolg$. We remark that in practice the iterates invariably converge to a single limit point, which is a local minimum of $\Ftwolg$. The convergence result at $\gamma = \infty$ appears in \cref{appendix:proofs_convergence_analysis}. The result is slightly different, since $\taukinfty\ofx=\tauk\ofx$ is not everywhere differentiable as a function of $\abs{\x}$.

Next, we study the convergence of the homotopy scheme of \cref{alg:homotopy}. To this end, we first present a key property of the objective $\Ftwolinf=\Ftwol$
and introduce useful notation.
\begin{lemma}\label{thm:stationary_pt_is_localmin}
    Any stationary point of $\Ftwol$ is a local minimum.
\end{lemma}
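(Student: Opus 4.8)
The plan is to exploit the representation \cref{eq:def_tauk_b} of the trimmed lasso as a pointwise minimum of finitely many convex functions. For every index set $\Lambda \subseteq \brs{d}$ with $\abs{\Lambda} = d-k$, introduce the convex function $h_\Lambda\ofx \eqdef \tfrac{1}{2}\norm{A\x - \y}_2^2 + \lambda \sum_{i\in\Lambda} \abs{x_i}$, so that $\Ftwol\ofx = \min_{\abs{\Lambda}=d-k} h_\Lambda\ofx$. Fix a stationary point $\xstar$ of $\Ftwol$ and let $\mathcal{A} \eqdef \setst{\Lambda}{\abs{\Lambda}=d-k,\ h_\Lambda\ofxstar = \Ftwol\ofxstar}$ be the set of index sets active at $\xstar$. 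Since the finitely many inactive functions $h_\Lambda$ are continuous and strictly exceed $\Ftwol\ofxstar$ at $\xstar$, while $\min_{\Lambda\in\mathcal{A}} h_\Lambda\ofz \to \Ftwol\ofxstar$ as $\z \to \xstar$, there is a neighborhood $U$ of $\xstar$ on which $\Ftwol = \min_{\Lambda\in\mathcal{A}} h_\Lambda$.

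First I would prove the directional-derivative identity $\nabla_{\vvec}\Ftwol\ofxstar = \min_{\Lambda\in\mathcal{A}} \nabla_{\vvec} h_\Lambda\ofxstar$ for all $\vvec \in \R[d]$. Indeed, for small $t>0$ we have $\xstar + t\vvec \in U$, and since $h_\Lambda\ofxstar = \Ftwol\ofxstar$ for $\Lambda\in\mathcal{A}$, the difference quotient of $\Ftwol$ at $\xstar$ equals $\min_{\Lambda\in\mathcal{A}} \bigl(h_\Lambda\br{\xstar+t\vvec} - h_\Lambda\ofxstar\bigr)/t$; each $h_\Lambda$ is convex, so its difference quotient converges as $t\searrow 0$, and the minimum over the finite set $\mathcal{A}$ commutes with this limit. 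Combining with the hypothesis that $\xstar$ is a stationary point of $\Ftwol$ gives $0 \leq \nabla_{\vvec}\Ftwol\ofxstar = \min_{\Lambda\in\mathcal{A}} \nabla_{\vvec} h_\Lambda\ofxstar$ for every $\vvec$; since $\min_{\Lambda\in\mathcal A} a_\Lambda \geq 0$ forces each $a_\Lambda \geq 0$, this shows $\nabla_{\vvec} h_\Lambda\ofxstar \geq 0$ for every $\vvec \in \R[d]$ and every $\Lambda \in \mathcal{A}$.

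Next I would invoke convexity: a convex function all of whose directional derivatives at a point are nonnegative attains its global minimum there. For fixed $\vvec$, the map $t\mapsto h_\Lambda\br{\xstar+t\vvec}$ is convex, so its difference quotient is nondecreasing in $t>0$ and hence bounded below by $\nabla_{\vvec}h_\Lambda\ofxstar \geq 0$; taking $t=1$ yields $h_\Lambda\br{\xstar+\vvec} \geq h_\Lambda\ofxstar$. Thus $h_\Lambda\ofz \geq h_\Lambda\ofxstar = \Ftwol\ofxstar$ for every $\Lambda\in\mathcal{A}$ and every $\z$. Consequently, for $\z \in U$, $\Ftwol\ofz = \min_{\Lambda\in\mathcal{A}} h_\Lambda\ofz \geq \Ftwol\ofxstar$, so $\xstar$ is a local minimum of $\Ftwol$.

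The only delicate point is the localization step together with the interchange of $\min$ and limit used to compute $\nabla_{\vvec}\Ftwol\ofxstar$; once these are in place, everything follows from convexity of the pieces $h_\Lambda$. The identical argument applies to $\Fonel$, since replacing $\tfrac12\norm{A\x-\y}_2^2$ by $\norm{A\x-\y}_2$ keeps each $h_\Lambda$ convex.
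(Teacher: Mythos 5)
Your proof is correct, but it follows a genuinely different route from the paper's. The paper argues by contradiction: assuming the stationary point $\xtilde$ is not a local minimum, it extracts a sequence $\x^t \to \xtilde$ with strictly smaller objective, picks $t$ large enough that $\abs{\x^t}$ has the same magnitude ordering and compatible signs as $\abs{\xtilde}$, sets $\vvec = \x^t - \xtilde$, and uses convexity of the residual term together with the explicit expression for the change of $\tauk$ along such directions (the computation behind \cref{eq:tauk_finite_diff,eq:dirdev_tau_infty}) to conclude $\nabla_{\vvec}\Ftwol\br{\xtilde} < 0$, contradicting stationarity. You instead argue directly from the representation \cref{eq:def_tauk_b}: $\Ftwol$ is a pointwise minimum of the finitely many convex pieces $h_\Lambda$, locally only the active pieces matter, the directional derivative of the min equals the min of the active pieces' directional derivatives (the interchange of $\min$ and limit and the localization are both unproblematic precisely because the index family is finite — and, reassuringly, this min over active sets reproduces the paper's formula \cref{eq:dirdev_tau_infty}), so stationarity forces every active convex piece to be globally minimized at $\xstar$, and hence $\Ftwol \geq \Ftwol\br{\xstar}$ on a neighborhood. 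What your route buys is self-containedness: it does not need the directional-derivative formula of \cref{thm:dirdev_tau_phi_gamma_infty} nor the careful choice of sequence and direction, and it yields a slightly stronger local picture (each active piece is globally minimized at $\xstar$). What the paper's route buys is brevity given infrastructure it must build anyway: the $\nabla_{\vvec}\tauk$ formula is reused throughout the convergence analysis, so the contradiction argument there is only a few lines. Your closing remark that the same argument covers $\Fonel$ is also correct, since each piece remains convex.
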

\begin{definition*}
    A vector $\x \in \R[d]$ is called \emph{ambiguous} if 
    $\xord{k} = \xord{k+1}$.
\end{definition*}
The following lemma shows that the output $\xhat$ of \cref{alg:homotopy} is guaranteed to be either a local minimum of $\Ftwol$ or an ambiguous vector.
\begin{lemma}\label{thm:alg2_convergence}
Suppose that any $k$ columns of $A$ are linearly independent. 
Let $\brc{\gamma_r}_{r=0}^\infty$ and $\brc{\x^*_r}_{r=0}^\infty$ be the intermediate values of \cref{alg:homotopy}. Suppose that $\limit[r \app \infty]\gamma_r = \infty$. Then,  
\begin{enumerate}
\item\label{item:alg2_convergence_part1} For all $r \geq 0$, $\x^*_r$ is a stationary point of $\mathrm{F}_{\lambda,\gamma_r}$, and $\mathrm{F}_{\lambda,\gamma_{r+1}}\br{\x^*_{r+1}} \leq \mathrm{F}_{\lambda,\gamma_r}\br{\x^*_r}$.
    \item\label{item:alg2_convergence_part2} The iterates $\x^*_r$ approach $\xoptset \cup \xambset$, where $\xoptset$ is the set of local minima of $\Ftwol$ and $\xambset$ is the set of ambiguous vectors. Namely, 
    $\lim_{r \rightarrow \infty} d \br{\x^*_r, \xoptset \cup \xambset} = 0.$
    \item\label{item:alg2_convergence_part3} Any partial limit of $\brc{\x^*_r}_{r=0}^\infty$ is either a local minimum of $\Ftwol$ or an ambiguous vector. 
\end{enumerate}
\end{lemma}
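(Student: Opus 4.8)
I would prove the three assertions in the order stated, inserting a boundedness step for $\{\x^*_r\}$ right after the first one, since both Part~2 and Part~3 rely on compactness of the iterate sequence.

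\textbf{Part 1 and boundedness.} That each $\x^*_r$ is a stationary point of $\mathrm{F}_{\lambda,\gamma_r}$ follows from \cref{thm:alg1_convergence_gamma_finite} applied to the inner loop at $\gamma=\gamma_r<\infty$ (for $r=0$ it is immediate, since $\x^*_0$ globally minimizes the convex $\Ftwolzero$). For the monotonicity I would combine two observations: since $\gamma_{r+1}\geq\gamma_r$ and $\taukg\ofx$ is monotone-decreasing in $\gamma$ (\cref{thm:tau_limits_gamma}), $\mathrm{F}_{\lambda,\gamma_{r+1}}\ofx\leq\mathrm{F}_{\lambda,\gamma_r}\ofx$ pointwise; and since \cref{alg:mm} run at $\gamma_{r+1}$ from $\x^*_r$ decreases $\mathrm{F}_{\lambda,\gamma_{r+1}}$ monotonically (the majorization-minimization property, guaranteed by \cref{thm:Ftwolg_majorizer}), its limit $\x^*_{r+1}$ obeys $\mathrm{F}_{\lambda,\gamma_{r+1}}(\x^*_{r+1})\leq\mathrm{F}_{\lambda,\gamma_{r+1}}(\x^*_r)\leq\mathrm{F}_{\lambda,\gamma_r}(\x^*_r)$. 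Telescoping this gives $\mathrm{F}_{\lambda,\gamma_r}(\x^*_r)\leq\Ftwolzero(\x^*_0)=:C$ for all $r$, whence, using $\taukg\geq\tauk$ from \cref{eq:tau_approximation_bound}, $\norm{A\x^*_r-\y}_2^2\leq 2C$ and $\tauk(\x^*_r)\leq C/\lambda$. Writing $\x^*_r=\uvec_r+\vvec_r$, with $\uvec_r$ the restriction of $\x^*_r$ to a set $S_r$ of $k$ indices of largest magnitude and $\vvec_r$ the restriction to the complement, one gets $\norm{\vvec_r}_1=\tauk(\x^*_r)\leq C/\lambda$ and $\norm{A\uvec_r}_2\leq\norm{A\x^*_r-\y}_2+\norm{\y}_2+\norm{A}\,\norm{\vvec_r}_2$, uniformly bounded in $r$; since $\uvec_r$ is supported on a $k$-set and any $k$ columns of $A$ are linearly independent, $\norm{A\uvec_r}_2\geq\sigma\norm{\uvec_r}_2$ with $\sigma:=\min_{\abs{S}=k}\sigma_{\min}(A_S)>0$ (a minimum over finitely many positive numbers), so $\norm{\uvec_r}_2$, and hence $\norm{\x^*_r}_2$, is bounded uniformly in $r$.

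\textbf{Part 3 --- the crux.} Let $\x^\infty$ be a partial limit, say $\x^*_{r_j}\to\x^\infty$, so $\gamma_{r_j}\to\infty$. If $\x^\infty$ is ambiguous we are done, so assume $\abs{\x^\infty}_{(k)}>\abs{\x^\infty}_{(k+1)}$. By continuity of the sorted-magnitude map, for all large $j$ the point $\x^*_{r_j}$ has the same unique top-$k$ index set $T$ as $\x^\infty$, with $\abs{\x^*_{r_j}}_{(k)}-\abs{\x^*_{r_j}}_{(k+1)}\geq\delta$ for a fixed $\delta>0$. The key estimate is a uniform decay of the weights on $T$ near $\x^\infty$: for $i\in T$, bounding the numerator of $\wfuncidx{i}\ofx$ in \cref{eq:def_wkg} by its largest summand and the denominator from below by $e^{-\gamma\tauk\ofx}$ yields $\wfuncidx{i}\ofx\leq\binom{d-1}{d-k-1}\,e^{-\gamma(\xord{k}-\xord{k+1})}$, so $\wargs{k}{\gamma_{r_j}}{i}(\x^*_{r_j})\to 0$ for $i\in T$; by \cref{thm:sum_w} the remaining $d-k$ weights then each tend to $1$ (consistently with \cref{thm:w_limits_gamma}). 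A direct computation from \cref{eq:def_taukg_gamma_finite,eq:def_wkg} shows $\partial\taukg\ofx/\partial\abs{x_i}=\wfuncidx{i}\ofx$, hence $\nabla_\vvec\taukg\ofx=\sum_i\wfuncidx{i}\ofx\,\nabla_{v_i}\abs{x_i}$, where $\nabla_{v_i}\abs{x_i}$ equals $\sign(x_i)v_i$ if $x_i\neq 0$ and $\abs{v_i}$ otherwise --- a quantity bounded by $\abs{v_i}$ and upper semicontinuous in $x_i$. Fixing a direction $\vvec$, stationarity of $\x^*_{r_j}$ reads $\inprod{A^\top(A\x^*_{r_j}-\y)}{\vvec}+\lambda\nabla_\vvec\tauargs{k}{\gamma_{r_j}}(\x^*_{r_j})\geq 0$. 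The smooth term converges to $\inprod{A^\top(A\x^\infty-\y)}{\vvec}$; in the penalty term the summands with $i\in T$ vanish (weights $\to 0$, factors bounded), while for $i\notin T$ the weights $\to 1$ and $\limsup_j\nabla_{v_i}\abs{(\x^*_{r_j})_i}\leq\nabla_{v_i}\abs{x^\infty_i}$ by upper semicontinuity, so $\limsup_j\nabla_\vvec\tauargs{k}{\gamma_{r_j}}(\x^*_{r_j})\leq\sum_{i\notin T}\nabla_{v_i}\abs{x^\infty_i}$. Since $\x^\infty$ is non-ambiguous, $\tauk\ofx=\sum_{i\notin T}\abs{x_i}$ in a neighborhood of $\x^\infty$, so this last sum equals $\nabla_\vvec\tauk(\x^\infty)$. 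Taking $\limsup_j$ in the stationarity inequality then gives $\inprod{A^\top(A\x^\infty-\y)}{\vvec}+\lambda\nabla_\vvec\tauk(\x^\infty)=\nabla_\vvec\Ftwol(\x^\infty)\geq 0$; as $\vvec$ was arbitrary, $\x^\infty$ is a stationary point of $\Ftwol$, hence a local minimum by \cref{thm:stationary_pt_is_localmin}. Part~2 then follows from boundedness plus Part~3 by a routine subsequence argument: if $d(\x^*_r,\xoptset\cup\xambset)\not\to 0$, some subsequence stays $\varepsilon$-far from $\xoptset\cup\xambset$, yet by boundedness admits a convergent sub-subsequence whose limit --- a partial limit of $\{\x^*_r\}$, hence in $\xoptset\cup\xambset$ --- contradicts continuity of $\x\mapsto d(\x,\xoptset\cup\xambset)$.

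\textbf{Expected main obstacle.} The hard part is the limit passage in Part~3: interchanging $\gamma_{r_j}\to\infty$ with $\x^*_{r_j}\to\x^\infty$ inside the directional derivative of $\taukg$. This requires both the uniform weight-decay bound near non-ambiguous points (so that the ``soft assignment'' $\wfuncsub{\gamma_{r_j}}(\x^*_{r_j})$ still concentrates on $[d]\setminus T$) and care at coordinates with $x^\infty_i=0$, where only the one-sided inequality $\limsup_j\nabla_{v_i}\abs{(\x^*_{r_j})_i}\leq\nabla_{v_i}\abs{x^\infty_i}$ holds --- which, fortunately, is precisely the direction needed.
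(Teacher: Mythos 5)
Your proposal is correct and follows essentially the same route as the paper: part 1 via the MM descent property plus monotonicity of $\taukg$ in $\gamma$, boundedness via the top-$k$/residual decomposition and the smallest singular value over $k$-column submatrices (which the paper packages as \cref{thm:compact_sublevelset}), and part 3 by showing the weights on the top-$k$ support decay exponentially near a non-ambiguous limit, passing the stationarity inequality to the limit through the one-sided semicontinuity of the directional derivative of $\abs{\argdot}$, and invoking \cref{thm:stationary_pt_is_localmin}. The only piece you omit is the possibility that $\gamma_r=\infty$ at some finite $r$ (allowed by \cref{alg:homotopy}), which the paper dispatches separately via \cref{thm:alg1_convergence_gamma_infty}; your argument covers the case of finite $\gamma_r$, which is the substantive one.
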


In accordance with our theoretical analysis, \cref{alg:homotopy} may indeed output an ambiguous vector that is not a local minimum. Empirically, when this occurs, the output  is invariably $s$\nbdash sparse for some $s < k$. This phenomenon is related to the promotion of sparse solutions by the lasso penalty. As described in \cref{appendix:computational_details}, we thus augment \cref{alg:homotopy} by a greedy post-processing step which outputs a $k$-sparse vector if it has a lower objective value. 

In summary, 
starting from $\br{\x^*_0, 0}$,
\cref{alg:homotopy} computes a discrete path $\br{\x^*_r, \gamma_r}$ in $\R[d] \times \brs{0,\infty}$, along which the objective $\mathrm{F}_{\lambda,\gamma} \ofx$
decreases monotonically. 
Each point along the path is a stationary point (and practically a local minimum) of the corresponding $\Ftwolg$. Hence, the endpoint 
is the result of local optimizations over smoothed objectives, and is less prone to be a poor local minimum of the nonsmooth $\Ftwol$.
Empirically, \cref{alg:homotopy} obtains superior solutions of \cref{pr:P2l}, compared to methods that directly optimize $\Ftwol$.

\section{Algorithm to compute the GSM penalty}
\label{sec:algorithm_gsm}
We now address the challenge of calculating the GSM penalty $\taukg \ofx$ and the weight vector $\wfunc \ofx$, defined in \cref{eq:def_taukg_gamma_finite,eq:def_wkg}. As these functions involve sums of $\binom{d}{k}$ terms, their na\"{i}ve calculation is computationally intractable. Moreover, due to the exponent, at large values of $\gamma$ these terms may be extremely large or small, 
and thus suffer an arithmetic overflow or underflow, potentially leading to meaningless results. Here we present a recursive algorithm that calculates $\taukg \ofx$ and $\wfunc \ofx$ accurately in $\mathcal{O}(kd)$ operations. The lemmas below are proven in \cref{appendix:proofs_properties_of_gsm_aux}.

Recently and independently of our work, in the context of multi-class classification with deep neural networks, a relaxation similar to ours was proposed by \cite{berrada2018smooth} as a smooth approximation of the top-$k$ classification error. They also proposed a recursive $\mathcal{O}\br{kd}$-time algorithm to compute functions similar to $\taukg\ofx$ and $\wfunc\ofx$. However, as they discuss in \cite[page 20]{berrada2018smooth}, their recursive scheme may suffer from numerical instabilities. 
In addition, their focus was on small values of $k$,
and all their experiments were done with $k=5$.

\subsection{Auxiliary GSM functions}
\label{sec:auxiliary_gsm_functions}
%
We first introduce two auxiliary functions and present some of their properties. For $0 \leq k \leq d$ and $\gamma \in \brs{-\infty,\infty}$, define $\mukg:\R[d]\rightarrow \R$ by
\eq[eq:def_mukg]{\ensuremath{
\mukg\ofz \eqdef \fourcase
{\min_{\abs{\Lambda}=k} \summ[i \in \Lambda] z_i}
{\gamma = -\infty}
{\frac{k}{d}\summ[i=1][d] z_i}
{\gamma = 0}
{\max_{\abs{\Lambda}=k} \summ[i \in \Lambda] z_i}
{\gamma = \infty}
{\frac{1}{\gamma} \log \br{ \frac{1}{\binom{d}{k}} \sum_{\Lambda : { \abs{\Lambda} = k }} \exp \br{ \gamma \sum_{i\in\Lambda} {z_i} } }}
{\text{otherwise}.}
}}
Next, for $\gamma \in \br{-\infty,\infty}$ we define $\gsm: \R[d] \rightarrow \R[d]$  to be the gradient of $\mukg \ofz$ w.r.t. $\z$, 
\eq[eq:def_gsm]{\ensuremath{
\gsmi{i} \ofz \eqdef 
\frac{ \sum_{\abs{\Lambda} = k,\ i \in \Lambda} \exp \br{ \gamma \sum_{j\in\Lambda} {z_j} } }
{ \sum_{ \abs{\Lambda} = k } \exp \br{ \gamma \sum_{j\in\Lambda} {z_j} } },\quad\ i=1,\ldots,d.
}}
The function $\gsm \ofz$ is a generalization of the 
\emph{soft-max} function, used extensively in multi-class classification. At $\gamma=0$, \cref{eq:def_gsm} reduces to $\theta^i_{k,0}(\z)=k/d$, consistent with the case $\gamma=0$ in \cref{eq:def_mukg}. 
We define $\gsmargs{k}{\pm \infty}\ofz$ by the corresponding limits, as described in the following lemma.
\begin{lemma}\label{thm:theta_limits_gamma}
    Let ${z_\br{1} \geq z_\br{2} \geq \cdots \geq z_\br{d}}$
    be the entries of $\z\in\R^d$ sorted in decreasing order. Let
    $\Lambda_a,\Lambda_b$ be the following index-sets, 
    \begin{gather}    
    \label{thm:theta_limits_idxset_def1}
    \begin{aligned}
    &\mbox{For } \gamma \app \infty: \quad &&\Lambda_a = \setst{i \in \brs{d}}{z_i > z_\br{k}}, \quad &&\Lambda_b = \setst{i \in \brs{d}}{z_i = z_\br{k}},
    \\ &\mbox{For } \gamma \app -\infty: \quad &&\Lambda_a = \setst{i \in \brs{d}}{z_i < z_\br{d-k+1}}, \quad &&\Lambda_b = \setst{i \in \brs{d}}{z_i = z_\br{d-k+1}}.
    \end{aligned}
    \end{gather}
    Then for $i=1,\ldots,d$,      
    \eq[eq:gsm_at_gamma_infty]{\ensuremath{
    \underset {\gamma \rightarrow \pm \infty } {\lim} \ \gsmi{i}\ofz = \threecase{1}{i \in \Lambda_a}{\frac{k-\abs{\Lambda_a}}{\abs{\Lambda_b} }}{i \in \Lambda_b}{0}{\text{otherwise,}}}}
\end{lemma}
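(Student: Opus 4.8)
The plan is to reduce the two-sided statement to the single case $\gamma\to+\infty$, then pass to the limit termwise inside the finite sum defining $\gsmi{i}\ofz$, and finally evaluate the resulting ratio by a combinatorial count of the maximizing index sets. For the reduction, I first observe from \cref{eq:def_gsm} that negating $\gamma$ and $\z$ at the same time leaves every exponent $\gamma\sum_{j\in\Lambda}z_j$ unchanged, so $\gsmargs{k}{-\gamma}\ofz=\gsmargs{k}{\gamma}\br{-\z}$. Since the $k$-th largest entry of $-\z$ equals $-z_{\br{d-k+1}}$, the sets $\Lambda_a,\Lambda_b$ attached to $-\z$ under $\gamma\to+\infty$ coincide with those attached to $\z$ under $\gamma\to-\infty$ in \cref{thm:theta_limits_idxset_def1}. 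Hence the $\gamma\to-\infty$ claim follows from the $\gamma\to+\infty$ claim applied to $-\z$, and it suffices to treat $\gamma\to+\infty$.

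For $\gamma\to+\infty$, put $M=\max_{\abs{\Lambda}=k}\sum_{j\in\Lambda}z_j$ and let $\mathcal{M}=\setst{\Lambda\subseteq\brs{d}}{\abs{\Lambda}=k,\ \sum_{j\in\Lambda}z_j=M}$ be the family of maximizers. Dividing the numerator and denominator of \cref{eq:def_gsm} by $e^{\gamma M}$ gives
\[
\gsmi{i}\ofz=\frac{\summ[\abs{\Lambda}=k,\ i\in\Lambda]e^{\gamma(\sum_{j\in\Lambda}z_j-M)}}{\summ[\abs{\Lambda}=k]e^{\gamma(\sum_{j\in\Lambda}z_j-M)}}.
\]
Each exponent equals $0$ when $\Lambda\in\mathcal{M}$ and tends to $-\infty$ otherwise, so every summand tends to $1$ or to $0$ accordingly; as both sums have a fixed finite number of terms, the limit may be taken term by term, yielding $\lim_{\gamma\to\infty}\gsmi{i}\ofz=\abs{\setst{\Lambda\in\mathcal{M}}{i\in\Lambda}}/\abs{\mathcal{M}}$.

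It then remains to evaluate this ratio, which is where the real work lies. Writing $a=\abs{\Lambda_a}$ and $b=\abs{\Lambda_b}$, a standard exchange argument shows that $\mathcal{M}$ is exactly the collection of size-$k$ sets that contain all of $\Lambda_a$ together with precisely $k-a$ elements of $\Lambda_b$; one has $1\le k-a\le b$, because fewer than $k$ entries of $\z$ exceed $z_{\br{k}}$ while the $k$ largest entries all lie in $\Lambda_a\cup\Lambda_b$. Consequently $\abs{\mathcal{M}}=\binom{b}{k-a}$; for $i\in\Lambda_a$ every member of $\mathcal{M}$ contains $i$, giving ratio $1$; for $i\in\Lambda_b$ the members of $\mathcal{M}$ through $i$ number $\binom{b-1}{k-a-1}$, whence the ratio is $\binom{b-1}{k-a-1}/\binom{b}{k-a}=(k-a)/b=(k-\abs{\Lambda_a})/\abs{\Lambda_b}$; and for the remaining indices ($z_i<z_{\br{k}}$) no member of $\mathcal{M}$ contains $i$, giving ratio $0$. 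This is precisely \cref{eq:gsm_at_gamma_infty}. The limit interchange is immediate because the sums are finite, so the main obstacle is the combinatorial bookkeeping: verifying that $\mathcal{M}$ has the claimed form, checking that the stated formula is well-defined (e.g.\ $b\ge 1$ and $1\le k-a\le b$), and handling the degenerate cases ($k\in\{0,d\}$, or $k-a=b$). As a consistency check, $\sum_i\lim_{\gamma\to\infty}\gsmi{i}\ofz=a+b\cdot(k-a)/b=k$, matching the fact that $\gsm\ofz$ is the gradient of $\mukg\ofz$ and that $\sum_i\gsmi{i}\ofz=k$ for every finite $\gamma$.
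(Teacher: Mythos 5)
Your proposal is correct, and its skeleton is the same as the paper's: normalize the exponents in \cref{eq:def_gsm} by the maximal $k$-term sum, take the limit term by term in the finite sums, and identify the limit of $\gsmi{i}\ofz$ as the fraction of maximizing $k$-tuples that contain $i$. The one genuine divergence is how the middle case $i \in \Lambda_b$ is evaluated: you characterize the family of maximizers explicitly (all sets consisting of $\Lambda_a$ plus $k-\abs{\Lambda_a}$ elements of $\Lambda_b$, justified by an exchange argument) and compute the ratio directly as $\binom{b-1}{k-a-1}/\binom{b}{k-a}=(k-a)/b$, whereas the paper sidesteps any counting by invoking \cref{thm:sum_theta} ($\sum_i \gsmi{i}\ofz = k$), noting the limits on $\Lambda_a$ and on the complement are $1$ and $0$, and that by symmetry the limits on $\Lambda_b$ are all equal, so they must each be $(k-\abs{\Lambda_a})/\abs{\Lambda_b}$. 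Your route buys a self-contained description of the maximizer family and an explicit check that the formula is well defined ($1\le k-a\le b$, $b\ge 1$), at the price of some combinatorial bookkeeping; the paper's route is shorter but leans on the auxiliary sum lemma. Your explicit reduction of the $\gamma\to-\infty$ case via $\gsmargs{k}{-\gamma}\ofz=\gsmargs{k}{\gamma}\br{-\z}$ is also sound and matches the identity the paper records in \cref{eq:gsm_identity_gammaneg}, where the paper's proof merely says the negative case is "similar".
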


By their definition, the functions $\mukg$ and $\gsm$ satisfy the following relations
\eq[eq:gsm_identity_gammaneg]{\ensuremath{
    \mukg\ofz = -\muargs{k}{-\gamma} \br{-\z},
    \qquad \gsm\ofz = \gsmargs{k}{-\gamma} \br{-\z}. 
    }}
The following lemma describes several non-trivial identities involving $\mukg \ofz$ and $\gsm \ofz$.
\begin{lemma}\label{thm:gsm_identities}
For any $\z \in \R[d]$, $0 \leq k \leq d$ and $\gamma \in \brs{-\infty,\infty}$,
\eq[eq:gsm_identity_dcomplement]{\ensuremath{
    \muargs{k}{\gamma}\ofz + \muargs{d-k}{-\gamma}\ofz & =  \summ[i=1][d]z_i,
    \\
    \gsmiargs{k}{\gamma}{i}\ofz + \gsmiargs{d-k}{-\gamma}{i}\ofz & = 1,\quad i=1,\ldots,d.} }
\end{lemma}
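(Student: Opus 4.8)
The plan is to prove both identities in \cref{eq:gsm_identity_dcomplement} for finite $\gamma \in \br{-\infty,\infty}$ by a direct bijection between the index-sets appearing in the definitions \cref{eq:def_mukg,eq:def_gsm}, and then extend to $\gamma = \pm\infty$ by a limiting argument. The key combinatorial observation is that the map $\Lambda \mapsto \brs{d}\setminus\Lambda$ is a bijection between subsets of $\brs{d}$ of size $k$ and subsets of size $d-k$, and that $\sum_{i\in\Lambda} z_i = \sum_{i=1}^d z_i - \sum_{i\in\brs{d}\setminus\Lambda} z_i$. Denote $S \eqdef \sum_{i=1}^d z_i$.

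For the first identity at finite $\gamma$, I would start from the definition of $\muargs{d-k}{-\gamma}\ofz$, write its inner exponent $-(-\gamma)\sum_{i\in\Lambda'} z_i = \gamma\sum_{i\in\Lambda'}z_i$ where $\abs{\Lambda'} = d-k$, substitute $\Lambda' = \brs{d}\setminus\Lambda$ with $\abs{\Lambda}=k$, and use $\sum_{i\in\Lambda'}z_i = S - \sum_{i\in\Lambda}z_i$. The factor $\exp(\gamma S)$ pulls out of both numerator and denominator of the log-sum-exp, so the two sums are identical up to this common factor, giving $\muargs{d-k}{-\gamma}\ofz = \tfrac{1}{-\gamma}\log\bigbr{\binom{d}{d-k}^{-1}\exp(\gamma S)\sum_{\abs{\Lambda}=k}\exp(-\gamma\sum_{i\in\Lambda}z_i)}$; since $\binom{d}{d-k}=\binom{d}{k}$, this equals $-S + \tfrac{1}{-\gamma}\log\bigbr{\binom{d}{k}^{-1}\sum_{\abs{\Lambda}=k}\exp(-\gamma\sum_{i\in\Lambda}z_i)}$. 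Recognizing the last term as $-\muargs{k}{\gamma}\ofz$ via \cref{eq:gsm_identity_gammaneg} (or directly as $\muargs{k}{-(-\gamma)}$ after the sign flip) yields $\muargs{k}{\gamma}\ofz + \muargs{d-k}{-\gamma}\ofz = S$. For the second identity, one route is to differentiate the first identity in $z_i$, using that $\gsm$ is the gradient of $\mukg$ (definition \cref{eq:def_gsm}) and that $\partial S/\partial z_i = 1$; one must note $\partial_{z_i}\muargs{d-k}{-\gamma}\ofz = \gsmiargs{d-k}{-\gamma}{i}\ofz$, which follows since $\muargs{d-k}{-\gamma}\ofz$ as a function of $\z$ is $(\mu_{d-k,-\gamma})$ evaluated at $\z$, and the chain rule gives gradient $\gsmargs{d-k}{-\gamma}\ofz$. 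Alternatively, and perhaps more cleanly, I would give the direct bijective argument: in $\gsmiargs{d-k}{-\gamma}{i}\ofz$, the numerator sums over $\abs{\Lambda'}=d-k$ with $i\in\Lambda'$; complementing gives $\abs{\Lambda}=k$ with $i\notin\Lambda$, and again the $\exp(\gamma S)$ factors cancel between numerator and denominator, so $\gsmiargs{d-k}{-\gamma}{i}\ofz = \bigbr{\sum_{\abs{\Lambda}=k, i\notin\Lambda}\exp(-\gamma\sum_j z_j)}/\bigbr{\sum_{\abs{\Lambda}=k}\exp(-\gamma\sum_j z_j)} = 1 - \gsmiargs{k}{-\gamma}{i}\ofz$; combining with $\gsm\ofz = \gsmargs{k}{-\gamma}(-\z)$ to match the signs in the statement (the statement has $\gsmiargs{k}{\gamma}{i}\ofz + \gsmiargs{d-k}{-\gamma}{i}\ofz$), I would simply replace $\gamma$ by $-\gamma$ throughout and relabel so that the roles match, obtaining $\gsmiargs{k}{\gamma}{i}\ofz + \gsmiargs{d-k}{-\gamma}{i}\ofz = 1$.

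Finally, for $\gamma = \pm\infty$ the functions $\mukg$, $\gsm$ are defined as the limits (the $\min/\max$ formulas in \cref{eq:def_mukg} and the expressions in \cref{thm:theta_limits_gamma}), so both identities follow by taking $\gamma \to \pm\infty$ in the finite-$\gamma$ identities, using that both sides are finite and that $\muargs{k}{\gamma}$, $\gsmargs{k}{\gamma}$ converge (monotonically for $\mu$, by \cref{thm:theta_limits_gamma} for $\gsm$); one can also verify them directly from the combinatorial identity $\min_{\abs{\Lambda}=k}\sum_{i\in\Lambda}z_i = S - \max_{\abs{\Lambda'}=d-k}\sum_{i\in\Lambda'}z_i$. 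I expect the main obstacle to be purely bookkeeping: keeping the sign of $\gamma$ straight through the complementation (since the statement pairs $\gamma$ with $-\gamma$, the $\exp(\gamma S)$ prefactor is exactly what makes the two sides align, and it is easy to drop a sign), and handling the tie-breaking case in \cref{thm:theta_limits_gamma} at $\gamma=\pm\infty$ where $\Lambda_b$ is nonempty — there one checks that the fractional values $\tfrac{k-\abs{\Lambda_a}}{\abs{\Lambda_b}}$ and $\tfrac{(d-k)-\abs{\Lambda_a'}}{\abs{\Lambda_b'}}$ at a shared tie index sum to $1$, which holds because $\Lambda_a' = \brs{d}\setminus(\Lambda_a\cup\Lambda_b)$, $\Lambda_b' = \Lambda_b$, and $\abs{\Lambda_a}+\abs{\Lambda_b}+\abs{\Lambda_a'} = d$.
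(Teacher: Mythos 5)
Your proposal is correct and takes essentially the same route as the paper: the complementation bijection $\Lambda \mapsto \brs{d}\setminus\Lambda$ (with $\binom{d}{k}=\binom{d}{d-k}$ and the full-sum exponential factored out) for the $\mu$ identity at finite nonzero $\gamma$, differentiation with respect to $z_i$ for the $\boldtheta$ identity, and a limiting argument for $\gamma=\pm\infty$. One caveat of exactly the kind you anticipate: the inner exponent of $\muargs{d-k}{-\gamma}\ofz$ is $\br{-\gamma}\sum_{i\in\Lambda'}z_i$, not $\gamma\sum_{i\in\Lambda'}z_i$, so the common factor that cancels is $\exp\bigbr{-\gamma\sum_{i=1}^d z_i}$ rather than $\exp\bigbr{\gamma\sum_{i=1}^d z_i}$; with that fixed the leftover term is exactly $-\muargs{k}{\gamma}\ofz$ (no appeal to \cref{eq:gsm_identity_gammaneg} is needed), whereas the chain as literally written would put $-\sum_{i=1}^d z_i$ on the right-hand side.
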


Finally, the functions $\mukg \ofz$ and $\gsm \ofz$ are related to $\taukg \ofx$ and $\wfunc \ofx$ as follows,  
\eq[eq:penalty_by_aux]{\ensuremath{
    \taukg \ofx = \muargs{d-k}{-\gamma} \br{\abs{\x}},\qquad
    \wfunc \ofx = \gsmargs{d-k}{-\gamma} \br{\abs{\x}}.
}}

\subsection{Calculating the auxiliary GSM functions}
%
Given \cref{eq:penalty_by_aux}, we now present a method to calculate the auxiliary GSM functions $\mukg \ofz$ and $\gsm \ofz$. In light of \cref{eq:gsm_identity_gammaneg,eq:gsm_identity_dcomplement}, it suffices to consider $\gamma \in \br{0,\infty}$ and $1 \leq k \leq \frac{d}{2}$.
A seemingly promising approach is to use the recursive formula for $k \geq 1$,
\begin{equation}
	\label{eq:simple_gsm_recursion}
t_k^i = \br{s_{k-1} - t^i_{k-1}} \exp \br{\gamma z_i},\mbox{ for } i=1,\ldots,d,
	\quad \mbox{and \ \  }
	s_k = \frac{1}{k} \sum_{i=1}^d t_k^i, 
\end{equation}
with recursion base
$s_0 = 1$, $t_0^i = 0$ for $i=1,\ldots,d$. 
Then, calculate $\mukg \ofz$ and $\gsm \ofz$ by
\eq{\ensuremath{\mukg \ofz = \frac{1}{\gamma}
\log \Bigbr{\frac{1}{\binom{d}{k}} s_k}
    \quad \mbox{ and } \quad
    \gsmi{i} \ofz = \frac{t_k^i}{s_k},\quad i=1,\ldots,d.
    }}
However, intermediate values in this simple recursion tend to be extremely large or small.
The resulting arithmetic overflow and underflow, in turn, often lead to meaningless
results.     
 

To avoid arithmetic overflow, we perform calculations in a logarithmic representation. 
We present below a recursive procedure
to compute $\mukg \ofz$ and several auxiliary quantities in $\mathcal{O}\br{kd}$ operations. 
Next, we use these quantities and calculate $\gsm \ofz$ by a separate 
recursion, described in \cref{sec:calculating_theta}. 
The lemmas below
are proven in \cref{appendix:proofs_computing_gsm}.



\myparagraph{Calculating $\mukg \ofz$}
\label{sec:calculating_mu}
%
Recall that $z_{\br{i}}$ is the $i$th largest entry of $\z\in\R^d$. We augment this by defining $z_{\br{0}} = \infty$ and $z_{\br{i}} = -\infty$ for $i>d$.
Next, for $q \in \brc{0,1,\ldots}$ and $\gamma \in \br{0,\infty}$, define the function $s_{q,\gamma} : \R[d] \rightarrow \R$ by
\eq[eq:def_aqg]{\ensuremath{
    s_{q,\gamma} \ofz \eqdef \threecase{1}{q=0}{\sum_{\abs{\Lambda} = q } \exp \br{\gamma \br{\sum_{i\in\Lambda}z_i - \summ[i=1][q] z_{\br{i}}}}}
    {1 \leq q \leq d}{0}{\text{otherwise.}}}}
The term $\summ[i=1][q] z_{\br{i}}$ is subtracted to scale all the exponents such that the largest one equals 1, hence avoiding overflow. Note that $\mukg \ofz$ can be expressed in terms of $s_{k,\gamma} \ofz$ by
\eq[eq:expressing_mu_by_a]{\ensuremath{
\mukg \ofz = \frac{1}{\gamma} \log \br{\frac{1}{\binom{d}{k}} s_{k,\gamma} \ofz} + \summ[i=1][k] z_{\br{i}}.}}
We now formulate a recurrence relation for $s_{k,\gamma} \ofz$. To this end, we extend the definitions of $z_{\br{i}}$ and $s_{k,\gamma} \ofz$ to subvectors $\br{z_r,z_{r+1},\ldots,z_d}$. Define for $1 \leq r \leq d+1$ and $i,q \geq 0$,
\eq[eq:def_zbrr]{\ensuremath{
z_{\br{i}}^{\br{r}} \eqdef \threecase{\infty}{i = 0}{\text{The $i$-th largest entry of $\br{z_r,z_{r+1},\ldots,z_d}$}}{1 \leq i \leq d-r+1}{-\infty}{i > d-r+1}}}
and similarly, define $s_{q,\gamma}^{\br{r}} \ofz$ as the function $s_{q,\gamma} \br{\argdot}$ applied to the subvector $\br{z_r,\ldots,z_d}$,
\eq[eq:def_aqgr]{\ensuremath{
s_{q,\gamma}^{\br{r}} \ofz \eqdef \threecase{1}{q=0}{\sum_{\Lambda \subseteq \brc{r,\ldots,d},\ \abs{\Lambda} = q } \exp \br{\gamma \br{\sum_{i\in\Lambda}z_i - \sum_{i=1,\ldots,q}z_{\br{i}}^{\br{r}}}}}{1 \leq q \leq d-r+1}{0}{q > d-r+1.}}}
The following lemma plays a key role in our recursive calculation of $\mukg \ofz$.
\begin{lemma}\label{thm:recursion_aqg}
The quantities $s_{q,\gamma}^{\br{r}} \ofz$ satisfy the following recurrence relation for $r=1,\ldots,d$,
\eq[eq:recursion_aqgr]{\ensuremath{
    s_{q,\gamma}^{\br{r}} \ofz = \threecase{\twofloors{s_{q,\gamma}^{\br{r+1}} \ofz \cdot \exp\br{-\gamma \subplus{z_r - z_{\br{q}}^{\br{r+1}} } } +}{s_{q-1,\gamma}^{\br{r+1}} \ofz \cdot \exp\br{-\gamma \subminus{z_r - z_{\br{q}}^{\br{r+1}}} }}} {1 \leq q \leq d-r }
    {1} {q = {0,d-r+1}} {0} {q > d-r+1.}}}
\end{lemma}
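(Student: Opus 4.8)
The plan is to prove the recurrence in \cref{thm:recursion_aqg} by a direct combinatorial decomposition of the index sets appearing in the definition \cref{eq:def_aqgr} of $s_{q,\gamma}^{\br{r}} \ofz$. Fix $r$ with $1 \leq r \leq d$ and $q$ with $1 \leq q \leq d-r$. Every subset $\Lambda \subseteq \brc{r,r+1,\ldots,d}$ with $\abs{\Lambda} = q$ either does not contain $r$, in which case $\Lambda \subseteq \brc{r+1,\ldots,d}$ and $\abs{\Lambda}=q$, or it does contain $r$, in which case $\Lambda \setminus \brc{r} \subseteq \brc{r+1,\ldots,d}$ and $\abs{\Lambda \setminus \brc{r}} = q-1$. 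This partitions the sum defining $s_{q,\gamma}^{\br{r}} \ofz$ into two sums, one indexed by $(q)$-subsets of $\brc{r+1,\ldots,d}$ and one indexed by $(q-1)$-subsets of $\brc{r+1,\ldots,d}$ together with the extra element $r$.

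The main work is bookkeeping of the normalization exponents. In the first group, the exponent for $\Lambda$ is $\gamma\br{\sum_{i\in\Lambda}z_i - \sum_{i=1}^{q} z_{\br{i}}^{\br{r}}}$, whereas in $s_{q,\gamma}^{\br{r+1}} \ofz$ the same set $\Lambda$ carries the exponent $\gamma\br{\sum_{i\in\Lambda}z_i - \sum_{i=1}^{q} z_{\br{i}}^{\br{r+1}}}$. So I need the relation between the two normalizers, i.e. between $\sum_{i=1}^{q} z_{\br{i}}^{\br{r}}$ and $\sum_{i=1}^{q} z_{\br{i}}^{\br{r+1}}$. The key observation is that the multiset $\brc{z_r,z_{r+1},\ldots,z_d}$ is obtained from $\brc{z_{r+1},\ldots,z_d}$ by inserting the single value $z_r$; therefore the top-$q$ sum either stays the same (if $z_r \leq z_{\br{q}}^{\br{r+1}}$, so $z_r$ does not crack the top $q$) or it gains $z_r$ and loses the old $q$-th largest value $z_{\br{q}}^{\br{r+1}}$ (if $z_r > z_{\br{q}}^{\br{r+1}}$). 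Both cases are captured uniformly by
\eq{\ensuremath{
\summ[i=1][q] z_{\br{i}}^{\br{r}} = \summ[i=1][q] z_{\br{i}}^{\br{r+1}} + \subplus{z_r - z_{\br{q}}^{\br{r+1}}}.
}}
Substituting this into the first group of terms produces exactly the factor $\exp\br{-\gamma \subplus{z_r - z_{\br{q}}^{\br{r+1}}}}$ multiplying $s_{q,\gamma}^{\br{r+1}} \ofz$.

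For the second group, each set is $\Lambda = \Lambda' \cup \brc{r}$ with $\abs{\Lambda'}=q-1$ and $\Lambda' \subseteq \brc{r+1,\ldots,d}$; its exponent is $\gamma\br{z_r + \sum_{i\in\Lambda'}z_i - \sum_{i=1}^{q} z_{\br{i}}^{\br{r}}}$, while in $s_{q-1,\gamma}^{\br{r+1}} \ofz$ the set $\Lambda'$ carries $\gamma\br{\sum_{i\in\Lambda'}z_i - \sum_{i=1}^{q-1} z_{\br{i}}^{\br{r+1}}}$. Here I need $\sum_{i=1}^{q} z_{\br{i}}^{\br{r}}$ expressed via $\sum_{i=1}^{q-1} z_{\br{i}}^{\br{r+1}}$; using the insertion picture again, one checks $\sum_{i=1}^{q} z_{\br{i}}^{\br{r}} = \sum_{i=1}^{q-1} z_{\br{i}}^{\br{r+1}} + z_r - \subminus{z_r - z_{\br{q}}^{\br{r+1}}}$, because the top-$q$ entries of the larger multiset consist of $z_r$ together with the top $q-1$ of $\brc{z_{r+1},\ldots,z_d}$ when $z_r$ is large, and otherwise $z_r$ replaces the slot that $\subminus{\cdot}$ corrects for. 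Plugging this in, the $z_r$ terms cancel and what remains is the factor $\exp\br{-\gamma\subminus{z_r - z_{\br{q}}^{\br{r+1}}}}$ times $s_{q-1,\gamma}^{\br{r+1}} \ofz$. Adding the two groups gives the claimed formula. The boundary cases $q=0$ (value $1$ by definition), $q=d-r+1$ (the single full subset, value $1$), and $q>d-r+1$ (empty sum, value $0$) follow directly from \cref{eq:def_aqgr}. I expect the main obstacle to be a careful, case-free verification of the two normalizer identities — in particular making sure the $\subplus{\cdot}$ and $\subminus{\cdot}$ corrections handle ties (when $z_r$ equals some existing order statistic) correctly; once those two scalar identities are established, the rest of the argument is a routine reindexing of sums.
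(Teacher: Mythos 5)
Your proof follows essentially the same route as the paper's: split the $q$\nbdash element subsets of $\brc{r,\ldots,d}$ according to whether they contain $r$, and convert the two resulting sums into $s_{q,\gamma}^{\br{r+1}}\ofz$ and $s_{q-1,\gamma}^{\br{r+1}}\ofz$ via two scalar identities relating the top\nbdash $q$ normalizers (the paper states these as $M_q^{\br{r+1}}\ofz - M_q^{\br{r}}\ofz = -\subplus{z_r - z_{\br{q}}^{\br{r+1}}}$ and $z_r + M_{q-1}^{\br{r+1}}\ofz - M_q^{\br{r}}\ofz = -\subminus{z_r - z_{\br{q}}^{\br{r+1}}}$, where $M_q^{\br{r}}\ofz$ is the sum of the $q$ largest entries of $\br{z_r,\ldots,z_d}$). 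One sign in your second identity is off: the correct relation is $\sum_{i=1}^{q} z_{\br{i}}^{\br{r}} = \sum_{i=1}^{q-1} z_{\br{i}}^{\br{r+1}} + z_r + \subminus{z_r - z_{\br{q}}^{\br{r+1}}}$, with a \emph{plus} sign, because when $z_r \leq z_{\br{q}}^{\br{r+1}}$ the inserted $z_r$ is displaced by $z_{\br{q}}^{\br{r+1}}$ and the top\nbdash $q$ sum must be corrected upward by $z_{\br{q}}^{\br{r+1}} - z_r$; taken literally, your version would produce the factor $\exp\br{+\gamma \subminus{z_r - z_{\br{q}}^{\br{r+1}}}}$ rather than the $\exp\br{-\gamma \subminus{z_r - z_{\br{q}}^{\br{r+1}}}}$ you (correctly) state as the conclusion. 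With that sign fixed the argument is complete and coincides with the paper's.
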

As described in \cref{alg:calc_mu_consecutive}, the order of the recursion in \cref{eq:recursion_aqgr} is from $r=d+1$ down to $r=1$, and for each $r$, from $q=0$ up to $q=\min \brc{d-r+1,k}$.

Using floating-point arithmetic, the recursion \cref{eq:recursion_aqgr} may still lead to an overflow due to the large number of summands in $s_{q,\gamma} \ofz$, Eq.~\cref{eq:def_aqg}.
We therefore switch to a logarithmic representation. For $q \geq 0$ and $1 \leq r \leq d+1$, define
\eq[eq:def_bqgr]{\ensuremath{
b_{q,\gamma}^{\br{r}} \ofz \eqdef \twocase{\log \br{\frac{1}{\binom{d-r+1}{q} } s_{q,\gamma}^{\br{r}} \ofz }}{0 \leq q \leq d-r+1}{0}{\text{otherwise.}}}}
For $r=1$, we denote $b_{q,\gamma} \ofz \eqdef b_{q,\gamma}^{\br{1}} \ofz$.
Note that by \cref{eq:expressing_mu_by_a},
\eq[eq:expressing_mu_by_b]{\ensuremath{
    \mu_{k,\gamma} \ofz = \frac{1}{\gamma} b_{k,\gamma} \ofz + \summ[i=1][k] z_{\br{i}}.}}
By \cref{eq:def_aqgr}, $s_{q,\gamma}^{\br{r}} \ofz$ is the sum of $ \binom{d-r+1}{q}$ exponential terms whose maximum equals $1$. Therefore, the average $\frac{1}{\binom{d-r+1}{q}} s_{q,\gamma}^{\br{r}} \ofz$ is bounded in $\brs{\frac{1}{\binom{d-r+1}{q}}, 1}$.  
This, in turn, implies that $b_{q,\gamma}^\br{r}\ofz$ is bounded in $\brs{- q \log \br{d-r+1}, 0}$, and thus does not overflow.

Reformulating \cref{eq:recursion_aqgr}, we have the following recursive formula for $b_{q,\gamma}^{\br{r}} \ofz$:
\eq[eq:recursion_bqgr]{\ensuremath{
b_{q,\gamma}^{\br{r}} \ofz = \twocase{\twofloors { \log \Biggl[ \frac{d-r-q+1}{d-r+1} \exp \br{b_{q,\gamma}^{\br{r+1}} \ofz - \gamma \subplus{z_r - z_{\br{q}}^{\br{r+1}} }} \ + \Biggr. }{ \Biggl. \frac{q}{d-r+1} \exp \br{ b_{q-1,\gamma}^{\br{r+1}} \ofz -\gamma \subminus{z_r - z_{\br{q}}^{\br{r+1}}} } \Biggr] }  } {\threefloors{1 \leq r \leq d}{\mbox{and}}{1 \leq q \leq d-r}}
{0} {\text{$q \geq d-r+1$.}}}}
The recursion base is $b_{q,\gamma}^{\br{r}} \ofz =0$ for (a) $r=d+1$, (b) $q=0$, or (c) $q \geq d-r+1$.

\Cref{eq:recursion_bqgr} is still numerically unsafe when the term inside the logarithm is close to 1, due to possible loss of significance ---  a well-known phenomenon in numerical analysis \cite{kincaidcheney1996}. 
Specifically, for $\abs{x}\ll1$, $\log\br{1+x} \approx x$, $\exp \br{x} \approx 1+x$, and $1+x$ may contain few significant digits of the original $x$, thus leading to a high relative error.
To overcome this problem, we use the functions $\logonep \br{x} = \log\br{1+x}$ and $\expmone \br{x} = \exp\br{x}-1$,   
implemented in the standard libraries of most computing languages. For $x$ near zero, they yield a more accurate result.

Instead of calculating the recursion step by \cref{eq:recursion_bqgr}, we thus consider the two following equivalent formulas, derived from \cref{eq:recursion_bqgr} by algebraic manipulations,
\begin{subequations}\label{eq:stable_b}
    \begin{equation}\label{eq:stable_b_forma}
    b_{q,\gamma}^{\br{r}} \ofz = \logonep \Bigbrs{\frac{d-r-q+1}{d-r+1} \expmone \br{b_{q,\gamma}^{\br{r+1}} \ofz - b_{q-1,\gamma}^{\br{r+1}} \ofz - \xi} } + b_{q-1,\gamma}^{\br{r+1}} \ofz - \subminus{\xi},
    \end{equation}
    \begin{equation}\label{eq:stable_b_formb}
    b_{q,\gamma}^{\br{r}} \ofz = \logonep \Bigbrs{\frac{q}{d-r+1} \expmone \br{b_{q-1,\gamma}^{\br{r+1}} \ofz - b_{q,\gamma}^{\br{r+1}} \ofz + \xi} } + b_{q,\gamma}^{\br{r+1}} \ofz - \subplus{ \xi }
    \end{equation}
\end{subequations}
with $\xi = \gamma (z_r - z_{\br{q}}^{\br{r+1}})$. 
We use \cref{eq:stable_b_forma} as the recursion step when
\eq[eq:stable_b_condition]{\ensuremath{
b_{q,\gamma}^{\br{r+1}} \ofz - b_{q-1,\gamma}^{\br{r+1}} \ofz - \xi \leq 0,}}
and use \cref{eq:stable_b_formb} otherwise. This guarantees that in both cases, the argument of $\expmone \br{\argdot}$ is non-positive, and the argument of $\logonep \br{\argdot}$ is in $\brs{-\max \brc{\frac{q}{d-r+1}, \frac{d-r-q+1}{d-r+1}}, 0 }$. This, in turn, ensures that no overflow occurs, and that if one of the $\expmone\br{\argdot}$ terms is small enough to underflow, its influence on the result is anyway negligible. 

In summary, to calculate $\mukg\ofz$, we call \cref{alg:calc_mu_consecutive} with parameter $s = k$. 
\Cref{alg:calc_mu_consecutive} applies the recursion \cref{eq:recursion_bqgr}, using \cref{eq:stable_b_forma} or \cref{eq:stable_b_formb}, depending on the condition~\cref{eq:stable_b_condition}.
Overall, this takes $\mathcal{O}\br{kd}$ operations and $\mathcal{O}\br{k}$ additional memory.

\begin{algorithm}[t]
\caption{Calculate $\muargs{k}{\gamma} \ofz$ recursively}
\label{alg:calc_mu_consecutive}
\begin{algorithmic}[1]
    \Input $\z = \br{z_1,\ldots,z_d} \in \R[d]$, $\ k,s \in \Z\ $ s.t. $1 \leq k \leq s \leq d$, $\ \gamma \in \br{0,\infty}$     
    \Output $\muargs{k}{\gamma} \ofz$, $\brc{b_{q,\gamma} \ofz}_{q=0}^s$ and $\brc{z_\br{q}}_{q=0}^s$
    \Variables $\bvec = \br{b_0,\ldots,b_s},\ \tilde{\bvec} = \br{\tilde{b}_0,\ldots,\tilde{b}_s},\ \vvec = \br{v_0,\ldots,v_s},\ \tilde{\vvec} = \br{\tilde{v}_0,\ldots,\tilde{v}_s}$
    \State Initialize 
       $b_0 \assign 0$, 
       $v_0, \tilde{v}_0 \assign \infty$
    \For{$r \assign d,d-1,\ldots,1$} 
        \For{$q \assign 1,2,\ldots,\min \brc{s,d-r}$}
            \State $v_q \assign \max \brc{ \min \brc{z_r, \tilde{v}_{q-1}}, \tilde{v}_{q} }$
            \State $\xi \assign \gamma \br{z_r - \tilde{v}_q}$
            \State $\eta \assign \tilde{b}_{q} - \tilde{b}_{q-1} - \xi$
            \If{$\eta \leq 0$}
                $\ b_q \assign \logonep \br{\frac{d-r-q+1}{d-r+1} \expmone \br{\eta}} + \tilde{b}_{q-1} - \subminus{\xi}$           
            \Else 
                $\ b_q \assign \logonep \br{\frac{q}{d-r+1} \expmone \br{-\eta}} + \tilde{b}_q - \subplus{\xi}$
            \EndIf
        \EndFor
        \If{$s \geq d-r+1$} 
            \State $v_{d-r+1} \assign \min \brc {z_r, \tilde{v}_{d-r}}$
            \State $b_{d-r+1} \assign 0$
        \EndIf
        \State Set $\tilde{\bvec} \assign \bvec$, $\tilde{\vvec} \assign \vvec$
        \Comment{Here $b_q = b_{q,\gamma}^{r} \ofz$, $v_q = z_{\br{q}}^{\br{r}}$ for $q=0,\ldots,\min\brc{s,d-r+1}$}
    \EndFor
    \State\Return $b_{q,\gamma} \ofz = \tilde{b}_q$, $\ z_\br{q} = \tilde{v}_q$ for $q=0,\ldots,s\ $ and 
    $\ \muargs{k}{\gamma} \ofz = \frac{1}{\gamma} b_{k,\gamma} \ofz + \sum_{q=1}^k v_q$
\end{algorithmic}   
\end{algorithm}

\subsection{Accuracy 
and runtime evaluation}
We evaluated the accuracy of our numerical scheme by calculating $\mukg\ofz$ and $\gsm\ofz$ on randomly generated vectors $\z \in \R[d]$ using \cref{alg:calc_gsm_main}. The calculated values
were compared to the reference values  $\mu_{\textup{Ref}}$ and $\boldtheta_{\textup{Ref}}$, 
computed by \cref{alg:calc_gsm_main} using 128-bit quadruple-precision arithmetic\footnote{The numerical evaluation code is available at \url{https://github.com/tal-amir/gsm}.}.

{Two tests settings were considered: (1) $d=1000$, $k\in\brc{10:10:1000}$; and (2) $d=100000$, $k \in \brc{10,50,100,200}$, with $a:b:c$ denoting ${a, a+b, a+2b, a+3b, \ldots, c}$. The following 18 values were used for $\gamma$: $10^{-20},10^{-10}, 10^{-5}, 10^{-2}, 0.2:0.2:1, 2:2:10, 10^2, 10^5, 10^{10}, 10^{20}$. Two types of random vector $\z$ were used: (a) $z_i$ i.i.d. $\mathcal{U}\br{0,1}$ and (b) $z_i$ is the absolute value of i.i.d. $\mathcal{N}\br{0,1}$. For each combination of $d,k,\gamma$ and random vector type, 200 instances were tested.}

We considered the following error measures for $\mukg$ and
$\gsm$: 
$\abs{\frac{{\mu} - \mu_{\textup{Ref}}}{\mu_{\textup{Ref}}}}$ and $\frac{1}{k}\norm{{\boldtheta} - \boldtheta_{\textup{Ref}}}_{\infty}$. 
\Cref{tab:gsm_accuracy} summarizes
the largest obtained errors
of \cref{alg:calc_gsm_main}, 
invoked either with
64-bit double-precision or with 32-bit single-precision arithmetic. 
%

\setlength\extrarowheight{3pt}

\begin{table}[htbp]
    \begin{center}
    \begin{tabulary}{\textwidth}{|l|cc|cc|}
        \hline
        \multicolumn{1}{|c|}{\multirow{2}{*}{Precision}}          & \multicolumn{2}{c|}{$\mukg\ofz$}     & \multicolumn{2}{c|}{$\gsm\ofz$}   \\ \cline{2-5}
                          & $d=1,000$  & $d=100,000$    & $d=1,000$     & $d=100,000$    \\ \hline
        Double (64 bit)   & 4.5e-15    & 1.2e-13        & 2.1e-14       & 2e-12          \\ 
        Single (32 bit)   & 2e-6       & 1e-4           & 9e-6          & 4e-4           \\ \hline            
    \end{tabulary}
    \end{center}
    \caption{Numerical accuracy of \cref{alg:calc_gsm_main} \label{tab:gsm_accuracy}}
    \small Numerical errors for $\mukg\ofz$ and $\gsm\ofz$, given by $\abs{\frac{{\mu} - \mu_{\textup{Ref}}}{\mu_{\textup{Ref}}}}$ and $\frac{1}{k}\norm{{\boldtheta} - \boldtheta_{\textup{Ref}}}_{\infty}$ respectively. 
\end{table}

For $d\gg 1$, the accuracy may be further enhanced by replacing the consecutive scheme of \cref{alg:calc_mu_consecutive} with one that separately calculates $b_{q,\gamma}\br{\argdot}$ for multiple subvectors of $\z$ and merges the results. This is analogous to the numerical error of calculating the sum of a $d$-dimensional vector, where 
a pairwise summation has a much smaller accumulation of errors compared
to consecutive summation. 

\Cref{tab:gsm_times} presents average run-times of \cref{alg:calc_gsm_main}, implemented in C, on a standard PC.  
In accordance to the theoretical analysis, the times grow roughly linearly with $k$ and $d$.

\begin{table}[h]
    \begin{center}
    \begin{tabulary}{\textwidth}{|l|cccc|}
        \hline
                  & $d=1,000$ & $d=10,000$ & $d=100,000$ & $d=1,000,000$ \\ \hline
        $k=10$    & 9e-3      & 6e-2       & 7e-1        & 6           \\ 
        $k=100$   & 6e-2      & 6e-1       & 6           & 56            \\ 
        $k=500$   & 5e-1      & 3          & 28          & 273           \\ \hline            
    \end{tabulary}
    \end{center}
    \caption{Average runtimes [sec.] over 50 realizations of \cref{alg:calc_gsm_main}. \label{tab:gsm_times}}
\end{table}



\FloatBarrier
\section{Numerical experiments}
\label{sec:numerical_experiments}
%
%
\subsection{GSM vs. direct optimization of the trimmed lasso}
\label{sec:gsm_vs_direct_tls}
First, we illustrate the advantage of our GSM-based approach in comparison to DC-programming and to ADMM, which directly optimize the trimmed lasso penalized \cref{pr:P2l}. We implemented all methods in Matlab.\footnote{Our Matlab code is available at \url{https://github.com/tal-amir/sparse-approximation-gsm}.}
For DC-Programming and ADMM we followed the description in \cite{bertsimas2017trimmed}; see \cref{appendix:computational_details}.

We considered the following simulation setup: In each realization, we generate a random matrix $A$ of size $60 \times 300$ whose entries are i.i.d. $\mathcal{N}\br{0,1}$, followed by column normalization. For $k \in \brc{10,20,25}$, we generate a random $k$\nbdash sparse signal $\x_0$ whose nonzero coordinates are i.i.d. $\mathcal{N}\br{0,1}$. We observe $\y=A\x_0+\e$, where $\e \in \mathbb{R}^{60}$ is a random noise vector whose entries are i.i.d. $\mathcal{N}\br{0,1}$, normalized such that $\norm{\e}_2 = 0.01 \cdot \norm{A\x_0}_2$. We considered several values of $\lambda$ in the interval $[10^{-5},1]\cdot \lambdabar$, 
where $\lambdabar$ is defined in \cref{eq:lambdabar}. 
For each obtained solution $\xhat$, 
we compute its normalized objective $\Ftwol\br{\xhat} / \Ftwol\br{\x_0}$. Values near or below $1$ indicate solutions close to the global minimizer, as follows from \cref{thm:sparse_reconstruction_p2}, whereas values significantly above $1$ indicate highly suboptimal solutions, typically with a very different support from that of $\x_0$.

\Cref{fig:plambda_tests_1} shows, for two different values of $\lambda$, 
the normalized objectives obtained by the three methods for 100 random realizations. Points above the diagonal represent instances for which GSM found solutions with lower objective values than the competing method. As seen in the figure, small values of $k$ lead to relatively easy problems, where all methods succeed.
At larger values of $k$, 
a significant number of instances are in the top left quadrant, which represent instances where DC-programming and ADMM found highly-suboptimal solutions in comparison to those found by GSM. 
Similar results were obtained for other values of $\lambda$. 

\captionsetup[subfigure]{labelformat=empty}
\begin{figure}[t]
    \centering
    \subfloat[$\lambda = 0.005 \cdot \lambdabar$]{\includegraphics[width=0.35\textwidth]{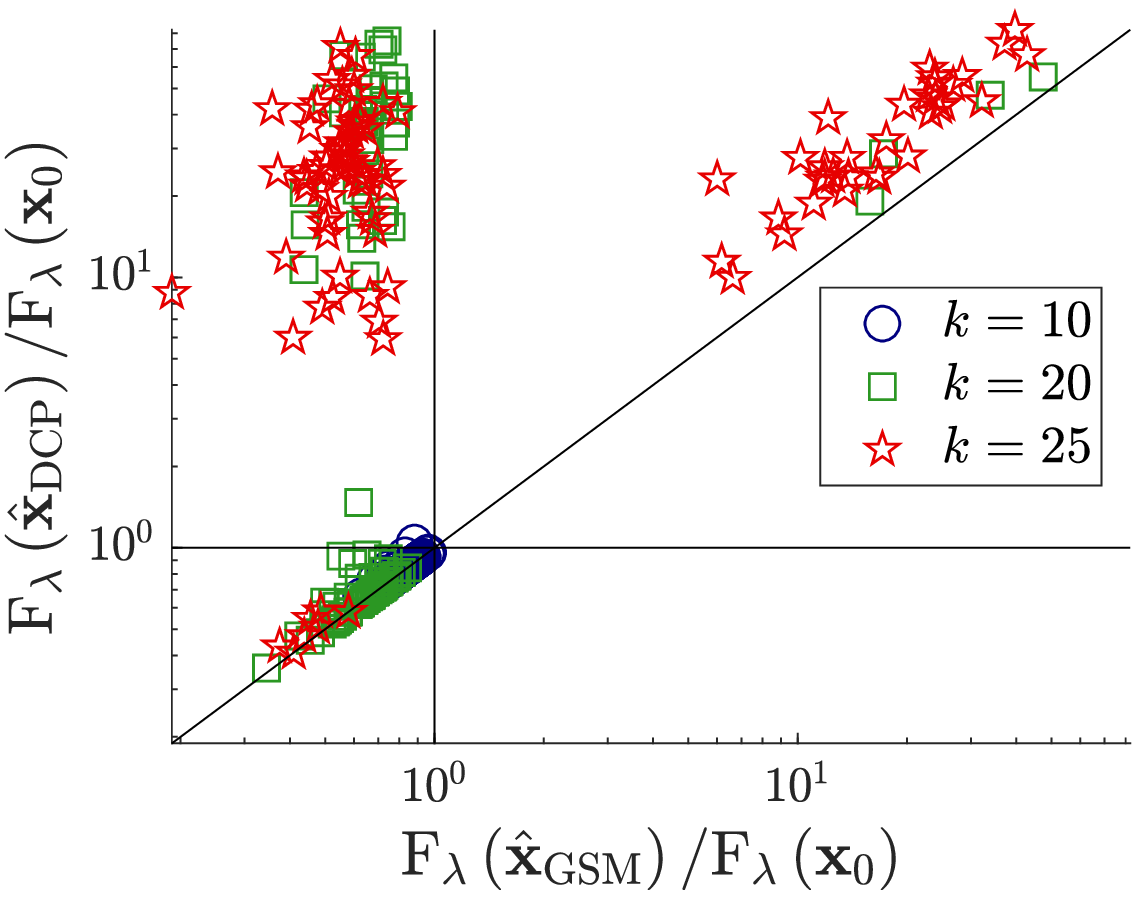}}
    \subfighspace
    \subfloat[$\lambda = 0.005 \cdot \lambdabar$]{\includegraphics[width=0.35\textwidth]{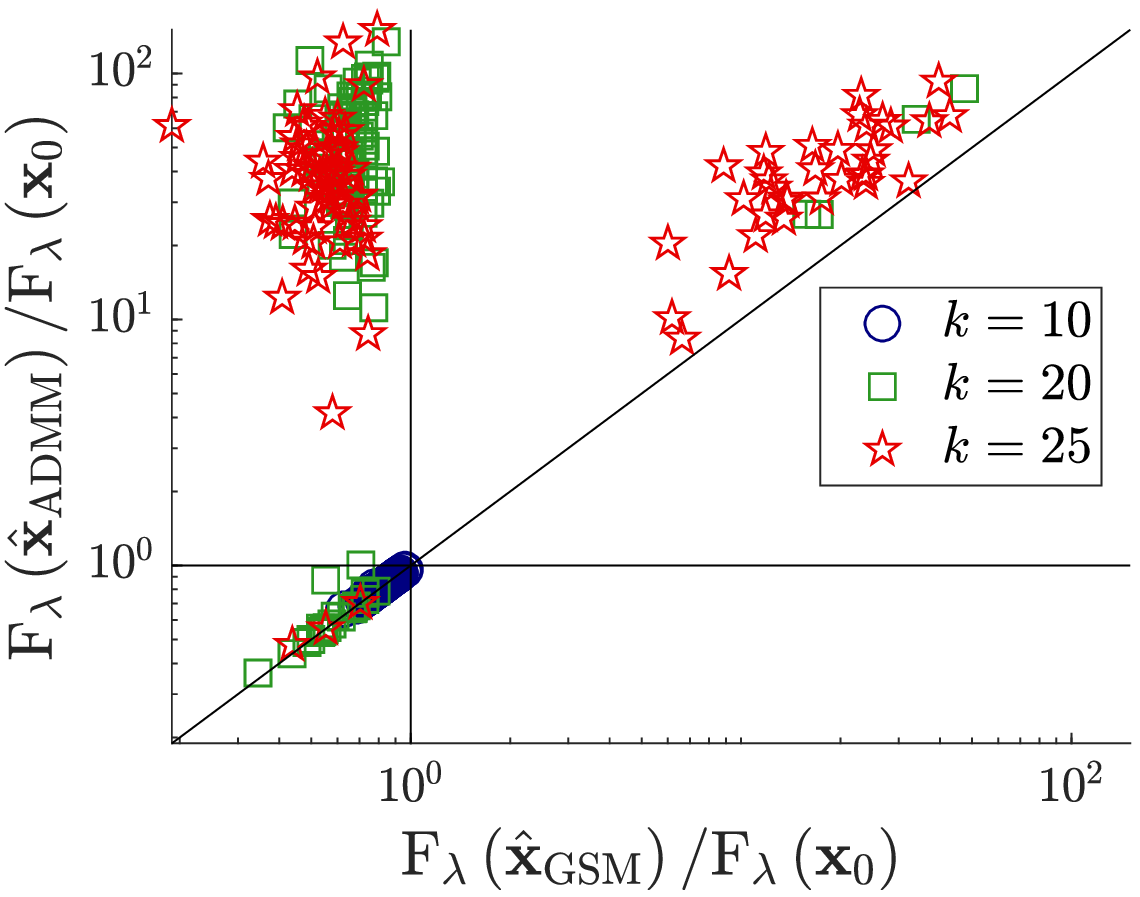}}
    \\
    \subfloat[$\lambda = 0.5 \cdot \lambdabar$]{\includegraphics[width=0.35\textwidth]{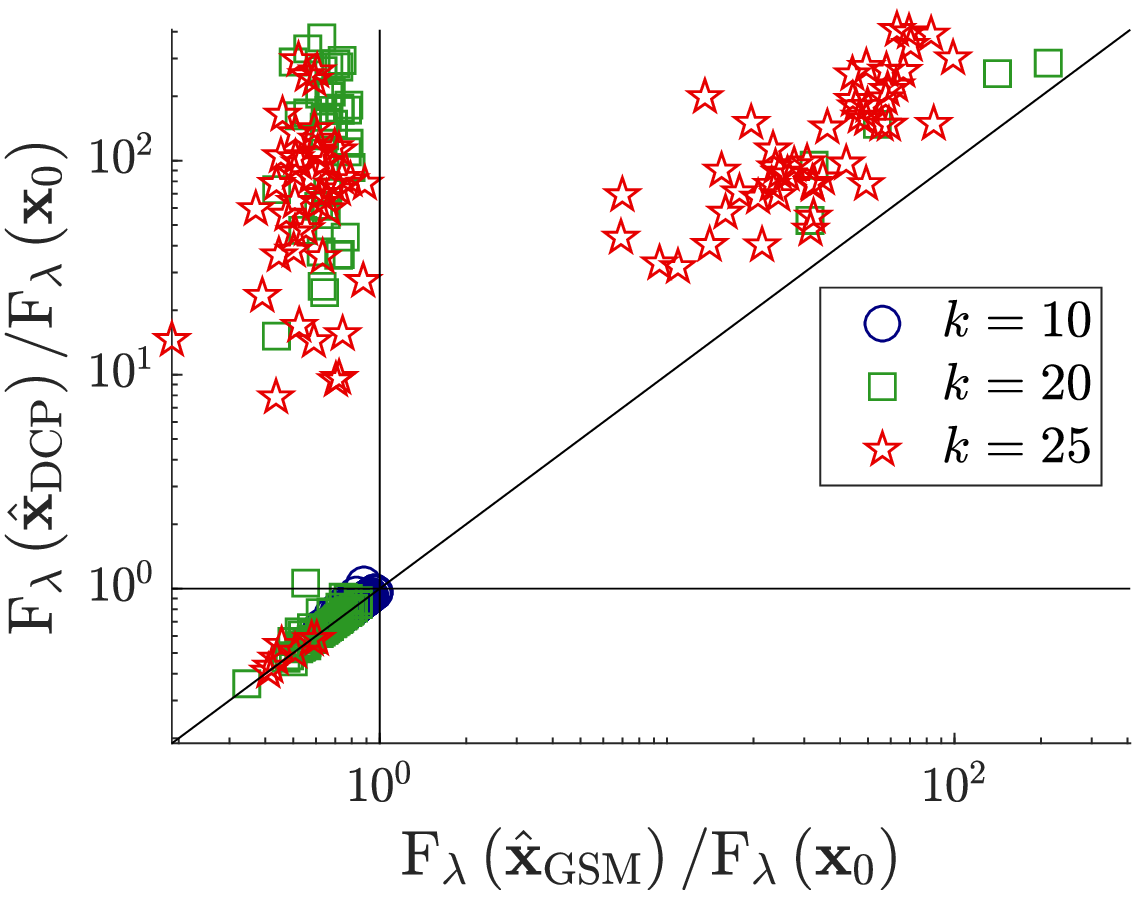}}
    \subfighspace
    \subfloat[$\lambda = 0.5 \cdot \lambdabar$]{\includegraphics[width=0.35\textwidth]{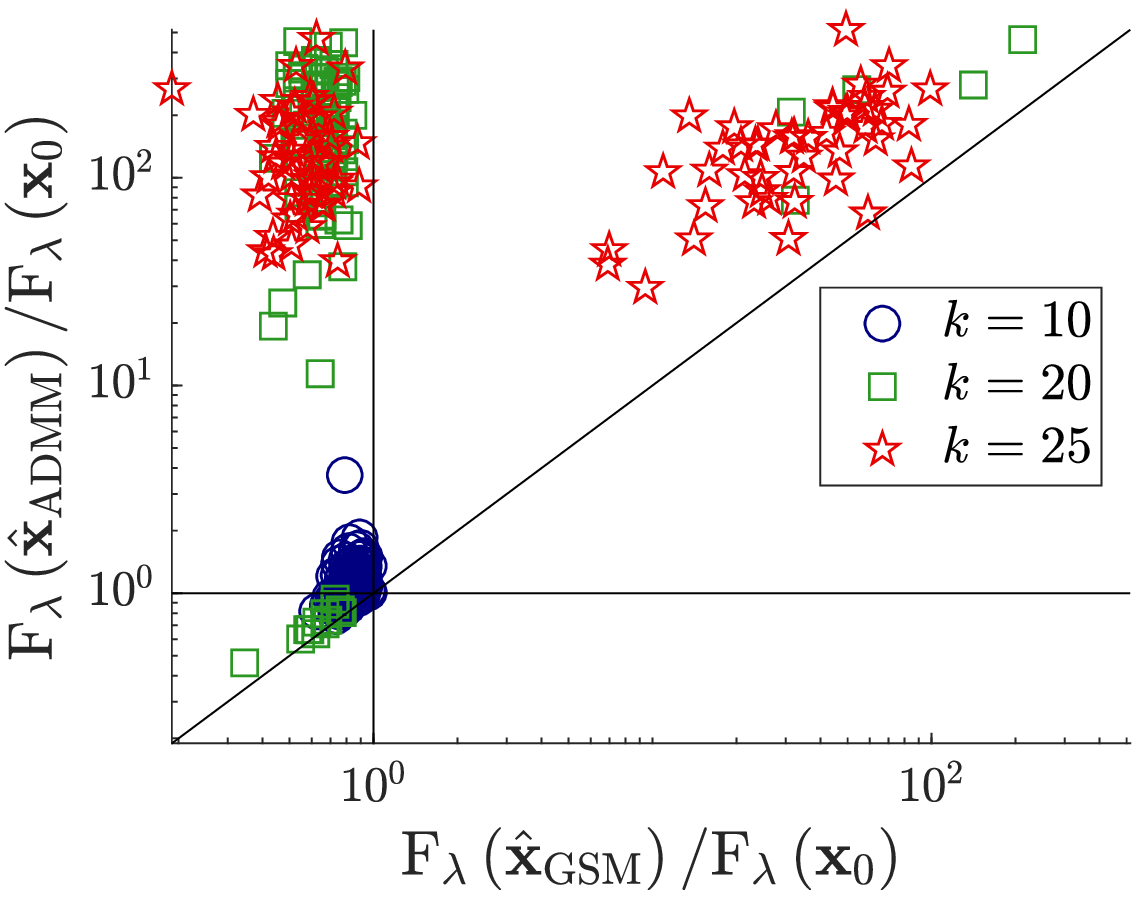}}
    \caption{Objective values of solutions to \cref{pr:P2l} obtained by DC programming (left) and by ADMM (right), compared with the objective of the solution obtained by our GSM method. Each objective value is normalized by that of the original $\x_0$. Each point represents one realization. Points above the diagonal represent instances where GSM found better solutions. Normalized objectives much higher than $1$ indicate highly suboptimal solutions. }
    \label{fig:plambda_tests_1}    
\end{figure}

\subsection{Sparse signal recovery}
\label{sec:sparse_recovery_simulation}
%
Next, we compare the performance of GSM to several popular methods in sparse signal recovery. We observe $\y=A\x_0+\e$, where the 
$k$\nbdash sparse signal $\x_0 \in \R[d]$, matrix $A \in \R[n \times d]$ and noise vector $\e \in \R[n]$ were generated as described below.


We considered two types of random matrices $A$: \emph{Uncorrelated} matrices, whose entries are i.i.d. $\mathcal{N}(0,1)$, and \emph{correlated} matrices, whose rows are drawn from a $d$-dimensional Gaussian distribution $\mathcal{N}\br{\zerovec,\Sigma}$, with covariance matrix $\Sigma$ given by $\sigma_{i,j} = \rho^{\abs{i-j}}$, $\rho = 0.8$, similarly to \cite{bertsimas2016best}. In both cases, the columns of $A$ were normalized to have unit $\ell_2$\nbdash norm.
Three types of signals $\x_0$ were used: (i) \emph{Gaussian}, whose $k$ nonzero entries are at randomly chosen indices and their values are i.i.d. $\mathcal N(0,1)$; (ii) \emph{equispaced linear}, whose $k$ nonzeros are at equispaced indices,
their magnitudes are chosen from $\setst{1 + \frac{i-1}{k-1}29}{i=1,\ldots,k}$ randomly without repetition, and their signs are i.i.d. $\pm 1$; 
(iii) \emph{equispaced $\pm 1$},
similar to (ii) only that the nonzero entries are i.i.d. $\pm 1$.
Similar simulation designs were considered, e.g., in \cite{buhlmann2013correlated}. 

The vector $\e \in \R[n]$ is drawn from $\mathcal N({\bf 0},\sigma^2 I_n)$ with variance
$\sigma^2 \eqdef \nu^2 \mathbb{E}_{\x} \brs{\norm{A\x}_2^2}/n$,
where $\nu \in \brs{0,1}$ is a noise-strength parameter, and the expectation $\mathbb{E}_{\x} \brs{\norm{A\x}_2^2}$ is over the distribution of $\x_0$, estimated empirically from 2000 randomly drawn signals. 
With these definitions, the signal-to-noise ratio (SNR) is given by $\mbox{SNR} = \mathbb{E}_{\x} \brs{\|A\x\|_2^2} / \mathbb{E}_{\e}\brs{\|\e\|_2^2} = 1/\nu^2$. 


\myparagraph{Evaluated methods}
%
We compared the following 8 methods: 
1-2) Two variants of our {GSM}, with residual norm power $p=2$ or $p=1$, as in \cref{pr:P2l,pr:P1l} respectively;
3) Iterative support detection ({ISD}) \cite{wang2010sparse}; 
4-5) $\ell_p$-minimization by IRLS or by IRL1; 
6) Minimization of the trimmed lasso by DC programming \cite[algorithm 1]{bertsimas2017trimmed}; 
7) Least-squares OMP; 
8) Basis pursuit denoising \cite{chen2001atomic}. 
For ISD,  we used the code provided by its authors. For all other methods, 
we used our own Matlab implementation. 
As we assume that the sparsity level $k$ is given, the output of each method was post-processed by solving a least-squares problem on its $k$ largest-magnitude entries and setting the other entries to zero --- thereby ensuring that the evaluated solution is $k$\nbdash sparse and is optimal on its support.
Further technical details appear in 
\cref{appendix:computational_details}. 

\myparagraph{Performance evaluation}
%
A solution $\xhat$ is evaluated by the following 3 quality measures: 
(i) \emph{normalized objective value}; (ii) \emph{relative recovery error},
and (iii) {\em support precision}, given by  
\begin{equation}
\text{NormObj}\br{\xhat} \eqdef \frac{\norm{A\xhat - \y}_2}{\norm{A\x_0 - \y}_2},
\ 
\text{RecErr}\br{\xhat} \eqdef \frac{\norm{\xhat - \x_0}_1}{\norm{\x_0}_1},
\ 
\text{SuppPrec}\br{\xhat} \eqdef \frac{\abs{ \text{supp}\br{\xhat} \cap \text{supp}\br{\x_0} } }{k}
	\nonumber
\end{equation}
where $\text{supp}\br{\x} = \setst{i \in \brs{d}}{x_i \neq 0}$. An optimization is deemed \emph{successful} if $\text{NormObj}\br{\xhat} \leq 1$.
Such solutions $\xhat$
are guaranteed by \cref{thm:sparse_reconstruction_p1} to be close to the ground-truth $\x_0$, provided that $\ripmin$ is not too small. In contrast, solutions with $\text{NormObj}\br{\xhat} \gg 1$ are highly suboptimal, and are often poor estimates of $\x_0$. 
We consider a recovery successful if
$\text{RecErr}\br{\xhat} \leq \max\{2\nu,10^{-3}\}$. The additional threshold of $10^{-3}$
is needed to accommodate methods
that have early stopping criteria. The results are not sensitive to these specific thresholds.

%

\myparagraph{Recovery performance as a function of sparsity level}
For each $k \in \brc{16,18,20,\ldots,48}$ we generated 200 random instances of $A$,$\x_0$ and $\e$, with $A \in \R[100\times 800]$ and  essentially zero noise ($\nu = 10^{-6}$). The results of the eight methods appear in \Cref{fig:success_vs_sparsity_noiseless_1}.
As seen in the plots, GSM achieved the highest optimization and recovery success rates, with the closest competitor being 
ISD. The improved performance of ISD over some of the other methods is in accordance with \cite{wang2010sparse}.
Empirically, the power-2 variant of GSM was slightly better than the power-1 variant. 
GSM also achieved the best average support precision, except at large values of $k$, where 
the recovery success rates of all methods were near zero. 

\begin{figure}[t]
    \centering
    \includegraphics[width=0.28\textwidth]{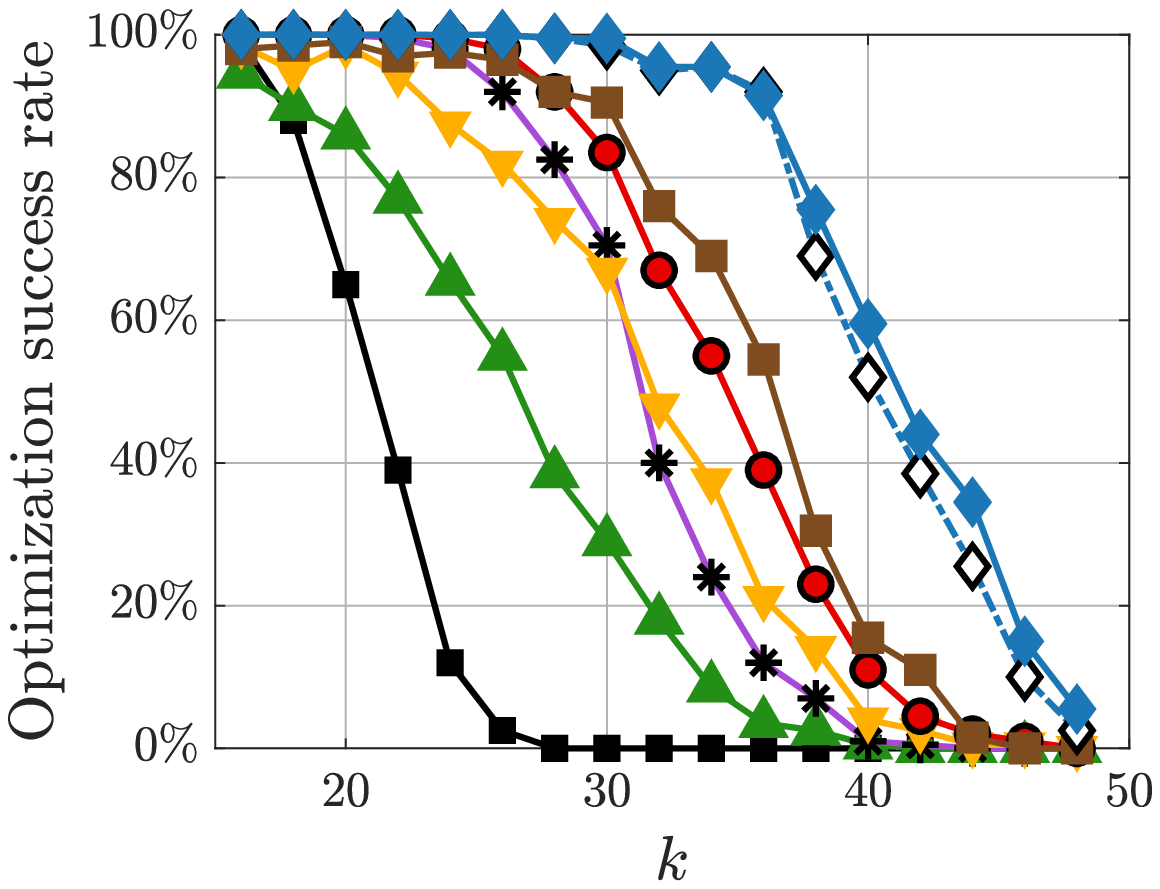}
    \subfighspace
    \includegraphics[width=0.28\textwidth]{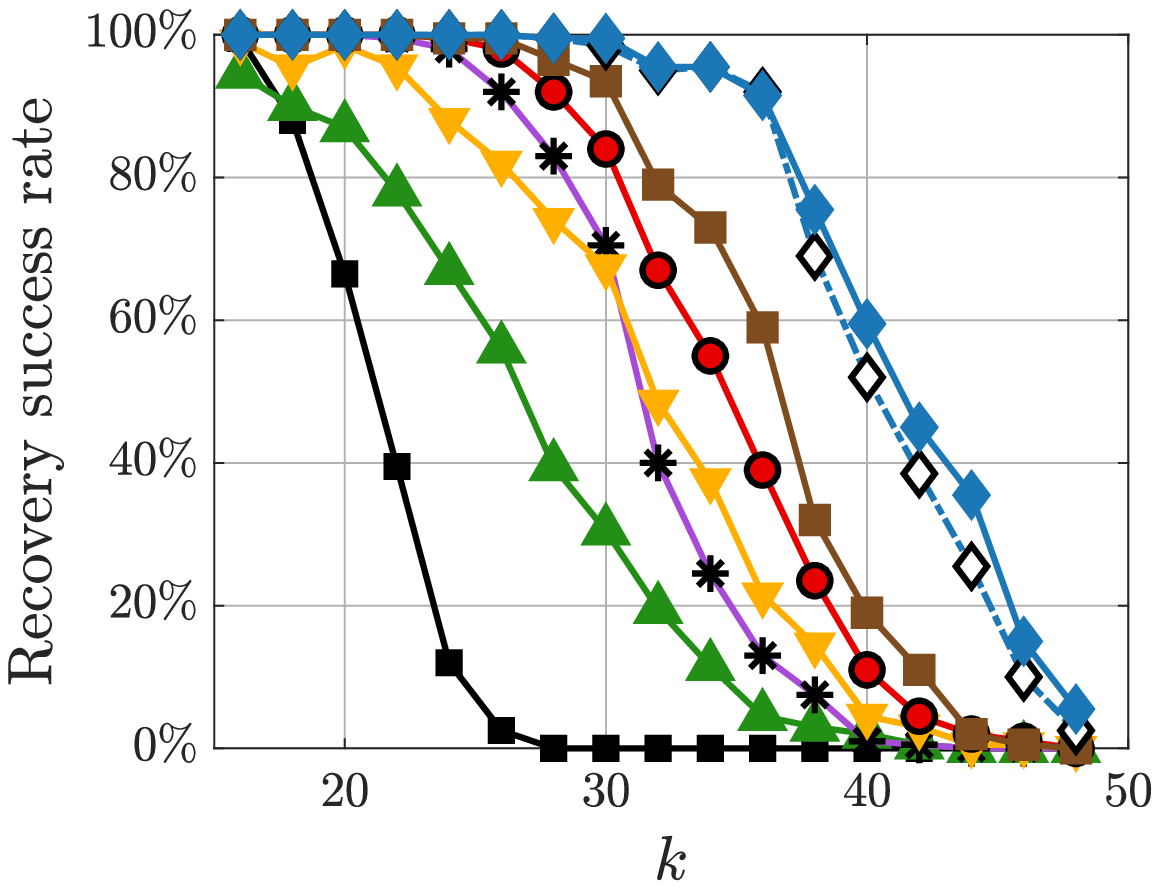}
    \subfighspace
    \includegraphics[width=0.28\textwidth]{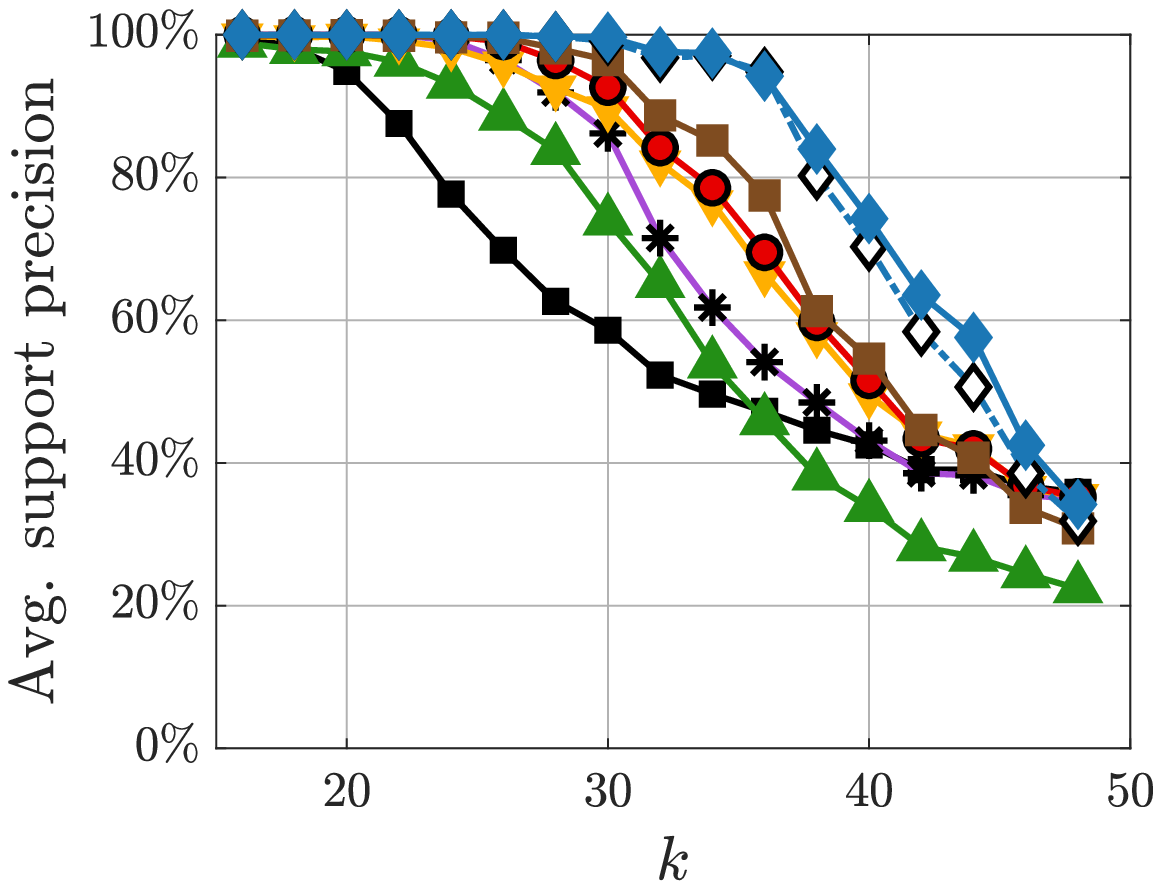}
    \newline
    \includegraphics[width=0.28\textwidth]{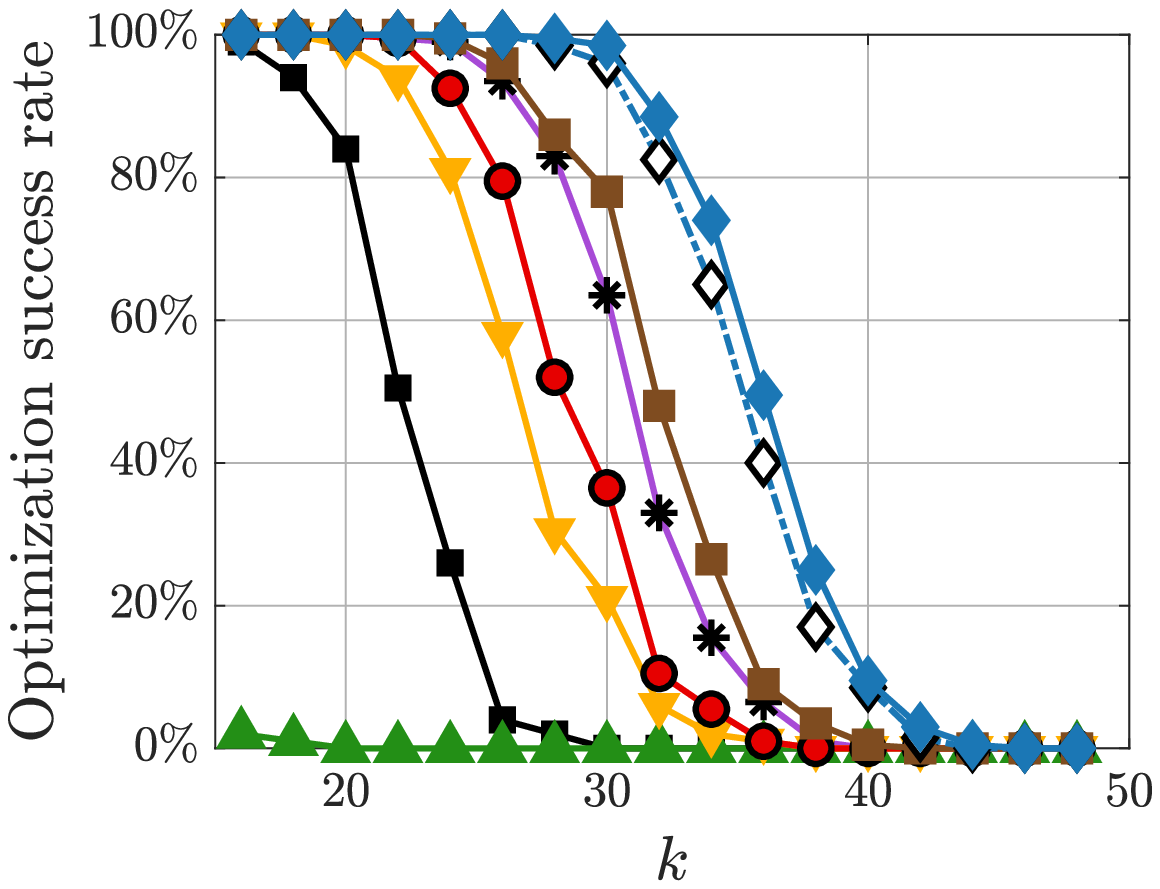}
    \subfighspace
    \includegraphics[width=0.28\textwidth]{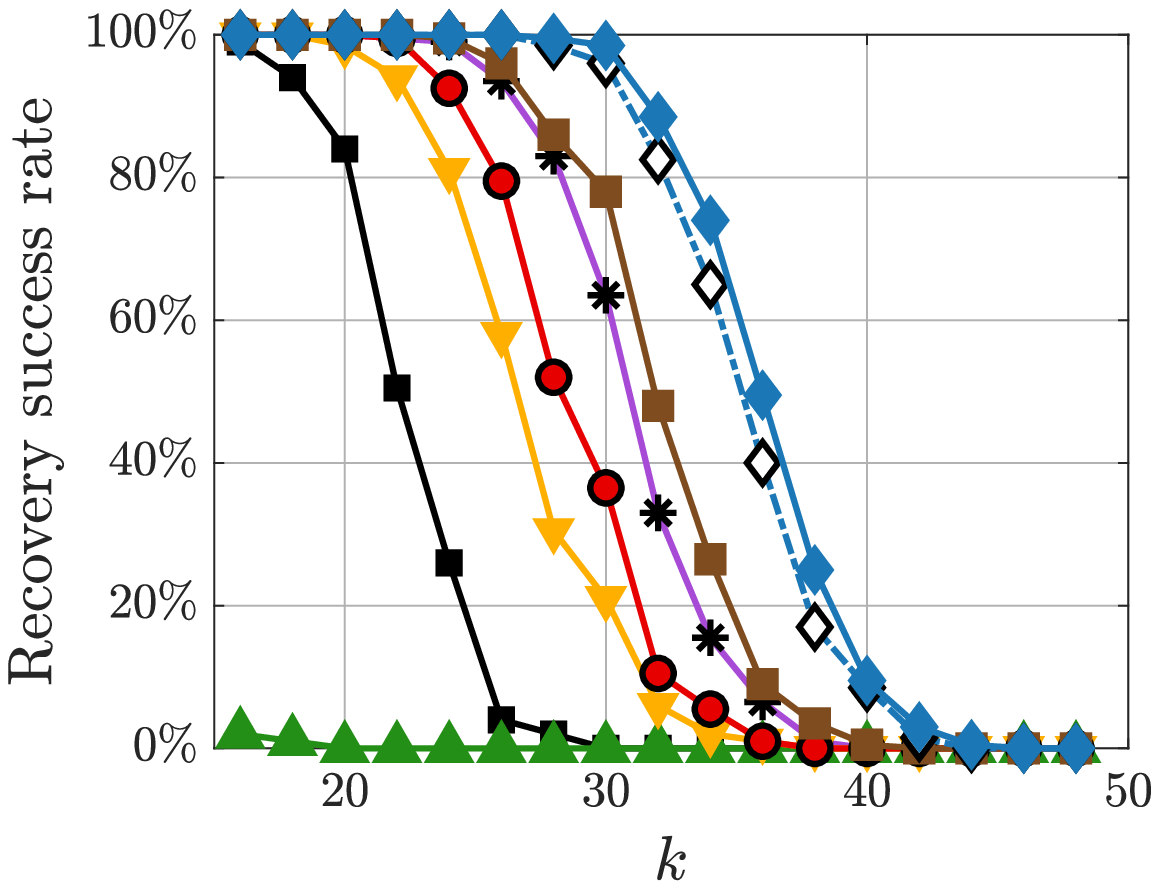}
    \subfighspace
    \includegraphics[width=0.28\textwidth]{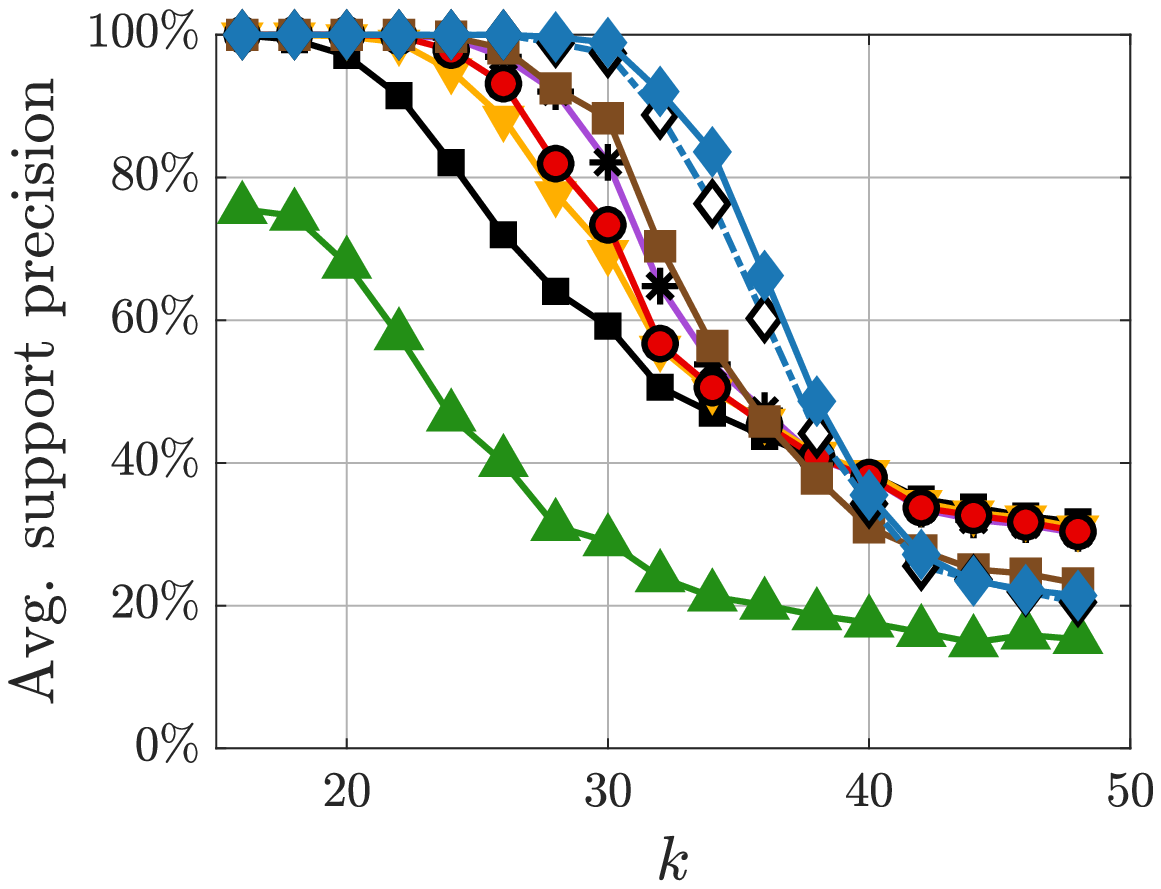}
	\vspace{5pt}
	\newline
    \includegraphics[width=0.65\textwidth]{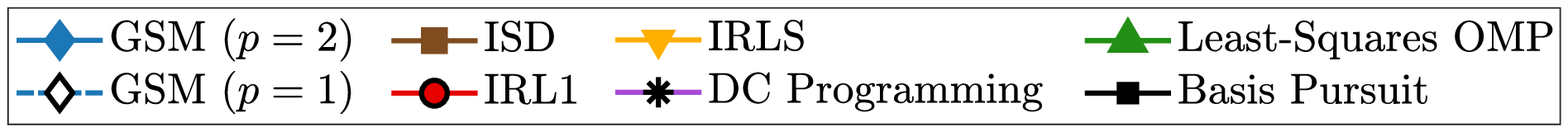}
    \caption{Recovery of $k$-sparse signals in a near-zero noise setting ($\nu = 10^{-6}$). Matrix size: $100 \times 800$. Top row: Gaussian signal, uncorrelated matrix. Bottom row: Equispaced linear signal, correlated matrix. }
    \label{fig:success_vs_sparsity_noiseless_1}    
\end{figure}

\begin{figure}[t]
    \centering
    \includegraphics[width=0.28\textwidth]{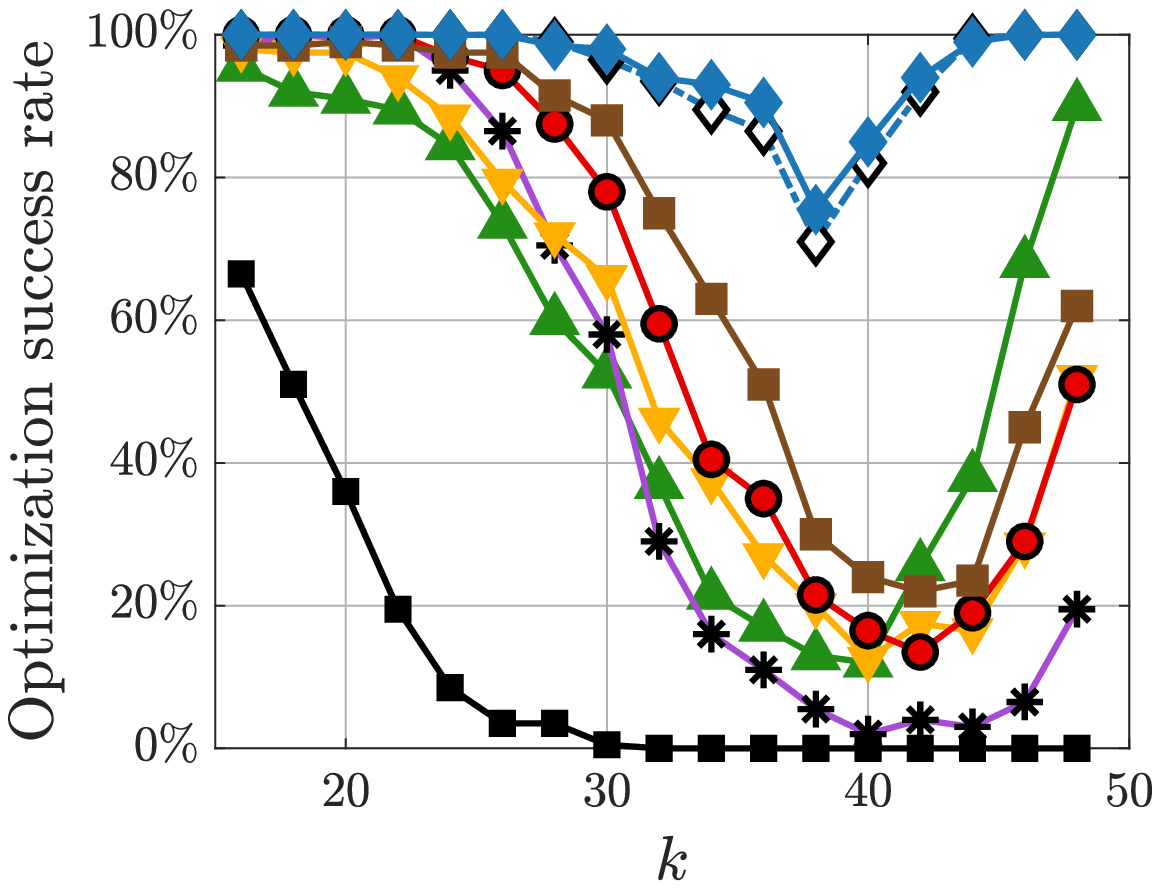}
    \subfighspace
    \includegraphics[width=0.28\textwidth]{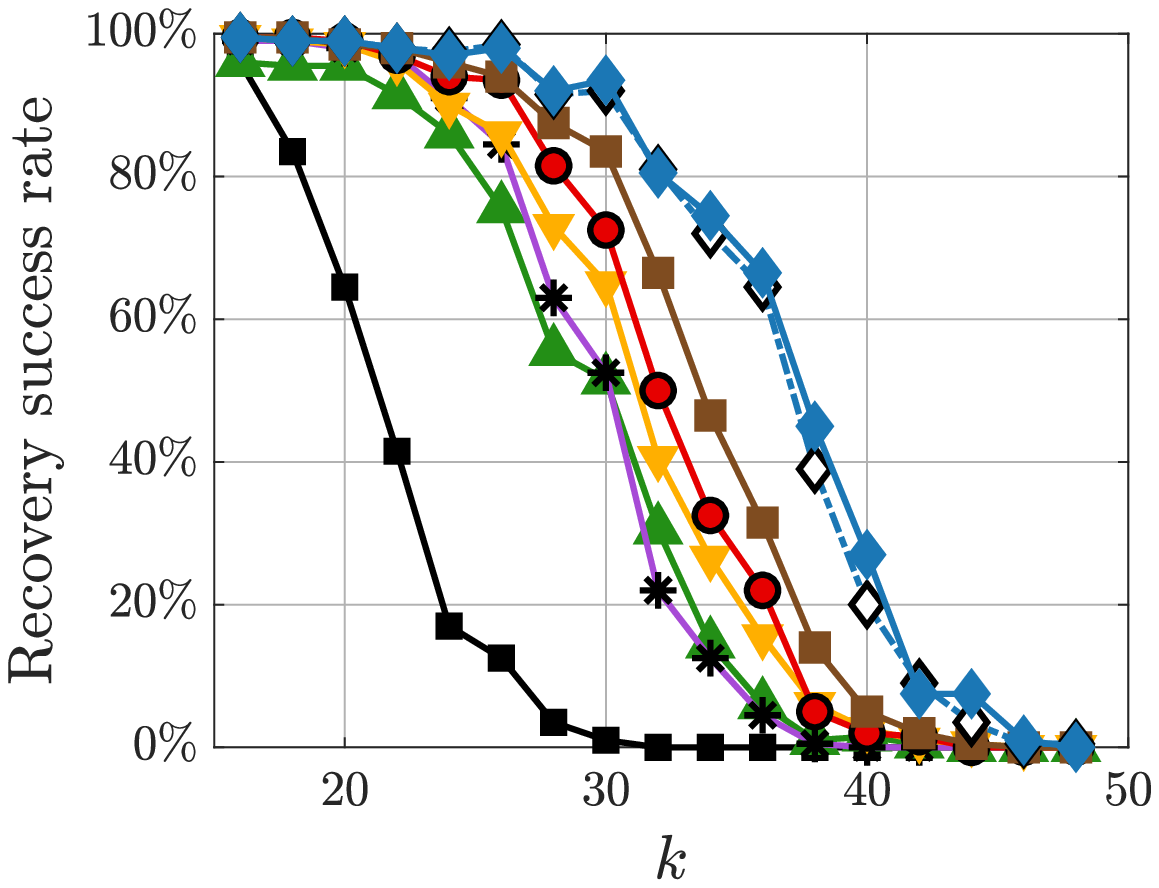}
    \subfighspace
    \includegraphics[width=0.28\textwidth]{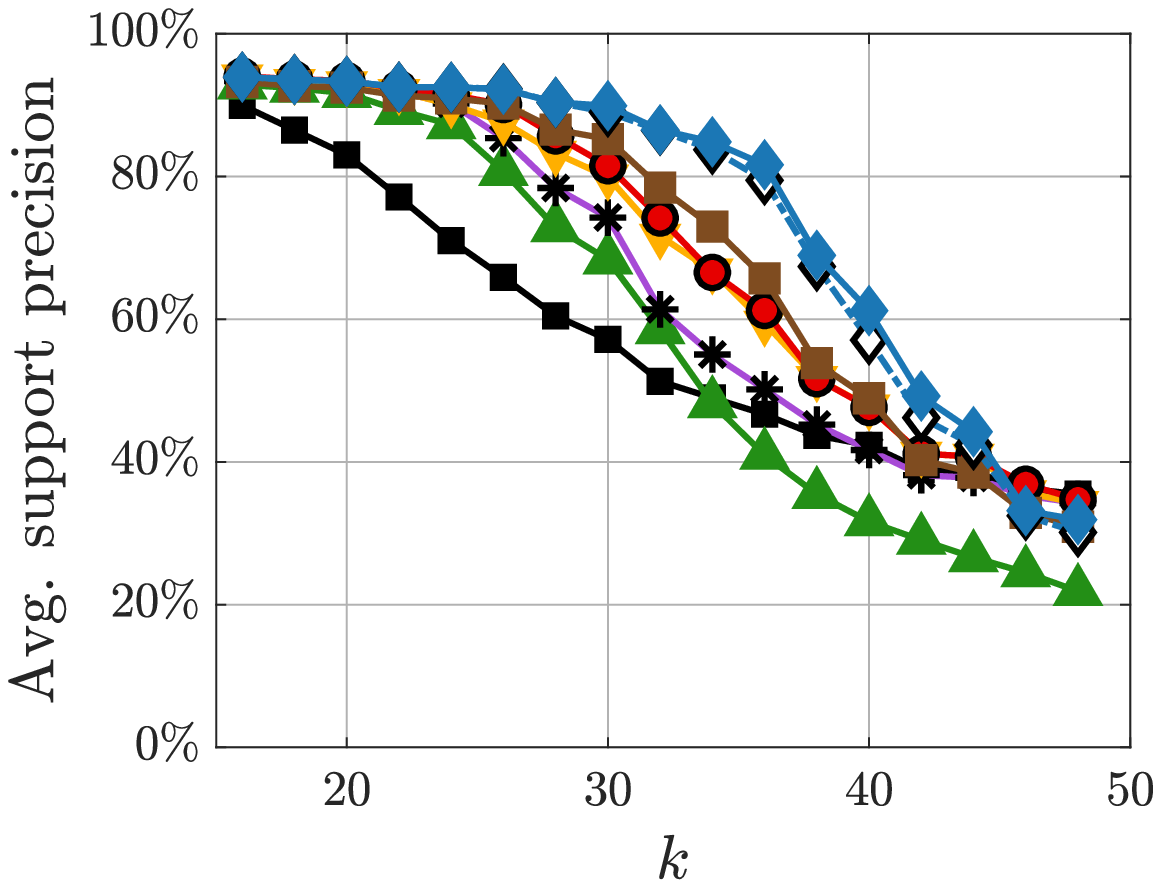}
    \newline
    \includegraphics[width=0.28\textwidth]{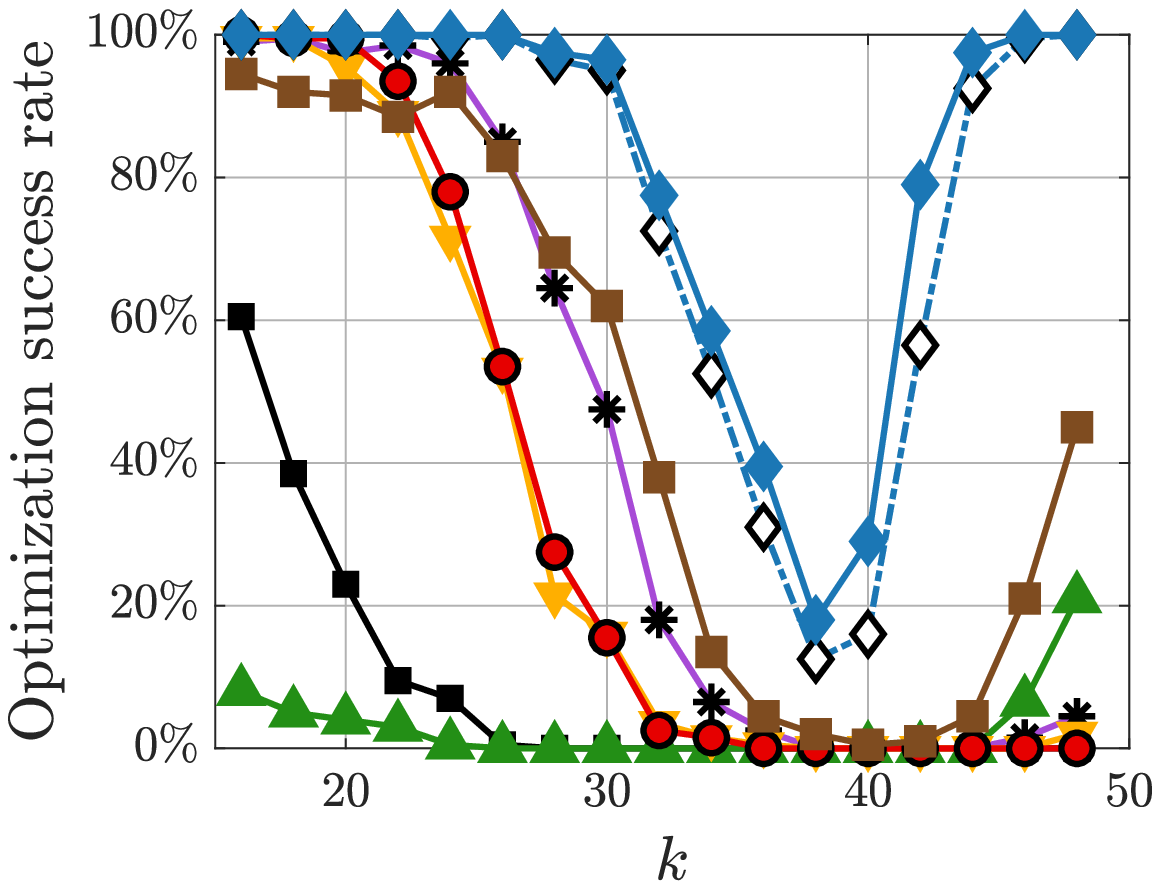}
    \subfighspace
    \includegraphics[width=0.28\textwidth]{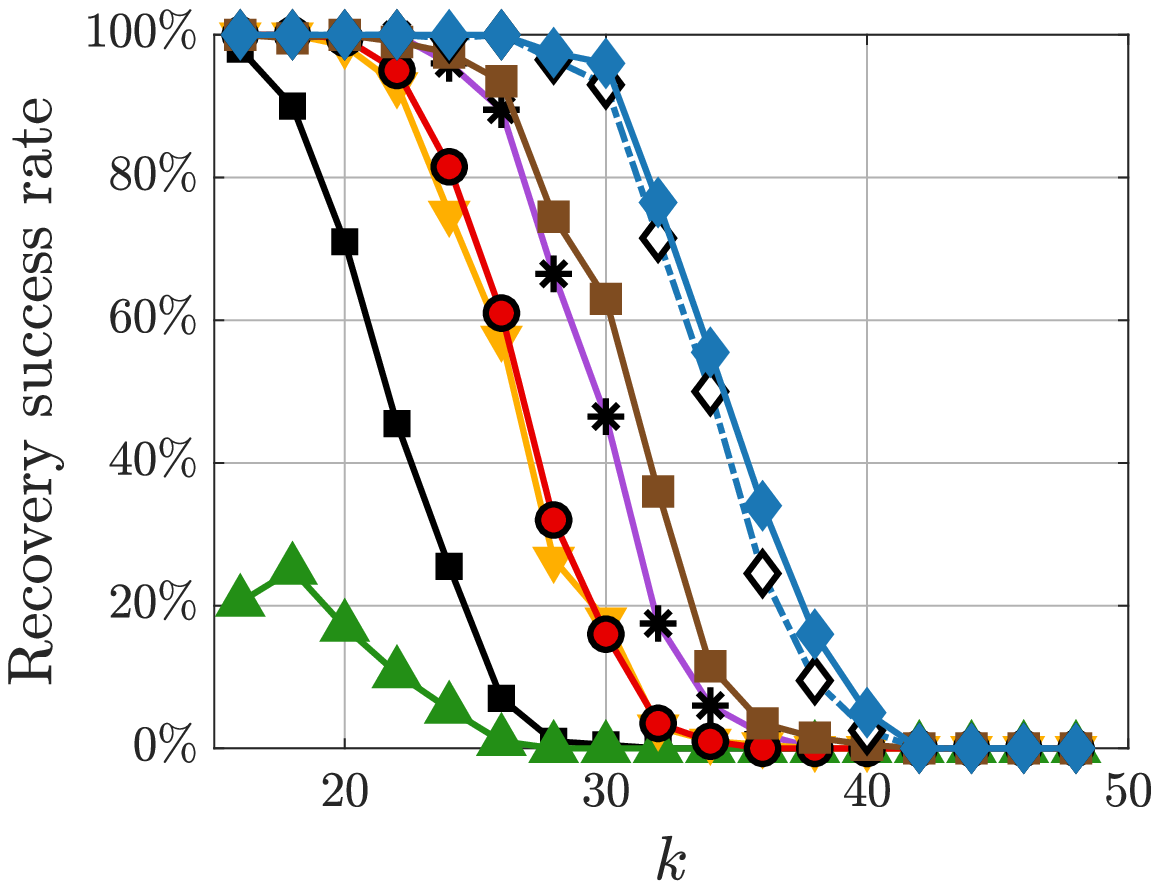}
    \subfighspace
    \includegraphics[width=0.28\textwidth]{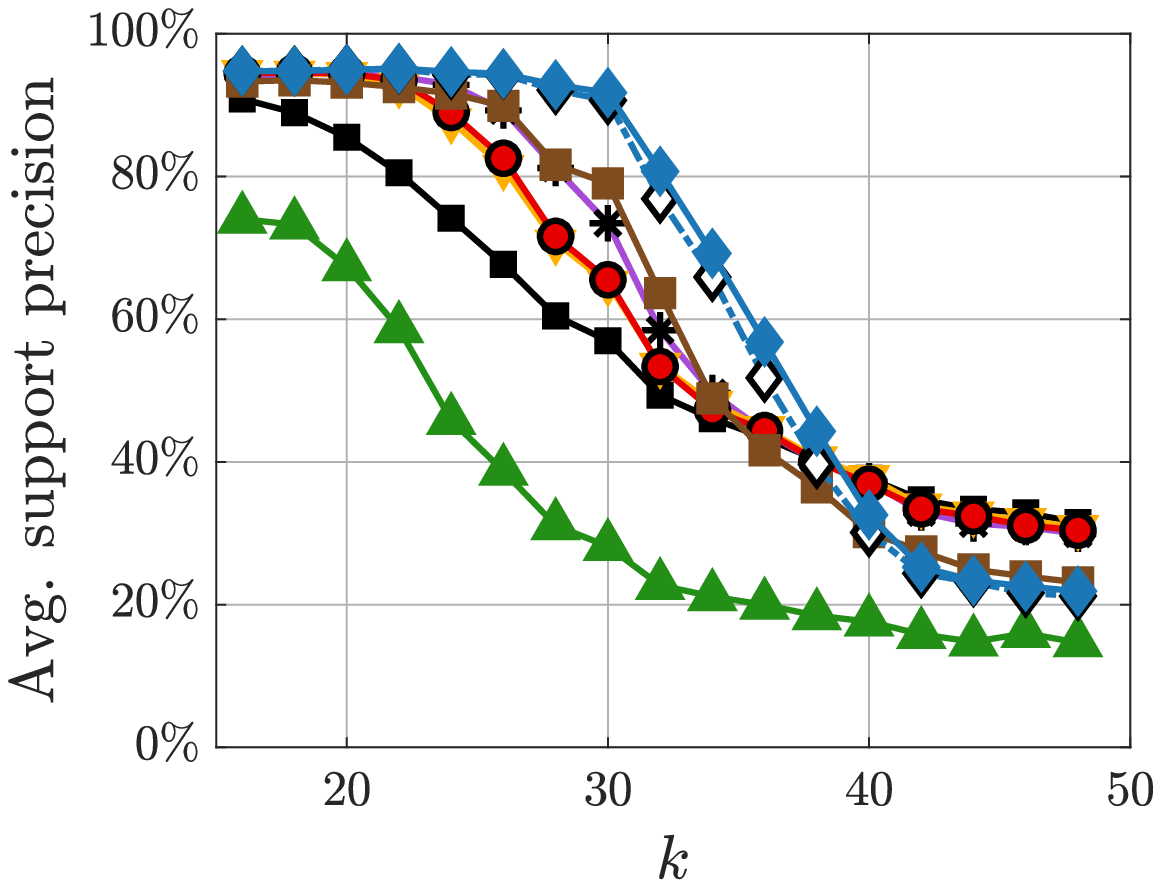}
	\vspace{5pt}
	\newline
    \includegraphics[width=0.65\textwidth]{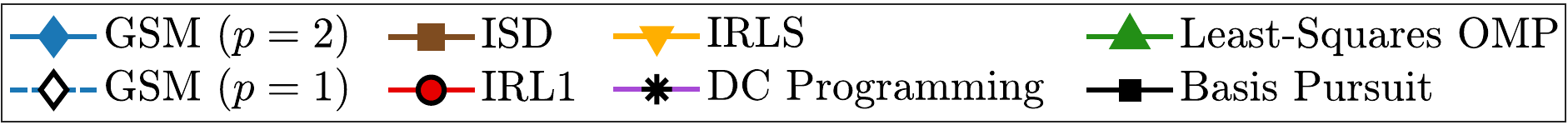}
    \caption{Recovery of $k$-sparse signals in a $\nu=5\%$ noise setting. Top row: Gaussian signal, uncorrelated matrix. Bottom row: Equispaced linear signal, correlated matrix.} 
    \label{fig:success_vs_sparsity_noisy_1}    
\end{figure}

\Cref{fig:success_vs_sparsity_noisy_1} shows the results under 5\% noise ($\nu = 0.05$). Even though noise degraded the performance of all methods, GSM achieved the highest accuracy. In contrast to the noiseless case, here at high values of $k$,
some methods obtain a small residual $\|A\x-\y\|_2$ by an incorrect set of columns, whose linear combination overfits the noisy vector $\y$ -- hence the notable difference between the optimization and recovery success rates. %
Further results appear in \cref{sec:further_numerical_results}. Similar qualitative results (not shown) were obtained with $50\times 1000$ matrices.


\myparagraph{Number of required measurements}
Next, we compare the number of measurements $n$ required for successful recovery in a noiseless setting by GSM and by the next best competitor, ISD, under the same 
setup as in \cite[Figures 3-5]{wang2010sparse}. Specifically, for $d=600$, 
we generate a $k$-sparse vector $\x_0$ whose non-zero entries are  i.i.d. $\mathcal{N}\br{0,1}$,
where $k\in\{8,40,150\}$. Then, for various values of $n$, 
we estimate $\x_0$ from $\y=A\x_0$, where
$A \in \R[n\times d]$ has i.i.d. entries $\mathcal{N}\br{0,1}$ without column normalization. A recovery is considered successful if $\|\hat\x - \x_0\|_1/\|\x_0\|_1 < 10^{-4}$. \Cref{fig:isd} illustrates the recovery success rates over 100 instances for each $n$, with GSM able to successfully recover a sparse signal from fewer measurements than ISD.

\definecolor{color_gsm}{rgb}{0.1059,0.4667,0.7098}
\definecolor{color_isd}{rgb}{0.4980,0.2980,0.1255}

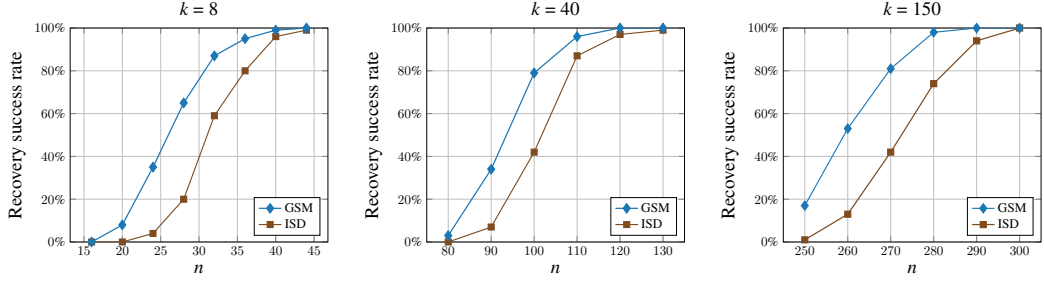
\begin{figure}[t]
\begin{center}
\begin{tikzpicture}[scale=0.5]
\begin{axis}[reverse legend, ymin=0,ymax=1, grid=both, legend cell align={left}, legend pos=south east, title = {\Large \textit{k} = 8}, 
    xlabel = {\Large \textit{n}}, ylabel = {\Large Recovery success rate},
    yticklabel={\pgfmathparse{\tick*100}\pgfmathprintnumber{\pgfmathresult}\%},
    point meta={y*100}]
\addplot [color=color_isd, mark=square*, mark size=2.25pt, line width=0.3mm] table [x=n, y=success_isd] {dat/isd_comparison_k_8.dat};
\addplot [color=color_gsm, mark=diamond*, mark size=3.5pt, line width=0.33mm] table [x=n, y=success_gsm] {dat/isd_comparison_k_8.dat};
\legend{ISD,GSM}
\end{axis}
\end{tikzpicture}
\subfighspace
\begin{tikzpicture}[scale=0.5]
\begin{axis}[reverse legend, ymin=0,ymax=1,grid=major, legend cell align={left}, legend pos=south east, title = {\Large \textit{k} = 40}, 
    xlabel = {\Large \textit{n}}, ylabel = {\Large Recovery success rate},
    yticklabel={\pgfmathparse{\tick*100}\pgfmathprintnumber{\pgfmathresult}\%},
    point meta={y*100}]
\addplot [color=color_isd, mark=square*, mark size=2.25pt, line width=0.3mm] table [x=n, y=success_isd] {dat/isd_comparison_k_40.dat};
\addplot [color=color_gsm, mark=diamond*, mark size=3.5pt, line width=0.33mm] table [x=n, y=success_gsm] {dat/isd_comparison_k_40.dat};
\legend{ISD,GSM}
\end{axis}
\end{tikzpicture}
\subfighspace
\begin{tikzpicture}[scale=0.5]
\begin{axis}[reverse legend, ymin=0,ymax=1,grid=major, legend cell align={left}, legend pos=south east, title = {\Large \textit{k} = 150}, 
    xlabel = {\Large \textit{n}}, ylabel = {\Large Recovery success rate},
    yticklabel={\pgfmathparse{\tick*100}\pgfmathprintnumber{\pgfmathresult}\%},
    point meta={y*100}]
\addplot [color=color_isd, mark=square*, mark size=2.25pt, line width=0.33mm] table [x=n, y=success_isd] {dat/isd_comparison_k_150.dat};
\addplot [color=color_gsm, mark=diamond*, mark size=3.5pt, line width=0.3mm] table [x=n, y=success_gsm] {dat/isd_comparison_k_150.dat};
\legend{ISD,GSM}
\end{axis}
\end{tikzpicture}
\caption{Recovery success rate of GSM and ISD for a $k$-sparse Gaussian signal in $\R[d]$, with $d=600$, as a function of the number of measurements $n$. }
\label{fig:isd}
\end{center}
\end{figure}

\subsection{Comparison with MIP-based methods}\label{sec:comparison_mip}
Finally, we compare the power-2 variant of GSM with two recent mixed integer programming based approaches,
considered state-of-the-art in solving
\eqref{pr:P0}: (1) The cutting plane method  \cite{bertsimas2020sparse}, and (2) the coordinate descent and local combinatorial search method  \cite{hazimeh2020fast}.
We did not run the first method ourselves but rather, as described below, compare our results to those reported in their work, under identical settings. 
We  ran the second method using the authors' L0learn R package. We tested it both with and without local combinatorial search, denoted by CD+Comb and CD, respectively.

First, we replicate the two settings of Figure 2(right) 
and Figure 4(left) in \cite{bertsimas2020sparse}. 
Specifically, for varying values of $n$, we generate an $n \times d$ matrix $A$ with uncorrelated i.i.d. entries $\mathcal{N}\br{0,1}$ without column normalization. 
We then generate a $k$-sparse vector $\x_0 \in \R[d]$ with $k=10$, whose nonzero entries are chosen randomly from $\pm 1$ and placed at random indices. 
We observe $\y = A\x_0 + \e$ where $\e$ has i.i.d. $\mathcal{N}\br{0,1}$ entries, normalized such that $\textup{SNR} = 1/\nu^2 = \norm{A\x_0}^2_2 / \norm{\e}^2_2$. We considered the following two settings as in 
\cite{bertsimas2020sparse}: (a) $d=15000$, SNR=400; and (b) $d=5000$, SNR=9. 
Both the cutting plane method and our GSM received as input the correct value of $k$. In contrast,  
 the coordinate descent method, returns a set of solutions with different sparsity levels.
 We gave their method a slight advantage by choosing the solution whose support $\hat{S}$ is the closest to the ground-truth support $S_0$ in terms of the F-score: $2 \frac{\abs{\hat{S} \cap S_0}}{\abs{\hat{S}} + \abs{S_0}}$.

As in \cite{bertsimas2020sparse}, we measure the quality of a solution $\xhat$ by its support precision, $\abs{\hat{S}\cap S_0}/k$. 
\Cref{fig:bertsimas_tests} compares the performance of GSM, CD and CD+Comb, averaged over 200 random instances for each value of $n$. 
To achieve a precision comparable to GSM, CD+Comb required about $20\%$
more observations. 
In terms of run-time, our method, implemented in Matlab, 
took on average less than 3 minutes to complete a single run.
The CD/CD+Comb method, partly implemented in C++, is significantly faster and took less than 1 second per run.

Given the NP-hardness of \eqref{pr:P0}, in their original paper, \cite{bertsimas2020sparse}
did not run their exact cutting plane method till it found the optimal solution, but rather capped each run to at most 10 minutes. With this time limit, 
their performance was much worse than that of GSM. 
For example, in setting (a)
they achieved an average support precision smaller than 30\% at $n=100$. In comparison, our GSM achieved an 88\% precision.

\begin{figure}[t]
    \centering
    \subfloat[$d = 15000$, $\textup{SNR} = 400$]{\includegraphics[width=0.3\textwidth]{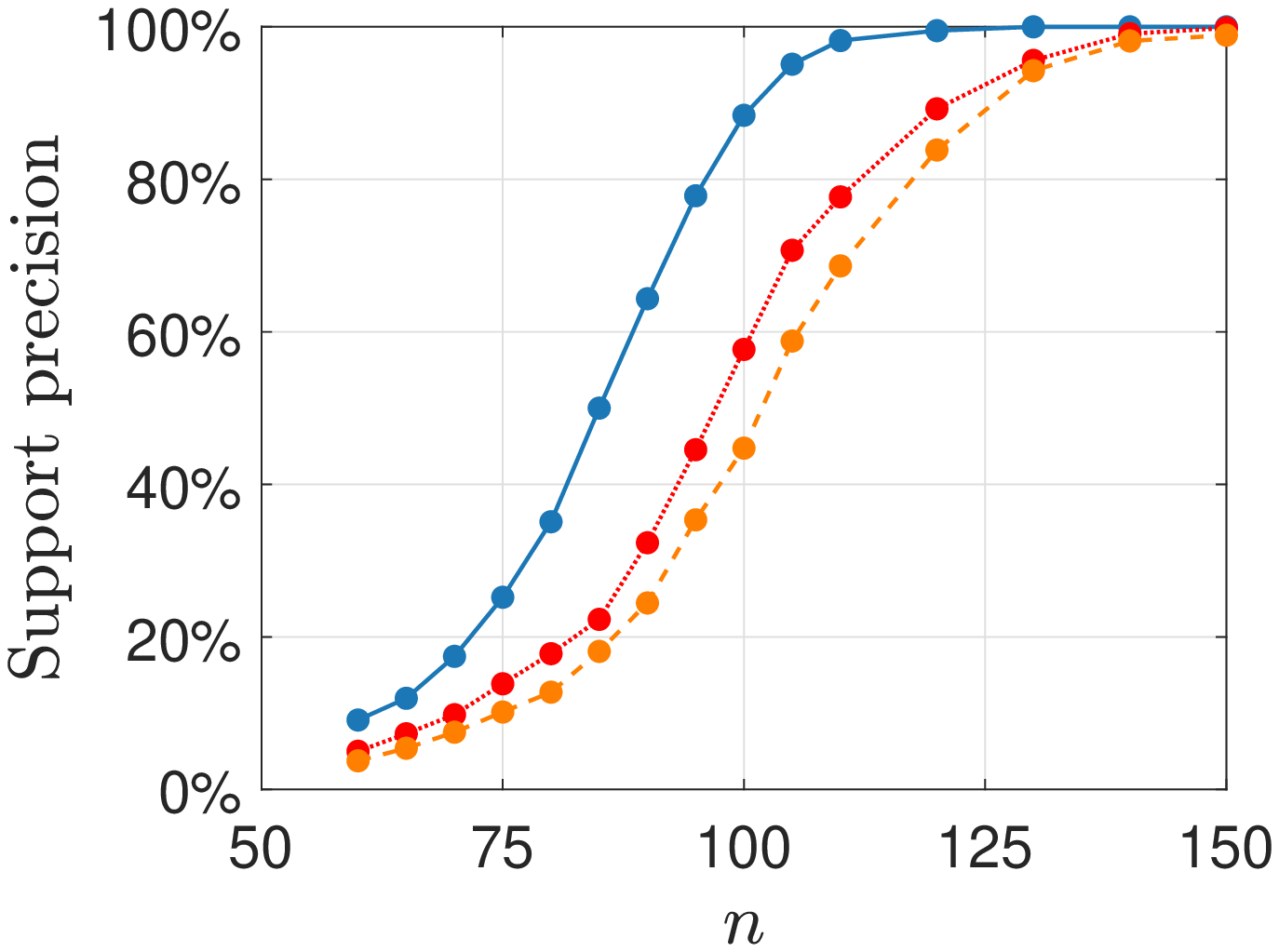}}
    \subfighspace
    \subfloat[$d = 5000$, $\textup{SNR} = 9$]{\includegraphics[width=0.3\textwidth]{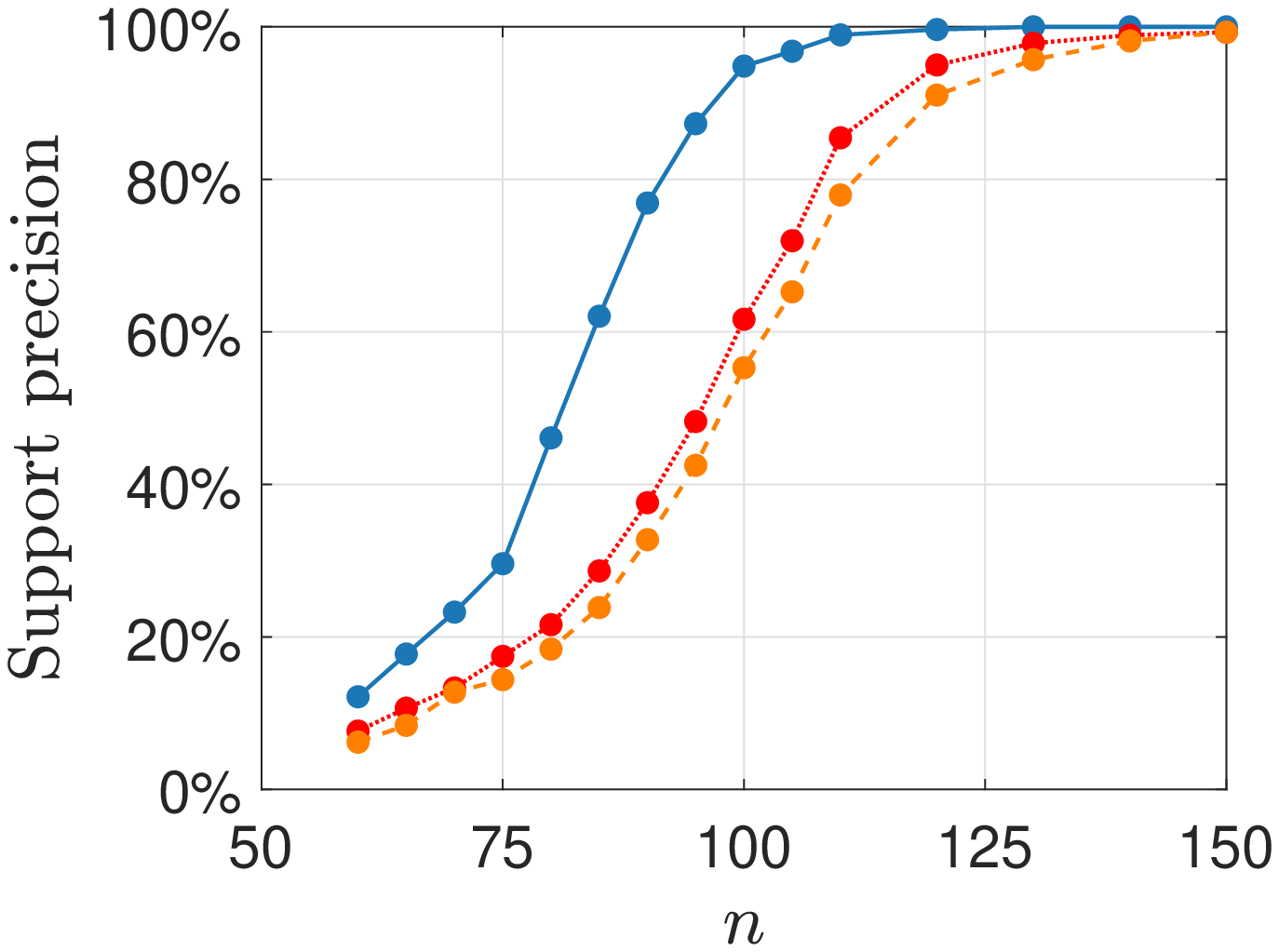}}
	\vspace{5pt}
	\newline
    \includegraphics[width=0.3\textwidth]{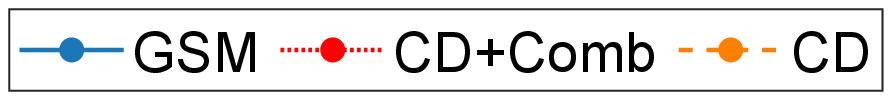}
    \caption{Support precision in recovering a $k$-sparse signal, $k=10$, with an uncorrelated matrix ($\rho = 0$). Settings (a) and (b) correspond to Figure 2(right) and Figure 4(left) in \cite{bertsimas2020sparse}. }
    \label{fig:bertsimas_tests}    
\end{figure}

Finally, 
to highlight the performance achievable by GSM versus that of CD/CD+Comb, we considered the following quite challenging setup: For $d=20000$, $k=50$ and varying values of $n$, a $n \times d$ Gaussian matrix $A$ was generated with correlated columns, with coefficient $\rho = 0.5$. 
The observed vector $\y = A\x_0 + \e$ was generated with random Gaussian noise $\e$.
Here we followed the definition of SNR of \cite{hazimeh2020fast}, and normalized $\e$ such that $\textup{SNR} = 1/\nu^2 = \mathbb{E}_{A} \brs{\norm{A\x_0}^2_2} / \mathbb{E}_{\e} \brs{\norm{\e}^2_2}$. 
Three different settings were tested: (a) $\x_0$ equispaced $\pm 1$, SNR=400; (b) $\x_0$ equispaced $\pm 1$, SNR=9; and (c) $\x_0$ equispaced linear, SNR=9.

In these simulations, the true sparsity level $k$ was unknown. 
As in \cite{hazimeh2020fast}, we also observe a separate 
random design validation set of the same size, $\tilde{\y}=\tilde{A} \x_0+\tilde{\e}$.
We ran our GSM method for input sparsity $k\in[1,60]$, and choose the solution with smallest 
prediction error on the validation set. For CD/CD+Comb, we passed the ground-truth $k$ as an upper limit on the support size, and similarly took the solution with smallest validation prediction error. 

Similar to \cite{hazimeh2020fast}, we also calculated the false positive rate  $\frac{\abs{\hat{S} \cap S_0^c}}{\abs{\hat{S}}}$ and the expected prediction error  $\frac{\mathbb{E}_{A}\brs{\norm{A\xhat - \y}_2^2}}{\mathbb{E}_{A}\brs{\norm{\y}_2^2}}$. 
\Cref{fig:hazimeh_tests} shows the results, averaged over 100 instances for each $n$. As seen in the individual plots, GSM achieved superior recovery. The runtime of our method was approximately 10 minutes for each combination of $n,k$.


\begin{figure}[t]  
    \centering
    \subfloat{\includegraphics[width=0.3\textwidth]{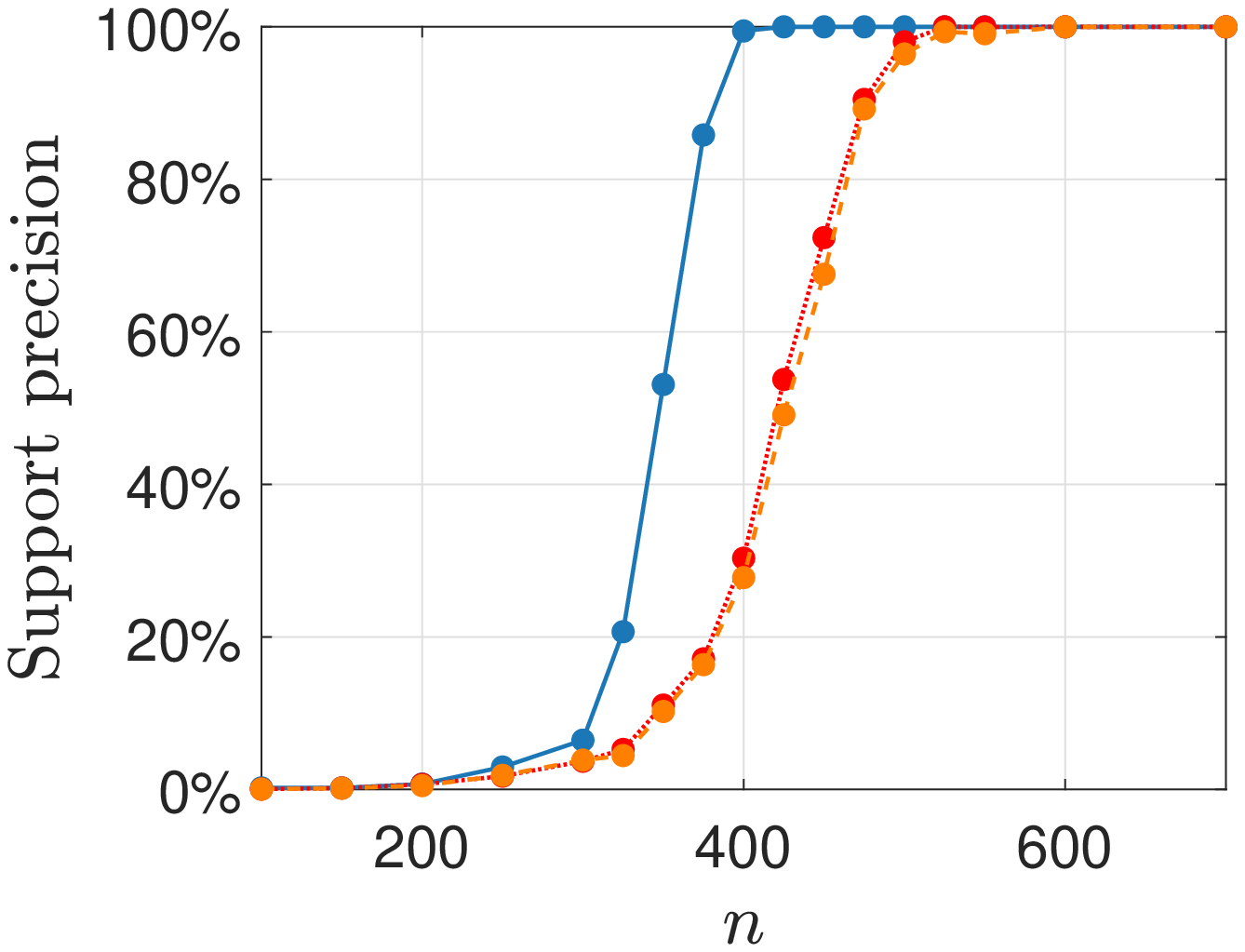}}
    \subfighspace
    \subfloat{\includegraphics[width=0.3\textwidth]{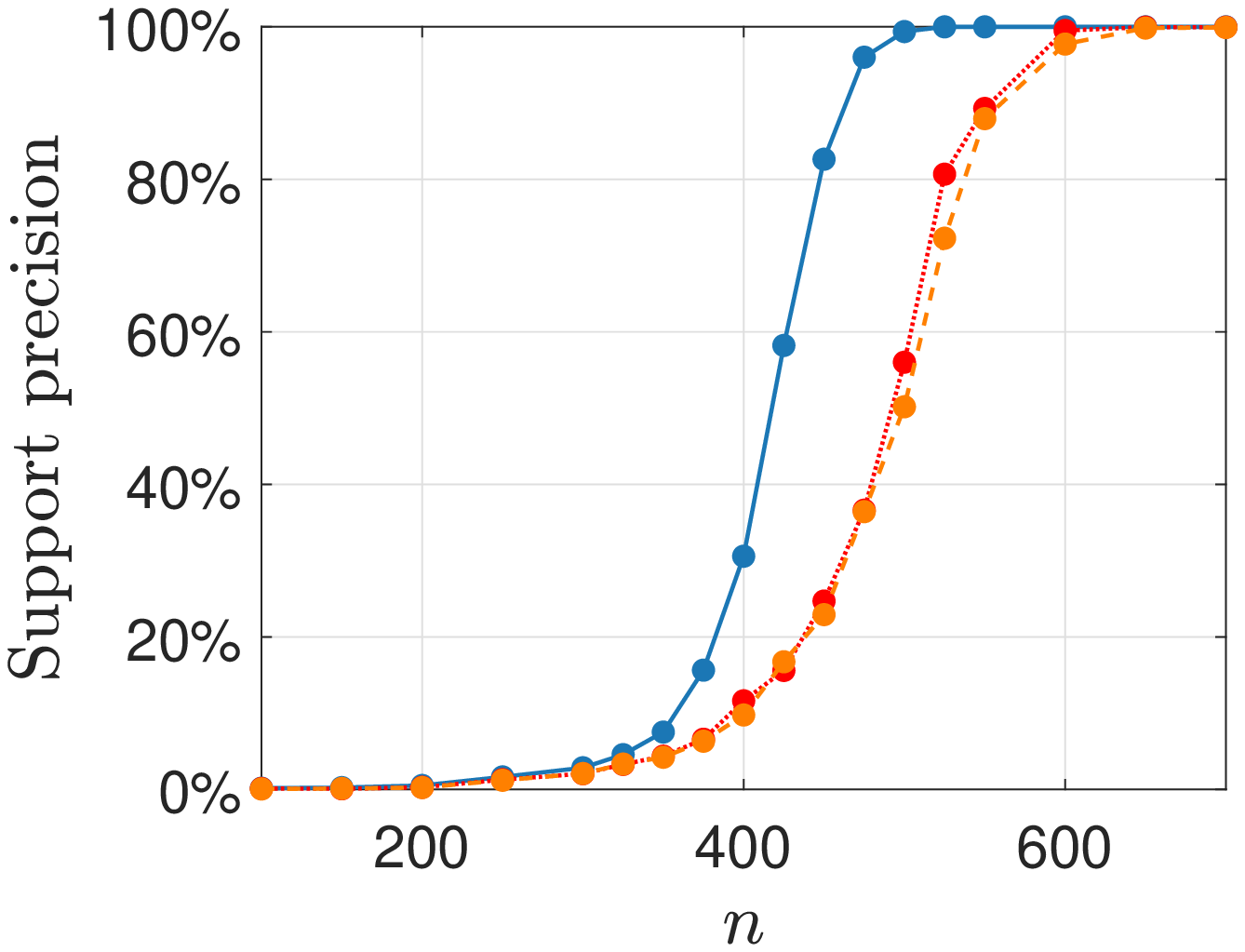}}
    \subfighspace
    \subfloat{\includegraphics[width=0.3\textwidth]{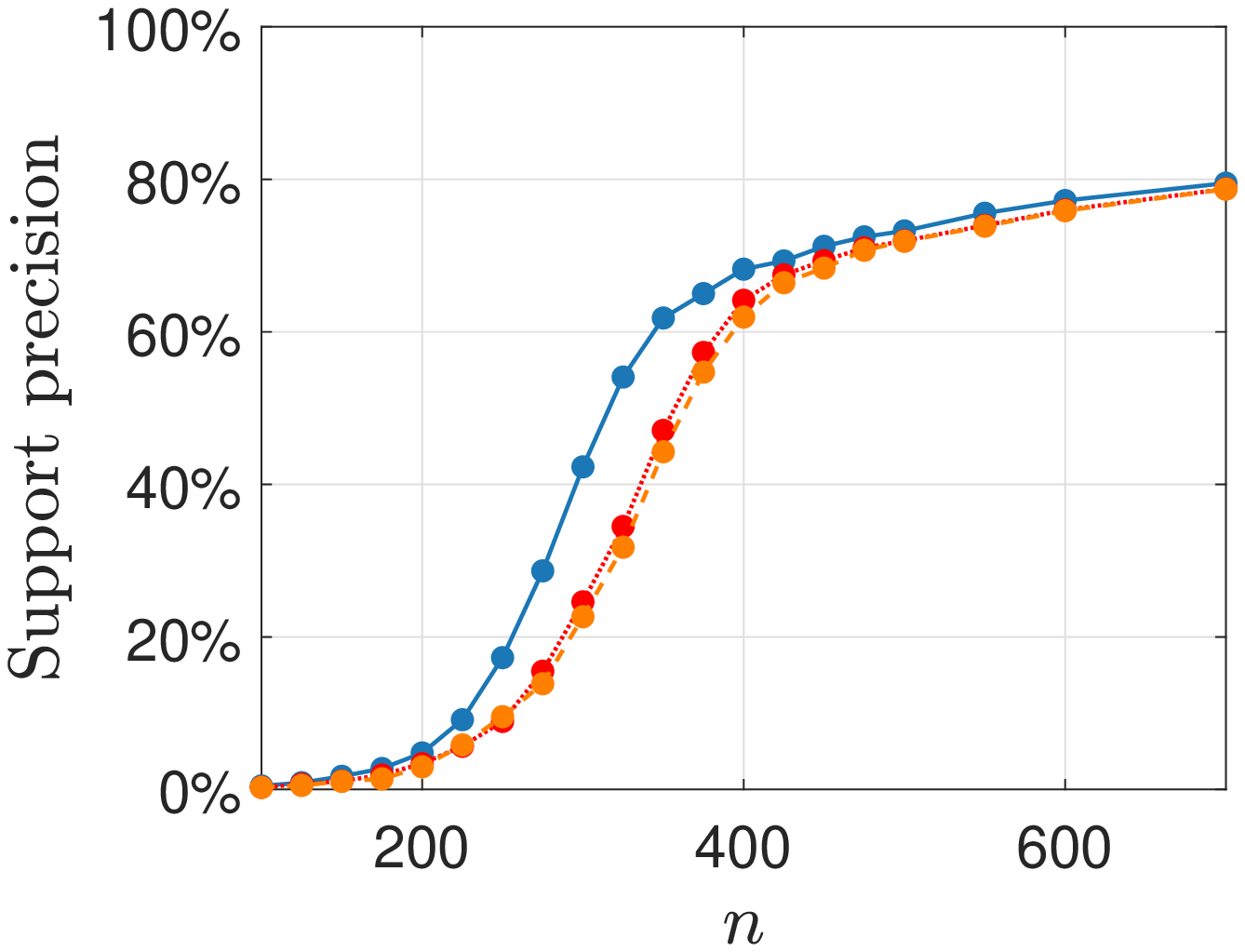}}
    \subfighspace
    \\
    \subfloat{\includegraphics[width=0.3\textwidth]{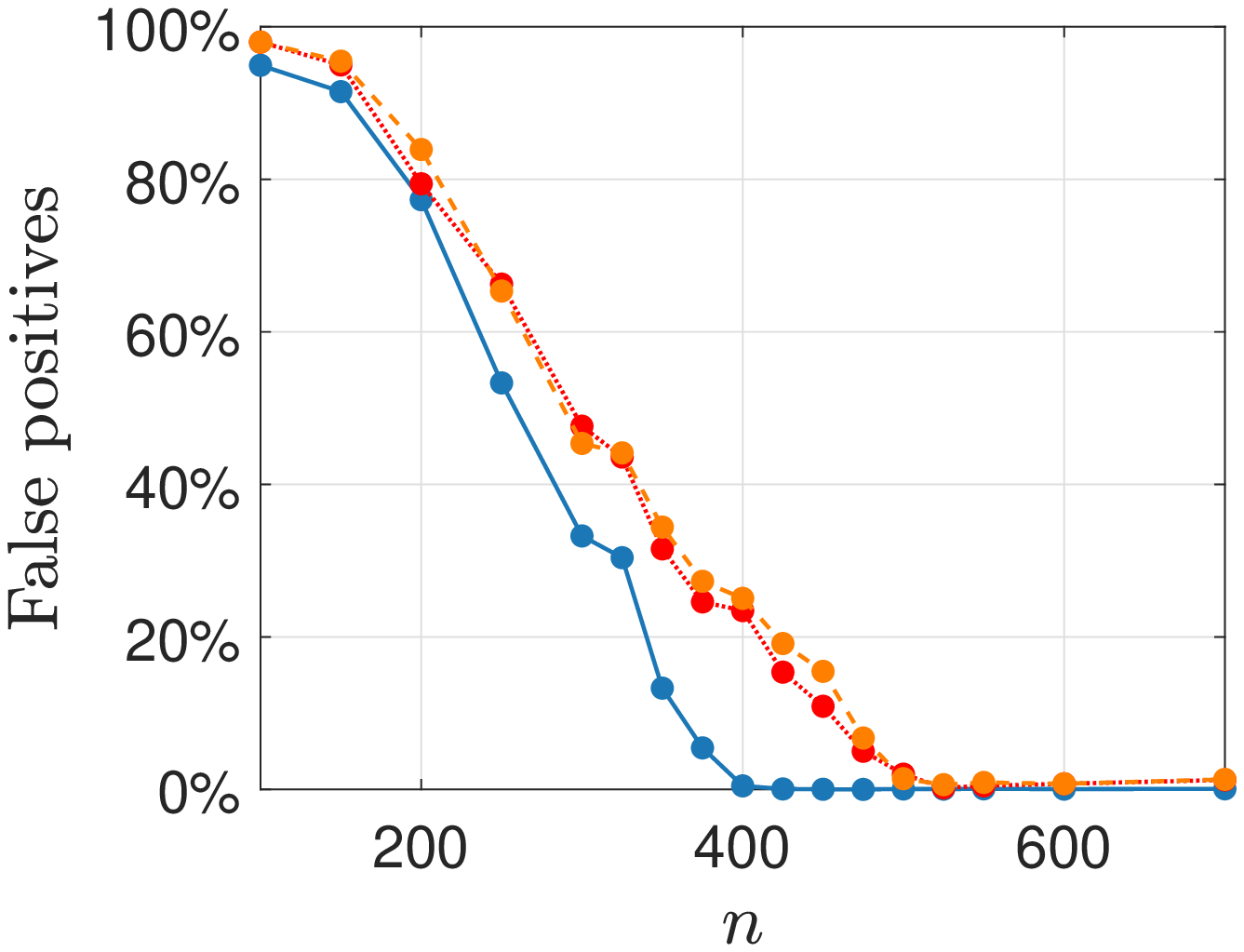}}
    \subfighspace
    \subfloat{\includegraphics[width=0.3\textwidth]{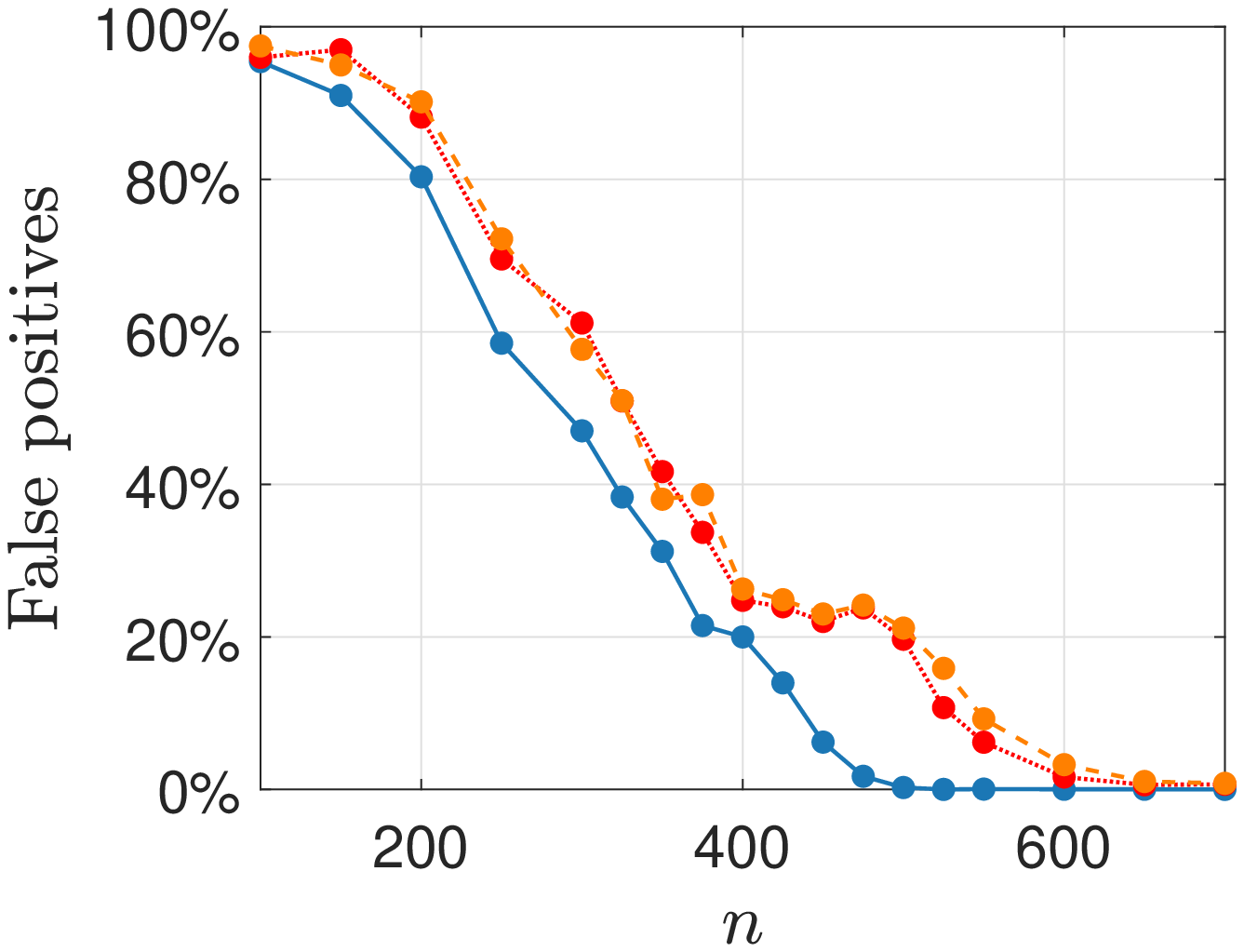}}
    \subfighspace
    \subfloat{\includegraphics[width=0.3\textwidth]{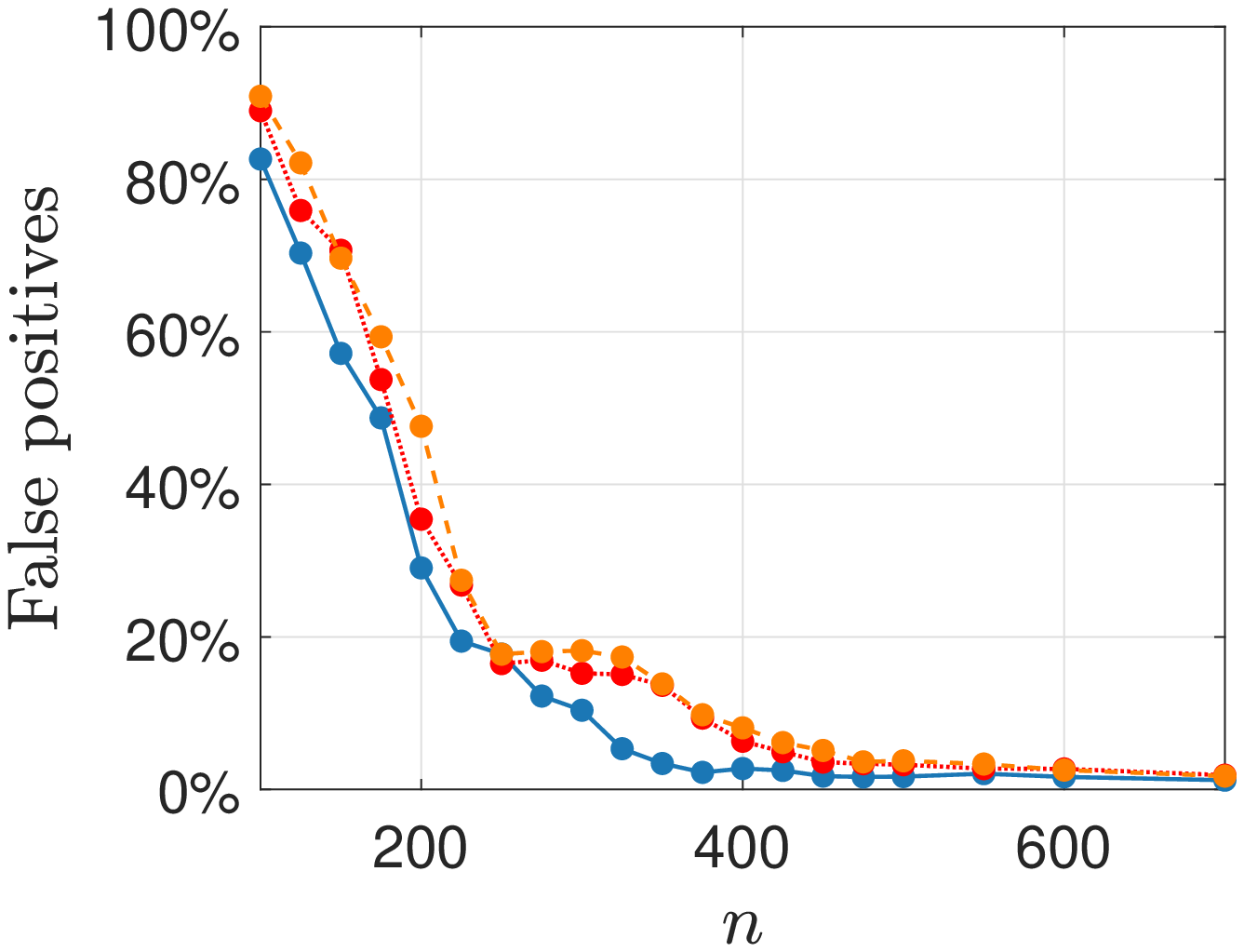}}
    \subfighspace
    \\
    \subfloat[{\footnotesize $\textup{SNR} = 400$, entries $\pm 1$}]{\includegraphics[width=0.3\textwidth]{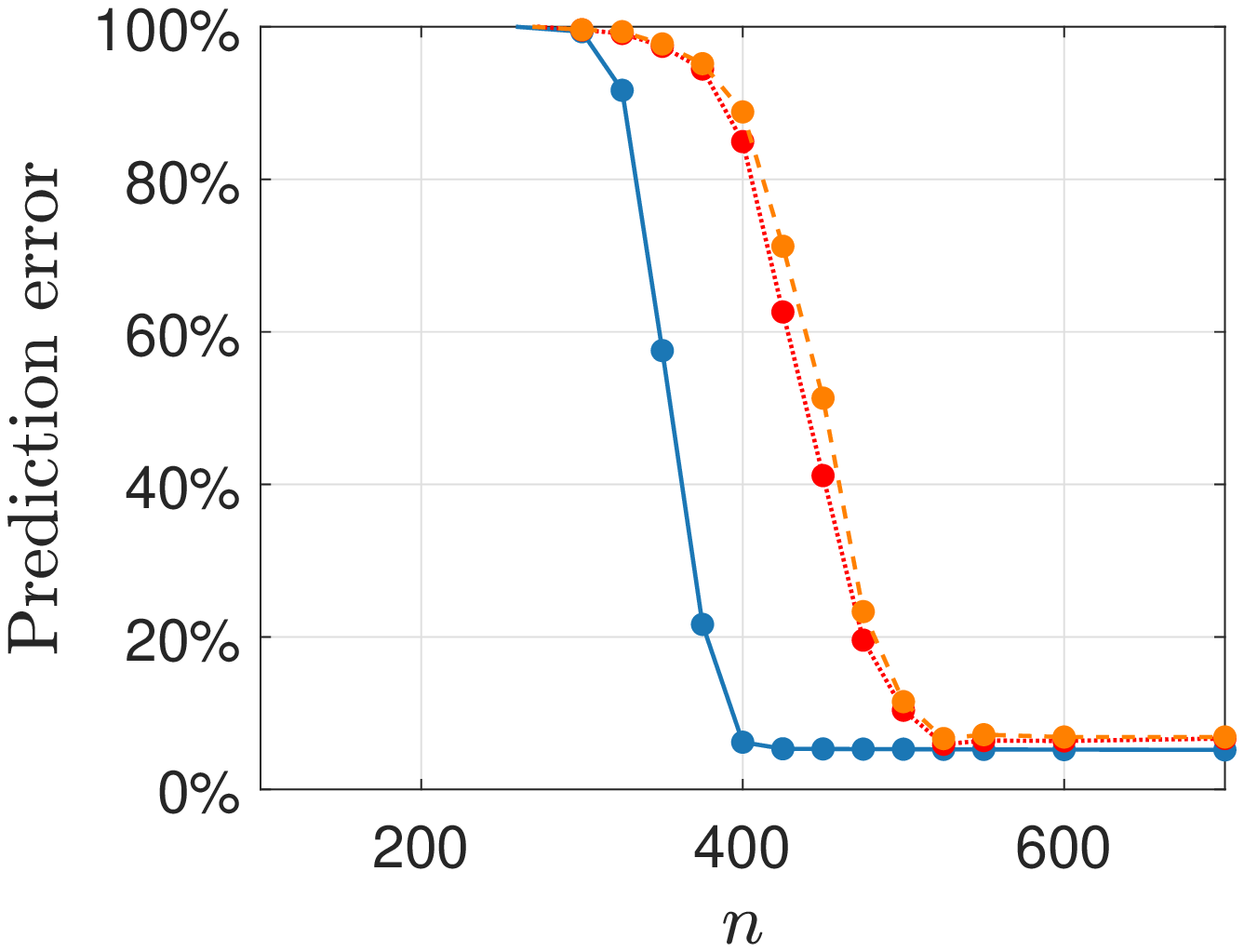}}
    \subfighspace
    \subfloat[{\footnotesize $\textup{SNR} = 9$, entries $\pm 1$}]{\includegraphics[width=0.3\textwidth]{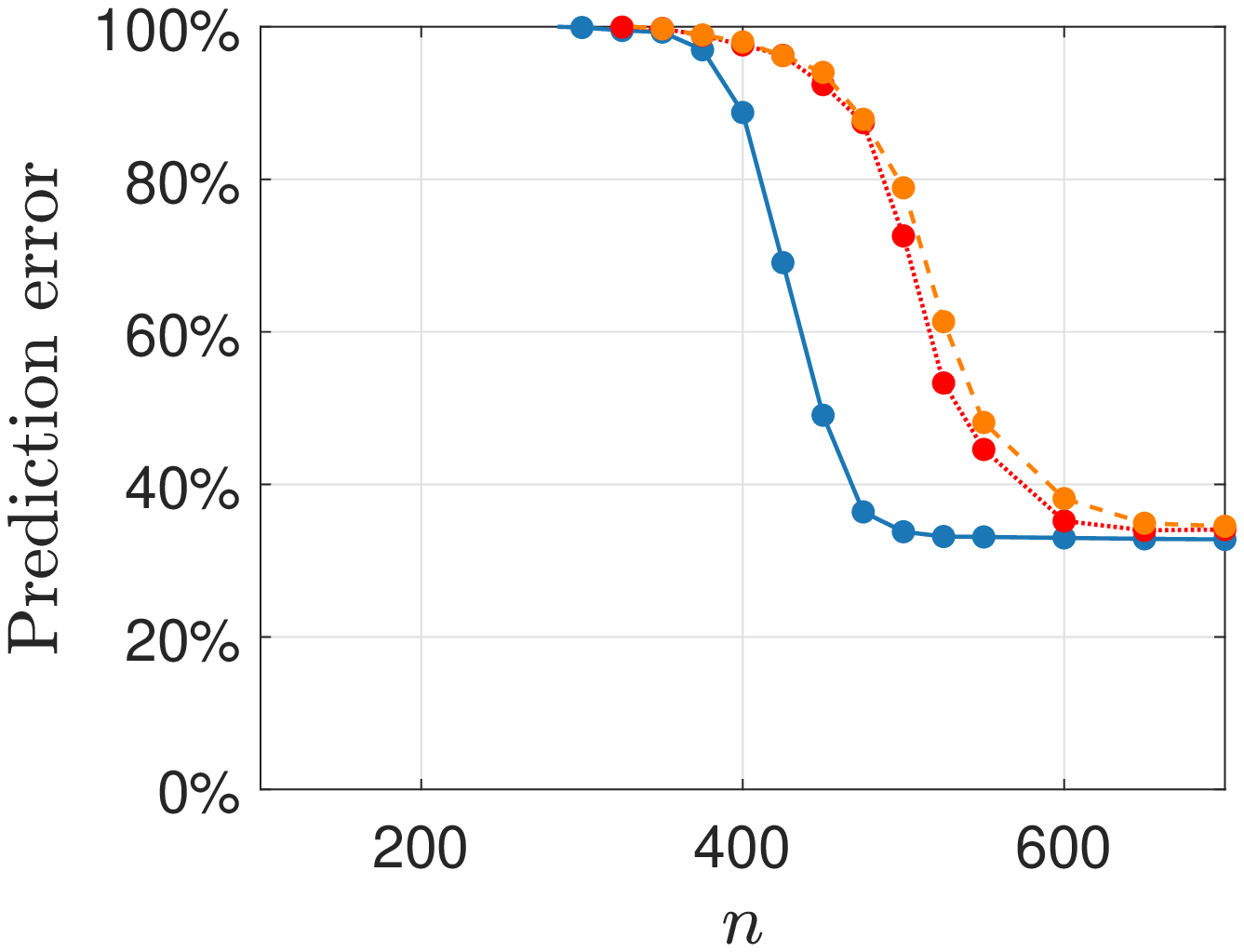}}
    \subfighspace
    \subfloat[{\footnotesize $\textup{SNR} = 9$, linear magnitudes}]{\includegraphics[width=0.3\textwidth]{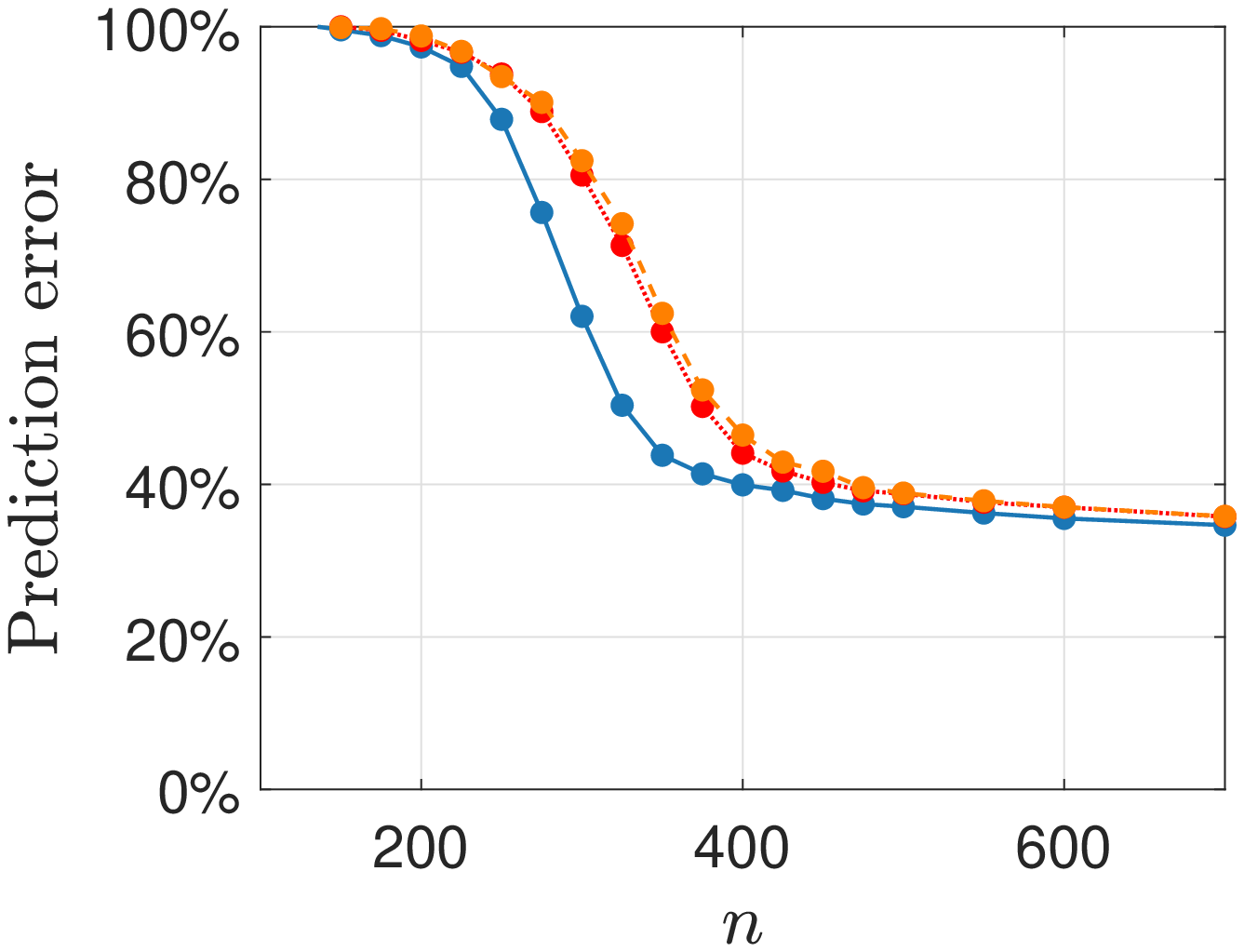}}
    \subfighspace
	\vspace{5pt}
	\newline
    \includegraphics[width=0.3\textwidth]{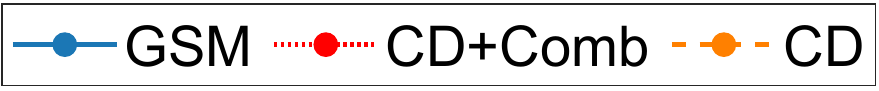}
    \caption{Sparse signal recovery with $d=20000$, $k=50$ and correlation parameter $\rho = 0.5$. Top to bottom: Support precision, false positive rate, prediction error. The vectors in the left and center columns are with nonzero entries $\pm1$. In the right column, the nonzeros are with linear magnitudes.}
    \label{fig:hazimeh_tests}    
\end{figure}

\myparagraph{Summary}
%
%
Via several simulations, we illustrated the competitive performance of GSM in solving \cref{pr:P0} and recovering sparse signals. This improved performance, nonetheless, comes at a computational cost. 
Each run of GSM typically considered up to 30 different values of $\lambda$. For each $\lambda$, solving \cref{pr:P2l} by homotopy 
required several hundreds of values of $\gamma$. For each $\gamma$, solving \cref{pr:P2lg} often took three MM iterations. Hence, 
one run of GSM solved several thousands 
of weighted $\ell_1$ problems~\cref{pr:P2lw} or \cref{pr:P1lw}. 
We remark that optimizations of \cref{pr:P2l} over different values of $\lambda$ can be done in parallel. Further speedups may be possible by using specialized 
weighted $\ell_1$ solvers that can be warm-started at the solutions of previous problem instances.

\section*{Acknowledgments}
We thank Arian Maleki for interesting discussions. BN is the incumbent of the William Petschek professorial chair of mathematics. 
Part of this work was done while BN was on sabbatical at the Institute for Advanced Study
at Princeton. BN would like to thank the IAS and the Charles Simonyi endowment for
their generous support. Finally, we thank the associate editor and the reviewers for their comments, which greatly improved our manuscript.

\bibliographystyle{plain}
\bibliography{sparse_approx_gsm_arxiv}

\appendix
\begin{appendices}

\section[Appendix A. Further numerical results]{Further numerical results}
\label{sec:further_numerical_results}

To further illustrate the improved performance of GSM, \cref{fig:cdf_noisy_1} 
shows, for selected values of $k$, the cumulative distribution of the 
normalized recovery error
$$\mbox{G}(t)=\mbox{P}\brs{\|\hat\x-\x_0\|_1/\|\x_0\|_1 \leq t}$$ 
under 5\% noise ($\nu=0.05$). These results show that the better success rates  
of GSM are not sensitive to the specific threshold that defines a successful 
recovery.

\begin{figure}[t]
    \centering
    \begin{minipage}[c]{0.7\textwidth}
    \subfloat[$k=30$]{\includegraphics[scale=0.4]{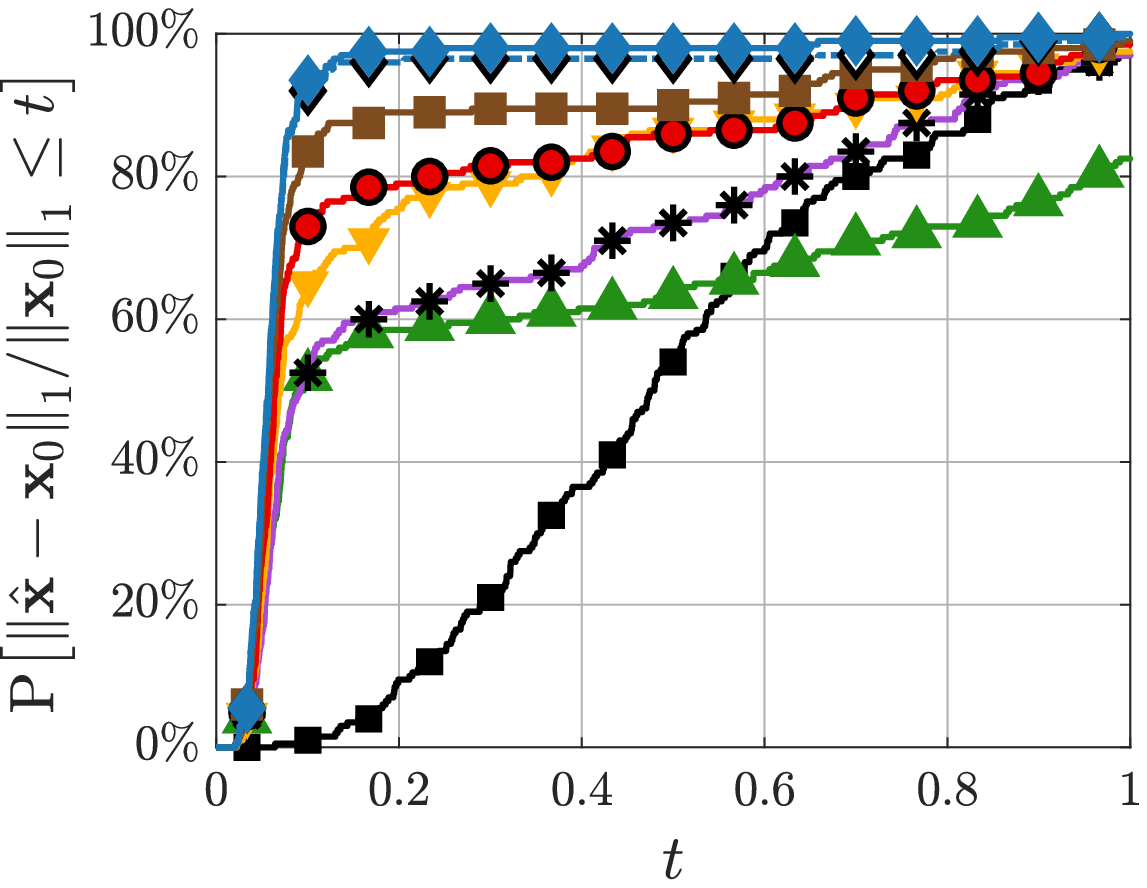}}
    \subfighspace
    \subfloat[$k=32$]{\includegraphics[scale=0.4]{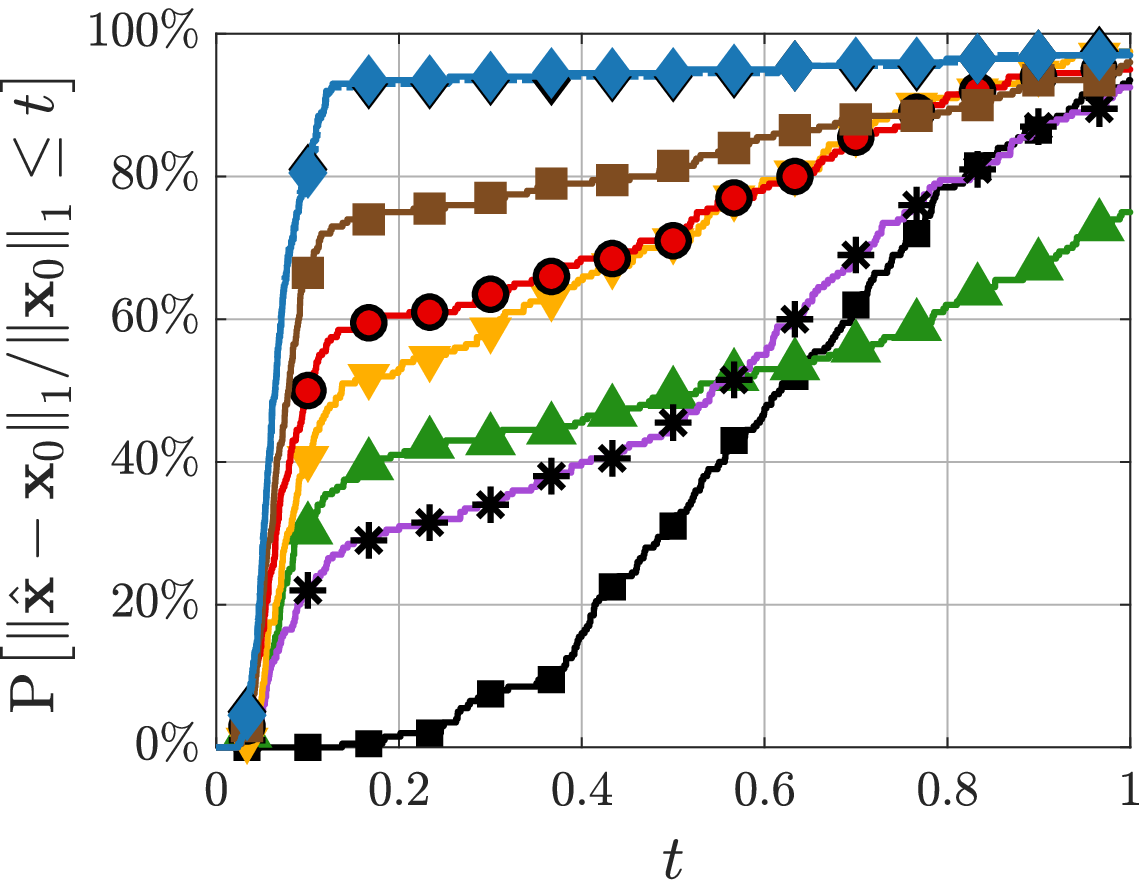}}
    \\
    \subfloat[$k=34$]{\includegraphics[scale=0.4]{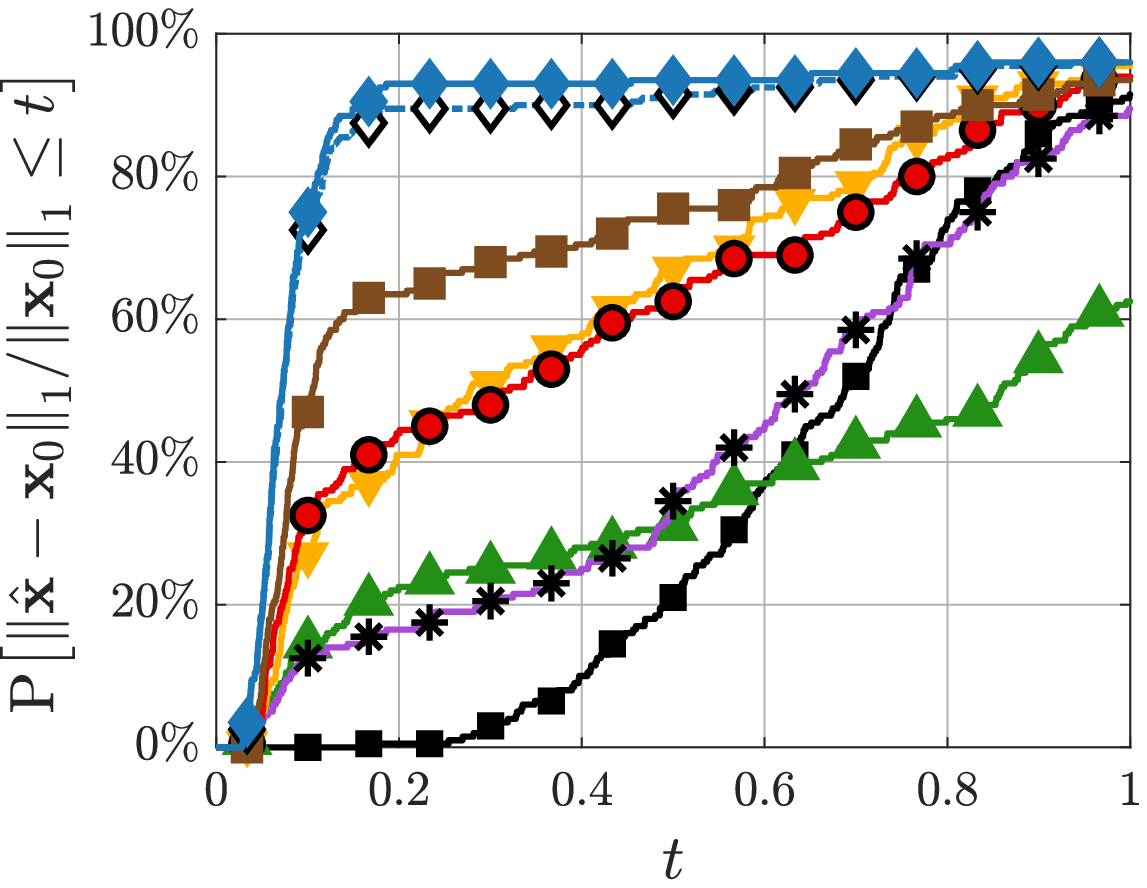}}
    \subfighspace
    \subfloat[$k=36$]{\includegraphics[scale=0.4]{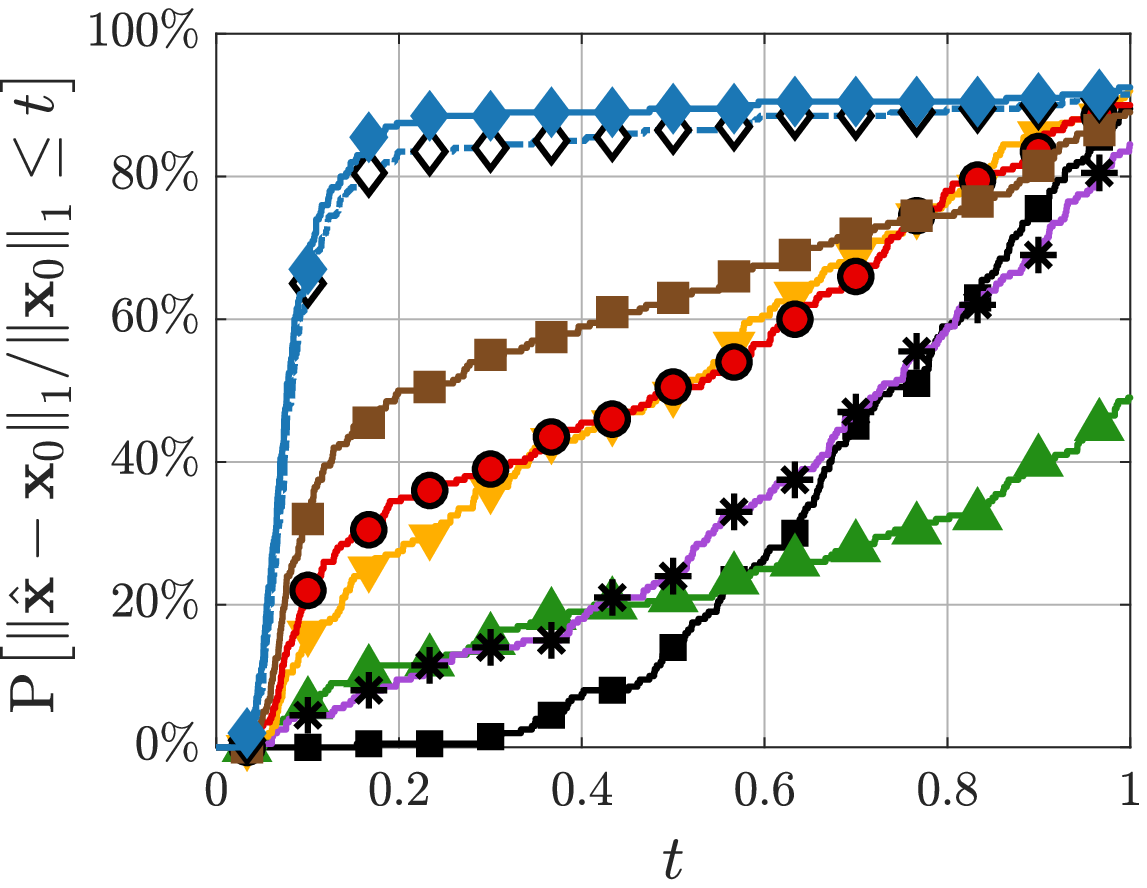}}
    %
    \end{minipage}
    \begin{minipage}[c]{0.25\textwidth}
    \subfloat{\includegraphics[scale=0.5]{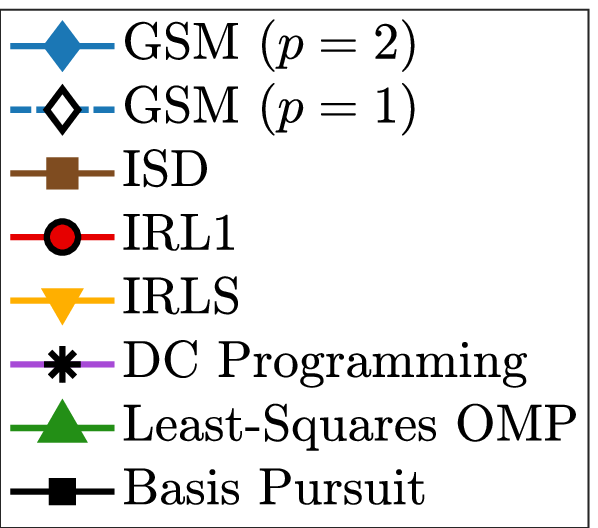}}
    \end{minipage}
    \caption{Cumulative distribution of the relative recovery error 
    $\norm{\xhat-\x_0}_1 / \norm{\x_0}_1$ under 5\% noise ($\nu = 0.05$). 
    Gaussian signal, uncorrelated $100 \times 800$ matrix.} 
    \label{fig:cdf_noisy_1}    
\end{figure}

\section[Appendix B. Proofs]{Proofs} 
\label{appendix:proofs}

\subsection{Theory for the trimmed lasso}
\label{sec:proofs_theory_tls_appendix}

%
%

In our proofs, we shall use the following inequality 
\eq[eq:beta_inequality]{\ensuremath{\norm{A\vvec}_2 \leq \lambdalarge 
\norm{\vvec}_1 \qquad \forall \vvec \in \R[d],}}
where $\lambdalarge$ is as in \cref{eq:def_lambda_b}. This inequality follows 
directly from the triangle inequality.

\begin{proof}[Proof of \cref{thm:large_lambda_p2l}]
    \label{proof:large_lambda_p2l}
    Let $\xstar$ be a local minimum of $\Ftwol$. Let $\Lambda \subset \brs{d}$ 
    be an index set of the $d-k$ smallest-magnitude entries of $\xstar$, 
    breaking ties arbitrarily. Define $\w \in \R[d]$ by 
    \eq{\ensuremath{
        w_i = \left\{
        \begin{array}{lc}
        1 & i\in \Lambda\\
        0 & \mbox{otherwise.}
        \end{array}
        \right.
    }}    
    Let $\Ftwolw \ofx \eqdef \frac{1}{2}\norm{A\x-\y}_2^2 + \lambda 
    \inprod{\w}{\abs{\x}}.$ 
    Then $\Ftwol \br{\xstar} = \Ftwolw \br{\xstar}$. Moreover, for any $\x \in 
    \R[d]$, $\Ftwol \br{\x} \leq \Ftwolw \br{\x}$. 
    Hence, the assumption that $\xstar$ is a local minimum of $\Ftwol$ implies 
    that $\xstar$ is also a local minimum of $\Ftwolw$. Since $\Ftwolw$ is 
    convex, then  $\xstar$ is also a global minimum.
    
    Next, we claim that $\norm{A\xstar-\y}_2 \leq \norm{\y}_2$. Suppose 
    otherwise by contradiction.
    Note that $\inprod{\w}{\abs{\xstar}} \geq 0 = \inprod{\w}{\abs{\zerovec}}$. 
    Thus,
    \begin{equation*}
        \begin{split}
            \Ftwolw\br{\xstar}
            &= \tfrac{1}{2}\norm{A\xstar - \y}_2^2 + 
            \lambda\inprod{\w}{\abs{\xstar}}
            \\ &> \tfrac{1}{2}\norm{\y}_2^2 + \lambda\inprod{\w}{\abs{\xstar}}
            \geq \tfrac{1}{2}\norm{\y}_2^2 + \lambda\inprod{\w}{\abs{\zerovec}}
            = \Ftwolw\br{\zerovec},
        \end{split}
    \end{equation*}
    contradicting the fact that $\xstar$ is a global minimum of $\Ftwolw$.
    %
    
    For a function $f: \R[d] \rightarrow \R$, denote by $\partial f \ofx$ the 
    subdifferential of $f$ at $\x$. We now calculate $\partial \Ftwolw 
    \ofxstar$. The term $\frac{1}{2} \norm{A\x-\y}^2_2$ is differentiable 
    everywhere, and its subdifferential at $\x$ is the singleton containing its 
    gradient $\brc{A^T\br{A\x-\y}}$. Next, define for $\x\in\R[d]$,
        \begin{equation}
        V(\x) = \setst
            {\vvec\in \R[d]}
            {\norm{\vvec}_\infty \leq 1, \mbox{ and } \forall i \mbox{ with } 
            x_i\neq 0, v_i=\sign{x_i} }.
                \label{eq:def_V_x}
    \end{equation}
     It can be shown that the subdifferential of $\inprod{\w}{\abs{\x}}$ at 
     $\x$ is given by the following set
    \eq{\ensuremath{
    \partial \inprod{\w}{\abs{\x}} = \setst{\vvec \hadamard \w}{\vvec \in 
    V(\x)},}}
    where $\hadamard$ denotes the Hadamard (entrywise) product. Therefore, by 
    the Moreau-Rockafellar theorem on the additivity of subdifferentials 
    \cite[Prop. 4.2.4, pg. 232]{bertsekas2003convanal}\cite[Thm. 
    2.9]{balder2010subdiff},
    \eq{\ensuremath{\partial \Ftwolw \ofx = \setst{A^T\br{A\x-\y} + \lambda 
    \vvec \hadamard \w}{\vvec \in V(\x)}.
    }}
    Since $\xstar$ is a global minimum of $\Ftwolw$, $\partial \Ftwolw 
    \ofxstar$ contains the zero vector. Thus, there exists $\vvec \in 
    V(\xstar)$ such that 
$
A^T\br{A\xstar-\y} + \lambda \vvec \hadamard \w = \zerovec.$
    
    Suppose 
    that $\xstar$ is not $k$-sparse. Then there exists some index $i$ such that 
    $w_i=1$ and $x^*_i \neq 0$. At that index $i$, 
    $\inprod{\mb{a}_i}{A\xstar-\y} + \lambda \cdot \sign{x^*_i} = 0$.
    Since $\norm{A\xstar-\y}_2 \leq \norm{\y}_2$,
    \eq{\ensuremath{
        \lambda = \abs{\inprod{\mb{a}_i}{A\xstar-\y}} \leq \norm{\mb{a}_i}_2 
        \norm{A\xstar-\y}_2 
        \leq \norm{\mb{a}_i}_2 \norm{\y}_2
        \leq \max_{j=1,\ldots,d} \norm{\mb{a}_j}_2 \norm{\y}_2
            = \bar{\lambda}. 
    }}
    Hence, for any $\lambda>\bar{\lambda}$ any local minimum of $\Ftwol$ must 
    be $k$-sparse. 
\end{proof}

\begin{proof}[Proof of \cref{thm:large_lambda_p1l}]
    \label{proof:large_lambda_p1l}
    Let $\xstar$ be a local minimum of $\Fonel$, where $\lambda > 
    \lambdalarge$. Suppose by contradiction that $\xstar$ is not $k$\nbdash 
    sparse. Let $\vvec \eqdef \projk \br{\xstar}-\xstar \neq \zerovec$. Then for
    any $t \in \brs{0,1}$,
    \eq{\ensuremath{
        \Fonel \br{\xstar + t \vvec} - \Fonel \ofxstar = \norm{A \br{\xstar + t 
        \vvec} - \y}_2 -  \norm{A \xstar - \y}_2
        + \lambda \brs{ \tauk \br{\xstar + t \vvec} - \tauk \ofxstar}.
    }}
    By the triangle inequality, 
    \eq{\ensuremath{\norm{A \br{\xstar + t \vvec} - \y}_2 -  \norm{A \xstar - 
    \y}_2 \leq t \norm{A\vvec}_2.}}
    As for the second term, ${\xstar + t \vvec} = {\br{1-t}\xstar + t \projk 
    \br{\xstar}}$.
    This vector coincides with $\xstar$ on the $k$ largest-magnitude entries of 
    $\xstar$, and its remaining entries decrease linearly in magnitude with 
    $\br{1-t}$. Therefore,
    $\tauk \br{ \xstar + t \vvec } = (1-t)\tauk (\xstar).$
    Furthermore, since $\tauk(\xstar)=\norm{\vvec}_1$
    \eq{\ensuremath{
    \Fonel \br{\xstar + t \vvec} - \Fonel \ofxstar
            \leq t \norm{A \vvec}_2 + \lambda \brs{ \br{1-t} \tauk \br{ \xstar 
            } - \tauk \ofxstar}
        = t \left(\norm{A \vvec}_2 - \lambda \norm{\vvec}_1 \right).
    }}
    By 
    Eqs.~\cref{eq:beta_inequality},~\cref{eq:def_lambda_b} and the fact that 
    $\norm{\vvec}_1\br{\lambdalarge - \lambda} < 0$, for all $t \in 
    \left(0,1\right]$,
    \eq{\ensuremath{\Fonel \br{\xstar + t \vvec} - \Fonel\ofxstar  \leq t 
    \norm{\vvec}_1 \br{\lambdalarge - \lambda} < 0.}}
    This contradicts the assumption that $\xstar$ is a local minimum of 
    $\Fonel$.
\end{proof}

To prove \cref{thm:small_lambda_p1l}, we first state and prove the following 
auxiliary lemma.

\begin{lemma}\label{thm:small_lambda_p1l_aux} Suppose the matrix $A \in \R[n 
\times d]$ is of rank $n$, so that $\lambdasmall>0$. Let $\w \in \brs{0,1}^d$ 
such that $\sum_{i=1}^d w_i = d-k$. Let $\xstar$ be a global minimum of 
\eq{\ensuremath{\Fonelw \ofx \eqdef \norm{A\x-\y}_2 + \lambda 
\inprod{\w}{\abs{\x}},}}
where $0 < \lambda < \lambdasmall$. Then $A \xstar = \y$.
\end{lemma}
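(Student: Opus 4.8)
The plan is to argue by contradiction, exploiting the convexity of $\Fonelw$ and writing out the first-order optimality condition at the minimizer. Assume $\mathbf{r} \eqdef A\xstar - \y \neq \zerovec$. Since $\Fonelw\ofx = \norm{A\x-\y}_2 + \lambda\inprod{\w}{\abs{\x}}$ is a sum of two convex functions, a global minimizer satisfies $\zerovec \in \partial\Fonelw\br{\xstar}$, and by the Moreau--Rockafellar additivity of subdifferentials (as already invoked in the proof of \cref{thm:large_lambda_p2l}), $\partial\Fonelw\br{\xstar} = \partial\norm{A\argdot-\y}_2\br{\xstar} + \lambda\,\partial\inprod{\w}{\abs{\argdot}}\br{\xstar}$.

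Next I would compute the two subdifferentials at $\xstar$. Because $\mathbf{r}\neq\zerovec$, the map $\x\mapsto\norm{A\x-\y}_2$ is differentiable at $\xstar$, with gradient $A^T\mathbf{r}/\norm{\mathbf{r}}_2$. The penalty $\inprod{\w}{\abs{\x}} = \sum_{i=1}^d w_i\abs{x_i}$ is separable, so exactly as in the proof of \cref{thm:large_lambda_p2l} its subdifferential at $\xstar$ consists of the vectors $\vvec\hadamard\w$ with $\vvec\in V(\xstar)$; in particular every subgradient $\mathbf{g}$ of this term is supported on $\brc{i : w_i = 1}$ and satisfies $\abs{g_i}\leq 1$, so $\norm{\mathbf{g}}_2^2 \leq \sum_{i=1}^d w_i = d-k$. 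Stationarity therefore produces a vector $\mathbf{g}$ with $\norm{\mathbf{g}}_2\leq\sqrt{d-k}$ and $A^T\mathbf{r}/\norm{\mathbf{r}}_2 = -\lambda\mathbf{g}$, whence $\norm{A^T\mathbf{r}}_2 \leq \lambda\sqrt{d-k}\,\norm{\mathbf{r}}_2$.

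Finally I would invoke the rank assumption. Since $A$ has rank $n$, the map $A^T:\R[n]\to\R[d]$ is injective and its smallest singular value is $\sigma_n(A)$, so $\norm{A^T\mathbf{r}}_2 \geq \sigma_n(A)\norm{\mathbf{r}}_2$ for every $\mathbf{r}\in\R[n]$. Combining with the previous bound and dividing by $\norm{\mathbf{r}}_2 > 0$ gives $\sigma_n(A) \leq \lambda\sqrt{d-k}$, i.e. $\lambda \geq \sigma_n(A)/\sqrt{d-k} = \lambdasmall$, contradicting the hypothesis $\lambda < \lambdasmall$. Hence $\mathbf{r} = \zerovec$, that is $A\xstar = \y$. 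The only mildly delicate steps are the subdifferential bookkeeping (the sum rule and the separable formula for $\partial\inprod{\w}{\abs{\argdot}}$, both borrowed from the proof of \cref{thm:large_lambda_p2l}) and the singular-value lower bound on $\norm{A^T\mathbf{r}}_2$; neither is a genuine obstacle, so the argument reduces to a short computation once the optimality condition is written down.
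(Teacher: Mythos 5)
Your argument is correct and follows essentially the same route as the paper: differentiability of the residual term at a point with nonzero residual, the Moreau--Rockafellar sum rule with the separable subdifferential $\setst{\vvec\hadamard\w}{\vvec\in V(\xstar)}$, the bound $\norm{\vvec\hadamard\w}_2\leq\sqrt{d-k}$ via $w_i^2\leq w_i$, and the singular-value lower bound yielding $\sigma_n(A)\leq\lambda\sqrt{d-k}$, contradicting $\lambda<\lambdasmall$. One small imprecision: since $\w\in\brs{0,1}^d$ may have fractional entries, the subgradient of the penalty is supported on $\brc{i: w_i\neq 0}$ rather than $\brc{i: w_i=1}$, but your norm bound still holds exactly as written.
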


\begin{proof}\label{proof:small_lambda_p1l_aux}
Suppose by contradiction that $A\xstar \neq \y$. Let $\partial \Fonelw \ofx$ 
denote the subdifferential of $\Fonelw$ at $\x$. Since $A \xstar \neq \y$, the 
term $\norm{A \x - \y}_2$ is differentiable at $\x = \xstar$, and its 
subdifferential at $\xstar$ is a singleton consisting of its gradient, namely
\eq{\ensuremath{\evalat{\partial \norm{A \x - \y}_2}[\x = \xstar] = \brc{ A^T 
\frac{A\xstar-\y}{\norm{A\xstar - \y}_2} }.}}
The subdifferential of $\inprod{\w}{\abs{\x}}$ at $\x$ is given by
$\partial \inprod{\w}{\abs{\x}} = \setst{\vvec \hadamard \w}{\vvec \in V(\x) }$,
where $V(\x)$ is given by \cref{eq:def_V_x}. Therefore, by the 
Moreau-Rockafellar theorem, 
\eq{\ensuremath{\partial \Fonelw \ofxstar = \setst{A^T 
\frac{A\xstar-\y}{\norm{A\xstar - \y}_2} + \lambda \vvec \hadamard \w}{\vvec 
\in V(\xstar)}}.
}
    Since $\xstar$ is a global minimum of $\Fonelw$, $\partial \Fonelw 
    \ofxstar$ contains the zero vector. Thus, there exists some
    $\vvec\in V(\xstar)$ such that 
\eq{\ensuremath{
    A^T \frac{A\xstar-\y}{\norm{A\xstar - \y}_2} + \lambda \vvec \hadamard \w = 
    \zerovec.
    }}
This implies that
\eq{\ensuremath{
\lambda \norm{\vvec \hadamard \w}_2 = \norm{ A^T 
\frac{A\xstar-\y}{\norm{A\xstar - \y}_2}}_2 \geq \sigma_n \br{A}.
}}
Since $\norm{\vvec}_\infty \leq 1$, $\w \in \brs{0,1}^d$ and $\sum_{i=1}^d w_i 
= d-k$, we have:
\eq{\ensuremath{
    \norm{\vvec \hadamard \w}_2
    \leq \norm{\w}_2 
    = \sqrt{\summ[i=1][d] w_i^2}
    \leq \sqrt{\summ[i=1][d] w_i}
    = \sqrt{d-k}.
}}
Therefore $\sigma_n \br{A} 
\leq \lambda \sqrt{d-k},$ 
which contradicts the assumption that $\lambda < \lambdasmall=\frac{\sigma_n 
\br{A}}{\sqrt{d-k}}$.
\end{proof}

\begin{proof}[Proof of \cref{thm:small_lambda_p1l}]
    \label{proof:small_lambda1}
    Let $\xstar$ be a local minimum of $\Fonel$ with $0 < \lambda < 
    \lambdasmall$. Let $\w \in \R[d]$ such that $w_i=1$ at indices $i$ 
    corresponding to the $d-k$ smallest-magnitude entries of $\xstar$, breaking 
    ties arbitrarily, and $w_i=0$ elsewhere. By similar arguments to those in 
    the proof of \cref{thm:large_lambda_p2l}, $\xstar$ is a global minimum of 
    $\Fonelw$. Therefore, by \cref{thm:small_lambda_p1l_aux}, $A\xstar=\y$.  
\end{proof}

\begin{proof}[Proof of \cref{thm:bad_local_minma_everywhere}]
We prove the theorem for the power-2 objective $\Ftwol$. The proof for $\Fonel$ 
is similar.
Given a set $\Lambda$ of size $k$, we define
the following vector $\w \in \R[d]$
\begin{equation*}
    w_i = \twocase{0}{i \in \Lambda}{1}{i \notin \Lambda.}
\end{equation*}
Let $\Ftwolw \ofx \eqdef f\br{\x} + \lambda \inprod{\w}{\abs{\x}}$, where 
$f\br{\x} = \frac{1}{2}\norm{A\x-\y}_2^2$.
As in the proof of \cref{thm:large_lambda_p2l}, with
$V(\x)$ defined in \cref{eq:def_V_x}, the subdifferential of $\Ftwolw\br{\x}$ 
at $\x$ is given by
\begin{equation}
    \partial \Ftwolw \ofx = \setst{\nabla f \br{\x} + \lambda \vvec \hadamard 
    \w}{\vvec \in V(\x)}.
\end{equation}

Let $\xtilde$ be a global minimizer of $\norm{A\x-\y}_2$ over all vectors $\x 
\in \R[d]$ supported in $\Lambda$. Namely,
\begin{equation}
    \xtilde = \argmin_{\x}\ \norm{A\x-\y}_2 \quad \mbox{s.t.} \quad \forall i 
    \notin \Lambda \ x_i = 0.
\end{equation}
Assume that $\|\xtilde\|_0=k$. We shall now show that $\xtilde$ is a global 
minimizer of $\Ftwolw$ for any $\lambda\geq \lambdabar$. Define the vector 
$\tilde{\vvec} \in \R[d]$ by
\begin{equation*}
    \tilde{v}_i = \twocase{\sign\br{x_i}}{i \in \Lambda}{- \tfrac{1}{\lambda} 
    \nabla f \br{\xtilde}_i}{i \notin \Lambda.}
\end{equation*}
To show that $\tilde{\vvec}$ belongs to $V\br{\xtilde}$, it suffices to show 
that $\abs{\tfrac{1}{\lambda} \nabla f \br{\xtilde}_i} \leq 1$ for all $i 
\notin \Lambda$. 
Let $i \notin \Lambda$ and recall that ${\bf a}_i$ is the $i$-th column of $A$. 
Then
$$\abs{\nabla f \br{\xtilde}_i} = \abs{\inprod{{\bf a}_i}{A\xtilde-\y}} \leq 
\norm{{\bf a}_i}_2 \norm{A\xtilde-\y}_2
\overset{\text{(a)}}{\leq} \norm{{\bf a}_i}_2 \norm{\y}_2 
\overset{\text{(b)}}{\leq} \lambdabar \overset{\text{(c)}}{\leq} \lambda,$$
where (a) holds by our choice of $\xtilde$, (b) holds by the definition of 
$\lambdabar$, and (c) holds by the theorem assumption.
By definition,  
$\nabla f \br{\xtilde} + \lambda \tilde{\vvec} \hadamard \w \in \partial 
\Ftwolw\br{\xtilde}$.

We now show that $\nabla f \br{\xtilde} + \lambda \tilde{\vvec} \hadamard \w = 
\zerovec$. Since $\xtilde$ minimizes $f$ over all vectors supported in 
$\Lambda$, $\nabla f \br{\xtilde}_i = 0$ for all $i \in \Lambda$. Also recall 
that $w_i = 0$ for all $i \in \Lambda$. Therefore, for all $i \in \Lambda$,
\begin{equation*}
    \nabla f \br{\xtilde}_i + \lambda \tilde{v}_i w_i = 0.
\end{equation*}
For $i \notin \Lambda$, by the definitions
of $w_i$ and $\tilde v_i$, 
\begin{equation*}
    \nabla f \br{\xtilde}_i + \lambda \tilde{v}_i w_i = \nabla f \br{\xtilde}_i 
    - \lambda \tfrac{1}{\lambda} \nabla f \br{\xtilde}_i \cdot 1 = 0.
\end{equation*}
Therefore, $\zerovec \in \partial \Ftwolw\br{\xtilde}$, implying that $\xtilde$ 
is a global minimizer of $\Ftwolw$.

To complete our proof, we need to show that $\xtilde$ is a local minimum of 
$\Ftwol$. 
To this end, first note that since
$\|\x\|_0=k$, then 
 $\Ftwol\br{\xtilde} = \Ftwolw\br{\xtilde}$.
Let $x_{\min} = \min_{i\in\Lambda} |x_i|$.
By our assumptions $x_{\min}>0$. For any $\x$ with $\|\x-\tilde x\|_{\infty}< 
x_{\min}/2$, its top $k$ coordinates are the set $\Lambda$. Hence, for such 
vectors $\Ftwol\br{\x} = \Ftwolw\br{\x}$. 
Thus, there exists a small neighborhood of $\tilde{\x}$, where 
$\Ftwol(\tilde{\x}) \leq \Ftwol(\x)$
and so $\tilde{\x}$ is a local minimum
of $\Ftwol$.
\end{proof}

We now prove our sparse recovery guarantee for the power\nbdash 2 objective 
\cref{pr:P2l}. The proof for the power\nbdash 1 case follows thereafter.
\begin{proof}[Proof of \cref{thm:sparse_reconstruction_p2}]
    \label{proof:sparse_reconstruction_p2}    
    We start with the case that $\x_0$ is $k$\nbdash sparse, so that $\tauk 
    \br{\x_0}=0$. Then the assumption that $\Ftwol \ofxhat \leq \Ftwol 
    \br{\projk \br{\x_0}}$ translates to
    \eq[eq:proof_sparse_reconstruction_p2_eq_a]{\ensuremath{\frac{1}{2}\norm{A\xhat-\y}_2^2
     + \lambda \tauk \ofxhat \leq  \frac{1}{2} \norm{\e}_2^2.}}
    Since $\lambda \tauk \ofxhat \geq 0$ and $\y = A \x_0 + \e$, it follows 
    from \cref{eq:proof_sparse_reconstruction_p2_eq_a} that
    \eq{\ensuremath{\norm{\e}_2 
        \geq \norm{A\xhat-\y}_2 
        = \norm{A\br{\xhat-\x_0}-\e}_2.}}
    Denote $\xhat_r \eqdef \xhat - \projk\br{\xhat}$. Then $\tauk \ofxhat = 
    \norm{\xhat_r}_1$ and by the triangle inequlity,
    \eq{\ensuremath{\norm{\e}_2 
        \geq \norm{A\br{\projk\ofxhat-\x_0} + A\xhat_r - \e}_2
        \geq \norm{A\br{\projk\ofxhat-\x_0}}_2 - \norm{A\xhat_r}_2 - 
        \norm{\e}_2.}}
    Note that $\projk\ofxhat-\x_0$ is $2k$\nbdash sparse. Thus, using 
    \cref{eq:def_RIP_condition,eq:beta_inequality},
    \eq{\ensuremath{2 \norm{\e}_2
        \geq \ripmin\norm{\projk\ofxhat-\x_0}_1 - \lambdalarge\norm{\xhat_r}_1.
        }}        
    By \cref{eq:proof_sparse_reconstruction_p2_eq_a}, $\lambda \tauk\ofxhat = 
    \lambda \norm{\xhat_r}_1 \leq \frac{1}{2}\norm{\e}_2^2$, and thus
    \eq[eq:proof_sparse_reconstruction_p2_eq_b]{\ensuremath{\norm{\projk\br{\xhat}-\x_0}_1
        \leq \frac{2}{\ripmin}\norm{\e}_2 + \frac{1}{2\lambda} 
        \frac{\lambdalarge}{\ripmin}\norm{\e}_2^2.}}
    This proves part 1 of the theorem for the case that $\x_0$ is $k$\nbdash 
    sparse.

    Next, suppose that $\x_0$ is an arbitrary vector in $\R[d]$. Denote 
    $\xtilde_0 \eqdef \projk\br{\x_0}$ and $\tilde{\e} \eqdef \e + A \br{\x_0 - 
    \xtilde_0}$. Then $\y = A \x_0 + \e = A \xtilde_0 + \tilde{\e}$, where 
    $\xtilde_0$ is $k$\nbdash sparse.
    By the triangle inequality, 
    \eq[eq:proof_sparse_reconstruction_p2_eq_c]{\ensuremath{ 
    \norm{\projk\br{\xhat} - \x_0}_1 
    \leq \norm{\x_0 - \xtilde_0}_1  + \norm{\projk\br{\xhat} - \xtilde_0}_1  
    = \tauk \br{\x_0} + \norm{\projk\br{\xhat} - \xtilde_0}_1.
    }}
    Similarly, using \cref{eq:beta_inequality},
    \eq[eq:proof_sparse_reconstruction_p2_eq_d]{\ensuremath{
    \norm{\tilde{\e}}_2 \leq \norm{\e}_2 + \norm{A\br{\x_0-\xtilde_0}}_2 \leq 
    \norm{\e}_2 + \lambdalarge\norm{\x_0-\xtilde_0}_1 = \norm{\e}_2 + 
    \lambdalarge \tauk\br{\x_0}.
    }}
    The assumption that $\Ftwol \ofxhat \leq \Ftwol \br{\projk \br{\x_0}}$, 
    combined with the definition of $\tilde{\e}$, implies that $\Ftwol \ofxhat 
    \leq \Ftwol \br{\xtilde_0} = \frac{1}{2}\norm{\tilde{\e}}_2^2$. Since 
    $\xtilde_0$ is $k$-sparse, and $\y = A \xtilde_0 + \tilde{\e}$, then by 
    \cref{eq:proof_sparse_reconstruction_p2_eq_b},
    \eq{\ensuremath{ \norm{\projk\br{\xhat}-\xtilde_0}_1 
    \leq \frac{2}{\ripmin}\norm{\tilde{\e}}_2 + 
    \frac{1}{2\lambda}\frac{\lambdalarge}{\ripmin}\norm{\tilde{\e}}_2^2.}}
    Combining this inequality with 
    \cref{eq:proof_sparse_reconstruction_p2_eq_c} gives
    \eq{\ensuremath{ \norm{\projk\br{\xhat}-\x_0}_1
    \leq \tauk \br{\x_0} + \frac{2}{\ripmin}\norm{\tilde{\e}}_2 + 
    \frac{1}{2\lambda}\frac{\lambdalarge}{\ripmin}\norm{\tilde{\e}}_2^2.}}
    Finally, inserting \cref{eq:proof_sparse_reconstruction_p2_eq_d} to the 
    above yields Eq.~\cref{eq:recovery_error_bound_1}, and thus part 1 of the 
    theorem is proven.
    
    We now prove part 2. Suppose that $\xhat$ is $k$\nbdash sparse. By the 
    triangle inequality,
    \eq{\ensuremath{\norm{\xhat-\x_0}_1
    \leq \norm{\x_0-\projk\br{\x_0}}_1  + \norm{\xhat-\projk\br{\x_0}}_1 = 
    \tauk \br{\x_0}
     + \norm{\xhat-\projk\br{\x_0}}_1.}} 
    By \cref{eq:def_RIP_condition}, the second term is bounded by 
    $\norm{\xhat-\projk\br{\x_0}}_1 \leq 
    \frac{1}{\ripmin}\norm{A\br{\xhat-\projk\br{\x_0}}}_2$.
    Thus, 
    \eq{\ensuremath{\norm{\xhat-\x_0}_1
        \leq &\tauk \br{\x_0} + 
        \frac{1}{\ripmin}\norm{A\br{\xhat-\projk\br{\x_0}}}_2
         = \tauk \br{\x_0} + \frac{1}{\ripmin}\norm{A\xhat - \y - 
         \br{A\projk\br{\x_0} - \y}}_2
         \\ \leq &\tauk \br{\x_0} + \frac{1}{\ripmin}\norm{A\xhat - \y}_2 + 
         \frac{1}{\ripmin}\norm{A\projk\br{\x_0} - \y}_2.}}
     Since $\tauk\br{\xhat} = \tauk\br{\projk\br{\x_0}} = 0$, the assumption 
     that $\Ftwol \ofxhat \leq \Ftwol\br{\projk\br{\x_0}}$ implies that 
     $\norm{A\xhat - \y}_2 \leq \norm{A\projk\br{\x_0} - \y}_2$. Therefore,
     \eq[eq:proof_sparse_reconstruction_p2_eq_e]{\ensuremath{\norm{\xhat-\x_0}_1
      \leq \tauk \br{\x_0} + \frac{2}{\ripmin}\norm{A\projk\br{\x_0} - \y}_2.}}
     Using the triangle inequality and \cref{eq:beta_inequality},
     \eq{\ensuremath{\norm{A\projk\br{\x_0} - \y}_2
     \leq \norm{A\x_0-\y}_2 + \norm{A\br{\x_0 - \projk\br{\x_0}}}_2
     \leq \norm{A\x_0-\y}_2 + \lambdalarge \norm{\x_0 - \projk\br{\x_0}}_1.}}
     Since $\norm{\projk\br{\x_0}-\x_0}_1 = \tauk\br{\x_0}$ and 
     $\norm{A\x_0-\y}_2 = \norm{\e}_2$, then 
     $\norm{A\projk\br{\x_0} - \y}_2 
        \leq \norm{\e}_2 + \lambdalarge \tauk\br{\x_0}.$     
     Inserting this into \cref{eq:proof_sparse_reconstruction_p2_eq_e} yields 
     \cref{eq:recovery_error_bound_2}, which proves part 2 of the theorem.
\end{proof}

\begin{proof}[Proof of \cref{thm:sparse_reconstruction_p1}]
    \label{proof:sparse_reconstruction_p1}
We first prove parts 1 and 2 of the theorem when $\x_0$ is $k$\nbdash sparse. 
In this case, the vector $\projk\ofxhat-\x_0$ is at most $2k$\nbdash sparse. By 
\cref{eq:def_RIP_condition} and the triangle inequality,
\eq{\ensuremath{
    \ripmin\norm{\projk \br{\xhat} - \x_0}_1
    &\leq  \norm{A \br{\projk \br{\xhat} - \x_0}}_2 \leq  \norm{A \cdot \projk 
    \br{\xhat} - \y}_2 + \norm{A \x_0 - \y}_2
    \\ &\leq \norm{A\xhat-\y}_2 +\norm{A\br{\projk(\xhat)-\xhat}}_2 + 
    \norm{\e}_2.
}}
Using~\cref{eq:beta_inequality},
$\norm{A\br{\projk \br{\xhat}-\xhat}}_2 \leq \lambdalarge \norm{\projk 
\br{\xhat}-\xhat}_1 = \lambdalarge \tauk \br{\xhat}$. Thus,
\eq[eq:thm_sparse_reconstruction_main_inequality]
{\ensuremath{
    \ripmin\norm{\projk \br{\xhat} - \x_0}_1
    \leq \norm{A \xhat - \y}_2 + \lambdalarge \tauk \br{\xhat} + \norm{\e}_2.} }
Since $\x_0$ is $k$\nbdash sparse, 
$\projk\br{\x_0}=\x_0$. 
The assumption that $\Fonel \br{\xhat} \leq \Fonel \br{\projk\br{\x_0}}$ thus 
reads as
 \eq[eq:thm_sparse_reconstruction_optimality_of_xhat]{\ensuremath{
         \norm{A \xhat - \y}_2 + \lambda \tauk \br{\xhat} \leq &\norm{A \projk 
         \br{\x_0} - \y}_2 + \lambda \tauk \br{\projk \br{\x_0}}
          = \norm{A \x_0 - \y}_2 = \norm{\e}_2.} }  
Therefore,  
\eq[eq:thm_sparse_reconstruction_upper_bound_on_P_beta]{\ensuremath{
        \norm{A \xhat - \y}_2 + \lambdalarge \tauk \br{\xhat} 
        \leq  \max\brc{1,\tfrac{\lambdalarge}{\lambda}} \br{ \norm{A \xhat - 
        \y}_2 + \lambda \tauk \br{\xhat} }
        \leq \max\brc{1,\tfrac{\lambdalarge}{\lambda}} \norm{\e}_2,
}}
Inserting \cref{eq:thm_sparse_reconstruction_upper_bound_on_P_beta} into 
\cref{eq:thm_sparse_reconstruction_main_inequality} 
proves part 1 of the \lcnamecref{thm:sparse_reconstruction_p1} when $\x_0$ is 
$k$\nbdash sparse, since  
\eq[eq:thm_sparse_reconstruction_main_inequality_bound_when_x0_sparse]{\ensuremath{
        \norm{\projk \br{\xhat} - \x_0}_1 \leq \tfrac{1}{\ripmin} \br{1 + 
        \max\brc{1,\tfrac{\lambdalarge}{\lambda}}} \norm{\e}_2.
}}

Next, suppose that $\xhat$ is also $k$\nbdash sparse. Here 
$\projk\ofxhat=\xhat$ and $\tauk \ofxhat = 0$. Combining 
\cref{eq:thm_sparse_reconstruction_main_inequality} and 
\cref{eq:thm_sparse_reconstruction_optimality_of_xhat} proves part 2 of the 
theorem when $\x_0$ is $k$\nbdash sparse, since
\eq[eq:x_x0_e]{\ensuremath{
    \norm{\xhat - \x_0}_1 \leq 
    \tfrac{1}{\ripmin} \br{ \norm{A \xhat - \y}_2 + \norm{\e}_2} 
    \leq  \tfrac{2}{\ripmin} \norm{\e}_2.} }

We now generalize the proof to an arbitrary $\x_0$. Denote $\xtilde_0 = 
\projk\br{\x_0}$ and $\tilde{\e} = \e + A \br{\x_0 - \xtilde_0}$. Then $\y = A 
\x_0 + \e = A \xtilde_0 + \tilde{\e}$.
By the triangle inequality,
\eq[eq:thm_sparse_reconstruction_arbitrary_x0_a]{\ensuremath{
        \norm{\projk \br{\xhat} - \x_0}_1 \leq &\norm{\projk \br{\xhat} - 
        \xtilde_0}_1 + \norm{\x_0 - \xtilde_0}_1
         = \norm{\projk \br{\xhat} - \xtilde_0}_1 + \tauk \br{\x_0}.
}}
Similarly, by the triangle inequality and \cref{eq:beta_inequality},
\eq[eq:thm_sparse_reconstruction_arbitrary_x0_b]{\ensuremath{
        \norm{\tilde{\e}}_2 \leq &\norm{\e}_2 + \norm{A \br{\x_0 - \xtilde_0}}_2
         \leq \norm{\e}_2 + \lambdalarge \norm{\x_0 - \xtilde_0}_1
         = \norm{\e}_2 + \lambdalarge \tauk \br{\x_0}.
}}
The assumption that $\Fonel \br{\xhat} \leq \Fonel \br{\projk\br{\x_0}}$ now 
reads
\eq{\ensuremath{
    \norm{A \xhat - \y}_2 + \lambda \tauk \br{\xhat} 
    \leq \norm{A \cdot \projk \br{\x_0} - \y}_2 
    = \norm{A \cdot \projk \br{\tilde{\x}_0} - \y}_2 
    = \Fonel\br{\projk \br{\tilde{\x}_0}}.
}}
Since $\y = A\tilde{\x}_0 + \tilde{\e}$, $\Fonel \br{\xhat} \leq \Fonel 
\br{\projk\br{\tilde{\x}_0}}$ and $\tilde{\x}_0$ is $k$\nbdash sparse, using 
\cref{eq:thm_sparse_reconstruction_main_inequality_bound_when_x0_sparse} with 
$\x_0$, $\e$ replaced respectively by $\tilde{\e}$, $\tilde{\x}_0$ gives 
\eq{\ensuremath{
\norm{\projk \br{\xhat} - \xtilde_0}_1 \leq \tfrac{1}{\ripmin} \br{ 1 + 
\max\brc{1,\tfrac{\lambdalarge}{\lambda}} } \norm{\tilde{\e}}_2.
}}
Inserting this into \cref{eq:thm_sparse_reconstruction_arbitrary_x0_a}, and 
applying \cref{eq:thm_sparse_reconstruction_arbitrary_x0_b}, gives
\eq{\ensuremath{
        \norm{\projk \br{\xhat} - \x_0}_1 
        &\leq 
            \tfrac{1}{\ripmin} \br{ 1 + 
            \max\brc{1,\tfrac{\lambdalarge}{\lambda}} } \norm{\tilde{\e}}_2 + 
            \tauk \br{\x_0}
    \\ &\leq 
    \tfrac{1}{\ripmin} \br{ 1 + \max\brc{1,\tfrac{\lambdalarge}{\lambda}} } 
    \br{\norm{\e}_2 + \lambdalarge \tauk \br{\x_0}} + \tauk \br{\x_0},
}}
which proves part 1 of the \lcnamecref{thm:sparse_reconstruction_p1} for a 
general $\x_0$. We now prove 
part~\ref{item:thm_sparse_reconstruction1_p1_sparse_case}. Suppose $\xhat$ 
itself is $k$\nbdash sparse, then by \cref{eq:x_x0_e},
\eq{\ensuremath{
\norm{\xhat - \xtilde_0}_1 \leq &\tfrac{2}{\ripmin} \norm{\tilde{\e}}_2.
}}
Inserting 
\cref{eq:thm_sparse_reconstruction_arbitrary_x0_a,eq:thm_sparse_reconstruction_arbitrary_x0_b}
 to the above inequality yields \cref{eq:recovery_error_bound_2}.
\end{proof}

\subsection{Properties of the GSM auxiliary functions}
\label{appendix:proofs_properties_of_gsm_aux}
%
The following \lcnamecref{thm:mu_limits_gamma} describes some continuity 
properties of $\mukg\ofz$. 
\begin{lemma}\label{thm:mu_limits_gamma}
    For any $\z \in \R[d]$, the function $\mukg \ofz$ is monotone-increasing 
    w.r.t. $\gamma$, and
    \eq[eq:thm_mu_limits_gamma_inf]{\ensuremath{
        \limit[\gamma \app -\infty]\ \mukg \ofz = \min_{\abs{\Lambda}=k} 
        \summ[i \in \Lambda] z_i,
        \qquad
        \limit[\gamma \app 0]\ \mukg \ofz = \frac{k}{d} \summ[i=1][d]z_i,
        \qquad
        \limit[\gamma \app \infty]\ \mukg \ofz = \max_{\abs{\Lambda}=k} \summ[i 
        \in \Lambda] z_i.
        }}
    Moreover, the limits as $\gamma \app \infty$ and $\gamma \app -\infty$ are 
    uniform over $\z \in \R[d]$. Specifically, 
\begin{equation}
	\label{eq:mu_approximation_bound_gamma_neg}
    \min_{\abs{\Lambda}=k} \summ[i \in \Lambda] z_i  \leq \mukg \ofz \leq 
    \min_{\abs{\Lambda}=k} 
            \summ[i \in \Lambda] z_i - \frac{1}{\gamma} \log {\binom{d}{k}} 
            \qquad \forall\z \in \mathbb{R}^d,\ \gamma < 0.
    \end{equation}
    with a similar inequality holding for $\gamma > 0$. 
\end{lemma}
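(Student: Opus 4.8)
The plan is to treat $\mukg\ofz$ as a log-sum-exp ("soft-max") average over the $\binom{d}{k}$ subset-sums $S_\Lambda(\z) = \sum_{i\in\Lambda} z_i$, and exploit standard properties of that operation. Write $\mukg\ofz = \frac1\gamma\log\bigl(\frac1{N}\sum_\Lambda e^{\gamma S_\Lambda}\bigr)$ with $N = \binom{d}{k}$. For the monotonicity in $\gamma$, the cleanest route is to recognize $\gamma \mapsto \frac1\gamma\log\,\mathbb{E}[e^{\gamma S}]$ (expectation over the uniform distribution on the $N$ subsets) as a cumulant/Rényi-type quantity: its derivative in $\gamma$ has the sign of $\gamma^2\cdot(\text{a variance-like term})\ge 0$, so it is nondecreasing. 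Concretely I would differentiate $h(\gamma) = \gamma\,\mukg = \log\mathbb{E}[e^{\gamma S}] - \log N$ and show $\mukg' = (h'(\gamma)\gamma - h(\gamma))/\gamma^2$; then note $h'(\gamma)\gamma - h(\gamma) \ge 0$ because it equals the KL-divergence (always $\ge 0$) between the tilted distribution $\propto e^{\gamma S}$ and the uniform one, or equivalently by convexity of $h$ together with $h(0)=0$. The tie-in with $\gamma<0$ versus $\gamma>0$ is handled uniformly once one is careful that the $1/\gamma$ prefactor flips sign; the statement "monotone-increasing w.r.t.\ $\gamma$" then holds on all of $[-\infty,\infty]$.

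For the three limits, the $\gamma\to 0$ case follows by a first-order Taylor expansion of $\log\mathbb{E}[e^{\gamma S}] = \gamma\,\mathbb{E}[S] + O(\gamma^2)$, and $\mathbb{E}[S] = \frac1N\sum_\Lambda S_\Lambda = \frac{k}{d}\sum_{i=1}^d z_i$ by a symmetry/counting argument (each index $i$ lies in exactly $\binom{d-1}{k-1}$ of the subsets, and $\binom{d-1}{k-1}/\binom{d}{k} = k/d$). The $\gamma\to\pm\infty$ limits are the log-sum-exp $\to$ max (resp.\ min) statements; I would get them together with the uniform quantitative bound. Let $M = \max_{|\Lambda|=k} S_\Lambda$. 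For $\gamma>0$, bounding the sum $\sum_\Lambda e^{\gamma S_\Lambda}$ between $e^{\gamma M}$ and $N e^{\gamma M}$ gives
\[
M - \frac{1}{\gamma}\log N \;\le\; \mukg\ofz \;\le\; M,
\]
valid for every $\z$ — this is the "similar inequality for $\gamma>0$" referenced in the statement. Replacing $\z\mapsto -\z$, $\gamma\mapsto-\gamma$ and using the reflection identity $\mukg\ofz = -\muargs{k}{-\gamma}(-\z)$ from \cref{eq:gsm_identity_gammaneg} converts this into \cref{eq:mu_approximation_bound_gamma_neg} with $\min$ in place of $\max$; note that for $\gamma<0$ the term $-\frac1\gamma\log N$ is positive, so the lower bound is $\min_{|\Lambda|=k}S_\Lambda$ and the upper bound exceeds it, consistent with monotonicity. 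Uniformity is immediate since $N=\binom dk$ does not depend on $\z$, and letting $\gamma\to\pm\infty$ in these sandwiches yields the claimed limits.

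I don't expect a genuine obstacle here — every step is a routine property of log-sum-exp. The one place to be careful is the sign bookkeeping around $1/\gamma$: the function $h(\gamma)=\log\mathbb{E}[e^{\gamma S}]$ is convex with $h(0)=0$, so $h(\gamma)/\gamma$ is nondecreasing on $\mathbb{R}\setminus\{0\}$ and extends continuously through $0$ with value $h'(0)=\mathbb{E}[S]$; that single fact delivers both the monotonicity and the $\gamma\to0$ limit, and one should double-check it reconciles with the direction of the inequalities claimed for $\gamma<0$ (it does, since there $\mukg\ofz = h(\gamma)/\gamma$ with $h(\gamma)\le 0$). I would also verify the edge cases $k=0$ and $k=d$ separately, where the sum degenerates to a single term and all quantities collapse to $0$ or $\sum_i z_i$ respectively, so the bounds hold trivially.
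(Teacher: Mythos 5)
Your proposal is correct and follows essentially the same route as the paper: the sandwich bound $M-\tfrac{1}{\gamma}\log\binom{d}{k}\leq\mukg\ofz\leq M$ for $\gamma>0$ combined with the reflection identity $\mukg\ofz=-\muargs{k}{-\gamma}\br{-\z}$ for the $\gamma<0$ bound and the uniform limits, the counting argument (L'Hospital/Taylor at $\gamma=0$), and a derivative computation for monotonicity in which your KL-divergence-to-uniform term is exactly the paper's $\log\binom{d}{k}$ minus the entropy of the tilted distribution. No gaps.
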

\begin{proof}
\label{proof:mu_limits_gamma}
Consider the limit $\gamma\to 0$, and write $\mu_{k,\gamma}(\z) = \log( 
h_{k,\gamma}(\z)) / \gamma$
where 
\[
h_{k,\gamma}(\z) = \frac{\sum_{|\Lambda|=k} \exp\left(\gamma \sum_{i\in\Lambda} 
z_i\right)}{D}
\]
and $D = \binom{d}{k}$. Then, by L'Hospital's rule, 
\begin{eqnarray}
    \lim_{\gamma \to 0} \mu_{k,\gamma}(\z) = \lim_{\gamma\to 0} 
    \frac{\frac{\partial}{\partial \gamma} h_{k,\gamma}(\z)}{h_{k,\gamma}(\z)} 
    = \frac{1}{D}\sum_{|\Lambda|=k} \sum_{i\in\Lambda} z_i. 
            \nonumber
\end{eqnarray}
Since each $z_i$ appears in $k/d$ of all subsets $\Lambda\subset[d]$ of size 
$k$, the sum above equals $k/d  \sum_i z_i$.  

Next, we address the limits as $\gamma \rightarrow \pm\infty$. 
Let $s_{\max} = \max_{|\Lambda|=k} \sum_{i\in\Lambda}z_i$ and $\gamma > 0$. 
Then, 
\[
\mu_{k,\gamma}(\z) \leq \frac1{\gamma}
    \log\left ( \exp (\gamma s_{\max}) \right) = s_{\max}.
\]
On the other hand, 
\[
\mu_{k,\gamma}(\z) \geq \frac1{\gamma}
    \log\left(\frac{\exp(\gamma s_{\max}) }{D}\right) \geq 
    s_{\max}-\frac{1}{\gamma}
    \log\br{D}.
\]
Thus, by the two above inequalities, for all $\gamma > 0$,
\eq[eq:bound_mu_k_gamma]{\ensuremath{\max_{\abs{\Lambda}=k} 
\sum_{i\in\Lambda}z_i-\frac{1}{\gamma}
    \log\br{D} \leq \mukg \ofz \leq \max_{\abs{\Lambda}=k} 
    \sum_{i\in\Lambda}z_i.}}
As $\gamma\to\infty$ the second term on the left-hand side vanishes. Thus, as 
$\gamma\to\infty$, uniformly in $\z$, 
$\mu_{k,\gamma}(\z)\to\max_{|\Lambda|=k}\sum_{i\in\Lambda}z_i$. 
Inserting \cref{eq:gsm_identity_gammaneg} 
into \cref{eq:bound_mu_k_gamma} and replacing $\z$ by $-\z$ gives
\eq{\ensuremath{\max_{\abs{\Lambda}=k} 
\sum_{i\in\Lambda}\br{-z_i}-\frac{1}{\gamma}
    \log\br{D} \leq -\muargs{k}{-\gamma} \br{\z} \leq \max_{\abs{\Lambda}=k} 
    \sum_{i\in\Lambda}\br{-z_i} = -\min_{\abs{\Lambda}=k} 
    \sum_{i\in\Lambda}{z_i},
    }}
whence \cref{eq:mu_approximation_bound_gamma_neg} follows. In addition, 
uniformly in $\z$, 
$\mu_{k,\gamma}(\z)\to\min_{|\Lambda|=k}\sum_{i\in\Lambda}z_i$ as $\gamma\to 
-\infty$.

We now prove that $\mukg \ofz$ increases monotonically w.r.t. $\gamma$. 
Let ${\bf s}\in\mathbb{R}^D$ be the vector
whose $D$ coordinates contain all the partial sums $\sum_{i\in\Lambda}z_i$ of 
all subsets $\Lambda\subset[d]$ of size $k$. Then
\[
\mu_{k,\gamma}(\z) = \frac1\gamma \log\left(  \frac1D \sum_{j=1}^D \exp(\gamma 
s_j)    \right).
\]
Differentiating 
with respect to $\gamma$ at any finite $\gamma \neq 0$ yields
\begin{equation*}
\frac{\partial}{\partial \gamma} \muargs{k}{\gamma} \ofz = \frac{1}{\gamma^2} 
    \frac{\sum_{j=1}^D (\gamma s_j) \exp \br{ \gamma s_j }}{\sum_{j=1}^D 
            \exp \br{ \gamma s_j }} -\frac{1}{\gamma^2} \log \br{ \frac{1}{D} 
            \sum_{j=1}^D \exp \br{ \gamma s_j } }.
\end{equation*}
Let $y_j = \exp \br{\gamma s_j}$ and $p_j = \frac{y_j}{\sum_{i=1}^D y_i}$. 
Then, 
\begin{equation*}
\begin{split}
\gamma^2 \tfrac{\partial}{\partial \gamma} \muargs{k}{\gamma} \ofz 
    &= \sum_{j=1}^D  p_j \log \br{y_j} - \log \br{ \tfrac{1}{D} {\sum_{j=1}^D}  
    y_j } 
\\ &= \sum_{j=1}^D p_j \log \br{p_j \sum_{i=1}^D  y_i} - \log \br{ \tfrac{1}{D} 
\sum_{j=1}^D y_j }
\end{split}
\end{equation*}
Since $\sum_j p_j=1$, expanding the right-hand side, the term $\log(\sum_i 
y_i)$ cancels, and we obtain
\begin{equation*}
\gamma^2 \frac{\partial}{\partial \gamma} \muargs{k}{\gamma} \ofz 
= \sum_{j=1}^D p_j \log \br{p_j} - \log { \frac{1}D }.
\end{equation*}
Since $-\sum_{j=1}^D p_j \log \br{p_j}$ is the entropy of the probability 
distribution $\br{p_1,\ldots,p_D}$, it is smaller or equal to $\log D$. 
Hence, $\tfrac{\partial}{\partial \gamma}\muargs{k}{\gamma} \ofz \geq 0$ for 
all finite $\gamma \neq 0$. Since $\mukg \ofz$ is continuous w.r.t. $\gamma$ at 
$\gamma=0$, it follows that $\mukg \ofz$ is monotone increasing w.r.t. 
$\gamma$. 
\end{proof}

Before proving \cref{thm:theta_limits_gamma},
we first state an important property of $\gsm \ofz$.
\begin{lemma}\label{thm:sum_theta}
    Let $\z \in \R[d]$. Then for all $ 0 \leq k \leq d,\ \gamma \in 
    \br{-\infty,\infty}$,
    \begin{equation}
            \label{eq:sum_gsm}
            \gsmi{i}\ofz \in \brs{0,1},\ i=1,\ldots,d
            \qquad \mbox{and} \qquad
            \summ[i=1][d] \gsmi{i}\ofz = k.
    \end{equation}
\end{lemma}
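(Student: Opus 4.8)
The plan is to prove \cref{thm:sum_theta} directly from the definition \cref{eq:def_gsm} of $\gsmi{i}\ofz$ for finite $\gamma$, and then extend to $\gamma = \pm\infty$ by continuity (using \cref{thm:theta_limits_gamma}, whose proof does not depend on this lemma).

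First, I would fix a finite $\gamma \in (-\infty,\infty)$ and observe that the denominator in \cref{eq:def_gsm}, namely $\summ[\abs{\Lambda}=k] \exp(\gamma \sum_{j\in\Lambda} z_j)$, is a sum of strictly positive terms, hence strictly positive, so each $\gsmi{i}\ofz$ is well-defined. The numerator for each $i$ is a sub-sum of the denominator (restricted to those $\Lambda$ containing $i$), consisting of nonnegative terms, so $0 \leq \gsmi{i}\ofz \leq 1$. This gives the first claim in \cref{eq:sum_gsm}. For the summation identity, I would sum over $i=1,\ldots,d$:
\begin{equation*}
\summ[i=1][d] \gsmi{i}\ofz = \frac{\summ[i=1][d]\ \summ[\abs{\Lambda}=k,\ i\in\Lambda] \exp(\gamma \summ[j\in\Lambda] z_j)}{\summ[\abs{\Lambda}=k] \exp(\gamma \summ[j\in\Lambda] z_j)}.
\end{equation*}
In the numerator, I would swap the order of summation: the pair $(i,\Lambda)$ with $i\in\Lambda$ and $\abs{\Lambda}=k$ can be counted by first choosing $\Lambda$ and then choosing $i\in\Lambda$, and each $\Lambda$ of size $k$ contributes its exponential term exactly $k$ times (once for each of its $k$ elements). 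Hence the numerator equals $k \summ[\abs{\Lambda}=k] \exp(\gamma \summ[j\in\Lambda] z_j)$, which is $k$ times the denominator, giving $\summ[i=1][d]\gsmi{i}\ofz = k$.

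For the boundary cases $\gamma = \pm\infty$, the function $\gsmargs{k}{\pm\infty}\ofz$ is \emph{defined} as the limit of $\gsm\ofz$ as $\gamma\to\pm\infty$, characterized explicitly in \cref{thm:theta_limits_gamma}. Since \cref{eq:sum_gsm} holds for every finite $\gamma$ with constants $0$, $1$, and $k$ independent of $\gamma$, passing to the limit preserves both $\gsmi{i}\ofz\in[0,1]$ and $\summ[i=1][d]\gsmi{i}\ofz = k$. Alternatively, one can verify the explicit formula \cref{eq:gsm_at_gamma_infty} directly: its entries are $1$ on $\Lambda_a$, $\frac{k-\abs{\Lambda_a}}{\abs{\Lambda_b}}\in[0,1]$ on $\Lambda_b$ (nonnegative since $\abs{\Lambda_a}\leq k$, and at most $1$ since $\abs{\Lambda_a}+\abs{\Lambda_b}\geq k$), and $0$ elsewhere, and they sum to $\abs{\Lambda_a} + \abs{\Lambda_b}\cdot\frac{k-\abs{\Lambda_a}}{\abs{\Lambda_b}} = k$.

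There is no real obstacle here; the only point requiring a moment's care is the double-counting / order-of-summation argument in the numerator, and making sure the $\gamma=\pm\infty$ cases are handled either by the limit argument or by direct inspection of the explicit formula — I would present the limit argument as the cleaner route, noting that \cref{thm:theta_limits_gamma} is proved independently of this lemma.
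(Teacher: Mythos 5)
Your proof is correct and follows essentially the same route as the paper: positivity and the sub-sum observation give $\gsmi{i}\ofz\in\brs{0,1}$, and the double-counting (each $k$-subset $\Lambda$ appearing once in the denominator but $k$ times across the numerators) gives the sum identity. The extra discussion of $\gamma=\pm\infty$ is harmless but unnecessary, since the lemma only asserts the claim for finite $\gamma\in\br{-\infty,\infty}$.
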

\begin{proof}
\label{proof:sum_theta}
For $k=0,d$, the proof is trivial. Suppose that $0<k<d$. The fact that 
$\gsmi{i}  \in[0,1]$ follows directly from the definition of $\gsm\ofz$ in 
\cref{eq:def_gsm}. 
Next, summing over all $d$ coordinates, each $k$-tuple $\Lambda \subset 
\brs{d}$ appears once in the denominator,
but $k$ times in the numerator --- once for each $i$ that belongs to $\Lambda$. 
Therefore, the sum equals $k$, proving \cref{eq:sum_gsm}.
\end{proof}

\begin{proof}[Proof of \cref{thm:theta_limits_gamma}]
    \label{proof:theta_limits_gamma}
    We present the proof for the case $\gamma \app \infty$. The proof for 
    $\gamma\to-\infty$ is similar. If $k=d$, then $\brs{d}$ is a disjoint union 
    of $\Lambda_a$ and $\Lambda_b$, and the proof follows directly from the 
    definition of $\w_{d,\gamma}\ofz$. Suppose that $0<k<d$. For $\z \in 
    \R[d]$, 
    define $s_{\max} = \max_{|\Lambda|=k} \sum_{i\in\Lambda} z_i$. We say that 
    a $k$-tuple $\Lambda \subset \brs{d}$ is \emph{maximal} if  $\sum_{i \in 
    \Lambda} z_i=s_{\max}$. Then we may rewrite \cref{eq:def_gsm} as 
    \begin{equation}
    \gsmi{i} \ofz = 
    \frac{ \sum_{\abs{\Lambda} = k,\ i \in \Lambda} \exp \br{ \gamma \br{ 
    \sum_{j\in\Lambda} {z_j} - s_{\max} } } }
    { \sum_{ \abs{\Lambda} = k } \exp \br{ \gamma \br{ \sum_{j\in\Lambda} {z_j} 
    -  s_{\max} } } }.
        \nonumber              
    \end{equation}
Now, if $\Lambda$ is maximal, then for any $\gamma \in\mathbb{R}$, $\exp \br{ 
\gamma \br{ \sum_{j\in\Lambda} {z_j} - s_{\max} } } = 1$.
    In contrast, if $\Lambda$ is not maximal, then $\sum_{j\in\Lambda} {z_j} < 
    s_{\max}$, and thus  
    $\exp \br{ \gamma \br{ \sum_{j\in\Lambda} {z_j} - s_{\max} } } \app[\gamma 
    \app \infty] 0$.
    Therefore, in the limit $\gamma \rightarrow \infty$, 
    $\gsmiargs{k}{\gamma}{i} \ofz$ equals the number of maximal $k$-tuples 
    $\Lambda$ that contain $i$, divided by the total number of maximal 
    $k$-tuples.
    
    If $i \in \Lambda_a$, then every maximal $k$-tuple $\Lambda$ must contain 
    the index $i$, otherwise it would not be maximal.  Thus, 
    $\underset {\gamma \rightarrow \infty } {\lim} \ \gsmi{i}\ofz = 1.$
    If $i \notin \Lambda_a \cup \Lambda_b$, then every $k$-tuple that contains 
    $i$ is surely not maximal, and then 
    $\underset {\gamma \rightarrow \infty } {\lim} \ \gsmi{i}\ofz = 0.$
    Finally, define $\Lambda_c \eqdef \br{\Lambda_a \cup \Lambda_b}^c$.
    By \cref{thm:sum_theta},
    \eq{\ensuremath{
        k = \summ[i=1][d] \gsmiargs{k}{\gamma}{i} \ofz = \sum_{i \in \Lambda_a} 
        \gsmiargs{k}{\gamma}{i} \ofz + 
        \sum_{i \in \Lambda_b} \gsmiargs{k}{\gamma}{i} \ofz +
        \sum_{i \in \Lambda_c} \gsmiargs{k}{\gamma}{i} \ofz.}}
    Now take the limit as $\gamma\to\infty$. In the right-hand side above, the 
    sum over $i \in \Lambda_a$ tends to $\abs{\Lambda_a}$, whereas the sum over 
    $i\in\Lambda_c$ tends to zero. As for the middle sum, by definition for all 
    $i\in\Lambda_b$, the values $z_i$ are equal. Hence, the corresponding 
    values $\gsmiargs{k}{\gamma}{i} \ofz$ are also all equal. 
Therefore, 
    \begin{equation*}
    \lim_{\gamma \app \infty} \gsmiargs{k}{\gamma}{i} \ofz = \frac{k - 
    \abs{\Lambda_a}}{\abs{\Lambda_b} } \qquad \forall i \in \Lambda_b.
    \end{equation*}    
\end{proof}

\begin{proof}[Proof of \cref{thm:gsm_identities}]
For $k=0,d$, the proof is trivial and thus omitted. Suppose that $0<k<d$. For 
$\gamma=0$, \cref{eq:gsm_identity_dcomplement} holds by definition. Let $\gamma 
\neq 0,\pm \infty$. Then
\eq{\ensuremath{\mukg \ofz - \summ[i=1][d]z_i
&= \frac{1}{\gamma} \log \br{ \frac{1}{\binom{d}{k}} \sum_{ \abs{\Lambda} = k } 
\exp \br{ \gamma \sum_{i\in\Lambda} {z_i} } } 
- \frac{1}{\gamma}\log\br{\exp\br{\gamma \summ[i=1][d]z_i}}
\\ &= \frac{1}{\gamma} \log \br{ \frac{1}{\binom{d}{k}} \sum_{ \abs{\Lambda} = 
k } \exp \br{ \gamma \sum_{i\in\Lambda} {z_i} - \gamma \summ[i=1][d]z_i} } 
\\ &= \frac{1}{\gamma} \log \br{ \frac{1}{\binom{d}{d-k}} \sum_{ \abs{\Lambda} 
= d-k } \exp \br{ -\gamma \sum_{i\in\Lambda} {z_i} } }
= -\muargs{d-k}{-\gamma}\br{\z}. 
}}
Thus, for any finite $\gamma$, $\mukg \ofz + \muargs{d-k}{-\gamma}\br{\z} = 
\summ[i=1][d]z_i$. Differentiating with respect to $z_i$ gives that 
$\gsmiargs{k}{\gamma}{i}\ofz + \gsmiargs{d-k}{-\gamma}{i}\ofz = 1$. Thus,  
\cref{eq:gsm_identity_dcomplement} holds for any finite $\gamma$. Taking the 
limits as $\gamma \app \pm \infty$ yields that 
\cref{eq:gsm_identity_dcomplement} holds also for $\gamma = \pm \infty$.
\end{proof}

The following \lcnamecref{thm:mu_limits_gamma} describes the convexity of 
$\mukg \ofz$. 
\begin{lemma}\label{thm:mu_convex}
    Let $0 \leq k \leq d$, $\gamma \in \brs{-\infty,\infty}$. Then $\mukg \ofz$ 
    is convex in $\z$ for $\gamma \geq 0$ and concave in $\z$ for $\gamma \leq 
    0$. In particular, for any $\z,\z_0\in\mathbb{R}^d$, 
    \eq[mu_convexity_inequality]{\ensuremath{
        \mukg\ofz &\geq \mukg\br{\z_0}
        + \inprod{\gsm \br{\z_0}}{\z - \z_0} \qquad \gamma \geq 0
    \\         \mukg\ofz &\leq \mukg\br{\z_0}
    + \inprod{\gsm \br{\z_0}}{\z - \z_0} \qquad \gamma \leq 0. 
}}
\end{lemma}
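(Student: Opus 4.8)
The plan is to reduce the statement to the classical fact that the log-sum-exp function is convex. Recall from the proof of \cref{thm:mu_limits_gamma} that, collecting all $\binom dk$ partial sums $\sum_{i\in\Lambda}z_i$ into a vector-valued linear map $\z\mapsto S\z\in\R[D]$ with $D=\binom dk$ (each row of $S$ being the indicator of one $k$-subset), one has for finite $\gamma\ne 0$
\eq{\ensuremath{
\mukg\ofz = \tfrac1\gamma\,\mathrm{LSE}\br{\gamma S\z} - \tfrac1\gamma\log D,
}}
where $\mathrm{LSE}(\uvec)=\log\sum_{j}\exp(u_j)$. First I would invoke the standard result that $\mathrm{LSE}$ is convex on $\R[D]$ (e.g. via its Hessian $\diag(\p)-\p\p^T$ being positive semidefinite for the probability vector $\p$, or via Hölder's inequality). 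For $\gamma>0$, the map $\z\mapsto \tfrac1\gamma\mathrm{LSE}(\gamma S\z)$ is a nonnegative scaling of a convex function composed with an affine map, hence convex; the additive constant $-\tfrac1\gamma\log D$ does not affect convexity. For $\gamma<0$, writing $\gamma=-|\gamma|$, we get $\tfrac1\gamma\mathrm{LSE}(\gamma S\z) = -\tfrac1{|\gamma|}\mathrm{LSE}(-|\gamma|S\z)$, which is $-1$ times a convex function, hence concave.

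Next I would handle the boundary cases in $\gamma$. At $\gamma=0$, $\muargs{k}{0}\ofz=\tfrac kd\sum_i z_i$ is affine, hence both convex and concave, consistent with either sign branch. For $\gamma=\pm\infty$, by \cref{thm:mu_limits_gamma} we have $\muargs k{\infty}\ofz=\max_{|\Lambda|=k}\sum_{i\in\Lambda}z_i$, a pointwise maximum of linear functions and therefore convex; likewise $\muargs k{-\infty}\ofz=\min_{|\Lambda|=k}\sum_{i\in\Lambda}z_i$ is a pointwise minimum of linear functions and therefore concave. Alternatively, since $\mukg\ofz$ is continuous in $\gamma$ on $\brs{-\infty,\infty}$ (as used repeatedly in the preceding proofs) and convex in $\z$ for every $\gamma\in(0,\infty)$, the limit as $\gamma\to\infty$ is convex as a pointwise limit of convex functions; symmetrically for $\gamma\to-\infty$.

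Finally, I would derive the first-order inequalities \cref{mu_convexity_inequality}. Since $\gsm\ofz=\nabla_\z\mukg\ofz$ for finite $\gamma$ by \cref{eq:def_gsm} (and $\theta^i_{k,0}=k/d$ is consistent with the affine case), the supporting-hyperplane characterization of a differentiable convex function gives, for $\gamma\ge 0$,
\eq{\ensuremath{
\mukg\ofz \geq \mukg\br{\z_0} + \inprod{\gsm\br{\z_0}}{\z-\z_0}\qquad\forall\,\z,\z_0\in\R[d],
}}
and the reverse inequality for a concave differentiable function when $\gamma\le 0$; for $\gamma=\pm\infty$ one passes to the limit in $\gamma$ on both sides, using that $\gsm\br{\z_0}$ converges to the limit described in \cref{thm:theta_limits_gamma}, which is a valid (sub/super)gradient at $\z_0$ of the max/min function. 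The only mild subtlety — and the one place to be careful — is the differentiability claim at $\gamma=\pm\infty$: there $\mukg$ need not be differentiable at ambiguous points $\z_0$ (where the maximizing $k$-subset is not unique), so the clean statement is that $\gsmargs k{\pm\infty}\br{\z_0}$ is a subgradient (resp. supergradient), which is exactly what is needed and follows either by the limiting argument above or directly from Danskin-type reasoning on $\max_{|\Lambda|=k}\sum_{i\in\Lambda}z_i$. This is the main obstacle, and it is minor; everything else is a routine application of convexity of log-sum-exp.
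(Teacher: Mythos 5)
Your proposal is correct and follows essentially the same route as the paper's proof: both reduce $\mukg$ to a log-sum-exp of an affine map of $\z$ to obtain convexity for $\gamma>0$, pass to the limit in $\gamma$ for the endpoint cases (which is exactly how the paper justifies \cref{mu_convexity_inequality} at $\gamma=\infty$, sidestepping the non-differentiability you correctly flag), and obtain concavity for $\gamma<0$ via the sign-flip identity $\mukg\ofz=-\muargs{k}{-\gamma}\br{-\z}$.
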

\begin{proof}
    \label{proof:mu_convex}
At $\gamma = 0$, $\muargs{k}{0} \ofz$ is linear in $\z$, and thus both convex 
and concave, and by the definitions in \cref{eq:def_mukg,eq:def_gsm}, the 
\lcnamecref{thm:mu_convex} holds.
Next, suppose that $\gamma > 0$. Given $\z \in \R[d]$, 
define $D=\binom{d}{k}$ and let ${\bf s}\in\mathbb{R}^D$ be the vector
whose $D$ coordinates contain the sums $\sum_{i\in\Lambda}z_i$ of all subsets 
$\Lambda\subset[d]$ of size $k$.
From the definition of $\mukg \ofz$ in \cref{eq:def_mukg}, 
\eq
{
    \mukg \ofz = \muargs{1}{\gamma} \br{{\bf s}} = \frac1\gamma 
    \log\left(\frac1D \sum_{j=1}^D  \exp(\gamma s_j) \right).
}
The right-hand side is a \emph{log-sum-exp} function, which is known to be 
convex (see \cite[\S3.1.5, pg. 72]{boyd2004convex}). At $\gamma = \infty$, 
$\muargs{1}{\infty} \br{\argdot}$ is the \emph{maximum} function, which is also 
convex. Hence, for $\gamma > 0$, $\muargs{1}{\infty} \br{s}$ is convex in $s$. 
Since the transformation $\z \mapsto {\bf s}$ is linear, $\mukg \ofz$ is convex 
in $\z$.

For $0 < \gamma < \infty$, $\mukg \ofz$ is differentiable and $\nabla_\z \mukg 
\ofz = \gsm \ofz$. Therefore, \cref{mu_convexity_inequality} follows from the 
convexity of $\mukg\ofz$. 
Keeping $\z$, $\z_0$ fixed and taking the limit as $\gamma \app \infty$ yields 
that \cref{mu_convexity_inequality} also holds at $\gamma = \infty$.
%
%
%
Finally, by \cref{eq:gsm_identity_gammaneg}, 
$\muargs{k}{\gamma} \ofz = -\muargs{k}{-\gamma} \br{-\z}$, implying that  
$\mukg \ofz$ is concave for $\gamma<0$. Hence, the second equation in  
\cref{mu_convexity_inequality} holds for $\gamma<0$ as well.
\end{proof}

\begin{proof}[Proof of \cref{thm:recursion_aqg}]
For brevity, we only state the main steps of the proof. We define $M_q^{\br{r}} 
\ofz$ as the sum of the $q$ largest entries of $\br{z_r,\ldots,z_d}$. Namely, 
for $1 \leq r \leq d+1$,
\eq[eq:def_Mqr]{\ensuremath{
    M_q^{\br{r}} \ofz \eqdef \threecase
    {0}{q=0}
    {\summ[i=1][q] z_{\br{i}}^{\br{r}}}
    {1 \leq q \leq d-r+1}{-\infty}{q > d-r+1.}}}
For $r=1$, denote $M_q \ofz \eqdef M_q^{\br{1}} \ofz$. It can be shown that 
$s_{q,\gamma}^{\br{r}} \ofz$ of \cref{eq:def_aqgr} satisfy the recursion for $1 
\leq r \leq d-1$ and $1 \leq q \leq d-r$, 
\eq[eq:recursion_aqgr_prelim]{\ensuremath{
    s_{q,\gamma}^{\br{r}} \ofz 
    = &s_{q,\gamma}^{\br{r+1}} \ofz \exp \br{ \gamma \br{M_q^{\br{r+1}} \ofz - 
    M_q^{\br{r}} \ofz} }
    + \\&s_{q-1,\gamma}^{\br{r+1}} \ofz \exp \br{ \gamma \br{z_r + 
    M_{q-1}^{\br{r+1}} \ofz - M_q^{\br{r}} \ofz} }}}
The terms inside the exponents in \cref{eq:recursion_aqgr_prelim} can be shown 
to satisfy by the following formulas 
\eq[eq:replace_M_with_subplus]{\ensuremath{
M_q^{\br{r+1}} \ofz - M_q^{\br{r}} \ofz &= -\subplus{z_r - 
z_{\br{q}}^{\br{r+1}} }
\\ z_r + M_{q-1}^{\br{r+1}} \ofz - M_q^{\br{r}} \ofz &= -\subminus{z_r - 
z_{\br{q}}^{\br{r+1}}}}}
for $1 \leq r \leq d-1$ and $1 \leq q \leq d-r$. From 
\cref{eq:replace_M_with_subplus}, the lemma follows.
\end{proof}

\subsection{Properties of the GSM penalty}
\label{appendix:proofs_properties_of_gsm_penalty}
%
\begin{proof}[Proof of \cref{thm:tau_limits_gamma}]
    \label{proof:tau_limits_gamma}
    Recall that by \cref{eq:penalty_by_aux}, $\taukg \ofx = 
    \muargs{d-k}{-\gamma} \br{\abs{\x}}$. 
    By \cref{thm:mu_limits_gamma}, $\muargs{d-k}{-\gamma} \ofz$ is monotone 
    decreasing with respect to $\gamma$, and thus so is $\taukg \ofx$. Moreover,
    \eq{\ensuremath{\limit[\gamma \app 0] \taukg \ofx 
        = \limit[\gamma \app 0] \muargs{d-k}{-\gamma} \br{\abs{\x}}
        = \frac{d-k}{d} \summ[i=1][d] \abs{x_i}.}}      
    Next, replacing $\z$ by $\abs{\x}$, $k$ by $d-k$ and $\gamma$ by $-\gamma$ 
    in  \cref{eq:mu_approximation_bound_gamma_neg} gives
    \eq{\ensuremath{\min_{\abs{\Lambda}=d-k} \summ[i \in \Lambda] \abs{x_i} 
    \leq \muargs{d-k}{-\gamma} \br{\abs{\x}} \leq \min_{\abs{\Lambda}=d-k} 
    \summ[i \in \Lambda] \abs{x_i} + \frac{1}{\gamma} \log \binom{d}{k}.}}
    Combining \cref{eq:penalty_by_aux,eq:def_tauk_b} with the above inequality 
    yields Eq.~\cref{eq:tau_approximation_bound}.
    \end{proof}

\begin{proof}[Proof of \cref{thm:w_limits_gamma}]
\label{proof:w_limits_gamma}
For $k=0,d$ the claim follows from \cref{eq:def_wkg}. For $0 < k < d$, let $\x 
\in \R[d]$ and suppose w.l.o.g. that $\abs{x_1} \leq \abs{x_2} \leq \cdots \leq 
\abs{x_d}$.
Let $\z = - \abs{\x}$. Then, 
\eq[eq:proof_thm_w_limits_gamma_1]{\ensuremath{\xord{k} = \abs{x_{d-k+1}} = 
-z_{d-k+1}, \quad
\xord{k+1} = \abs{x_{d-k}} = -z_{d-k}.}}
Let $\Lambda_a$, $\Lambda_b$ be the two subsets defined in 
\cref{thm:w_limits_gamma}, namely
\eq{\ensuremath{
    \Lambda_a \eqdef \setst{i \in \brs{d}}{\abs{x_i} < \xord{k}},\quad
     \Lambda_b \eqdef \setst{i \in \brs{d}}{\abs{x_i} = \xord{k}}.
     }}
Similarly, let $\tilde{\Lambda}_a$, $\tilde{\Lambda}_b$ be as defined for $\z$ 
in \cref{thm:theta_limits_idxset_def1} in the case $\gamma \app \infty$, with 
$k$ replaced by $d-k$,
\eq{\ensuremath{
    \tilde{\Lambda}_a \eqdef \setst{i \in \brs{d}}{z_i > z_{d-k}},\quad
    \tilde{\Lambda}_b \eqdef \setst{i \in \brs{d}}{z_i = z_{d-k}}.
    }}
Suppose first that $\xord{k} = \xord{k+1}$. Then by 
\cref{eq:proof_thm_w_limits_gamma_1},
$z_{d-k+1} = z_{d-k}$,
which implies that 
\begin{gather*}
\begin{aligned}
i \in \Lambda_a &\iff \abs{x_i} < \xord{k} &\iff z_i > z_{d-k+1} &\iff z_i > 
z_{d-k} &&\iff i \in \tilde{\Lambda}_a,
\\ i \in \Lambda_b &\iff \abs{x_i} = \xord{k} &\iff z_i = z_{d-k+1} &\iff z_i = 
z_{d-k} &&\iff i \in \tilde{\Lambda}_b.
\end{aligned}
\end{gather*}
Since in this case $\Lambda_a = \tilde{\Lambda}_a$ and $\Lambda_b = 
\tilde{\Lambda}_b$, Eq.~\cref{eq:w_at_gamma_infty}
follows directly from \cref{eq:penalty_by_aux} and 
\cref{thm:theta_limits_gamma}.

Next, suppose that $\xord{k} > \xord{k+1}$. Then
$\Lambda_a = \brc{1,\ldots,d-k}$. In this case, Eq.~\cref{eq:w_at_gamma_infty}, 
which we need to prove, simplifies to
\eq[eq:proof_w_limits_simple_limit]{\ensuremath{   
    \underset {\gamma \rightarrow \infty } {\lim} \ \wfuncidx{i}\ofx = 
    \twocase{1}{i \in \Lambda_a}{0}{\text{otherwise.}}
}  }
To prove the first part of \cref{eq:proof_w_limits_simple_limit}, note that 
by   
\cref{eq:proof_thm_w_limits_gamma_1}, $z_{d-k+1} < z_{d-k}$,
so
$$\Lambda_a = \brc{1,\ldots,d-k} = \tilde{\Lambda}_a \cup \tilde{\Lambda}_b.$$
Now, by \cref{eq:gsm_at_gamma_infty}, for $i \in \tilde{\Lambda}_a$,  
$\underset {\gamma \rightarrow \infty } {\lim} \ \wfuncidx{i}\ofx 
    = \lim_{\gamma \app \infty} \gsmiargs{d-k}{\gamma}{i} \ofz = 1.
$
If $i \in \tilde{\Lambda}_b$, then also by \cref{eq:gsm_at_gamma_infty},
\begin{equation*}
\underset {\gamma \rightarrow \infty } {\lim} \ \wfuncidx{i}\ofx 
    = \lim_{\gamma \app \infty} \gsmiargs{d-k}{\gamma}{i} \ofz = 
    \tfrac{d-k-\abs{\tilde{\Lambda}_a}}{\abs{\tilde{\Lambda}_b} } = 
    \tfrac{\abs{\tilde{\Lambda}_b}}{\abs{\tilde{\Lambda}_b}} = 1.
\end{equation*}
To conclude the proof, note that if $i \notin \Lambda_a$, then $i \notin 
\tilde{\Lambda}_a \cup \tilde{\Lambda}_b$, and by \cref{eq:gsm_at_gamma_infty},
\begin{equation}
\underset {\gamma \rightarrow \infty } {\lim} \ \wfuncidx{i}\ofx 
= \lim_{\gamma \app \infty} \gsmiargs{d-k}{\gamma}{i} \ofz = 0.
    \nonumber
\end{equation}
\end{proof}

\begin{proof}[Proof of \cref{thm:sum_w}]
\label{proof:sum_w}
This lemma follows directly by combining \cref{thm:sum_theta} with 
\cref{eq:penalty_by_aux}.
\end{proof}

\begin{proof}[Proof of \cref{thm:Ftwolg_majorizer}]
\label{proof:Ftwolg_majorizer}
Let $\gamma \in \brs{0,\infty}$. By \cref{thm:mu_convex},
for any $\z,\z_0 \in \R[d]$,
\eq{\ensuremath{\muargs{d-k}{-\gamma} \ofz 
\leq \muargs{d-k}{-\gamma} \br{\z_0} + \inprod{\gsmargs{d-k}{-\gamma} 
\br{\z_0}}{\z - \z_0}.}}
Let $\z = \abs{\x}$, $\z_0 = \abs{\xtilde}$. Combining the above with 
\cref{eq:penalty_by_aux,eq:def_taumaj} proves the 
\lcnamecref{thm:Ftwolg_majorizer}, since 
\eq{\ensuremath{\taukg \ofx \leq \taukg \br{\xtilde} + \inprod{\wfunc 
\br{\xtilde}}{\abs{\x} - \abs{\xtilde}} = \taumajkg \br{\x, \xtilde}.}}
\end{proof}

\subsection{Convergence analysis}
\label{appendix:proofs_convergence_analysis}
To analyze the convergence of \cref{alg:mm,alg:homotopy}, we first state and 
prove two auxiliary lemmas regarding the directional derivatives of the GSM 
penalty $\tauk\ofx$ and of its majorizer $\taumajkg \br{\x,\xtilde}$ at $\x = 
\xtilde$.
Note that when referring to a function $g \br{\x,\xtilde}$ of two variables, 
the directional derivative is with respect to the first variable,
\eq{\ensuremath{\nabla_{\vvec} g \br{\x,\xtilde} \eqdef \lim_{t \searrow 0} 
\frac{ g \br{\x+t\vvec, \xtilde} - g \br{\x,\xtilde} }{t}.
}}
\begin{lemma}\label{thm:dirdev_tau_phi_gamma_finite}
        For any $\x \in \R[d]$ and $0 \leq \gamma < \infty$, the directional 
        derivative in direction $\vvec \neq \zerovec$ of $\taukg \ofx$ and of 
        $\taumajkg\br{\x,\x}$ are equal and are given by  
        \eq[eq:dir_div_tau_finite_gamma]{\ensuremath{
        \nabla_{\vvec} \taukg \br{\x} = \nabla_{\vvec} \taumajkg \br{\x, \x} = 
        \sum_{i=1}^d w_{k,\gamma}^i \ofx \delta \br{x_i,v_i} v_i,}}
        where $\delta \br{\alpha,\beta}$ is defined for $\alpha, \beta \in \R$ 
        by
        \eq[eq:def_delta]{\ensuremath{
            \delta \br{\alpha,\beta} \eqdef 
            \twocase{\mathrm{sign}\br{\alpha}}{\alpha \neq 
            0,}{\mathrm{sign}\br{\beta}}{\alpha = 0.}}}
\end{lemma}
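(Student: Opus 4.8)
The plan is to compute both directional derivatives directly from their definitions and the formulas for $\taukg$ and $\taumajkg$, exploiting that both functions depend on $\x$ only through $\abs{\x}$. First I would handle the easier of the two, the majorizer. By the definition \cref{eq:def_taumaj}, $\taumajkg \br{\x,\xtilde} = \taukg \br{\xtilde} + \inprod{\wfunc \br{\xtilde}}{\abs{\x} - \abs{\xtilde}}$; fixing $\xtilde = \x$, the term $\taukg \br{\x}$ is constant, so $\nabla_{\vvec} \taumajkg \br{\x,\x}$ is just the directional derivative of $\x \mapsto \inprod{\wfunc \br{\x}}{\abs{\x}}$ where the weight vector is frozen at $\wfunc\br{\x}$, i.e. of $\sum_i \wfuncidx{i}\br{\x}\,\abs{x_i + t v_i}$ at $t = 0^+$. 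For each coordinate, $\frac{d}{dt}\big|_{t=0^+}\abs{x_i + t v_i}$ equals $\sign(x_i)\,v_i$ if $x_i \neq 0$ and $\abs{v_i} = \sign(v_i)\,v_i$ if $x_i = 0$ — which is precisely $\delta(x_i,v_i)\,v_i$ with $\delta$ as in \cref{eq:def_delta}. Summing over $i$ and using $\wfuncidx{i}\br{\x} = w_{k,\gamma}^i\ofx$ gives the right-hand side of \cref{eq:dir_div_tau_finite_gamma}.

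Next I would compute $\nabla_{\vvec} \taukg \br{\x}$. Using \cref{eq:penalty_by_aux}, $\taukg\ofx = \muargs{d-k}{-\gamma}\br{\abs{\x}}$, and for finite $\gamma$ the function $\muargs{d-k}{-\gamma}$ is $C^\infty$ (it is a log-sum-exp in its argument, from \cref{eq:def_mukg}) with gradient $\gsmargs{d-k}{-\gamma}\br{\cdot} = \wfunc\ofx$ by \cref{eq:def_gsm} and \cref{eq:penalty_by_aux}. So $\taukg$ is a smooth function composed with the map $\x \mapsto \abs{\x}$. The chain rule for directional derivatives then gives $\nabla_{\vvec}\taukg\ofx = \inprod{\nabla\muargs{d-k}{-\gamma}\br{\abs{\x}}}{\,\nabla_{\vvec}\abs{\x}\,} = \sum_i w_{k,\gamma}^i\ofx \cdot \big(\nabla_{\vvec}\abs{\cdot}\big)_i$, where again the directional derivative of the absolute value in coordinate $i$ is $\delta(x_i,v_i)\,v_i$. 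This reproduces the same expression, so the two directional derivatives coincide. To make the chain-rule step rigorous at points where some $x_i = 0$ (where $\abs{\x}$ is not differentiable), I would write $\taukg\br{\x + t\vvec} = \muargs{d-k}{-\gamma}\br{\abs{\x} + t\,\mb{u}(t)}$ where $u_i(t) = \delta(x_i,v_i)v_i + o(1)$ as $t \searrow 0$, then expand $\muargs{d-k}{-\gamma}$ to first order around $\abs{\x}$ using its differentiability there; the $o(1)$ correction vanishes after dividing by $t$ and taking the limit.

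The main obstacle — though a mild one — is the bookkeeping at the nonsmooth points, i.e. coordinates with $x_i = 0$: one must be careful that the one-sided limit $t \searrow 0$ is what makes $\abs{x_i + tv_i}/t \to \abs{v_i}$ (not $\pm v_i$), which is exactly why the second case in \cref{eq:def_delta} uses $\sign(\beta)$ rather than $\sign(\alpha)$. Everything else is a routine application of the chain rule, using that $\muargs{d-k}{-\gamma}$ is globally $C^\infty$ for finite $\gamma$ and that its gradient is $\wfunc$, facts already established in the excerpt. I would also note that for $\vvec = \zerovec$ both sides are trivially $0$, so the restriction $\vvec \neq \zerovec$ in the statement is only there because the $\delta$ formula is cosmetically stated for a genuine direction.
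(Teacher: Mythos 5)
Your proposal is correct and follows essentially the same route as the paper: the paper likewise computes $\nabla_{\vvec}\taumajkg\br{\x,\x}$ coordinate-wise via the one-sided derivative $\delta(x_i,v_i)v_i$ of $\abs{\cdot}$, and handles $\taukg\ofx=\muargs{d-k}{-\gamma}\br{\abs{\x}}$ by a first-order Taylor expansion of the smooth $\muargs{d-k}{-\gamma}$ about $\abs{\x}$ combined with the exact small-$t$ identity $\abs{x_i+tv_i}-\abs{x_i}=t\,\delta(x_i,v_i)v_i$ (so your $o(1)$ correction is in fact identically zero for small $t$). No gaps.
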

\begin{lemma}\label{thm:dirdev_tau_phi_gamma_infty}
    Let $\x, \vvec \in \R[d]$. If $\x$ is non-ambiguous, then 
    \cref{eq:dir_div_tau_finite_gamma} holds also for $\gamma = \infty$. For a 
    general $\x \in \R[d]$, let
    $$\Lambda_a \eqdef \setst{i }{\abs{x_i} < \xord{k}}, \qquad \Lambda_b 
    \eqdef \setst{i }{\abs{x_i} = \xord{k}}.$$
        Then
        \begin{align}
        \label{eq:dirdev_tau_infty}
        \nabla_{\vvec} \tauk \ofx = \summ[i \in \Lambda_a] \delta \br{x_i,v_i} 
        v_i + 
                \min \setst{\sum_{i \in \Lambda} \delta \br{x_i,v_i} v_i}
                {\twofloors{\Lambda \subseteq \Lambda_b,}{\abs{\Lambda} = 
                d-k-\abs{\Lambda_a}}}                
        \\ \label{eq:dirdev_taumaj_infty}
        \nabla_{\vvec} \taumajkinf \br{\x,\x} = \summ[i \in \Lambda_a] \delta 
        \br{x_i,v_i} v_i + 
                {\mean} \setst{\sum_{i \in \Lambda} \delta \br{x_i,v_i} v_i}
                {\twofloors{\Lambda \subseteq \Lambda_b,}{\abs{\Lambda} = 
                d-k-\abs{\Lambda_a}}}.
        \end{align}    
\end{lemma}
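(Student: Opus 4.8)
The plan is to compute the two directional derivatives separately: the one for the majorizer $\taumajkinf\br{\argdot,\x}$ is elementary, whereas the one for $\tauk$ uses the min-representation \cref{eq:def_tauk_b} together with the standard rule for differentiating a finite pointwise minimum. Throughout I write $c_i \eqdef \delta\br{x_i,v_i}v_i$, noting that $\lim_{t\searrow 0}\br{\abs{x_i+tv_i}-\abs{x_i}}/t = c_i$ for every $i$; the case $\vvec=\zerovec$ is trivial, so I assume $\vvec\neq\zerovec$.

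\emph{The majorizer.} By \cref{eq:def_taumaj} at $\gamma=\infty$, for a fixed second argument $\x$ the map $\z\mapsto\taumajkinf\br{\z,\x} = \taukinfty\br{\x}+\inprod{\wfunckinfty\br{\x}}{\abs{\z}-\abs{\x}}$ is affine in $\abs{\z}$, so its one-sided directional derivative at $\z=\x$ in direction $\vvec$ exists and equals $\sum_{i=1}^d\wfuncinftyidx{i}\br{\x}\,c_i$ (a finite sum of one-sided limits is the limit of the sum). Plugging in the explicit value of $\wfunckinfty\br{\x}$ from \cref{thm:w_limits_gamma} --- which is $1$ on $\Lambda_a$, equal to $\frac{d-k-\abs{\Lambda_a}}{\abs{\Lambda_b}}$ on $\Lambda_b$ (note $\Lambda_b\neq\emptyset$ since $\xord{k}$ is attained), and $0$ elsewhere --- gives $\sum_{i\in\Lambda_a}c_i + \frac{d-k-\abs{\Lambda_a}}{\abs{\Lambda_b}}\sum_{i\in\Lambda_b}c_i$. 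It remains to observe the averaging identity: with $m\eqdef d-k-\abs{\Lambda_a}$, each index of $\Lambda_b$ belongs to exactly $\binom{\abs{\Lambda_b}-1}{m-1}$ of the $\binom{\abs{\Lambda_b}}{m}$ subsets $\Lambda\subseteq\Lambda_b$ of size $m$, hence $\mean\setst{\sum_{i\in\Lambda}c_i}{\Lambda\subseteq\Lambda_b,\ \abs{\Lambda}=m} = \frac{m}{\abs{\Lambda_b}}\sum_{i\in\Lambda_b}c_i$. This yields \cref{eq:dirdev_taumaj_infty}.

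\emph{The penalty.} By \cref{eq:def_tauk_b}, $\tauk = \min_{\abs{\Lambda}=d-k}g_\Lambda$, where $g_\Lambda\ofx\eqdef\sum_{i\in\Lambda}\abs{x_i}$ is continuous with $\nabla_{\vvec}g_\Lambda\ofx=\sum_{i\in\Lambda}c_i$. I would invoke the standard fact that the directional derivative of a finite pointwise minimum of such functions at $\x$ equals the minimum of the directional derivatives over the \emph{active} sets (those attaining the minimum at $\x$): the inactive $g_\Lambda$ stay strictly above the minimum along the ray $\x+t\vvec$ for small $t>0$ by continuity, so $\tauk\br{\x+t\vvec}-\tauk\ofx = \min\setst{g_\Lambda\br{\x+t\vvec}-g_\Lambda\ofx}{\Lambda \text{ active}}$, and dividing by $t$ and letting $t\searrow 0$ gives the claim. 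An elementary exchange argument identifies the active sets as exactly those $\Lambda$ with $\Lambda_a\subseteq\Lambda\subseteq\Lambda_a\cup\Lambda_b$, i.e. $\Lambda=\Lambda_a\cup\Lambda'$ with $\Lambda'\subseteq\Lambda_b$ and $\abs{\Lambda'}=d-k-\abs{\Lambda_a}$ (this uses $\abs{\Lambda_a}\le d-k\le\abs{\Lambda_a}+\abs{\Lambda_b}$, which also shows the family is nonempty). For such $\Lambda$, $\nabla_{\vvec}g_\Lambda\ofx = \sum_{i\in\Lambda_a}c_i + \sum_{i\in\Lambda'}c_i$, and taking the minimum over $\Lambda'$ --- the $\Lambda_a$-part being common to all --- produces \cref{eq:dirdev_tau_infty}.

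\emph{Non-ambiguous case and the main obstacle.} If $\xord{k}>\xord{k+1}$, then exactly $k$ coordinates have magnitude $\ge\xord{k}$, so $\abs{\Lambda_a}=d-k$, forcing $\Lambda'=\emptyset$; thus both \cref{eq:dirdev_tau_infty} and \cref{eq:dirdev_taumaj_infty} collapse to $\sum_{i\in\Lambda_a}c_i$, which by \cref{thm:w_limits_gamma} equals $\sum_{i=1}^d\wfuncinftyidx{i}\br{\x}\,c_i$, i.e. \cref{eq:dir_div_tau_finite_gamma} at $\gamma=\infty$, matching \cref{thm:dirdev_tau_phi_gamma_finite}. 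I expect the only delicate point to be the bookkeeping around ties at the $k$-th order statistic --- correctly pinning down the family of minimizing subsets $\Lambda$ and verifying it is nonempty --- together with a careful statement of the ``directional derivative of a finite minimum equals minimum of directional derivatives'' rule; the remaining steps are routine.
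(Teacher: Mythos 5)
Your proof is correct and takes essentially the same route as the paper: the majorizer part (the map is affine in $\abs{\z}$, the explicit weights from \cref{thm:w_limits_gamma}, and the averaging identity over size-$\br{d-k-\abs{\Lambda_a}}$ subsets of $\Lambda_b$) is identical, while for $\tauk$ your reduction of the min in \cref{eq:def_tauk_b} to the active subsets $\Lambda_a\cup\Lambda'$ with $\Lambda'\subseteq\Lambda_b$ and passage to the limit is the same active-set analysis the paper performs by hand via the perturbed index sets $\hat{\Lambda}_a,\hat{\Lambda}_b$, with your appeal to the standard ``directional derivative of a finite min equals the min over active indices'' rule serving as a valid shortcut. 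Your treatment of the non-ambiguous case ($\abs{\Lambda_a}=d-k$, so the min/mean terms are empty and both formulas collapse to $\sum_{i\in\Lambda_a}\delta\br{x_i,v_i}v_i$) likewise matches the paper's concluding observation.
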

\Cref{thm:dirdev_tau_phi_gamma_finite,thm:dirdev_tau_phi_gamma_infty} lead to 
the following \lcnamecref{thm:dirdevs_are_equal}, which states that the 
objective $\Ftwolg \ofx$ and majorizer $\Gtwolg \br{\x, \xtilde}$ have the same 
first-order behavior about $\x=\xtilde$. This, in turn, leads to desirable 
convergence properties of \cref{alg:mm}, as outlined in 
\cref{thm:alg1_convergence_gamma_finite}.
\begin{corollary}\label{thm:dirdevs_are_equal}
    If $\gamma$ is finite or $\x$ is non-ambiguous, then
    \eq[eq:dirdev_F_G]{\ensuremath{ \nabla_{\vvec} \Ftwolg \br{\x} &= 
    \nabla_{\vvec} \Gtwolg \br{\x,\x} = \inprod{A\x-\y}{A\vvec} + \lambda 
    \sum_{i=1}^d \wfuncidx{i} \ofx \delta \br{x_i,v_i} v_i,
        \\  \nabla_{\vvec} \Fonelg \br{\x} &= \nabla_{\vvec} \Gonelg \br{\x,\x} 
        \\ &= \twocase{\inprod{\frac{A\x-\y}{\norm{A\x-\y}_2}}{A\vvec} + 
        \lambda \sum_{i=1}^d \wfuncidx{i} \ofx \delta \br{x_i,v_i} v_i}{A\x 
        \neq \y}{\norm{A\vvec}_2 + \lambda \sum_{i=1}^d \wfuncidx{i} \ofx 
        \delta \br{x_i,v_i} v_i}{A\x=\y.} }}
\end{corollary}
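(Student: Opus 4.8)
The plan is to prove \Cref{thm:dirdevs_are_equal} by combining the linearity of the directional derivative with the two preceding lemmas on $\taukg\ofx$ and $\taumajkg\br{\x,\xtilde}$. Recall that both $\Ftwolg\ofx = \tfrac12\norm{A\x-\y}_2^2 + \lambda\taukg\ofx$ and its majorizer $\Gtwolg\br{\x,\xtilde} = \tfrac12\norm{A\x-\y}_2^2 + \lambda\taumajkg\br{\x,\xtilde}$ are sums of a smooth quadratic term and a scaled GSM penalty (respectively its majorizer). Since the directional derivative operator $\nabla_\vvec$ is additive and positively homogeneous, it suffices to compute the directional derivative of each summand separately and add them.

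First I would handle the quadratic term $q\ofx \eqdef \tfrac12\norm{A\x-\y}_2^2$. This is everywhere differentiable with gradient $A^T\br{A\x-\y}$, so $\nabla_\vvec q\ofx = \inprod{A\x-\y}{A\vvec}$ for every $\vvec$, and this is identical whether we differentiate $\Ftwolg$ or $\Gtwolg\br{\argdot,\x}$ because the quadratic part of the two functions is literally the same. Next, for the penalty term, I invoke \Cref{thm:dirdev_tau_phi_gamma_finite} when $\gamma<\infty$, and \Cref{thm:dirdev_tau_phi_gamma_infty} (its first sentence) when $\x$ is non-ambiguous and $\gamma=\infty$: in both of these cases the lemmas assert $\nabla_\vvec\taukg\br{\x} = \nabla_\vvec\taumajkg\br{\x,\x} = \sum_{i=1}^d\wfuncidx{i}\ofx\,\delta\br{x_i,v_i}\,v_i$. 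Multiplying by $\lambda$ and adding the quadratic contribution yields the first line of \cref{eq:dirdev_F_G}.

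For the power-$1$ objective $\Fonelg\ofx = \norm{A\x-\y}_2 + \lambda\taukg\ofx$ and its majorizer $\Gonelg\br{\x,\xtilde}$, the only difference is that the smooth quadratic is replaced by the residual norm $r\ofx \eqdef \norm{A\x-\y}_2$. When $A\x\neq\y$, this is differentiable at $\x$ with gradient $A^T\frac{A\x-\y}{\norm{A\x-\y}_2}$, giving $\nabla_\vvec r\ofx = \inprod{\frac{A\x-\y}{\norm{A\x-\y}_2}}{A\vvec}$. When $A\x=\y$, a direct computation from the definition gives $\nabla_\vvec r\ofx = \lim_{t\searrow0}\frac{\norm{tA\vvec}_2}{t} = \norm{A\vvec}_2$. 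Adding $\lambda$ times the penalty derivative (again using \Cref{thm:dirdev_tau_phi_gamma_finite} or the non-ambiguous case of \Cref{thm:dirdev_tau_phi_gamma_infty}) produces the two-case formula in the second line of \cref{eq:dirdev_F_G}, with the same expression obtained for $\Fonelg$ and $\Gonelg\br{\argdot,\x}$ since their penalty parts agree in derivative at $\x=\xtilde$.

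The main obstacle — really the only non-automatic point — is justifying that the directional derivative of the sum equals the sum of directional derivatives at a point where the penalty term is non-differentiable. This is fine because the directional derivative of a sum always equals the sum of the directional derivatives whenever each individual limit exists (the limit of a sum is the sum of the limits), and \Cref{thm:dirdev_tau_phi_gamma_finite,thm:dirdev_tau_phi_gamma_infty} guarantee that the penalty part has a well-defined directional derivative in every direction. I would state this additivity explicitly as the first sentence of the proof, then carry out the three short computations above. I would also note that the restriction "$\gamma$ finite or $\x$ non-ambiguous" is exactly the hypothesis under which \Cref{thm:dirdev_tau_phi_gamma_finite,thm:dirdev_tau_phi_gamma_infty} give the clean matching formula for $\taukg$ and $\taumajkg$; the ambiguous $\gamma=\infty$ case (where $\nabla_\vvec\tauk$ involves a $\min$ and $\nabla_\vvec\taumajkinf$ a $\mean$, cf.\ \cref{eq:dirdev_tau_infty,eq:dirdev_taumaj_infty}) is precisely the situation excluded here, and is handled separately in the appendix.
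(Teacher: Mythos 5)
Your proposal is correct and is exactly the argument the paper intends: the corollary is stated without a separate proof precisely because it follows from \cref{thm:dirdev_tau_phi_gamma_finite,thm:dirdev_tau_phi_gamma_infty} by additivity of directional derivatives, together with the routine computation of the directional derivative of the residual term (including the $A\x=\y$ case giving $\norm{A\vvec}_2$), all of which you spell out correctly.
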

{We remark that \cref{eq:dirdev_F_G} with $\gamma = \infty$ does not generally 
hold for an ambiguous $\x$. In particular, $\nabla_{\vvec}\Ftwol \br{\x}$ may 
be strictly smaller than $\nabla_{\vvec}\Gtwolinf \br{\x,\x}$ for such $\x$. 
Thus, an ambiguous point that is stationary for $\Gtwolinf \br{\argdot,\x}$ may 
not be stationary for $\Ftwol\br{\argdot}$.}

\begin{proof}[Proof of \cref{thm:dirdev_tau_phi_gamma_finite}]
    \label{proof:dirdev_tau_phi_gamma_finite}
    By \cref{eq:penalty_by_aux}, for any $t > 0$,
    $$\taukg \br{\x + t \vvec} = \muargs{d-k}{-\gamma} \br{\abs{\x + t 
    \vvec}}.$$
    Since $\gamma < \infty$, $\muargs{d-k}{-\gamma} \ofz$ is infinitely 
    differentiable and 
    $\nabla \muargs{d-k}{-\gamma} \ofz= \gsmargs{d-k}{-\gamma} \ofz.$
    By Taylor's expansion of $\muargs{d-k}{-\gamma} \br{\z}$ at $\z = \abs{\x + 
    t \vvec}$ about $\z_0 = \abs{\x}$,
    \begin{gather*}
        \muargs{d-k}{-\gamma} \br{\abs{\x + t \vvec}} = 
        \\ \muargs{d-k}{-\gamma} \br{\abs{\x}} + \inprod{ 
        \gsmargs{d-k}{-\gamma} \br{\abs{\x}}  }{\abs{\x + t \vvec} - \abs{\x}} 
        + o \br{\norm{\abs{\x + t \vvec} - \abs{\x}}_2 },
    \end{gather*}
    By the triangle inequality,
    $\norm{ \abs{\x + t \vvec} - \abs{\x} }_2 \leq t \norm{\vvec}_2$.
 Therefore,
    \eq{\ensuremath{
        \muargs{d-k}{-\gamma} \br{\abs{\x + t \vvec}} = \muargs{d-k}{-\gamma} 
        \br{\abs{\x}} + \inprod{ \gsmargs{d-k}{-\gamma} \br{\abs{\x}}  
        }{\abs{\x + t \vvec} - \abs{\x}} + o \br{t}.
    }}
    Note that if $t$ is small enough, then for any nonzero $x_i$, 
    $\text{sign}\br{x_i + t v_i} = \text{sign}\br{x_i}$, and thus
    \eq[eq:small_t]{\ensuremath{
        \abs{x_i + t v_i} - \abs{x_i} = t \cdot \delta \br{x_i,v_i} v_i.}}
    Thus,
    \eq{\ensuremath{
        \nabla_{\vvec} \taukg \br{\x} &= \lim_{t \searrow 0}\frac{ \taukg 
        \br{\x + t \vvec} - \taukg \br{\x}}{t}
        = \lim_{t \searrow 0}\frac{ \muargs{d-k}{-\gamma} \br{\abs{\x + t 
        \vvec}} - \muargs{d-k}{-\gamma} \br{\abs{\x}}}{t}
        \\ &= \lim_{t \searrow 0} \brs{ \inprod{ \gsmargs{d-k}{-\gamma} 
        \br{\abs{\x}}  }{\frac{\abs{\x + t \vvec} - \abs{\x}}{t}} + \frac{o 
        \br{t}}{t} }
        \\ &= \sum_{i=1}^d \gsmiargs{d-k}{-\gamma}{i} \br{\abs{\x}} \delta 
        \br{x_i,v_i} v_i
        = \sum_{i=1}^d \wfuncidx{i} \ofx \delta \br{x_i,v_i} v_i.}}   
    Finally, by the definition of $\taumajkg$ in \cref{eq:def_taumaj},
    $$\taumajkg \br{\x + t \vvec, \x} = \taukg{\ofx} + \inprod{\wfunc 
    \ofx}{\abs{\x + t \vvec} - \abs{\x}}.$$
    Combining the above with \cref{eq:small_t} proves the 
    \lcnamecref{thm:dirdev_tau_phi_gamma_finite}, since
    \eq{\ensuremath{
        \nabla_{\vvec} \taumajkg \br{\x,\x}
        &= \lim_{t \searrow 0} \frac{ \taumajkg \br{\x + t \vvec, \x} - 
        \taumajkg \br{\x , \x} } { t }
         = \lim_{t \searrow 0} \frac{ \taumajkg \br{\x + t \vvec, \x} - \taukg 
         \br{\x} } { t }
        \\ &= \lim_{t \searrow 0} \frac{ \inprod{\wfunc \ofx}{\abs{\x + t 
        \vvec} - \abs{\x}}  } { t }
        = \sum_{i=1}^d \wfuncidx{i} \ofx \delta \br{x_i,v_i} v_i.
    }} 
\end{proof}

\begin{proof}[Proof of \cref{thm:dirdev_tau_phi_gamma_infty}]
    \label{proof:dirdev_tau_phi_gamma_infty}
    Let $t > 0$. Define the sets of indices $\hat{\Lambda}_a$, 
    $\hat{\Lambda}_b$ for the vector $\x + t \vvec$ the same way $\Lambda_a$ 
    and $\Lambda_b$ are defined for $\x$. Namely,
\eq{\ensuremath{\hat{\Lambda}_a \eqdef \setst{i }{\abs{x_i + t v_i} < \abs{\x + 
t\vvec}_\br{k}},
    \quad
    \hat{\Lambda}_b \eqdef \setst{i }{\abs{x_i + t v_i} = \abs{\x + 
    t\vvec}_\br{k}}.
}}
Then, for any $j \in \Lambda_b$, $\tauk \br{\x}$ can be expressed by
\eq[eq:tau_difference_a]{\ensuremath{\tauk \br{\x} &= \summ[i \in \Lambda_a] 
\abs{x_i} + \br{d-k-\abs{\Lambda_a}} \abs{x_j}.
}}
Similarly, for any $j \in \hat{\Lambda}_b$, 
\eq[eq:tau_difference_b]{\ensuremath{ \tauk \br{\x + t\vvec} &= \summ[i \in 
\hat{\Lambda}_a] \abs{x_i + t v_i} + \br{d-k-\abs{\hat{\Lambda}_a}} \abs{x_{j} 
+ t v_{j}}.}}
For a small enough $t>0$, any strict inequality between entries of $\abs{\x}$ 
is maintained in $\abs{\x + t\vvec}$. Thus, if $\abs{x_i} < \xord{k}$, 
then for sufficiently small $t$, 
$\abs{x_i + t v_i} < \abs{\x + t \vvec}_\br{k}$. Hence,
\eq[eq:sec_dirdev_inf_proof_setineq1]{\ensuremath{
    \Lambda_a \subseteq \hat{\Lambda}_a.
}}
Similarly, for all indices $i$ such that $\abs{x_i} > \xord{k}$, $\abs{x_i + t 
v_i} > \abs{\x + t \vvec}_\br{k}$. Thus $\br{\Lambda_a \cup \Lambda_b}^c 
\subseteq \br{\hat{\Lambda}_a \cup \hat{\Lambda}_b}^c$.
Therefore,
$\hat{\Lambda}_a \cup \hat{\Lambda}_b \subseteq \Lambda_a \cup \Lambda_b$.
Since $\hat{\Lambda}_a$ and $\hat{\Lambda}_b$ are disjoint, this implies that  
\eq[eq:sec_dirdev_inf_proof_setineq2]{\ensuremath{\hat{\Lambda}_b =\br{ 
\hat{\Lambda}_a \cup \hat{\Lambda}_b} \setminus \hat{\Lambda}_a \subseteq 
\br{\Lambda_a \cup \Lambda_b} \setminus \Lambda_a = \Lambda_b.}}
Also note that
\eq[eq:sec_dirdev_inf_proof_setineq3]{\ensuremath{\hat{\Lambda}_a \setminus 
\Lambda_a \subseteq \br{\hat{\Lambda}_a \cup \hat{\Lambda}_b} \setminus 
\Lambda_a \subseteq \br{\Lambda_a \cup \Lambda_b} \setminus \Lambda_a = 
\Lambda_b.}}

We now express $\tauk \br{\x + t \vvec}$ of \cref{eq:tau_difference_b} in terms 
of $\Lambda_a$ and $\Lambda_b$. From \cref{eq:sec_dirdev_inf_proof_setineq1}, 
it follows that the three sets $\brc{\Lambda_a, \hat{\Lambda}_a \setminus 
\Lambda_a, \hat{\Lambda}_b}$ form a disjoint partition of $\hat{\Lambda}_a \cup 
\hat{\Lambda}_b$. Consider the formula for $\tauk \br{\x + t \vvec}$ in 
\cref{eq:tau_difference_b} as a sum of $d-k$ terms: $\abs{\hat{\Lambda}_a}$ of 
which originate from $\hat{\Lambda}_a$ and the remaining 
$d-k-\abs{\hat{\Lambda}_a}$ originate from $\hat{\Lambda}_b$. By 
\cref{eq:sec_dirdev_inf_proof_setineq1}, the sum must contain all the elements 
of $\Lambda_a$. The remaining $d-k-\abs{\Lambda_a}$ elements belong to 
$\hat{\Lambda}_a \setminus \Lambda_a$ or $\hat{\Lambda}_b$, both of which are 
subsets of $\Lambda_b$, according to 
\cref{eq:sec_dirdev_inf_proof_setineq2,eq:sec_dirdev_inf_proof_setineq3}. 
Necessarily, these remaining elements are exactly the $d-k-\abs{\Lambda_a}$ 
smallest-magnitude entries of $\abs{\x + t \vvec}$ among all entries with 
indices in $\Lambda_b$. Therefore, $\tauk \br{\x + t\vvec}$ can be expressed by
\eq[eq:tau_difference_c]{\ensuremath{\tauk \br{\x + t\vvec} &= 
    \summ[i \in \Lambda_a] \abs{x_i + t v_i} 
    + \min \setst{\sum_{i \in \Lambda}\abs{x_{i} + t v_{i}}}
    {\Lambda \subseteq \Lambda_b,\ \abs{\Lambda} = d-k-\abs{\Lambda_a}}.}}
Since $\abs{x_i}$ is the same for any $i \in \Lambda_b$, we can reformulate 
\cref{eq:tau_difference_a} and express $\tauk \br{\x}$ by 
\eq[eq:tau_difference_d]{\ensuremath{\tauk \br{\x} &= 
    \summ[i \in \Lambda_a] \abs{x_i} 
    + \min \setst{\sum_{i \in \Lambda}\abs{x_{i}}}
    {\Lambda \subseteq \Lambda_b,\ \abs{\Lambda} = d-k-\abs{\Lambda_a}}.}}
Since any $\Lambda \subseteq \Lambda_b$ of size $d-k-\abs{\Lambda_a}$ attains 
the minimum in \cref{eq:tau_difference_d}, {we can subtract the right-hand 
sides of \cref{eq:tau_difference_c,eq:tau_difference_d} inside the minimum term 
and get} 
\eq[eq:tau_difference_e]{\ensuremath{ &\tauk \br{\x + t \vvec} - \tauk \br{\x} 
= \summ[i \in \Lambda_a] {\br{\abs{x_i + t v_i} - \abs{x_i}}}\ + 
    \\ &\min \setst{\sum_{i \in \Lambda} \br{\abs{x_{i} + t v_i} - \abs{x_{i}}}}
    {\Lambda \subseteq \Lambda_b,\ \abs{\Lambda} = d-k-\abs{\Lambda_a}}
    .}}
Suppose that $t>0$ is small enough. Then plugging \cref{eq:small_t} into 
\cref{eq:tau_difference_e} yields
\begin{equation}
\begin{gathered}\label{eq:tauk_finite_diff}
    \frac{\tauk \br{\x + t \vvec} - \tauk \br{\x}}{t} = 
    \\ \summ[i \in \Lambda_a] \delta \br{x_i,v_i} v_i + 
    \min \setst{\sum_{i \in \Lambda} \delta \br{x_i,v_i} v_i}
    {\Lambda \subseteq \Lambda_b,\ \abs{\Lambda} = d-k-\abs{\Lambda_a}}.
\end{gathered}
\end{equation}
Therefore, \cref{eq:dirdev_tau_infty} holds.

We now turn to calculate $\nabla_{\vvec} \taumajkinf \br{\x, \x}$.    
By the definition of $\taumajkinf$,
\eq{\ensuremath{\taumajkinf \br{\x+t\vvec, \x} = \tauk \br{\x} + \inprod{ 
\wfunckinfty \br{\x}} {\abs{\x+t\vvec}-\abs{\x} }.}}
Thus,
\begin{equation}\label{taumajk_dirdev_lim}
\begin{split}
    \nabla_{\vvec} \taumajkinf \br{\x, \x} 
     &= \lim_{t \searrow 0}\frac{\taumajkinf \br{\x+t\vvec, \x} - \taumajkinf 
     \br{\x, \x}}{t}
     = \lim_{t \searrow 0}\frac{\taumajkinf \br{\x+t\vvec, \x} - \tauk 
     \br{\x}}{t}
     \\ &= \lim_{t \searrow 0}\frac{\inprod{ \wfunckinfty \br{\x}} 
     {\abs{\x+t\vvec}-\abs{\x} }}{t}
     = \summ[i=1][d] w_{k,\infty}^i\ofx \cdot \lim_{t \searrow 0}
    \frac{ \abs{x_i + t v_i} - \abs{x_i}}{t}    
    \\ &\overset{\text{(a)}}{=} \summ[i=1][d] w_{k,\infty}^i\ofx \cdot \delta 
    \br{x_i,v_i} v_i     
    \\ &\overset{\text{(b)}}{=} \summ[i \in \Lambda_a] \delta \br{x_i,v_i} v_i 
    + \frac{d-k-\abs{\Lambda_a}}{\abs{\Lambda_b} } \sum_{i \in \Lambda_b} 
    \delta \br{x_i,v_i} v_i,
\end{split}
\end{equation}    
where (a) follows from \cref{eq:small_t} and (b) from 
\cref{eq:w_at_gamma_infty}.
Consider \cref{taumajk_dirdev_lim} and note that
\eq{\ensuremath{\frac{d-k-\abs{\Lambda_a}}{\abs{\Lambda_b} } \sum_{i \in 
\Lambda_b} \delta \br{x_i,v_i} v_i
    = \mean \setst{\sum_{i \in \Lambda} \delta \br{x_i,v_i} v_i}
        {\Lambda \subseteq \Lambda_b,\ \abs{\Lambda} = d-k-\abs{\Lambda_a}},}}
since the above left-hand side is the average of the set $\setst{\delta 
\br{x_i,v_i} v_i}{i \in \Lambda_b}$ multiplied by $d-k-\abs{\Lambda_a}$, and 
the right-hand side is the average of all possible sums of 
$d-k-\abs{\Lambda_a}$ members of the same set.
Hence, \cref{eq:dirdev_taumaj_infty} holds.

Lastly, suppose that $\x$ is non-ambiguous. Since $\xord{k+1} < \xord{k}$, 
$\abs{\Lambda_a} = d-k$. Hence, the sums in 
\cref{eq:dirdev_tau_infty,eq:dirdev_taumaj_infty} are over sets $\Lambda$ of 
size 
${d-k-\abs{\Lambda_a} = 0}$, and thus vanish, making the formulae in  
\cref{eq:dirdev_tau_infty,eq:dirdev_taumaj_infty} coincide.
\end{proof}

Before proving our main convergence result for \cref{alg:mm}, namely 
\cref{thm:alg1_convergence_gamma_finite}, we prove the following auxiliary 
\lcnamecref{thm:compact_sublevelset}.
\begin{lemma}\label{thm:compact_sublevelset}
    Suppose that any $k$ columns of $A$ are linearly independent.
    Then for all $\lambda>0$, $\gamma \in \brs{0,\infty}$, $c \in \R$, the set 
    $S = \brc{\x \in \R[d] \ \Bigm\vert \ \Ftwolg \ofx \leq c}$ is compact. 
\end{lemma}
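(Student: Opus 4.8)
The plan is to establish closedness and boundedness of $S$ separately. Closedness is immediate: $\Ftwolg$ is continuous everywhere --- for $\gamma<\infty$ it is a smooth function of $\abs{\x}$ composed with the continuous map $\x\mapsto\abs{\x}$, and for $\gamma=\infty$ we have $\Ftwolinf=\Ftwol$, which is continuous since $\tauk$ is. Hence $S$, being the preimage of the closed set $(-\infty,c]$ under a continuous function, is closed. The substance of the proof is boundedness.

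Fix an arbitrary $\x\in S$. Since both summands of $\Ftwolg$ are nonnegative, $\Ftwolg\ofx\le c$ yields $\tfrac12\norm{A\x-\y}_2^2\le c$ and $\lambda\taukg\ofx\le c$. The first gives $\norm{A\x}_2\le\sqrt{2c}+\norm{\y}_2=:C_1$, and the second gives $\taukg\ofx\le c/\lambda$. By the monotonicity of $\taukg$ in $\gamma$ together with the limit $\taukinfty=\tauk$ from \cref{thm:tau_limits_gamma}, we have $\tauk\ofx\le\taukg\ofx\le c/\lambda=:C_2$ for every $\gamma\in\brs{0,\infty}$. Let $\Lambda\subseteq\brs{d}$ be a size-$k$ index set carrying the $k$ largest-magnitude entries of $\x$, and write $\x=\x_\Lambda+\x_{\Lambda^c}$, where $\x_\Lambda$ agrees with $\x$ on $\Lambda$ and vanishes off $\Lambda$. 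By definition of $\tauk$, $\norm{\x_{\Lambda^c}}_1=\tauk\ofx\le C_2$.

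Now I use the hypothesis. Since any $k$ columns of $A$ are linearly independent, for each index set $\Lambda$ of size $k$ the $n\times k$ submatrix $A_\Lambda$ has full column rank, hence $\sigma_{\min}(A_\Lambda)>0$; as there are finitely many such $\Lambda$, the quantity $\alpha\eqdef\min_{\abs{\Lambda}=k}\sigma_{\min}(A_\Lambda)$ is strictly positive, and $\norm{A\uvec}_2\ge\alpha\norm{\uvec}_2$ for every $\uvec$ with $\norm{\uvec}_0\le k$. Applying this to $\x_\Lambda$, and combining the triangle inequality with \cref{eq:beta_inequality},
\[
\alpha\norm{\x_\Lambda}_2\le\norm{A\x_\Lambda}_2\le\norm{A\x}_2+\norm{A\x_{\Lambda^c}}_2\le C_1+\lambdalarge\norm{\x_{\Lambda^c}}_1\le C_1+\lambdalarge C_2 .
\]
Therefore $\norm{\x_\Lambda}_2\le(C_1+\lambdalarge C_2)/\alpha$, and
\[
\norm{\x}_2\le\norm{\x_\Lambda}_2+\norm{\x_{\Lambda^c}}_2\le\norm{\x_\Lambda}_2+\norm{\x_{\Lambda^c}}_1\le\frac{C_1+\lambdalarge C_2}{\alpha}+C_2 .
\]
This bound is independent of the choice of $\x\in S$, so $S$ is bounded; being also closed, it is compact.

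The one conceptual point --- and the step I expect to be the main obstacle --- is that the penalty term alone does \emph{not} control $\norm{\x}$: a vector may be exactly $k$-sparse, so $\taukg\ofx$ is small (indeed zero), yet have arbitrarily large norm. Boundedness is recovered only by pairing the penalty bound on the $d-k$ trailing coordinates $\x_{\Lambda^c}$ with the residual bound, which controls $\x_\Lambda$ precisely because the linear-independence assumption makes $A$ bounded below on each $k$-dimensional coordinate subspace; without that assumption the claim genuinely fails.
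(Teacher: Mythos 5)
Your proof is correct and follows essentially the same route as the paper's: closedness from continuity, reduction to the trimmed lasso via $\tauk \leq \taukg$, and boundedness by splitting $\x$ into its $k$-sparse projection and remainder, controlling the remainder through the penalty and the projection through the minimal singular value of the $n\times k$ submatrices together with the bounded residual. The only cosmetic difference is that you give a direct uniform bound on $\norm{\x}_2$ (using $\lambdalarge$ to bound $\norm{A\x_{\Lambda^c}}_2$), whereas the paper argues by contradiction along a sequence with $\norm{\x^t}\to\infty$ using $\sigma_1$.
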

\begin{proof}
    Since $\Ftwolg$ is continuous, the set $S$ is closed. From 
    \cref{thm:tau_limits_gamma}, it follows that $\Ftwolg\ofx \geq \Ftwol\ofx$ 
    for all $\x \in \R[d]$ and $\gamma\in\brs{0,\infty}$. It thus suffices to 
    prove the lemma for $\gamma = \infty$. Suppose by contradiction that there 
    exists a sequence $\brc{\x^t}_{t=1}^\infty$ such that $\norm{\x^t} \app 
    \infty$ and yet $\Ftwol\br{\x^t} \leq c$. Consider the sequence 
    $\brc{\projk\br{\x^t}}_{t=1}^\infty$ and recall that $\tauk\ofx =  
    \norm{\x-\projk\ofx}_1$. Since $\Ftwol\ofx \geq \lambda \tauk\ofx$, the 
    assumption $\Ftwol\br{\x^t}\leq c$ for all $t$ implies that $\norm{\x^t - 
    \projk\br{\x^t}}_1 \leq\frac{c}{\lambda}$. This, in turn, implies that 
    $\norm{\projk\br{\x^t}}_1 \app[t \app \infty] \infty$. Now, let $\sigma_1$ 
    be the largest singular value of $A$, and let $\sigma_{\min}$ be the 
    smallest singular value of any $n \times k$ sub-matrix of $A$. The 
    assumption that any $k$ columns of $A$ are linearly independent implies 
    that $\sigma_{\min} > 0$. Now,
    \eq{\ensuremath{\Ftwol{\ofx} \geq \frac{1}{2} \norm{A\x-\y}_2^2 
    &\geq \frac{1}{2}\Bigbr{\norm{A \projk\ofx}_2 - \norm{\y}_2 - 
    \norm{A\br{\x-\projk\ofx}}_2 }^2 
    \\
    &\geq \frac{1}{2}\Bigbr{\sigma_{\min} \norm{\projk\ofx}_2 - \norm{\y}_2 - 
    \sigma_1\norm{\x-\projk\ofx}_2 }^2.
    }}
    Plugging $\x=\x^t$ into the inequality above, with $\norm{\projk\br{\x^t}} 
    \app[t \app \infty] \infty$ while $\norm{\x - \projk\ofx}_2$ is bounded, 
    implies that $\Ftwol\br{\x^t} \app \infty$, which is a contradiction.    
\end{proof}

We are now ready to prove \cref{thm:alg1_convergence_gamma_finite}. Our proof 
is based on \cite[Theorem 1]{razaviyayn2013unified}.
\begin{proof}[Proof of \cref{thm:alg1_convergence_gamma_finite}]
\label{proof:alg1_convergence_gamma_finite}
 Since $\gamma$ is finite, $\Gtwolg \br{\x,\xtilde}$ is continuous in 
 $\br{\x,\xtilde}$. This fact, together with 
 \cref{thm:dirdev_tau_phi_gamma_finite}, imply that the objective $\Ftwolg 
 \ofx$ and majorizer $\Gtwolg \br{\x,\xtilde}$ satisfy the conditions for 
 Theorem~1 in \cite{razaviyayn2013unified}, which guarantees that any partial 
 limit of $\brc{\x^t}_{t=0}^\infty$ is a stationary point of $\Ftwolg$. We now 
 show that all partial limits belong to the same level\nbdash set of $\Ftwolg$. 
 {Note that the sequence $\brc{\Ftwolg\br{\x^t}}_{t=0}^{\infty}$ is 
 lower-bounded by zero and is monotone decreasing by properties of the MM 
 scheme. Thus, $\Ftwolg\br{\x^t}$ converges to a limit $\alpha \geq 0$ as $t 
 \app \infty$. Let $\xtilde$ be a partial limit of $\x^t$ and let 
 $\brc{\x^{t_l}}_{l=0}^{\infty}$ be a subsequence that converge to $\xtilde$. 
 By continuity of $\Ftwolg$,
 all partial limits have the same objective value, since
 $$\Ftwolg\br{\xtilde} 
 = \Ftwolg\br{\limit[l \app \infty] \x^{t_l} }
 = \limit[l \app \infty] \Ftwolg\br{ \x^{t_l}} 
 = \limit[t \app \infty] \Ftwolg\br{ \x^{t}} 
 = \alpha.$$
}
By \cref{thm:compact_sublevelset}, the conditions of \cite[Corollary 
1]{razaviyayn2013unified} hold, so $\lim_{t \rightarrow \infty} d \br{\x^t, 
\xstatset} = 0$.
\end{proof}

\begin{proof}[Proof of \cref{thm:stationary_pt_is_localmin}]
    \label{proof:stationary_pt_is_localmin}
    Suppose by contradiction that $\xtilde \in \R[d]$ is a stationary point of 
    $\Ftwol$, but not a local minimum. Then there exists a sequence 
    $\brc{\x^t}_{t=1}^\infty$ such that $\x^t \app[t \app \infty] \xtilde$ and 
    $\forall t \geq 1$, $\Ftwol \br{\xtilde} > \Ftwol \br{\x^t}$. 
    Let $t$ be large enough such that the vectors $\abs{\xtilde}$ and 
    $\abs{\x^t}$ are sorted in decreasing order by the same permutation and, if 
    $\tilde{x}_i \neq 0$, then 
    $\text{sign}\br{x^t_i}=\text{sign}\br{\tilde{x}_i}$. Denote $\vvec \eqdef 
    \x^{t} - \xtilde$. 
    Then
    \eq{\ensuremath{%
        0 > &\Ftwol\br{\x^t} - \Ftwol\br{\xtilde}
        = \frac{1}{2} \br{\norm{A \x^t - \y}_2^2 - \norm{A \xtilde - \y}_2^2} + 
        \lambda \br{\tauk\br{\x^t} - \tauk\br{\xtilde}}
        \\ \overset{\mbox{(a)}}{\geq} &\inprod{A\xtilde-\y}{A\vvec} + \lambda 
        \br{\tauk\br{\x^t} - \tauk\br{\xtilde}}
        \overset{\mbox{(b)}}{=} \inprod{A\xtilde-\y}{A\vvec} + \ldots
        \\ &\lambda \Biggbrs{\summ[i \in \Lambda_a] \delta \br{\tilde{x}_i,v_i} 
        v_i + \min \setst{\sum_{i \in \Lambda} \delta \br{\tilde{x}_i,v_i} 
        v_i}  {\Lambda \subseteq \Lambda_b,\ \abs{\Lambda} = 
        d-k-\abs{\Lambda_a}}}
        \\ &\overset{\mbox{(c)}}{=} \nabla_{\vvec} \Ftwol \ofxhat,
    }}
    where $\Lambda_a = \setst{i}{\abs{\tilde{x}_i} < \abs{\tilde{x}_\br{k}}}$, 
    $\Lambda_b = \setst{i}{\abs{\tilde{x}_i} = \abs{\tilde{x}_\br{k}}}$, (a) 
    follows from the convexity of $\norm{A\xtilde-\y}_2^2$, (b) follows from a 
    similar argument to the one used for \cref{eq:tauk_finite_diff}, and (c) is 
    by \cref{eq:dirdev_tau_infty}. Thus, $\Ftwol$ has a negative directional 
    derivative at $\xtilde$, contradicting the assumption that $\xtilde$ is a 
    stationary point.
\end{proof}

To analyze the convergence of \cref{alg:mm} at $\gamma = \infty$, we first 
prove an auxiliary \lcnamecref{thm:limit_of_stationaries}.
\begin{lemma}\label{thm:limit_of_stationaries}
    {Let $\xtilde$ be a limit of stationary points of $\Ftwol$, and suppose 
    that $\xtilde$ is non-ambiguous. Then $\xtilde$ is stationary.}
\end{lemma}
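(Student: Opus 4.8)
I want to show that if $\xtilde$ is non-ambiguous (so $\xord{k}>\xord{k+1}$ with $\x=\xtilde$) and $\xtilde = \lim_{m\to\infty}\x^m$ where each $\x^m$ is a stationary point of $\Ftwol$, then $\xtilde$ is itself stationary. By the definition of stationarity, it suffices to fix an arbitrary direction $\vvec\in\R[d]$ and show $\nabla_{\vvec}\Ftwol\br{\xtilde}\geq 0$. The key point is that non-ambiguity is an \emph{open} condition, and on the non-ambiguous set the directional derivative formula \cref{eq:dirdev_tau_infty} simplifies to the smooth expression $\sum_i w_{k,\infty}^i\ofx\,\delta\br{x_i,v_i}v_i$, which is the same as \cref{eq:dir_div_tau_finite_gamma} (see \cref{thm:dirdev_tau_phi_gamma_infty}). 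So I would like to pass to the limit inside the directional-derivative inequality $\nabla_{\vvec}\Ftwol\br{\x^m}\geq 0$.

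\textbf{Key steps.} First, since $\xtilde$ is non-ambiguous and $\x^m\to\xtilde$, for all sufficiently large $m$ the vector $\x^m$ is also non-ambiguous (the strict gap $\xord{k}>\xord{k+1}$ persists under small perturbations); restrict attention to such $m$. Second, for these $m$, the set $\Lambda_a\br{\x^m}$ of the indices of the $k$ largest-magnitude entries is locally constant and equals $\Lambda_a\br{\xtilde}$. Third, by \cref{thm:dirdev_tau_phi_gamma_infty} (non-ambiguous case) together with \cref{thm:dirdevs_are_equal}, for each such $m$,
\eq{\ensuremath{
\nabla_{\vvec}\Ftwol\br{\x^m} = \inprod{A\x^m-\y}{A\vvec} + \lambda\summ[i=1][d] w_{k,\infty}^i\br{\x^m}\,\delta\br{x^m_i,v_i}\,v_i \geq 0,
}}
the inequality because $\x^m$ is stationary. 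Fourth, I take $m\to\infty$. The term $\inprod{A\x^m-\y}{A\vvec}$ converges to $\inprod{A\xtilde-\y}{A\vvec}$ by continuity. For the penalty term, I use that $\wfunckinfty$ is given explicitly by \cref{eq:w_at_gamma_infty}: on the non-ambiguous set it equals $1$ on $\Lambda_a$ and $0$ elsewhere, hence it is \emph{constant} in a neighborhood of $\xtilde$, so $w_{k,\infty}^i\br{\x^m}=w_{k,\infty}^i\br{\xtilde}$ for large $m$. Finally, I handle $\delta\br{x^m_i,v_i}v_i$: for indices $i$ with $\tilde{x}_i\neq 0$ we have $x^m_i\neq 0$ with the same sign for large $m$, so $\delta\br{x^m_i,v_i}v_i = \mathrm{sign}\br{\tilde x_i}v_i = \delta\br{\tilde x_i,v_i}v_i$; for indices $i$ with $\tilde x_i = 0$, non-ambiguity forces $i\notin\Lambda_a$ (since $\xtilde$ has at least $k$ nonzero... actually $\xord{k}>\xord{k+1}\geq 0$ need not force $x_{(k)}>0$ — but if $\tilde x_i=0$ then $i\notin\Lambda_a$ when $\xord{k}>0$; if $\xord{k}=0$ then $\Lambda_b=\emptyset$ is impossible, so $\xord{k}>0$ and indeed $w^i_{k,\infty}\br{\xtilde}=0$ there), so the weight $w^i_{k,\infty}$ vanishes at such $i$ and the possibly-discontinuous $\delta$ term is annihilated. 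Passing to the limit yields $\inprod{A\xtilde-\y}{A\vvec} + \lambda\summ[i=1][d] w_{k,\infty}^i\br{\xtilde}\,\delta\br{\tilde x_i,v_i}\,v_i \geq 0$, which by \cref{thm:dirdevs_are_equal} equals $\nabla_{\vvec}\Ftwol\br{\xtilde}$. Since $\vvec$ was arbitrary, $\xtilde$ is stationary.

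\textbf{Main obstacle.} The delicate point is the discontinuity of $\delta\br{\alpha,\beta}$ at $\alpha=0$: the map $\x\mapsto\nabla_{\vvec}\Ftwol\br{\x}$ is \emph{not} continuous in general, so one cannot naively pass to the limit. The resolution is the observation above that wherever $\delta\br{x_i,v_i}$ could jump (namely where $\tilde x_i=0$), the accompanying weight $w_{k,\infty}^i$ is exactly zero and stays zero in a neighborhood — this is where non-ambiguity is essential and where the explicit limit formula \cref{eq:w_at_gamma_infty} does the real work. A secondary, more routine, point is verifying the local constancy of $\Lambda_a$ and of the signs, which follows from $\x^m\to\xtilde$ and the strict inequalities $\xord{k}>\xord{k+1}$ and $\tilde x_i\neq 0$ respectively. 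I should also double-check the edge case $k=d$ (then $\Ftwol$ is smooth, trivial) and note that non-ambiguity together with $\abs{\Lambda_b}\geq 1$ implies $\xord{k}>0$, so no index in $\Lambda_a$ is a zero coordinate.
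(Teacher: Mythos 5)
There is a genuine gap, and it sits exactly at the point you flag as the main obstacle. Your resolution rests on the claim that at a zero coordinate of the non-ambiguous limit $\xtilde$ the weight $w^i_{k,\infty}\br{\xtilde}$ vanishes, so that the discontinuity of $\delta\br{\argdot,v_i}$ is annihilated. This is false, because you have read \cref{eq:w_at_gamma_infty} backwards: for a non-ambiguous $\x$ the limit weight equals $1$ on the indices with $\abs{x_i} < \xord{k}$ (the $d-k$ \emph{smallest}-magnitude entries, which is what the trimmed lasso penalizes) and $0$ on the top-$k$ entries. Since non-ambiguity forces $\xord{k} > \xord{k+1} \geq 0$, hence $\xord{k} > 0$, every zero coordinate of $\xtilde$ lies in the small-entry set and carries weight $1$, not $0$. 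Consequently the term $w^i_{k,\infty}\br{\xtilde}\,\delta\br{x^m_i,v_i}v_i$ at such a coordinate does not converge to $w^i_{k,\infty}\br{\xtilde}\,\delta\br{\tilde{x}_i,v_i}v_i$ in general (take $x^m_i \nearrow 0$ with $v_i>0$: the sequence gives $-v_i$ while the limit point gives $+v_i$), so the two-sided passage to the limit in your fourth step does not go through; $\nabla_{\vvec}\Ftwol$ really is discontinuous there with full weight attached.

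The argument can be repaired, and this is how the paper does it: the discontinuity is only one-sided in the direction you need. For every coordinate, $\delta\br{x^m_i,v_i}v_i \leq \abs{v_i} = \delta\br{\tilde{x}_i,v_i}v_i$ when $\tilde{x}_i = 0$, and equality of the $\delta$-terms holds for large $m$ when $\tilde{x}_i \neq 0$; hence $\limsup_{m}\delta\br{x^m_i,v_i}v_i \leq \delta\br{\tilde{x}_i,v_i}v_i$ for all $i$. Combining this with the nonnegativity of the weights and with your (correct) observation that $\wfunckinfty\br{\x^m} = \wfunckinfty\br{\xtilde}$ for large $m$ (the top-$k$ index set is locally constant near a non-ambiguous point), one gets
\eq{\ensuremath{
\nabla_{\vvec}\Ftwol\br{\xtilde} \;\geq\; \limsup_{m \app \infty} \Bigbrs{\inprod{A\x^m-\y}{A\vvec} + \lambda \summ[i=1][d] w^i_{k,\infty}\br{\x^m}\,\delta\br{x^m_i,v_i}\,v_i} \;=\; \limsup_{m \app \infty}\nabla_{\vvec}\Ftwol\br{\x^m} \;\geq\; 0,
}}
using \cref{thm:dirdev_tau_phi_gamma_infty,thm:dirdevs_are_equal} at the non-ambiguous points. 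So the correct mechanism is upper semicontinuity of the directional derivative along the sequence, not cancellation of the discontinuous terms; apart from this, your overall structure (fixing $\vvec$, exploiting local constancy of the top-$k$ support and of the signs of the nonzero coordinates) matches the paper's proof.
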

\begin{proof}
    {Let $\brc{\x^t}_{t=1}^\infty$ be a sequence of stationary points of 
    $\Ftwol$ such that $\x^t \overset{t \app \infty}{\longrightarrow} \xtilde$. 
    We shall show that if $\xtilde$ is non-ambiguous,
    then 
    $\nabla_{\vvec}\Ftwol\br{\xtilde} \geq 0$
    for any $\vvec\in \R[d]$.}
    
    {To this end consider a fixed $\vvec\in \R[d]$ and let $i \in \brs{d}$. If 
    $\tilde{x}_i \neq 0$, then for a large enough $t$, $\mbox{sign}\br{x^t_i} = 
    \mbox{sign}\br{\tilde{x}_i}$ and thus
    using the definition of the function
    $\delta(\alpha,\beta)$ in \cref{eq:def_delta}, 
    }
    \eq{\ensuremath{\limit[t\app\infty]\delta\br{x^t_i,v_i}v_i = 
    \delta\br{\tilde{x}_i,v_i}v_i.}}
    {If $\tilde{x}_i = 0$, then for all $t \geq 1$, 
    \begin{equation*}
    \delta\br{x^t_i,v_i}v_i 
        \leq \abs{v_i}
        = \textup{sign}\br{v_i}v_i
        = \delta\br{0,v_i}v_i
        = \delta\br{\tilde{x}_i,v_i}v_i.
    \end{equation*}    
    In both cases,}
    \eq[eq:proof_limit_of_stationaries_a]{\ensuremath{\limsup_{t\app\infty}\delta\br{x^t_i,v_i}v_i
     \leq \delta\br{\tilde{x}_i,v_i}v_i.}}

    {Suppose indeed that $\xtilde$ is non-ambiguous. Then there exists some 
    finite $T$ such that for any $t > T$, $\x^t$ is non-ambiguous and the 
    top\nbdash $k$ indices of $\x^t$ are exactly those of $\xtilde$. Thus, for 
    $t > T$, $\wfunckinfty\br{\x^t} = \wfunckinfty\br{\xtilde}$.}
    Therefore,
    \begin{equation*}
    \begin{split}
    \nabla_{\vvec} \Ftwol\br{\xtilde} 
    &\overset{\mbox{(a)}}{=} \inprod{A\xtilde-\y}{A\vvec} + \lambda 
    \sum_{i=1}^d \wfuncidx{i} \br{\xtilde} \delta \br{\tilde{x}_i,v_i} v_i
    \\ &\overset{\mbox{(b)}}{\geq} \inprod{A\xtilde-\y}{A\vvec} + \lambda 
    \sum_{i=1}^d \wfuncidx{i} \br{\xtilde} \limsup_{t \app \infty} 
    \delta\br{x^{t_i},v_i} v_i
    \\ &\overset{\mbox{ }}{=} \limit[t\to\infty] 
    \Bigbrs{\inprod{A\x^t-\y}{A\vvec}} + \lambda \sum_{i=1}^d \limsup_{t \app 
    \infty} \Bigbrs{ \wfuncidx{i} \br{\x^t} \delta\br{x^{t_i},v_i} v_i }    
    \\ &\overset{\mbox{(c)}}{\geq} \limsup_{t\to\infty} 
    \Bigbrs{\inprod{A\x^t-\y}{A\vvec} + \lambda \sum_{i=1}^d \wfuncidx{i} 
    \br{\x^t} \delta\br{x^{t_i},v_i} v_i }    
    \\ &\overset{\mbox{(d)}}{=}\limsup_{t \app \infty} \nabla_{\vvec} 
    \Ftwol\br{\x^t} \overset{\mbox{(e)}}{\geq} 0,
    \end{split}
    \end{equation*}
    {where (a), (d) follow from the formula for $\nabla_{\vvec} \Ftwol$ in 
    \cref{eq:dirdev_F_G}, (b) follows from 
    \cref{eq:proof_limit_of_stationaries_a}, (c) holds since a sum of limsups 
    is greater or equal to the corresponding limsup of sums, and (e) follows 
    from $\x^t$ being a stationary point for all $t$. Therefore, $\xtilde$ is a 
    stationary point of $\Ftwol$.}
\end{proof}

The following \lcnamecref{thm:alg1_convergence_gamma_infty} describes the 
convergence of \cref{alg:mm} at $\gamma = \infty$. 
\begin{lemma}\label{thm:alg1_convergence_gamma_infty}
    Suppose that any $k$ columns of $A$ are linearly independent. Let 
    $\brc{\x^t}_{t=0}^{\infty}$ be the iterates of \cref{alg:mm} with $\gamma = 
    \infty$. Then, starting from any initial point,
    \begin{enumerate}
        \item \label{item:alg1_convergence_gamma_infty_part1} There exists a 
        finite number of iterations $T$ such that for any $t \geq T$, 
        $\Ftwol\br{\x^t} = \Ftwol\br{\x^T}$, and if $\x^t$ is non-ambiguous, it 
        is a local minimum of $\Ftwol$.
        \item \label{item:alg1_convergence_gamma_infty_part2} The sequence 
        $\brc{\x^t}_{t=0}^{\infty}$ is bounded, and any non-ambiguous partial 
        limit of $\x^t$ is a local minimum of $\Ftwol$. 
    \end{enumerate}
\end{lemma}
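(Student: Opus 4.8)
The plan is to exploit the fact that at $\gamma=\infty$, Algorithm~\ref{alg:mm} is exactly an iteratively reweighted $\ell_1$ scheme whose weight vectors $\w^t \eqdef \wfunckinfty\br{\x^{t-1}}$ range over a \emph{finite} set. Writing $P_\w\ofx \eqdef \tfrac12\norm{A\x-\y}_2^2 + \lambda\inprod{\w}{\abs{\x}}$, the iteration reads $\x^t \in \argmin_\x P_{\w^t}\ofx$ with $\w^t = \wfunckinfty\br{\x^{t-1}}$. I would first record three elementary facts. (a)~A direct computation from the explicit formula in \cref{thm:w_limits_gamma} gives $\inprod{\wfunckinfty\ofx}{\abs{\x}} = \tauk\ofx$ for every $\x$, so $\Ftwol\ofx = P_{\wfunckinfty\ofx}\ofx$ and $\Gtwolinf\br{\x,\xtilde} = P_{\wfunckinfty\br{\xtilde}}\ofx + \textrm{const}$; combining this with \cref{thm:Ftwolg_majorizer} at $\gamma=\infty$ also yields $\inprod{\wfunckinfty\br{\xtilde}}{\abs{\x}} \geq \tauk\ofx$. (b)~By \cref{thm:w_limits_gamma}, $\wfunckinfty\ofx$ is determined by the partition of $[d]$ into the index sets $\Lambda_a,\Lambda_b$, so the set $\mathcal W$ of attainable weight vectors is finite; moreover each $\w\in\mathcal W$ vanishes on at most $k$ coordinates, so by the argument of \cref{thm:compact_sublevelset} the convex function $P_\w$ is coercive and $V\br{\w} \eqdef \min_\x P_\w\ofx$ is well defined, with $\brc{V\br{\w} : \w\in\mathcal W}$ finite. (c)~If $\xtilde$ is non-ambiguous with set $S$ of $k$ largest-magnitude coordinates, then (by $1$-Lipschitzness of the sorted-magnitude map) on a neighborhood $U$ of $\xtilde$ every point is non-ambiguous with the same set $S$; hence $\wfunckinfty\ofx=\one_{S^c}$ and $\tauk\ofx=\sum_{i\notin S}\abs{x_i}$ for all $\x\in U$, so $\Ftwol$ coincides on $U$ with the convex $P_{\one_{S^c}}$.

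For the first assertion, I would show $\brc{V\br{\w^t}}$ is non-increasing: by fact~(a), $V\br{\w^{t+1}} = P_{\w^{t+1}}\br{\x^{t+1}} \leq P_{\w^{t+1}}\br{\x^t} = \Ftwol\br{\x^t}$, while $\Ftwol\br{\x^t} = P_{\w^t}\br{\x^t} - \lambda\bigbr{\inprod{\w^t}{\abs{\x^t}}-\tauk\br{\x^t}} \leq P_{\w^t}\br{\x^t} = V\br{\w^t}$ since $\x^t$ minimizes $P_{\w^t}$; also $V\br{\w^{t+1}} = P_{\w^{t+1}}\br{\x^{t+1}} \geq \Ftwol\br{\x^{t+1}}$ by fact~(a). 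Since $V\br{\w^t}$ lies in a finite set it is eventually equal to some $v^*$, and squeezing $\Ftwol\br{\x^t}$ between $V\br{\w^{t+1}}$ and $V\br{\w^t}$ gives $\Ftwol\br{\x^t}=v^*=\Ftwol\br{\x^T}$ for all $t\ge T$. For such $t$ the chain is tight, so $P_{\w^{t+1}}\br{\x^t} = V\br{\w^{t+1}}$; that is, $\x^t$ is a global minimizer of the convex $P_{\w^{t+1}} = P_{\wfunckinfty\br{\x^t}}$. If moreover $\x^t$ is non-ambiguous, fact~(c) shows $\wfunckinfty\br{\x^t} = \one_{S_t^c}$ and that $\Ftwol$ coincides near $\x^t$ with $P_{\one_{S_t^c}}$, of which $\x^t$ is a global — hence local — minimizer; thus $\x^t$ is a local minimum of $\Ftwol$. (One could instead conclude via \cref{thm:stationary_pt_is_localmin} together with \cref{thm:dirdevs_are_equal}.)

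For the second assertion, boundedness is immediate: $\Ftwol\br{\x^t}\leq\Ftwol\br{\x^0}$ for all $t$, so the iterates lie in a sublevel set of $\Ftwol$, which is compact by \cref{thm:compact_sublevelset} at $\gamma=\infty$. Now let $\xtilde$ be a non-ambiguous partial limit with $\x^{t_l}\to\xtilde$, and take $U$, $S$, $\bar\w=\one_{S^c}$ as in fact~(c). For $l$ large enough $t_l\geq T$ and $\x^{t_l}\in U$, so $\wfunckinfty\br{\x^{t_l}}=\bar\w$ and, by the first assertion, $\x^{t_l}$ is a global minimizer of $P_{\bar\w}$; letting $l\to\infty$ and using continuity of $P_{\bar\w}$ yields $P_{\bar\w}\br{\xtilde} = V\br{\bar\w}=\min_\x P_{\bar\w}\ofx$. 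Since $\Ftwol$ coincides with $P_{\bar\w}$ on $U$, $\xtilde$ is a local minimum of $\Ftwol$.

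I expect the main obstacle to be the second assertion, and within it fact~(c): making precise that near a non-ambiguous point $\wfunckinfty$ is locally constant — so that $\Ftwol$ locally coincides with a \emph{fixed} convex weighted-$\ell_1$ objective — and arguing that the relevant iterates are in fact \emph{global} minimizers of that objective rather than merely stationary points. One must also be careful that the iterates $\x^{t_l}$ need not themselves be non-ambiguous, which is why the argument is routed through the convex problem $P_{\bar\w}$ instead of through stationarity of $\Ftwol$ at $\x^{t_l}$. A minor technical point is verifying that every $\w\in\mathcal W$ vanishes on at most $k$ coordinates, which is what makes the minima defining $V\br{\w}$ attained under the standing hypothesis that any $k$ columns of $A$ are linearly independent.
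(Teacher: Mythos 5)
Your proof is correct, and while it shares the paper's skeleton it finishes by a genuinely different route. The common core: at $\gamma=\infty$ there are only finitely many attainable weight vectors, MM monotonicity forces the objective to stabilize after finitely many iterations, for $t\geq T$ each iterate globally minimizes its own convex weighted-$\ell_1$ majorizer, and boundedness follows from \cref{thm:compact_sublevelset}. The paper stabilizes via a pigeonhole/revisiting argument on the finitely many majorizers, and then converts ``global minimizer of its own majorizer, non-ambiguous'' into ``local minimum of $\Ftwol$'' through the nonsmooth machinery: equality of directional derivatives (\cref{thm:dirdevs_are_equal}), stationary implies local minimum (\cref{thm:stationary_pt_is_localmin}), and, for partial limits, the auxiliary fact that a non-ambiguous limit of stationary points is stationary (\cref{thm:limit_of_stationaries}). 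You instead stabilize via the non-increasing, finitely-valued sequence of subproblem optima $V(\w^t)$ (a cleaner bookkeeping of the same finiteness idea), and you replace the stationarity route with your observation that near a non-ambiguous point $\wfunckinfty$ is locally constant, so $\Ftwol$ locally coincides with one \emph{fixed} convex weighted-$\ell_1$ objective; global minimality of the iterates for that fixed objective then passes to the limit by plain continuity, giving part~2 without \cref{thm:limit_of_stationaries} at all. What the paper's route buys is reuse of lemmas it needs anyway for \cref{thm:alg2_convergence}; what yours buys is a more elementary, self-contained argument (convexity and continuity only), and it also dissolves your own worry about ambiguous iterates in the subsequence, since every point of the neighborhood $U$ is automatically non-ambiguous with the same top-$k$ support. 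Your two supporting computations --- that $\inprod{\wfunckinfty\ofx}{\abs{\x}} = \tauk\ofx$, and that each attainable weight vector vanishes on at most $k$ coordinates, so each subproblem is coercive under the assumption that any $k$ columns of $A$ are linearly independent (alternatively, note $P_{\w}\geq\Ftwol$ pointwise for attainable $\w$ and invoke \cref{thm:compact_sublevelset} directly) --- are both correct and suffice.
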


\begin{proof}
    {At $\gamma = \infty$, by \cref{eq:w_at_gamma_infty} the image of 
    $\w_{k,\infty} \ofx$ over $\x \in \R[d]$ is finite. Moreover, 
    $\w_{k,\infty} \ofx$ is uniquely determined by $\x$. In turn, the function 
    $\Gtwolinf\br{\argdot,\x^{t}}$ is uniquely determined by $\w_{k,\infty} 
    \br{\x^t}$. Thus, only a finite number of distinct functions 
    $\Gtwolinf\br{\argdot,\x^{t}}$ are minimized in the course of 
    \cref{alg:mm}.} Therefore there exists $T > 0$ such that for any $t \geq 
    T$, there is a $\tilde{t} < T$ for which $\Gtwolinf\br{\x,\x^{t}} = 
    \Gtwolinf\br{\x,\x^{\tilde{t}-1}}$ $\forall \x \in \R[d]$. For $t \geq T$ 
    with corresponding $\tilde{t} < T$,
    \begin{equation*}
    \begin{split}
    \Ftwol\br{\x^{t}}
        &\overset{(a)}{=} \Gtwolinf\br{\x^{t},\x^{t}}
        \overset{(b)}{\geq} \Gtwolinf\br{\x^{t+1},\x^{t}}
        \\
        &\overset{(c)}{=} \Gtwolinf\br{\x^{t+1},\x^{\tilde{t}-1}}
        \overset{(d)}{\geq} \Gtwolinf\br{\x^{\tilde{t}},\x^{\tilde{t}-1}}
        \overset{(e)}{\geq} \Ftwol\br{\x^{\tilde{t}}},
        \end{split}
    \end{equation*}
    with (a),(e) hold since $\Gtwolinf$ is a majorizer of $\Ftwol$, (b), (d) 
    results from $\x^{t+1}$, $\x^{\tilde{t}}$ being global minimizers of 
    $\Gtwolinf\br{\argdot,\x^t}$, $\Gtwolinf\br{\argdot,\x^{\tilde{t}-1}}$ 
    respectively, and (c) follows from our choice of $\tilde{t}$.
    On the other hand, since $t \geq T \geq \tilde{t}$, $\Ftwol\br{\x^t} \leq 
    \Ftwol\br{\x^T} \leq \Ftwol\br{\x^{\tilde{t}}}$. 
    Thus, for any $t \geq T$,
    \eq{\ensuremath{\Ftwol\br{\x^{t}} = \Ftwol\br{\x^{T}}.}}  
    Moreover, since
    \eq{\ensuremath{\Ftwol\br{\x^{t}} = \Ftwol\br{\x^{t+1}}
        \leq \Gtwolinf\br{\x^{t+1},\x^{t}}
        \leq \Gtwolinf\br{\x^{t},\x^{t}}
        = \Ftwol\br{\x^{t}},
    }}
    we have that 
    \eq{\ensuremath{\Gtwolinf\br{\x^{t},\x^{t}} = \Gtwolinf\br{\x^{t+1},\x^{t}} 
    = \min_{\x \in \R[d]}\Gtwolinf\br{\x,\x^{t}}.}}
    Thus, $\x^t$ is a global minimum of $\Gtwolinf\br{\argdot,\x^t}$. By 
    \cref{thm:dirdevs_are_equal}, if $\x^t$ is non-ambiguous, it is a 
    stationary point of $\Ftwol$. Recall that by 
    \cref{thm:stationary_pt_is_localmin}, all stationary  points of $\Ftwol$ 
    are local minima. Thus, for any $t \geq T$, the objective $\Ftwol\br{\x^t}$ 
    becomes constant, and $\x^t$ is either a local minimum of $\Ftwol$, or an 
    ambiguous vector. Thus, part~\ref{item:alg1_convergence_gamma_infty_part1} 
    of the \lcnamecref{thm:alg1_convergence_gamma_infty} is proven.
    
    {Combining part 1 with \cref{thm:compact_sublevelset} implies that the 
    iterates $\x^t$ are bounded and thus have at least one partial limit. Let 
    $\xtilde$ be such a partial limit, and let $\brc{\x^{t_i}}_{i=1}^\infty$ be 
    a subsequence that converges to it. If $\x^{t_i}$ contains an infinite 
    number of ambiguous points, then $\xtilde$ is also ambiguous. On the other 
    hand, if $\x^{t_i}$ contains only a finite number of ambiguous points, then 
    by part 1, $\x^{t_i}$ contains an infinite number of local minima of 
    $\Ftwol$. Thus, $\xtilde$ is a limit of local minima of $\Ftwol$, and by 
    \cref{thm:limit_of_stationaries,thm:stationary_pt_is_localmin}, it is 
    either ambiguous or a local minimum. In conclusion, we have shown that in 
    all cases $\xtilde$ is either ambiguous or a local minimum of $\Ftwol$, and 
    part~\ref{item:alg1_convergence_gamma_infty_part2} is proven.}   
\end{proof}

\begin{proof}[Proof of \cref{thm:alg2_convergence}]
    \label{proof:alg2_convergence}    
    By \cref{thm:alg1_convergence_gamma_finite}, for any $r$, $\x^*_r$ is a 
    stationary point of $\mathrm{F}_{\lambda,\gamma_r}$. Since $\gamma_{r+1} > 
    \gamma_r$, by \cref{thm:tau_limits_gamma}, 
    $\mathrm{F}_{\lambda,\gamma_{r+1}}\br{\x^*_{r}} \leq 
    \mathrm{F}_{\lambda,\gamma_{r}}\br{\x^*_{r}}$. Since \cref{alg:homotopy} is 
    initialized at $\x^*_{r}$ when solving \cref{pr:P2lg} with $\gamma = 
    \gamma_{r+1}$, $\mathrm{F}_{\lambda,\gamma_{r+1}} \br{\x^*_{r+1}} \leq 
    \mathrm{F}_{\lambda,\gamma_{r+1}}\br{\x^*_{r}}$. Hence, 
    $\mathrm{F}_{\lambda,\gamma_{r+1}} \br{\x^*_{r+1}} \leq 
    \mathrm{F}_{\lambda,\gamma_{r}}\br{\x^*_{r}}$.    
    Thus, part~\ref{item:alg2_convergence_part1} of the 
    \lcnamecref{thm:alg2_convergence} is proven.
    
    If $\gamma_r = \infty$ for some $r$, then since 
    $\brc{\gamma_r}_{r=0}^\infty$ is monotone increasing, $\gamma_{\tilde{r}} = 
    \infty$ for all $\tilde{r}> r$, and parts~\ref{item:alg2_convergence_part2} 
    and~\ref{item:alg2_convergence_part3} follow from 
    \cref{thm:alg1_convergence_gamma_infty}. Otherwise, suppose that $\gamma_r$ 
    is finite for all $r$. Let $C \eqdef \setst{\x \in \R[d]}{\Ftwol \ofx \leq 
    \mathrm{F}_{\lambda,\gamma_0} \br{\x^*_0}}$. By the above, the sequence 
    $\brc{\x^*_r}_{r=0}^\infty$ is contained in $C$, and by 
    \cref{thm:compact_sublevelset}, $C$ is compact. Let $\xstar$ be any partial 
    limit of $\x^*_r$, and suppose that $\xstar$ is non-ambiguous. We shall 
    show that $\xstar$ is a local minimum of $\Ftwol$.
    
    First, we show there exists $\delta > 0$ such that $\wfunc \ofx \app[\gamma 
    \to \infty] \w_{k,\infty} \ofx$ uniformly on the $\ell_{\infty}$ ball
        \eq{\ensuremath{B_\delta \eqdef \setst{\x \in 
        \R[d]}{\norm{\x-\xstar}_\infty \leq \delta}.}}
    Let $\Lambda_{\min} \subset \brs{d}$ be the (unique) index-set 
    corresponding to the $d-k$ smallest-magnitude entries of $\xstar$. Let 
    $\delta \eqdef \frac{\abs{\x^*}_{\br{k}}-\abs{\x^*}_{\br{k+1}}}{3}$. Since 
    $\xstar$ is non-ambiguous, $\delta > 0$.    
    Let $\x \in \R[d]$ such that $\norm{\x-\x^*}_\infty < \delta$. Then the 
    $d-k$ smallest-magnitude entries of $\x$ are indexed by the same set 
    $\Lambda_{\min}$. Moreover, for any $i \in \Lambda_{\min}$, $j \notin 
    \Lambda_{\min}$,
    \eq[eq:proof_alg2_convergence_a]{\ensuremath{\abs{x_i} \leq \abs{x^*_i} + 
    \delta \leq  \br{\abs{x^*_j} - 3\delta} + \delta < \abs{x_j} -  \delta.}}
    Consider an arbitrary index-set $\Lambda \subseteq \brs{d}$ of size $d-k$, 
    such that $\Lambda \neq \Lambda_{\min}$. Then
    \eq[eq:proof_alg2_convergence_b]{\ensuremath{\summ[i \in 
    \Lambda_{\min}]\abs{x_i} - \summ[j \in \Lambda]\abs{x_j} 
        = \summ[i \in \Lambda_{\min} \setminus \Lambda]\abs{x_i} - \summ[j \in 
        \Lambda \setminus \Lambda_{\min}]\abs{x_j}
        \leq - \abs{\Lambda \setminus \Lambda_{\min}} \cdot \delta \leq 
        -\delta,}}
    Therefore, for all $\Lambda \neq \Lambda_{\min}$ of size $d-k$,
    \eq[eq:proof_alg2_convergence_c]{\ensuremath{\exp\br{-\gamma 
    \br{\summ[j\in\Lambda] \abs{x_j} - \summ[j\in\Lambda_{\min}] \abs{x_j}}} 
    \leq \exp\br{-\gamma \delta} \app[\gamma \app \infty] 0. }}
    Thus, for any such $\Lambda \neq \Lambda_{\min}$,
    \eq[eq:proof_alg2_convergence_d]{\ensuremath{
    &\frac{\exp\br{-\gamma \summ[j\in\Lambda] 
    \abs{x_j}}}{\summ[\abs{\tilde{\Lambda}} = d-k]\exp\br{-\gamma 
    \summ[j\in\tilde{\Lambda}] \abs{x_j}}}
    = \frac{\exp\br{-\gamma \br{\summ[j\in\Lambda] \abs{x_j} - 
    \summ[j\in\Lambda_{\min}] \abs{x_j}}}}{\summ[\abs{\tilde{\Lambda}} = 
    d-k]\exp\br{-\gamma \br{\summ[j\in\tilde{\Lambda}] \abs{x_j} - 
    \summ[j\in\Lambda_{\min}] \abs{x_j} }}}
    \\ \overset{(a)}{\leq} &\frac{\exp\br{-\gamma 
    \delta}}{\summ[\abs{\tilde{\Lambda}} = d-k]\exp\br{-\gamma 
    \br{\summ[j\in\tilde{\Lambda}] \abs{x_j} - \summ[j\in\Lambda_{\min}] 
    \abs{x_j} }}}
    \overset{(b)}{\leq} \exp\br{-\gamma \delta} \app[\gamma \app \infty] 0,
    }}
    with (a) following from \cref{eq:proof_alg2_convergence_c}, and (b) holding 
    since the sum at the denominator contains the term for 
    $\tilde{\Lambda}=\Lambda_{\min}$, which equals 1. On the other hand, for 
    $\Lambda = \Lambda_{\min}$,
    \eq{\ensuremath{
    1 \geq &\frac{\exp\br{-\gamma \summ[j\in\Lambda_{\min}] 
    \abs{x_j}}}{\summ[\abs{\tilde{\Lambda}} = d-k]\exp\br{-\gamma 
    \summ[j\in\tilde{\Lambda}] \abs{x_j}}}
    = \frac{1}{\summ[\abs{\tilde{\Lambda}} = d-k]\exp\br{-\gamma 
    \br{\summ[j\in\tilde{\Lambda}] \abs{x_j} - \summ[j\in\Lambda_{\min}] 
    \abs{x_j} }}}
    \\ = &\frac{1}{1 + \summ[\abs{\tilde{\Lambda}} = d-k,\ \tilde{\Lambda} \neq 
    \Lambda_{\min}]\exp\br{-\gamma \br{\summ[j\in\tilde{\Lambda}] \abs{x_j} - 
    \summ[j\in\Lambda_{\min}] \abs{x_j} }}} 
    \\ \geq &\frac{1}{1 + \br{\binom{d}{k}-1} \exp\br{-\gamma \delta}}
    \underset{\gamma \app \infty}{\longrightarrow} 1,
    }}
   with the rightmost inequality following from 
   \cref{eq:proof_alg2_convergence_c}. Thus,
   \eq[eq:proof_alg2_convergence_e]{\ensuremath{\frac{\exp\br{-\gamma 
   \summ[j\in\Lambda_{\min}] \abs{x_j}}}{\summ[\abs{\tilde{\Lambda}} = 
   d-k]\exp\br{-\gamma \summ[j\in\tilde{\Lambda}] \abs{x_j}}} \app[\gamma \app 
   \infty] 1.}}
    Note that the limits in 
    \cref{eq:proof_alg2_convergence_d,eq:proof_alg2_convergence_e} do not 
    depend on $\x$, provided that $\norm{\x-\x^*}_\infty < \delta$. By 
    \cref{eq:w_at_gamma_infty}, this implies that as $\gamma \app \infty$, 
    $\wfunc \ofx$ converges to $\w_{k,\infty} \ofx$ uniformly on $B_\delta$.
      
    Next, let $\vvec \in \R[d]$. We shall now show that for any $\varepsilon > 
    0$ there exists $M$ such that if $i > M$, then $\nabla_{\vvec} \tauk 
    \br{\xstar} - \nabla_{\vvec} \tauargs{k}{\gamma_{r_i}} \br{\x^*_{r_i}} \geq 
    -\varepsilon \norm{\vvec}_1$. Let $\brc{\x^*_{r_i}}_{i=1}^\infty$ be a 
    subsequence of $\x^*_r$ that converges to $\x^*$. Let $I_1$ such that for 
    all $i>I_1$, $\norm{\x^*_{r_i}-\x^*}_{\infty} < \delta$. Recall that 
    $\gamma_r \app[r \app \infty] \infty$. Let $I_2 > I_1$ such that for all $i 
    > I_2$ and all $\x \in B_\delta$, $\norm{\w_{k,\gamma_{r_i}}\br{\x} - 
    \w_{k,\infty}\br{\x} }_\infty < \varepsilon$. Let $M > I_2$ such that if $i 
    > M$, then for any $j$ such that $x^*_j \neq 0$, 
    $\mbox{sign}\br{\br{\x^*_{r_i}}_j} = \mbox{sign}\br{x^*_j}$. Let $i > M$. 
    Since $\x^*$ is non-ambiguous and $\gamma_{r_i}$ is finite, by 
    \cref{thm:dirdev_tau_phi_gamma_infty},
    \eq[eq:thm_alg2_convergence_1]{\ensuremath{ \nabla_{\vvec} \tauk 
    \br{\xstar} - \nabla_{\vvec} \tauargs{k}{\gamma_{r_i}} \br{\x^*_{r_i}}
    = \sum_{j=1}^d \Biggbr{\wargs{k}{\infty}{j} \br{\xstar} \delta 
    \br{x^*_j,v_j} v_j - \wargs{k}{\gamma_{r_i}}{j} \br{\x^*_{r_i}} \delta 
    \br{\br{\x^*_{r_i}}_j,v_j} v_j}.
    }    }
    For any $j \in \brs{d}$, if $x^*_j = 0$ then
\eq{\ensuremath{\delta \br{x^*_j,v_j} v_j = \abs{v_j} \geq \delta 
\br{\br{\x^*_{r_i}}_j,v_j} v_j,}}
and if $x^*_j \neq 0$, then $\mbox{sign}\br{\br{\x^*_{r_i}}_j} = 
\mbox{sign}\br{x^*_j}$, and thus $\delta \br{\br{\x^*_{r_i}}_j,v_j} v_j = 
\delta \br{\br{x^*_j},v_j} v_j$. Therefore, for any $j \in \brs{d}$, $\delta 
\br{x^*_j,v_j} v_j \geq \delta \br{\br{\x^*_{r_i}}_j,v_j} v_j$. Plugging this 
to \cref{eq:thm_alg2_convergence_1}, we have
\eq{\ensuremath{
    &\nabla_{\vvec} \tauk \br{\xstar} - \nabla_{\vvec} 
    \tauargs{k}{\gamma_{r_i}} \br{\x^*_{r_i}}
    \geq \sum_{j=1}^d \Biggbr{\wargs{k}{\infty}{j} \br{\xstar} \delta 
    \br{x^*_j,v_j} v_j - \wargs{k}{\gamma_{r_i}}{j} \br{\x^*_{r_i}} \delta 
    \br{x^*_j,v_j} v_j}
    \\ & = \sum_{j=1}^d \Biggbr{\wargs{k}{\infty}{j} \br{\xstar} - 
    \wargs{k}{\gamma_{r_i}}{j} \br{\x^*_{r_i}} } \delta \br{x^*_j,v_j} v_j
     \geq - \norm{\w_{k,\infty} \br{\xstar} - \w_{k,\gamma_{r_i}} 
     \br{\x^*_{r_i}}}_\infty \sum_{j=1}^d \abs{v_j}
    \\ & \geq -\varepsilon \norm{\vvec}_1.        }}
    
    Now, let $\varepsilon > 0$ and let $i$ be large enough such that 
    $\nabla_{\vvec} \tauk \br{\xstar} - \nabla_{\vvec} 
    \tauargs{k}{\gamma_{r_i}} \br{\x^*_{r_i}} \geq -\varepsilon \norm{\vvec}_1$ 
    and $\norm{\x^*_{r_i} - \xstar}_2 \leq \varepsilon$. Since $\x^*_{r_i}$ is 
    a stationary point of $\mathrm{F}_{\lambda,\gamma_{r_i}}$, $\nabla_{\vvec} 
    \mathrm{F}_{\lambda,\gamma_{r_i}} \br{\x^*_{r_i}} \geq 0$, and thus
    \eq{\ensuremath{\nabla_{\vvec} \Ftwol \br{\xstar} 
        &\geq \nabla_{\vvec} \Ftwol \br{\xstar} - \nabla_{\vvec} 
        \mathrm{F}_{\lambda,\gamma_{r_i}} \br{\x^*_{r_i}} 
    \\ &= \inprod{A\xstar-\y}{A\vvec}-\inprod{A\x^*_{r_i}-\y}{A\vvec} + \lambda 
    \br{\nabla_{\vvec} \tauk \br{\xstar} - \nabla_{\vvec} 
    \tauargs{k}{\gamma_{r_i}} \br{\x^*_{r_i}}}
\\ &= \inprod{A\br{\xstar - \x^*_{r_i}}}{A\vvec} + \lambda \br{\nabla_{\vvec} 
\tauk \br{\xstar} - \nabla_{\vvec} \tauargs{k}{\gamma_{r_i}} \br{\x^*_{r_i}}}
\\ &\geq -\norm{A^tA\vvec}_2 \norm{\xstar - \x^*_{r_i}}_2 - \lambda 
\norm{\vvec}_1 \varepsilon
   \geq -\br{\norm{A^tA\vvec}_2 + \lambda \norm{\vvec}_1}\varepsilon. }}
Since this is true for any $\varepsilon > 0$, $\nabla_{\vvec} \tauk \br{\xstar} 
\geq 0$. Since $\vvec$ is arbitrary, this implies that $\xstar$ is a stationary 
point of $\Ftwol$, which, by \cref{thm:stationary_pt_is_localmin}, implies that 
$\xstar$ is a local minimum of $\Ftwol$. Thus, we have shown that any partial 
limit of $\brc{\x^*_r}_{r=0}^{\infty}$ is either a local minimum of $\Ftwol$ or 
an ambiguous vector, so part~\ref{item:alg2_convergence_part3} of the 
\lcnamecref{thm:alg2_convergence} is proven.

Since the set $C$ is compact, by a similar argument to that used in the proof 
of \cref{thm:alg1_convergence_gamma_finite}, it can be shown that $\lim_{r 
\rightarrow \infty} d \br{\x^*_r, \xoptset \cup \xambset} = 0$, which proves 
part~\ref{item:alg2_convergence_part2} of the \lcnamecref{thm:alg2_convergence}.
\end{proof}

\section[Appendix C. Optimizing F1 at small values of lambda]{{Optimizing 
$\Fonel$ at small 
values of $\lambda$}}
\label{sec:optimizing_fonel_with_small_lambda}
As discussed in \cref{sec:penalty_thresholds}, when using the power\nbdash 1 
objective $\Fonel$ as a relaxation of \cref{pr:P0}, we restrict $\lambda$ to 
$\brs{\lambdasmall, \lambdalarge}$. This strategy is supported by 
\cref{thm:small_lambda_p1l}, which states that for $\lambda < \lambdasmall$, 
all the local minima of $\Fonel$ belong to the subspace of zero residual, and 
by \cref{thm:bad_local_minma_everywhere}, which states that for $\lambda > 
\lambdalarge$, the optimization landscape of $\Fonel$ is plagued by poor local 
minima. To further support this strategy, we now show that under mild 
assumptions, \cref{alg:homotopy} returns the same output for all $\lambda < 
\lambdasmall$. 
We stress that the discussion in this section refers to the modified versions 
of \cref{alg:mm,alg:homotopy}, adapted to minimize the power\nbdash 1 
objectives $\Fonelg$ and $\Fonel$ respectively.
We start by stating the following theorem, which is an extension of 
\cref{thm:small_lambda_p1l}. Proofs are at the end of this section.
\begin{theorem} \label{thm:small_lambda_p1lw_zerores}
        Suppose that $0 < \lambda < \lambdasmall$. Let $\gamma \in 
        \brs{0,\infty}$. Then any stationary point $\xhat$ of $\Fonelg$ 
        satisfies $A \xhat = \y$.
\end{theorem}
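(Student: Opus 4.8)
The plan is to generalize the proof of \cref{thm:small_lambda_p1l} (and its auxiliary \cref{thm:small_lambda_p1l_aux}) to the soft-surrogate setting. The key observation is that a stationary point of $\Fonelg$ is a stationary point of the \emph{convex} surrogate obtained by freezing the weights. Specifically, set $\w \eqdef \wfunc\br{\xhat}$ and consider the convex function $\Fonelw\ofx = \norm{A\x-\y}_2 + \lambda \inprod{\w}{\abs{\x}}$. I would first show that $\xhat$ is a stationary point of $\Fonelw$. Indeed, by \cref{thm:dirdevs_are_equal} (the case of \cref{eq:dirdev_F_G} for $\Fonelg$), the directional derivative of $\Fonelg$ at $\xhat$ in any direction $\vvec$ equals $\inprod{\tfrac{A\xhat-\y}{\norm{A\xhat-\y}_2}}{A\vvec} + \lambda \sum_i \wfuncidx{i}\br{\xhat}\delta\br{\hat x_i,v_i}v_i$ when $A\xhat\neq\y$ (and an analogous expression involving $\norm{A\vvec}_2$ when $A\xhat=\y$). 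This is exactly the directional derivative of $\Fonelw$ at $\xhat$ with the same frozen $\w$, since $\inprod{\w}{\abs{\argdot}}$ has directional derivative $\sum_i w_i \delta\br{\hat x_i,v_i}v_i$ at $\xhat$. Hence $\nabla_{\vvec}\Fonelw\br{\xhat}\geq 0$ for all $\vvec$, so $\xhat$ is a stationary point of the convex $\Fonelw$, and therefore a global minimizer of $\Fonelw$.

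Next, I would invoke \cref{thm:small_lambda_p1l_aux} to conclude $A\xhat=\y$. To apply it I need $\w\in\brs{0,1}^d$ with $\sum_i w_i = d-k$. By \cref{thm:sum_w}, $\wfuncidx{i}\br{\xhat}\in[0,1]$ and $\sum_i \wfuncidx{i}\br{\xhat} = d-k$ for every $\gamma\in\brs{0,\infty}$, which gives the constraint on the sum but \emph{not} that the entries are in $\brc{0,1}$ — for finite $\gamma>0$ the weights are strictly inside $(0,1)$. So \cref{thm:small_lambda_p1l_aux} does not literally apply, but its proof carries over almost verbatim: the only place the hypothesis $\w\in\brs{0,1}^d$ is used is in the bound $\norm{\vvec\hadamard\w}_2 \leq \norm{\w}_2 = \sqrt{\sum_i w_i^2} \leq \sqrt{\sum_i w_i} = \sqrt{d-k}$, where the middle inequality $w_i^2\leq w_i$ holds for any $w_i\in[0,1]$. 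Thus the key chain $\sigma_n\br{A}\leq \lambda\norm{\vvec\hadamard\w}_2 \leq \lambda\sqrt{d-k}$ goes through whenever $\w\in[0,1]^d$ and $\sum_i w_i = d-k$, contradicting $\lambda<\lambdasmall$. I would therefore either restate \cref{thm:small_lambda_p1l_aux} with the weaker hypothesis $\w\in[0,1]^d$ (noting the proof is unchanged), or simply re-run the three-line subdifferential/norm argument inline.

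The remaining care is the case $A\xhat=\y$, which is already the desired conclusion, so nothing is needed there; and the subdifferential calculus used in \cref{thm:small_lambda_p1l_aux} requires $A\xhat\neq\y$ so that $\norm{A\x-\y}_2$ is differentiable at $\xhat$ — but that is exactly the case we are trying to rule out, so this is consistent (assume $A\xhat\neq\y$ for contradiction, apply the argument, derive $\sigma_n\br{A}\leq\lambda\sqrt{d-k}$, contradiction). One subtlety worth checking: the full-rank hypothesis ensuring $\lambdasmall>0$ is implicit in ``$0<\lambda<\lambdasmall$'', so $\sigma_n\br{A}>0$ and the division by $\norm{A\xhat-\y}_2$ is legitimate. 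I do not anticipate a genuine obstacle here — the main (mild) subtlety is realizing that \cref{thm:sum_w} supplies exactly what is needed in combination with the elementary fact $t^2\leq t$ on $[0,1]$, so the $\brs{0,1}$-integrality of the weights in \cref{thm:small_lambda_p1l_aux} was never essential. I would present the proof in two short paragraphs: (1) stationarity of $\Fonelg$ at $\xhat$ implies global optimality of the frozen convex $\Fonelw$ via \cref{thm:dirdevs_are_equal}; (2) the subdifferential argument of \cref{thm:small_lambda_p1l_aux}, using $\wfuncidx{i}\br{\xhat}\in[0,1]$ and $\sum_i\wfuncidx{i}\br{\xhat}=d-k$ from \cref{thm:sum_w}, forces $A\xhat=\y$.
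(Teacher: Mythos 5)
Your proposal follows essentially the same route as the paper: freeze the weights $\w=\wfunc\br{\xhat}$, argue that stationarity of $\Fonelg$ at $\xhat$ transfers to the convex weighted problem $\Fonelw$ (hence global optimality), and then conclude $A\xhat=\y$ by the auxiliary lemma. Two remarks. First, your worry that \cref{thm:small_lambda_p1l_aux} ``does not literally apply'' rests on a misreading of the notation: in the paper $\brs{0,1}^d$ denotes the box $[0,1]^d$ (the macro produces square brackets), not the discrete set of $0/1$ vectors, and the lemma's proof indeed uses $w_i^2\leq w_i$ exactly because the weights are only assumed to lie in the interval. So by \cref{thm:sum_w} the GSM weights satisfy the lemma's hypotheses verbatim and no restatement is needed — which is precisely what the paper does. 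Second, and this is the one genuine (if small) hole in your write-up: you invoke \cref{thm:dirdevs_are_equal}, which asserts equality of directional derivatives only when $\gamma$ is finite or $\xhat$ is non-ambiguous, while the theorem also covers $\gamma=\infty$ with an ambiguous $\xhat$. In that case the equality fails, but the argument still goes through with an inequality: by \cref{thm:dirdev_tau_phi_gamma_infty} the directional derivative of the majorizer $\taumajkinf\br{\argdot,\xhat}$ at $\xhat$ is a mean over subsets of $\Lambda_b$, whereas that of $\tauk$ is the corresponding minimum, so $\nabla_{\vvec}\Gonelinfty\br{\xhat,\xhat}\geq\nabla_{\vvec}\Fonel\br{\xhat}\geq 0$, and $\xhat$ is still a global minimizer of the frozen convex $\Fonelw$. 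This one-sided step is exactly why the paper's proof cites \cref{thm:dirdev_tau_phi_gamma_finite,thm:dirdev_tau_phi_gamma_infty} directly rather than the corollary; with that substitution your argument is complete.
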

It follows from \cref{thm:small_lambda_p1lw_zerores} that if $\lambda < 
\lambdasmall$, all the iterates of \cref{alg:homotopy}, including the output 
$\xhat$, belong to the affine space $\brc{\x \vert A \x = \y}$. Thus, for any 
$\lambda < \lambdasmall$, \cref{alg:homotopy} essentially seeks a solution of 
the equality-constrained problem
\begin{equation*}
\underset{\x}{\textup{min}} \ \tauk \ofx \quad \textup{s.t.} \quad A \x = \y,
\end{equation*}
and does so without leaving the feasible set during the homotopy.

To shed further light on the optimization process, recall the weighted\nbdash 
$\ell_1$ problem \cref{pr:P1lw}:
\begin{equation*}
\min_{\x}\ \Fonelw\br{\x} = \norm{A\x-\y}_2 + \lambda \inprod{\w}{\abs{\x}}.
\end{equation*}
The following lemma shows that for $\lambda < \lambdasmall$ the optimal 
solutions of problem \cref{pr:P1lw} are independent of $\lambda$.
\begin{lemma}\label{thm:small_lamssbda_p1lw_equiv}
Let $\lambda_1, \lambda_2 \in \br{0,\lambdasmall}$ and $\w \in \brs{0,1}^d$ 
with $\summ[i=1][d]w_i = d-k$. Then the two corresponding problems 
\cref{pr:P1lw} with $\lambda = \lambda_1$ and $\lambda=\lambda_2$ are 
equivalent. Namely, $\xhat$ is optimal for \cref{pr:P1lw} with $\lambda = 
\lambda_1$ if and only if it is optimal for \cref{pr:P1lw} with $\lambda = 
\lambda_2$.
\end{lemma}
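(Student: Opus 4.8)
The plan is to show that the set of optimal solutions of \cref{pr:P1lw} is independent of $\lambda$ as long as $\lambda\in(0,\lambdasmall)$; this immediately yields the stated equivalence. In fact it suffices to prove the one-sided implication ``if $\xhat$ is optimal for \cref{pr:P1lw} with $\lambda=\lambda_1$, then it is optimal for \cref{pr:P1lw} with $\lambda=\lambda_2$'' for arbitrary $\lambda_1,\lambda_2\in(0,\lambdasmall)$, since applying it with the roles of $\lambda_1$ and $\lambda_2$ swapped gives the converse.

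First I would invoke \cref{thm:small_lambda_p1l_aux}: since $\xhat$ is a global minimizer of the convex objective of \cref{pr:P1lw} with $\lambda=\lambda_1<\lambdasmall$, it satisfies $A\xhat=\y$. On the affine set $\setst{\x}{A\x=\y}$ the objective of \cref{pr:P1lw} equals $\lambda_1\inprod{\w}{\abs{\x}}$, so comparing $\xhat$ to any other feasible point shows that $\xhat$ also minimizes $\inprod{\w}{\abs{\x}}$ subject to $A\x=\y$. Writing the KKT conditions for this convex problem with linear equality constraints --- using that the subdifferential of $\inprod{\w}{\abs{\x}}$ at $\xhat$ is $\setst{\vvec\hadamard\w}{\vvec\in V(\xhat)}$, with $V(\argdot)$ as in \cref{eq:def_V_x} --- there exist a multiplier $\uvec\in\R[n]$ and $\vvec\in V(\xhat)$ with $A^T\uvec=\vvec\hadamard\w$. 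Note this $\uvec$ is intrinsic to $\xhat$ and the constrained problem, and does not depend on $\lambda_1$ or $\lambda_2$.

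The quantitative heart of the argument is a bound on $\norm{\uvec}_2$, obtained exactly as in \cref{thm:small_lambda_p1l_aux}. Since $A$ has rank $n$, $\norm{A^T\uvec}_2^2=\uvec^T AA^T\uvec\geq\sigma_n(A)^2\norm{\uvec}_2^2$, hence $\norm{\uvec}_2\leq\norm{A^T\uvec}_2/\sigma_n(A)=\norm{\vvec\hadamard\w}_2/\sigma_n(A)$; and since $\abs{v_i}\leq1$, $0\leq w_i\leq1$ and $\sum_i w_i=d-k$, we get $\norm{\vvec\hadamard\w}_2^2=\sum_i v_i^2 w_i^2\leq\sum_i w_i=d-k$, so $\norm{\uvec}_2\leq\sqrt{d-k}/\sigma_n(A)=1/\lambdasmall$. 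Now take $\lambda_2\in(0,\lambdasmall)$ and set $\uvec'\assign-\lambda_2\uvec$, so that $\norm{\uvec'}_2=\lambda_2\norm{\uvec}_2\leq\lambda_2/\lambdasmall<1$ and $A^T\uvec'+\lambda_2\,\vvec\hadamard\w=\zerovec$. Because $A\xhat=\y$, the subdifferential of $\norm{A\x-\y}_2$ at $\xhat$ equals $\setst{A^T\uvec}{\norm{\uvec}_2\leq1}$, and the subdifferential of $\lambda_2\inprod{\w}{\abs{\x}}$ at $\xhat$ equals $\lambda_2\setst{\vvec\hadamard\w}{\vvec\in V(\xhat)}$; by the Moreau--Rockafellar theorem their Minkowski sum is the subdifferential of the objective of \cref{pr:P1lw} (with $\lambda=\lambda_2$) at $\xhat$. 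The displayed identity shows this subdifferential contains $\zerovec$, so $\xhat$ is a global minimizer of the convex objective of \cref{pr:P1lw} with $\lambda=\lambda_2$, as needed.

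The main obstacle I anticipate is a bookkeeping one: at $\xhat$ with $A\xhat=\y$ the term $\norm{A\x-\y}_2$ is \emph{not} differentiable, so its subdifferential is the (generically nontrivial) set $\setst{A^T\uvec}{\norm{\uvec}_2\leq1}$ rather than the singleton gradient used in the proof of \cref{thm:small_lambda_p1l_aux}, which treated the complementary case $A\x\neq\y$. Everything else reduces to re-using the coherence-type estimate $\sigma_n(A)\leq\lambda\sqrt{d-k}$ from that lemma, now employed in reverse to convert the a priori unbounded Lagrange multiplier of the constrained problem into one of norm strictly below $1/\lambda_2$.
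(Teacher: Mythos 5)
Your proof is correct, but it takes a genuinely different route from the paper's. The paper's argument is a two-line reduction: by \cref{thm:small_lambda_p1l_aux}, for any $\lambda\in(0,\lambdasmall)$ every minimizer of $\Fonelw$ satisfies $A\x=\y$, so problem~\cref{pr:P1lw} is identified with the $\lambda$-free constrained problem $\min\inprod{\w}{\abs{\x}}$ s.t.\ $A\x=\y$, and both values of $\lambda$ inherit the same optimal set. You prove the same reduction, but the direction ``constrained-optimal $\Rightarrow$ optimal for \cref{pr:P1lw} at $\lambda_2$'' is handled by an explicit dual certificate: you extract a Lagrange multiplier $\uvec$ for the constrained problem, reuse the estimate from the proof of \cref{thm:small_lambda_p1l_aux} to bound $\norm{\uvec}_2\le \sqrt{d-k}/\sigma_n(A)=1/\lambdasmall$, and rescale by $\lambda_2$ to land inside the subdifferential $\setst{A^T\uvec'}{\norm{\uvec'}_2\le 1}$ of $\norm{A\x-\y}_2$ at a zero-residual point. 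This costs more machinery (multiplier existence for the equality-constrained problem, the subdifferential of the residual norm at a nonsmooth point, Moreau--Rockafellar), but it buys something the paper leaves implicit: it certifies directly that any minimizer of the constrained problem is a global minimizer of $\Fonelw$ for \emph{every} $\lambda\le\lambdasmall$, without having to appeal to existence/attainment of a minimizer of \cref{pr:P1lw} at $\lambda_2$ and a second invocation of \cref{thm:small_lambda_p1l_aux}. Your correct observation that the residual term is nondifferentiable at $A\xhat=\y$ (so the singleton-gradient computation used in \cref{thm:small_lambda_p1l_aux} does not apply and the full ball $\setst{A^T\uvec}{\norm{\uvec}_2\le1}$ is needed) is exactly the point where care is required, and you handle it properly; the only implicit assumption, that $A$ has rank $n$ so $\sigma_n(A)>0$, is forced anyway by the hypothesis that the interval $(0,\lambdasmall)$ is nonempty.
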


Recall that \cref{alg:homotopy} consists of iterative calls to \cref{alg:mm}, 
which in turn consists of iterative solutions of instances of \cref{pr:P1lw}. 
Therefore, by \cref{thm:small_lamssbda_p1lw_equiv}, for $\lambda < 
\lambdasmall$, the course of \cref{alg:homotopy} does not depend on $\lambda$, 
as long as problem~\cref{pr:P1lw} has a unique solution. This holds even if 
problem~\cref{pr:P1lw} has multiple solutions, as long as the solution of 
\cref{pr:P1lw} chosen by \cref{alg:mm} is the one with the minimal 
$\ell_2$\nbdash norm -- which is unique. Under this assumption, the output is 
indeed independent of $\lambda$, as stated by the following theorem.
\begin{theorem} \label{thm:small_lambda_same_solution}
Suppose that \cref{alg:mm} sets $\x^t$ to be the global optimum of 
\cref{pr:P1lw}
with the smallest $\ell_2$-norm. Then for all $\lambda < \lambdasmall$, 
\cref{alg:homotopy} computes the same sequence of iterates $\x_r^*$ and returns 
the same output $\xhat$.
\end{theorem}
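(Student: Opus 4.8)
The plan is to combine the three preceding results — \cref{thm:small_lambda_p1lw_zerores}, \cref{thm:small_lamssbda_p1lw_equiv}, and the definition of the power-$1$ variant of \cref{alg:mm} — to show that the entire trajectory of \cref{alg:homotopy} is insensitive to $\lambda$ once $\lambda < \lambdasmall$. First I would unwind exactly what \cref{alg:homotopy} does at $\lambda < \lambdasmall$: it starts at $\gamma_0 = 0$ by solving \cref{pr:P1lzero}, then for each $r$ it runs the power-$1$ version of \cref{alg:mm} at $\gamma = \gamma_r$, initialized at $\x^*_{r-1}$, which in turn generates iterates by repeatedly solving the weighted-$\ell_1$ problem \cref{pr:P1lw} with weight vector $\w^t = \wfunc(\x^{t-1})$, then recomputing the weight vector. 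By \cref{thm:sum_w}, every such weight vector $\w = \wfunc(\xtilde)$ lies in $[0,1]^d$ and has $\sum_i w_i = d-k$; moreover, at $\gamma = 0$ the weights are the constant vector $\tfrac{d-k}{d}\one$. (If one wants to apply \cref{thm:small_lamssbda_p1lw_equiv} verbatim one should note it is stated for $\w \in \brs{0,1}^d$; for finite $\gamma$ the weights are strictly inside the box, and the same proof of \cref{thm:small_lamssbda_p1lw_equiv} goes through since all that is used is $\norm{\w}_\infty \leq 1$ and $\norm{\w}_1 = d-k$ via the bound $\norm{\vvec \hadamard \w}_2 \leq \norm{\w}_2 \leq \sqrt{\norm{\w}_1} = \sqrt{d-k}$, so I would remark on this.)

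Next I would argue by induction on the iteration count that the whole computation is $\lambda$-independent. The key observation is that, under the hypothesis that \cref{alg:mm} always picks the minimal-$\ell_2$-norm global optimizer of \cref{pr:P1lw}, the map sending a weight vector $\w$ (with $\w \in [0,1]^d$, $\norm{\w}_1 = d-k$) to that distinguished optimizer does not depend on $\lambda$ whenever $\lambda < \lambdasmall$: by \cref{thm:small_lamssbda_p1lw_equiv} (and its trivial extension to the open box) the \emph{set} of global optima of \cref{pr:P1lw} is the same for every $\lambda \in (0,\lambdasmall)$, hence so is the minimal-$\ell_2$-norm element of that set. Now induct: the base point $\x^*_0$ is the minimal-$\ell_2$-norm solution of \cref{pr:P1lzero}, which is \cref{pr:P1lw} with $\w = \tfrac{d-k}{d}\one$, hence independent of $\lambda$. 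Assume $\x^*_0,\dots,\x^*_{r-1}$ and all the intermediate MM iterates up to stage $r-1$ are $\lambda$-independent (the $\gamma_r$ are chosen by a rule that depends only on the iterates, so the sequence $\{\gamma_r\}$ is also $\lambda$-independent). At stage $r$, \cref{alg:mm} is initialized at the $\lambda$-independent point $\x^*_{r-1}$; its first MM iterate solves \cref{pr:P1lw} with weight $\wfunc(\x^*_{r-1})$ — $\lambda$-independent input, hence $\lambda$-independent output by the observation above — and then the next weight vector $\wfunc$ of that iterate, and so on. Every quantity produced is a composition of the $\lambda$-independent weight map, the $\lambda$-independent distinguished-solver map, and the $\lambda$-independent $\gamma$-update and stopping rules, so the full sequence of iterates $\{\x^*_r\}$ and the final output $\xhat = \x^*_R$ are the same for all $\lambda < \lambdasmall$. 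This closes the induction.

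Finally I would add one sentence tying this back to \cref{thm:small_lambda_p1lw_zerores}: that theorem guarantees each stationary point $\x^*_r$ of $\Fonelg$ (which \cref{thm:alg1_convergence_gamma_finite}, adapted to the power-$1$ case, says \cref{alg:mm} produces) satisfies $A\x^*_r = \y$, so the whole homotopy path lives in the feasible affine set $\{\x : A\x = \y\}$; this makes the $\lambda$-independence statement both sharper and more intuitive, but it is not strictly needed for the proof. I expect the main technical obstacle to be the bookkeeping around \emph{well-definedness of the distinguished solver}: I must confirm that the minimal-$\ell_2$-norm global optimizer of \cref{pr:P1lw} exists and is unique for the relevant weight vectors — existence follows because $\Fonelw$ is a continuous, coercive (on the relevant restricted sense, given any $k$ columns of $A$ independent, as used in \cref{thm:compact_sublevelset}'s analogue) convex function so its optimal set is nonempty, closed and convex, and uniqueness of the minimal-norm element of a nonempty closed convex set is standard — and, more delicately, that \cref{thm:small_lamssbda_p1lw_equiv}'s "equivalence" genuinely gives equality of the optimal sets (not merely a bijection), which I would verify by re-examining its proof. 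Once those points are nailed down the induction is routine.
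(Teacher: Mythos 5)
Your proposal is correct and follows essentially the same route as the paper: an induction over the homotopy iterations, using the fact that the optimal solution set of \cref{pr:P1lw} is identical for all $\lambda\in(0,\lambdasmall)$ (\cref{thm:small_lamssbda_p1lw_equiv}) together with the uniqueness of the minimal-$\ell_2$-norm element, starting from the $\lambda$-independent uniform-weight problem at $\gamma_0=0$. Your extra remarks (well-definedness of the distinguished solver, and the box $\brs{0,1}^d$ containing the strictly interior weights, which by \cref{thm:sum_w} already satisfy the lemma's hypotheses verbatim) are harmless refinements of the paper's argument rather than a different approach.
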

In conclusion, \cref{thm:small_lambda_same_solution} indicates that in the 
power\nbdash 1 case there is no benefit in using more than one value of 
$\lambda$ below $\lambdasmall$, since all such values are practically 
guaranteed to yield the same result.

We now prove \cref{thm:small_lambda_p1lw_zerores,thm:small_lamssbda_p1lw_equiv}.
\begin{proof}[Proof of \cref{thm:small_lambda_p1lw_zerores}]
Fix $\w = \wfunc\br{\xhat}$. Since $\xhat$ is a stationary point of $\Fonelg$, 
by \cref{thm:dirdev_tau_phi_gamma_finite,thm:dirdev_tau_phi_gamma_infty}, it is 
a stationary point of the majorizer $\Gonelg\br{\x, \xhat}$, and thus a global 
minimum w.r.t. $\x$. Therefore, $\xhat$ is a global minimum of $\Fonelw$, and 
by \cref{thm:small_lambda_p1l_aux}, $A\xhat = \y$.
\end{proof}

\begin{proof}[Proof of \cref{thm:small_lamssbda_p1lw_equiv}]
From \cref{thm:small_lambda_p1l_aux} it follows that problems \cref{pr:P1lw} 
with $\lambda = \lambda_1$ and with $\lambda = \lambda_2$ are both equivalent 
to the constrained problem
$$\min_{\x}\ \inprod{\w}{\abs{\x}} \textup{ s.t. } A\x=\y,$$
and thus have the same set of optimal solutions.
\end{proof}

\begin{proof}[Proof of \cref{thm:small_lambda_same_solution}]
        The main loop of \cref{alg:homotopy} consists of iterative calls to 
        \cref{alg:mm} with gradually increasing values of $\gamma$. Let 
        $\brc{\gamma_r}_{\ell=0}^L$ be the sequence of values used by 
        \cref{alg:homotopy} and let $\brc{\x_r^*}_{\ell=0}^L$ be the 
        corresponding iterates. We now prove by induction that the iterates 
        $\x_r^*$ do not depend on $\lambda$. 
        
        At the first iteration $r=0$, \cref{alg:mm} is called with 
        $\gamma_0=0$, for which it does not take an initialization. Its output 
        $\x_0^*$ is the minimal $\ell_2$\nbdash norm solution of 
        problem~\cref{pr:P1lw} with weights $w_i = \frac{d-k}{d}$, 
        $i=1,\ldots,d$. Thus, $\x_0^*$ is uniquely determined. By 
        \cref{thm:small_lamssbda_p1lw_equiv}, the set of optimal solutions of 
        problem~\cref{pr:P1lw} is independent of $\lambda$. Therefore, since 
        $\x_0^*$ is uniquely determined, it is independent of $\lambda$, and 
        thus so is $\gamma_1$ (which may still depend on $\x_0^*$). 
        
        Now let $r \geq 1$. By applying the same argument recursively to the 
        iterations of \cref{alg:mm}, it can be shown that if $\gamma_r$ and 
        $\x_{r-1}^*$ are independent of $\lambda$, then so are all the iterates 
        of \cref{alg:mm} when given $\gamma_r$ and $\x_{r-1}^*$ as input. 
        Therefore, $\gamma_{r+1}$ and $\x_{r}^*$ are also independent of 
        $\lambda$. This concludes our proof.
\end{proof}

\section[Appendix D. Calculating theta(z)]{Calculating $\gsm \ofz$} 
\label{sec:calculating_theta}
%
%
{As detailed below, we split $\gsm \ofz$ to two subvectors and calculate each 
of them separately. The entries $\gsmi{i}\ofz$ corresponding to the smallest 
$d-2k+2$ entries of $\z$ are calculated by a recursive procedure, 
taking $\mathcal{O}\br{kd}$ operations. As will follow from the lemmas below, 
these $d-2k+2$ entries are "well behaved" in the following sense: All the 
intermediate values in the recursive procedure are bounded in $\brs{0,1}$ and 
are monotonically decreasing -- hence they do not overflow, and an underflow 
can only affect values whose contribution to the final result is anyway 
negligible. For the remaining $2k-2$ entries, the above boundedness property 
does not hold in general. Hence, we calculate them by a separate procedure, 
which takes $\mathcal{O}\br{k^2}$ operations. The calculation of $\mukg \ofz$ 
and $\gsm \ofz$ for $k \in \brc{0,\ldots,\floor{\tfrac{d}{2}}}$ and $\gamma \in 
\brs{0,\infty}$ is summarized in \cref{alg:calc_gsm_main}.}

\begin{algorithm}[t] 
\caption{Calculate $\gsmiargs{k}{\gamma}{i} \ofz$ by a forward recursion}
\label{alg:calc_theta_forward}
\begin{algorithmic}[1]
\Input $\z = \br{z_1,\ldots,z_d} \in \R[d]$, $\ k,i \in \brc{1,\ldots,d}$, $\ 
\gamma \in \br{0,\infty}$, $\ \brc{b_{q,\gamma} \ofz}_{q=0}^k$, $\ 
\brc{z_\br{q}}_{q=0}^{k}$
\Output $\gsmiargs{k}{\gamma}{i} \ofz$
\State $\xi \assign 0 $
\For{$q \assign 1,\ldots,k$}
    $\ \xi \assign \frac{q}{d-q+1} \exp \br{\gamma \br{z_i-z_\br{q}} + 
    b_{q-1,\gamma} \ofz - b_{q,\gamma} \ofz} \cdot \br{1 - \xi }$    
\EndFor
\State\Return $\gsmiargs{k}{\gamma}{i} \ofz = \xi$     
\end{algorithmic}
\end{algorithm}

Let $\dleft \eqdef \max\brc{1,2k-2}$ and $\dright \eqdef d - \dleft$. Define 
the vectors $\zleft \in \R[\dleft]$ and $\zright \in \R[\dright]$ by
\eq{\ensuremath{\zleft \eqdef \br{z_1,\ldots,z_{\dleft}}
\qquad \zright \eqdef \br{z_{\dleft+1},\ldots,z_d}.}}
To calculate $\gsm \ofz$, we take as input 
$\brc{b_{q,\gamma} \ofz}_{q=0}^k$, $\brc{b_{q,\gamma} \ofzleft}_{q=0}^\dleft$ 
and $\brc{b_{q,\gamma} \ofzright}_{q=0}^{k}$, calculated by 
\cref{alg:calc_mu_consecutive}. 
%
Here we also assume that $\z$ is sorted such that $\min 
\brc{z_1,\ldots,z_\dleft} \geq \max \brc{z_{\dleft+1},\ldots,z_d}$. An 
arbitrary vector $\z$ can be sorted in this way in $\mathcal{O}\br{d}$ 
operations, using a selection algorithm to find the $\dleft$\nbdash th largest 
entry.

We first state without proof that $\gsmargs{k}{\gamma} \ofz$ satisfies the 
recursive formula
\eq[eq:recursion_theta_1]{\ensuremath{
\gsmiargs{q}{\gamma}{i} \ofz = \twocase{ c_{q,\gamma}^i\ofz \br{1 - 
\gsmiargs{q-1}{\gamma}{i} \ofz} }{q=1,\ldots,d}
{ 0 }{q=0,}}}
where
\eq[eq:def_c_qg]{\ensuremath{
c_{q,\gamma}^i\ofz \eqdef \exp \br{\gamma \br{z_i-z_\br{q}}} 
\frac{s_{q-1,\gamma} \ofz} {s_{q,\gamma} \ofz}}}
and $s_{q,\gamma} \ofz$ is defined in \cref{eq:def_aqg}.
\Cref{eq:recursion_theta_1} is similar to the recursion 
\cref{eq:simple_gsm_recursion}. A key difference is that the intermediate 
values $t_k^i$ in \cref{eq:simple_gsm_recursion} can be extremely large and 
thus overflow, whereas $\gsmiargs{k}{\gamma}{i} \ofz$ are bounded in 
$\brs{0,1}$ and thus cannot overflow.

Our assumption on the order of $\z$ implies that for all $i > \dleft$ and $q 
\leq k$, the term $\exp \br{\gamma \br{z_i-z_\br{q}}}$ is bounded in 
$\brs{0,1}$ and thus cannot overflow. However, calculating $c_{q,\gamma}^i\ofz$ 
by \cref{eq:def_c_qg} is numerically unsafe, for the following reasons: First, 
the terms $s_{q-1,\gamma} \ofz$ and $s_{q,\gamma} \ofz$ might suffer, 
respectively, an arithmetic overflow or underflow, which could corrupt the 
result. Second, 
$c_{q,\gamma}^i \ofz$ can be extremely large whereas $1 - 
\gsmiargs{q-1}{\gamma}{i} \ofz$ can be extremely small. This may lead to 
multiplying a numerical infinity by a numerical zero, giving a meaningless 
result.
 
To address the first issue mentioned above, we reformulate $c_{q,\gamma}^i\ofz$ 
in terms of $b_{q,\gamma} \ofz$,
\eq[eq:altdef_c_qg]{\ensuremath{
    c_{q,\gamma}^i\ofz = \frac{q}{d-q+1} \exp \br{\gamma \br{z_i-z_\br{q}} + 
    b_{q-1,\gamma} \ofz - b_{q,\gamma} \ofz}.}}
As argued in \cref{sec:calculating_mu}, $\abs{b_{q,\gamma} \ofz}$ is 
upper-bounded by $q\log d$, and thus it cannot overflow. The second issue 
mentioned above is addressed in the following lemma.
\begin{lemma}\label{thm:theta_recursion_safe}
    Let $\z \in \R[d]$, $1 \leq k \leq \frac{d}{2}$ and $\gamma \in 
    \br{0,\infty}$.
    Suppose that $z_j \geq z_i$ for any $j \leq 2k-2 < i$.
    Then $c_{q,\gamma}^i\ofz \leq 1$ for all $i > 2k-2$ and $1 \leq q \leq k$. 
\end{lemma}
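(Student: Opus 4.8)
The plan is to translate the statement into an inequality between elementary symmetric polynomials and then settle it by a short double-counting argument. Put $y_j := e^{\gamma z_j}$ for $j \in [d]$, and for $0 \le q \le d$ let $e_q(y) := \sum_{\abs{\Lambda}=q} \prod_{j\in\Lambda} y_j$ be the $q$-th elementary symmetric polynomial in $y_1,\dots,y_d$; write $e_q(\hat y)$ for the analogous polynomial in the $d-1$ variables $\brc{y_j : j \ne i}$, with the convention $e_{-1}\equiv 0$. Since $\gamma\in(0,\infty)$, by \cref{eq:def_aqg} one has $s_{q,\gamma}\ofz = e_q(y)\,\exp\br{-\gamma \sum_{p=1}^q z_{\br{p}}}$, and substituting this into \cref{eq:def_c_qg} makes the exponential factors cancel, leaving
\[
 c_{q,\gamma}^i\ofz \ = \ y_i\,\frac{e_{q-1}(y)}{e_q(y)} .
\]
Since $e_q(y)>0$, the claim $c_{q,\gamma}^i\ofz \le 1$ is equivalent to $y_i\, e_{q-1}(y) \le e_q(y)$. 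Splitting each symmetric polynomial according to whether a subset contains the index $i$, i.e. $e_q(y) = e_q(\hat y) + y_i\, e_{q-1}(\hat y)$ and $e_{q-1}(y) = e_{q-1}(\hat y) + y_i\, e_{q-2}(\hat y)$, one obtains $e_q(y) - y_i\, e_{q-1}(y) = e_q(\hat y) - y_i^2\, e_{q-2}(\hat y)$, so it remains to prove
\[
 y_i^2\, e_{q-2}(\hat y) \ \le \ e_q(\hat y) .
\]
For $q=1$ this is trivial ($e_{-1}\equiv 0$; equivalently $c_{1,\gamma}^i\ofz = y_i/e_1(y)\le 1$ directly), so assume $q\ge 2$.

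To prove this last inequality I would use the hypothesis. Since $\gamma>0$, the assumption $z_j \ge z_i$ for $j \le 2k-2 < i$ gives $y_j \ge y_i$ for every $j$ in $J := \brc{1,\dots,2q-2}$ — here $2q-2 \le 2k-2$ because $q \le k$ — while $i \notin J$ since $i > 2k-2 \ge 2q-2$. Consider the quantity
\[
 \Sigma \ := \ \sum_{\substack{\Lambda''\subseteq[d]\setminus\brc{i}\\ \abs{\Lambda''}=q-2}} \ \prod_{j\in\Lambda''} y_j \ \sum_{\brc{a,b}\subseteq J\setminus\Lambda''} y_a y_b .
\]
Regrouping the terms of $\Sigma$ by the $q$-set $\Lambda := \Lambda''\cup\brc{a,b}\subseteq[d]\setminus\brc{i}$ gives $\Sigma = \sum_{\abs{\Lambda}=q,\,\Lambda\subseteq[d]\setminus\brc{i}} \binom{\abs{\Lambda\cap J}}{2}\prod_{j\in\Lambda} y_j \le \binom{q}{2}\, e_q(\hat y)$, since $\abs{\Lambda\cap J}\le\abs{\Lambda}=q$. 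On the other hand, for every $\Lambda''$ of size $q-2$ one has $\abs{J\setminus\Lambda''} \ge \abs{J}-(q-2) = q$, so the inner sum has at least $\binom{q}{2}$ pairs, each of value $y_a y_b \ge y_i^2$; hence $\Sigma \ge \binom{q}{2}\, y_i^2\, e_{q-2}(\hat y)$. Comparing the two bounds and dividing by $\binom{q}{2}>0$ yields $y_i^2\, e_{q-2}(\hat y) \le e_q(\hat y)$, completing the proof.

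The crux — and the only nontrivial step — is this combinatorial inequality: because at least $2q-2$ entries of $\hat y$ dominate $y_i$, every product of $q-2$ of the $y_j$'s can be completed to a product of $q$ of them by adjoining two "large" variables, and the double count has to be arranged so that the multiplicities on the two sides match (both bounded by $\binom{q}{2}$). Everything else is routine: the identity $c_{q,\gamma}^i\ofz = y_i\, e_{q-1}(y)/e_q(y)$, the splitting identities for $e_q$, the case $q=1$, and checking well-definedness of $J$ and $\hat y$ (ensured by $i>2k-2\ge 2q-2$ and $q\le k\le d/2$). I do not expect any further obstacle.
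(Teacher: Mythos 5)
Your proposal is correct, and its skeleton matches the paper's proof: writing $y_j=e^{\gamma z_j}$, the identity $c_{q,\gamma}^i(\mathbf{z})=y_i\,e_{q-1}(y)/e_q(y)$, the $q=1$ case, and the splitting over subsets containing $i$ all appear there too, and the paper's contradiction hypothesis reduces to exactly the negation of your key inequality $y_i^2\,e_{q-2}(\hat y)\le e_q(\hat y)$. Where you genuinely diverge is the final counting step. The paper argues by contradiction: it divides by $\binom{q}{2}$, lower-bounds a weighted average of the pair-sums $s_\Omega$ by their minimum, asserts (without a detailed argument) that the minimizing $\Omega_0$ can be taken among the $2k-2$ dominating indices, and then selects $q$ further dominating indices outside $\Omega_0$ to force $\exp(2\gamma z_i)>\exp(2\gamma z_i)$. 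You instead fix the set $J=\{1,\dots,2q-2\}$ of dominating indices up front (valid since $q\le k$ and $i>2k-2\ge 2q-2$) and bound one auxiliary sum $\Sigma$ from both sides, the upper bound via the multiplicity $\binom{|\Lambda\cap J|}{2}\le\binom{q}{2}$ and the lower bound via $|J\setminus\Lambda''|\ge q$ and $y_ay_b\ge y_i^2$. This yields the inequality directly rather than by contradiction, uses only $2q-2\le 2k-2$ of the dominating entries, and sidesteps the extremality claim about the minimizing $\Omega$; both arguments ultimately rest on the same $\binom{q}{2}$ double count and on $\gamma>0$ turning $z_j\ge z_i$ into $y_j\ge y_i$. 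Your execution of that step is, if anything, a bit cleaner than the paper's.
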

\Cref{thm:theta_recursion_safe} guarantees that for $i > \dleft$, 
$c_{q,\gamma}^i\ofz$ cannot overflow, and thus $\gsmiargs{k}{\gamma}{i} \ofz$ 
can be safely calculated by \cref{eq:recursion_theta_1,eq:altdef_c_qg}. This 
recursive procedure is summarized in \cref{alg:calc_theta_forward}. It takes 
$\mathcal{O}\br{k}$ operations for a single index $i$, and can be parallelized 
over $i$.


\begin{algorithm}[t]
\caption{Calculate $\brc{\gsmiargs{q}{\gamma}{i} \ofzleft}_{q=0}^k$ by a 
forward and backwards recursion}
\label{alg:calc_theta_backwards}
\begin{algorithmic}[1]
\Input $\zleft = \br{\zleftentry_1,\ldots,\zleftentry_\dleft} \in \R[\dleft]$, 
$\ k,i \in \brc{1,\ldots,\dleft}$, 
    $\ \gamma \in \br{0,\infty}$, $\ \brc{b_{q,\gamma} 
    \ofzleft}_{q=0}^\dleft$,  $\ \brc{\zleftentry_\br{q}}_{q=0}^{\dleft}$
\Output $\brc{\gsmiargs{q}{\gamma}{i}\ofzleft}_{q=0}^k$
\State Initialize $\gsmiargs{0}{\gamma}{i} \ofzleft \assign 0$, $\ 
\gsmiargs{\dleft}{\gamma}{i} \ofzleft \assign 1\ $ and $\ \qhat \assign k+1$
\For{$q \assign 1,\ldots,k$}
    \State $\eta \assign \gamma \br{\zleftentry_i-\zleftentry_\br{q}} + 
    b_{q-1,\gamma} \ofzleft - b_{q,\gamma} \ofzleft$
    \If{$\eta \leq \log{\frac{\dleft-q+1}{q}}$}
        $\gsmiargs{q}{\gamma}{i} \ofzleft \assign \frac{q}{\dleft-q+1} 
        \exp\br{\eta} \cdot \br{1 - \gsmiargs{q-1}{\gamma}{i} \ofzleft }$    
    \Else
        \ set $\qhat \assign q$ and break
    \EndIf
\EndFor
    \If {$\qhat \leq k$}
        \For{$q \assign \dleft-1,\dleft-2,\ldots,\qhat$}
            \State $\eta \assign b_{q+1,\gamma} \ofzleft - b_{q,\gamma} 
            \ofzleft - \gamma 
            \br{\zleftentry_i-\zleftentry_\br{q+1}}$            
            \State $\gsmiargs{q}{\gamma}{i}\ofzleft \assign 1 - 
            \frac{\dleft-q}{q+1} \exp\br{\eta} \cdot 
            \gsmiargs{q+1}{\gamma}{i}\ofzleft$    
        \EndFor
    \EndIf
\State\Return $\brc{\gsmiargs{q}{\gamma}{i}\ofzleft}_{q=0}^k$     
\end{algorithmic}
\end{algorithm}

For $i \leq \dleft$, $c_{q,\gamma}^i\ofz$ may be arbitrarily large. Thus, 
calculating $\gsmiargs{k}{\gamma}{i} \ofz$ by \cref{eq:recursion_theta_1} might 
yield meaningless results. We therefore use a different method for $i \leq 
\dleft$, based on the following lemma. The proof appears at the end of this 
section.
\begin{lemma}\label{thm:theta_recursion_monotone}
    Let $\z \in \R[d]$, $\gamma \in \br{0,\infty}$ and $i \in 
    \brc{1,\ldots,d}$. Then 
    $c_{q,\gamma}^i\ofz$ is strictly increasing with respect to $q \in 
    \brc{1,\ldots,d}$.
\end{lemma}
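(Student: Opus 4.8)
The plan is to turn the recursion coefficient $c_{q,\gamma}^i(\z)$ into a ratio of elementary symmetric polynomials and then invoke a strict form of Newton's inequality.

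First I would set $y_j \eqdef \exp(\gamma z_j) > 0$ for $j = 1,\dots,d$, let $e_q \eqdef e_q(y_1,\dots,y_d)$ denote the $q$-th elementary symmetric polynomial, and let $y_{(1)} \ge \dots \ge y_{(d)}$ be the sorted $y_j$'s; note $y_{(i)} = \exp(\gamma z_{(i)})$ because $t \mapsto \exp(\gamma t)$ is increasing. Pulling the factor $\exp(-\gamma\sum_{i=1}^q z_{(i)})$ out of the sum in \cref{eq:def_aqg} gives $s_{q,\gamma}(\z) = e_q \big/ \prod_{i=1}^q y_{(i)}$. Substituting this into \cref{eq:def_c_qg}, the products telescope and the explicit factor $\exp(\gamma(z_i - z_{(q)})) = y_i/y_{(q)}$ absorbs the leftover $y_{(q)}$, yielding the identity
\[
c_{q,\gamma}^i(\z) \;=\; y_i \cdot \frac{e_{q-1}}{e_q}, \qquad q = 1,\dots,d .
\]
Every $e_q$ with $0 \le q \le d$ is strictly positive since all $y_j > 0$, so no division by zero occurs, and the ratio is seen to be independent of $i$ up to the positive multiplier $y_i$.

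Since $y_i > 0$ is a fixed constant, it then remains to show that $q \mapsto e_{q-1}/e_q$ is strictly increasing on $\{1,\dots,d\}$, i.e. that $e_q^2 > e_{q-1} e_{q+1}$ for $1 \le q \le d-1$ (strict log-concavity of the elementary symmetric polynomials). I would deduce this from Newton's inequalities: writing $p_r \eqdef e_r/\binom{d}{r}$, one has $p_{q-1}p_{q+1} \le p_q^2$, hence
\[
e_{q-1}e_{q+1} \;=\; \binom{d}{q-1}\binom{d}{q+1}\, p_{q-1}p_{q+1} \;\le\; \frac{\binom{d}{q-1}\binom{d}{q+1}}{\binom{d}{q}^2}\, e_q^2 \;=\; \frac{q(d-q)}{(q+1)(d-q+1)}\, e_q^2 \;<\; e_q^2 ,
\]
the final strict inequality holding because $q < q+1$ and $d-q < d-q+1$. (Alternatively, since $\prod_{j=1}^d (1 + y_j t) = \sum_{q=0}^d e_q t^q$ has only real roots, its coefficient sequence is log-concave, and the binomial-coefficient factor above upgrades this to a strict inequality, avoiding any appeal to an external result.)

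I do not expect a genuine obstacle: once the substitution is carried out, the statement is a textbook inequality. The one step that demands care is the opening algebraic reduction — one must check that the normalizing sums $\sum_{i=1}^q z_{(i)}$ hidden inside $s_{q,\gamma}$ and the explicit $z_{(q)}$-shift in $c_{q,\gamma}^i$ cancel \emph{exactly}, which is precisely what makes $c_{q,\gamma}^i(\z)/y_i = e_{q-1}/e_q$ and reduces the monotonicity claim to log-concavity.
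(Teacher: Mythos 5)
Your proof is correct, and its reduction step matches the paper's: writing $y_j=\exp(\gamma z_j)$, the normalizers $\exp(-\gamma\sum_{i\le q}z_{(i)})$ in $s_{q,\gamma}$ and the shift $\exp(-\gamma z_{(q)})$ in $c_{q,\gamma}^i$ cancel exactly, leaving (in your notation) $c_{q,\gamma}^i(\z)=y_i\,e_{q-1}/e_q$; the paper performs the same cancellation inside the ratio $c_{q+1,\gamma}^i/c_{q,\gamma}^i$ rather than in closed form. Where you genuinely diverge is in how the strict log-concavity $e_{q-1}e_{q+1}<e_q^2$ is obtained. You import Newton's inequality for the normalized means $p_q=e_q/\binom{d}{q}$ and get strictness for free from the factor $\binom{d}{q-1}\binom{d}{q+1}/\binom{d}{q}^2=\frac{q(d-q)}{(q+1)(d-q+1)}<1$, which is short and clean but rests on a classical external result. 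The paper instead proves the needed inequality from scratch by a self-contained counting argument: each product $\exp\bigl(\gamma(\sum_{j\in\Lambda}z_j+\sum_{j\in\Omega}z_j)\bigr)$ with $h=\abs{\Lambda\cap\Omega}$ is shown to admit $\binom{2q-2h}{q-h}$ representations with $\abs{\Lambda}=\abs{\Omega}=q$ versus $\binom{2q-2h}{q-h-1}$ with sizes $q+1$ and $q-1$, and $\frac{q-h+1}{q-h}>1$ gives strictness. So your route buys brevity and places the lemma in a recognizable classical context, while the paper's buys self-containment (essentially reproving the relevant Newton-type inequality combinatorially). One small caution on your parenthetical alternative: plain log-concavity of the coefficients, $e_q^2\ge e_{q-1}e_{q+1}$, cannot be "upgraded" to a strict inequality by the binomial factor---that trick only works from the normalized (Newton/ultra-log-concave) form $p_q^2\ge p_{q-1}p_{q+1}$, which is what real-rootedness actually yields---so you should phrase the fallback in the normalized form; your main argument already does this correctly.
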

It follows from \cref{thm:theta_recursion_monotone} that if $c_{k,\gamma}^i\ofz 
\leq 1$, then all of $\brc{c_{q,\gamma}^i\ofz}_{q=1}^k$ are upper-bounded by 1, 
and thus $\gsmiargs{k}{\gamma}{i}\ofz$ can be safely calculated by 
\cref{eq:recursion_theta_1,eq:altdef_c_qg}.
At indices $i$ where $c_{k,\gamma}^i\ofz > 1$, one possible approach is to 
calculate $\gsmiargs{k}{\gamma}{i}\ofz$ backwards, starting from $q=d$, by the 
recursive formula derived from \cref{eq:recursion_theta_1},
\eq[eq:recursion_theta_2]{\ensuremath{
    \gsmiargs{q}{\gamma}{i} \ofz = \twocase{ 1 - 
    \frac{\gsmiargs{q+1}{\gamma}{i} \ofz}{c_{q+1,\gamma}^i\ofz} 
    }{q=1,\ldots,d-1}
    { 1 }{q=d.}}}
By \cref{thm:theta_recursion_monotone}, if $c_{k,\gamma}^i\ofz > 1$ then 
$\frac{1}{c_{q+1,\gamma}^i\ofz} < 1$ for $q=k,\ldots,d-1$, which ensures that 
an overflow cannot take place. However, this approach would require the 
pre-calculation of $\brc{b_{q,\gamma}\ofz}_{q=0}^d$, which takes 
\cref{alg:calc_mu_consecutive} an $\mathcal{O}\br{d^2}$ operations. Instead, we 
use a similar approach with a smaller running time: For each $i$ such that 
$c_{k,\gamma}^i\ofz > 1$, instead of calculating $\gsmiargs{k}{\gamma}{i}\ofz$ 
directly, we first calculate $\brc{\gsmiargs{q}{\gamma}{i}\ofzleft}_{q=0}^k$  
by \cref{eq:recursion_theta_1,eq:altdef_c_qg,eq:recursion_theta_2}. This only 
requires the pre-calculation of $\brc{b_{q,\gamma}\ofzleft}_{q=0}^{\dleft}$, 
which takes \cref{alg:calc_mu_consecutive} an $\mathcal{O}\br{k^2}$ operations. 
From the result, we then calculate $\gsmiargs{k}{\gamma}{i}\ofz$. We now 
describe this approach in detail.

Let $i \in \brc{1,\ldots,\dleft}$ such that $c_{k,\gamma}^i\ofz > 1$. First, we 
wish to calculate $\brc{\gsmiargs{q}{\gamma}{i} \ofzleft}_{q=0}^k$. If 
$c_{k,\gamma}^i\ofzleft \leq 1$, then by \cref{thm:theta_recursion_monotone}, 
$c_{q,\gamma}^i\ofzleft \leq 1$ for $q=1,\ldots,k$ and thus calculating 
$\brc{\gsmiargs{q}{\gamma}{i} \ofzleft}_{q=0}^k$ can be done by 
\cref{eq:recursion_theta_1,eq:altdef_c_qg}.
Suppose on the other hand that $c_{k,\gamma}^i\ofzleft > 1$. Note that 
$c_{1,\gamma}^i\ofzleft$ is always smaller than 1, as shown in the proof of 
\cref{thm:theta_recursion_safe}. Thus, by \cref{thm:theta_recursion_monotone}, 
there exists a unique $\qhat \in \brc{1,\ldots,k-1}$ such that 
$c_{q,\gamma}^i\ofzleft \leq 1$ for $q=1,\ldots,\qhat$ and 
$c_{q,\gamma}^i\ofzleft > 1$ for $q=\qhat+1,\ldots,\dleft$. We therefore 
calculate $\brc{\gsmiargs{q}{\gamma}{i} \ofzleft}_{q=0}^\qhat$ by 
\cref{eq:recursion_theta_1,eq:altdef_c_qg} and then calculate the remaining 
$\brc{\gsmiargs{q}{\gamma}{i} \ofzleft}_{q=\qhat+1}^{\dleft}$ by 
\cref{eq:recursion_theta_2,eq:altdef_c_qg}. 
This step is summarized in \cref{alg:calc_theta_backwards}. It takes an 
$\mathcal{O}\br{k}$ operations for each index $i \in \brc{1,\ldots,\dleft}$, 
and an additional $\mathcal{O}\br{k^2}$ operations to calculate 
$\brc{b_{q,\gamma}\ofzleft}_{q=0}^\dleft$ by \cref{alg:calc_mu_consecutive}.

\begin{algorithm}[t] 
\caption{Calculate $\Delta_{k,t} \br{\zleft,\zright}$ for $t=0,\ldots,k$}
\label{alg:calc_alpha_and_delta}  
\begin{algorithmic}[1]
\Input $\brc{z_\br{q}}_{q=0}^k$, $\ \brc{\zleftentry_\br{q}}_{q=0}^k$,  $\ 
\brc{\zrightentry_\br{q}}_{q=0}^{\min\brc{k,\dright}}$, where $\z = 
\brs{\zleft, \zright}$, $\zleft \in \R[\dleft]$, $\zright \in \R[\dright]$, $1 
\leq k \leq \dleft$ 
\Output $\Delta_{k,t} \br{\zleft,\zright}$ for $t=0,\ldots,k$
\Variables $\Delta_t \in \R\ $ for $\ t=0,\ldots,k$ 
\State Initialize $\Delta_t \assign 0$ for $0 \leq t \leq k$
\For{$q=1,\ldots,k$}
    \State $t_a \assign \max\brc{0,q-\dright}$, $t_b \assign \min\brc{q,k}$
    \For{$t=t_b,t_b-1,\ldots,\max\brc{t_a,1}$}
        \If{$z_\br{q} \geq \zleftentry_\br{t}$}
            $\Delta_t \assign \Delta_{t-1} + \br{z_\br{q} - \zleftentry_\br{t}}$
        \Else
            $\ \Delta_t \assign \Delta_{t} + \br{z_\br{q} - 
            \zrightentry_\br{q-t}}$
        \EndIf
    \EndFor
    \If{$t_a = 0$}
        $\Delta_0 \assign \Delta_0 + \br{z_\br{q} - \zrightentry_\br{q}}$
    \Else
        $\ \Delta_{t_a-1} \assign 0$
    \EndIf
\EndFor
\State\Return $\Delta_{k,t} \br{\zleft,\zright} = \Delta_t$ for $t=0,\ldots,k$
\end{algorithmic}
\end{algorithm}

\begin{algorithm}[t]
\caption{Calculate $\gsmiargs{k}{\gamma}{i} \ofz$ from 
$\brc{\gsmiargs{q}{\gamma}{i} \ofzleft}_{q=0}^{k}$}
\label{alg:convert_theta}   
\begin{algorithmic}[1]
\Input $b_{k,\gamma}\ofz$, 
$\brc{b_{q,\gamma}\ofzright}_{q=0}^{\min\brc{k,\dright}}$, 
$b_{q,\gamma}\ofzleft$, $\gsmiargs{q}{\gamma}{i} \ofzleft$, 
$\log\br{\twosetbinom{\dleft}{\dright}{k}{q}}$, $\Delta_{k,q} 
\br{\zleft,\zright}$ for $q=0,\ldots,k$, where $\z = \brs{\zleft, \zright} \in 
\R[d]$, $\zleft \in \R[\dleft]$, $\zright \in \R[\dright]$, $\gamma \in 
\br{0,\infty}$, $\ 1 \leq k \leq \dleft$
\State Initialize $\xi \assign 0$
\For{$q=k,k-1,\ldots,\max\brc{1,k-\dright}$}
     \State $\xi \assign \xi + \exp 
     \bigbr{\log\br{\twosetbinom{\dleft}{\dright}{k}{q}} + b_{q,\gamma} 
     \ofzleft + b_{k-q,\gamma} \ofzright - b_{k,\gamma} \ofz - \gamma 
     \Delta_{k,q}\br{\zleft,\zright}} \cdot \gsmiargs{q}{\gamma}{i} \ofzleft$
\EndFor
\State\Return $\gsmiargs{k}{\gamma}{i} \ofz = \xi$
\end{algorithmic}
\end{algorithm}

Next, to calculate $\gsmiargs{k}{\gamma}{i} \ofz$ from 
$\brc{\gsmiargs{q}{\gamma}{i} \ofzleft}_{q=0}^{k}$, we define 
$\Delta_{q,t}\br{\uvec,\vvec}$ for $\uvec \in \R[m]$, $\vvec \in \R[n]$ and $0 
\leq q,t \leq m+n$, by
\eq{\ensuremath{\Delta_{q,t}\br{\uvec,\vvec} \eqdef \twocase{ 
M_q\br{\brs{\uvec, \vvec}} - M_t\br{\uvec} - M_{q-t}\br{\vvec} 
}{\max\brc{0,q-n} \leq t \leq \min\brc{q,m}}{0}{\mbox{otherwise,}}}}
where $M_q^{\br{r}} \ofz$ is as in \cref{eq:def_Mqr} and $\brs{\uvec, \vvec}$ 
denotes the concatenation of $\uvec$ and $\vvec$.
Using the above definition, we state without proof that
\eq[eq:converting_thera]{\ensuremath{
\gsmiargs{k}{\gamma}{i} \ofz = \summ[t=\max\brc{1,k-\dright}][k] \brs{ 
\frac{s_{t,\gamma} \ofzleft s_{k-t,\gamma} \ofzright}{s_{k,\gamma} \ofz}
\exp \br{-\gamma \Delta_{k,t}\br{\zleft,\zright}} \cdot \gsmiargs{t}{\gamma}{i} 
\ofzleft }.}}
 It can be shown by a combinatorial argument that 
 for any
 $\max\brc{1,k-\dright} \leq t \leq k$,
 \eq{\ensuremath{
    \frac{s_{t,\gamma} \ofzleft s_{k-t,\gamma} \ofzright}{s_{k,\gamma} \ofz}
    \exp\br{-\gamma \Delta_{k,t}\br{\zleft,\zright}} \leq 1 }}
and thus the terms in the above left-hand side do not overflow. To avoid 
dividing by an underflowed $s_{k,\gamma} \ofz$ or multiplying by an overflowed 
$s_{t,\gamma} \ofzleft$ or $s_{k-t,\gamma} \ofzright$, we reformulate 
\cref{eq:converting_thera} in terms of $b_{t,\gamma}\ofz$ and get
\eq[eq:converting_theta_logarithmic]{\ensuremath{
\gsmiargs{k}{\gamma}{i} \ofz = \sum_{t=\max\brc{1,k-\dright}}^k \brs{  \exp 
\bigbr{\log\br{\twosetbinom{\dleft}{\dright}{k}{t}} + b_{t,\gamma} \ofzleft + 
b_{k-t,\gamma} \ofzright - b_{k,\gamma} \ofz - \gamma 
\Delta_{k,t}\br{\zleft,\zright}} \cdot \gsmiargs{t}{\gamma}{i} \ofzleft},
}}
where $\twosetbinom{m}{n}{q}{t}$ is defined for $0 \leq q,t \leq m+n$ by
\eq[eq:def_alpha_coeffs]{\ensuremath{
\twosetbinom{m}{n}{q}{t} \eqdef \twocase
{\frac{\binom{m}{t}\binom{n}{q-t}}{\binom{m+n}{q}}}
{ \max\brc{0,q-n} \leq t \leq \min\brc{q,m}}
{0}
{\mbox{otherwise.}}}}
This procedure to calculate $\gsmiargs{k}{\gamma}{i} \ofz$ from 
$\brc{\gsmiargs{q}{\gamma}{i} \ofzleft}_{q=0}^{k}$ is summarized in 
\cref{alg:convert_theta}. It takes an $\mathcal{O}\br{k}$ operations for each 
$i \in \brc{1,\ldots,\dleft}$. Since $\twosetbinom{m}{n}{q}{t}$ may be 
extremely small and suffer an underflow, the calculation of 
$\twosetbinom{m}{n}{q}{t}$ is performed in log space.

\begin{algorithm}[t]  
\caption{Main algorithm for calculating $\mukg\ofz$ and $\gsm\ofz$}
\label{alg:calc_gsm_main}
\begin{algorithmic}[1]
\Input $\z = \br{z_1,\ldots,z_d} \in \R[d]$, $k \in 
\brc{0,\ldots,\floor{\tfrac{d}{2}}}$, $\gamma \in \brs{0,\infty}$
\State \textbf{if} $k=0$, \Return $\mukg\ofz = 0$, $\gsmiargs{k}{\gamma}{i}\ofz 
= 0$ for $i=1,\ldots,d$
\State \textbf{if} $\gamma = 0$, 
    \Return $\mukg\ofz = \frac{k}{d}\summ[i=1][d]z_i$, 
    $\gsmiargs{k}{\gamma}{i}\ofz = \frac{k}{d}$ for $i=1,\ldots,d$
\State \textbf{if} $\gamma = \infty$,
    \Return calculation of $\mukg\ofz$ and $\gsm\ofz$ by 
    $\mbox{\cref{eq:def_mukg}}$, 
    $\mbox{\cref{eq:gsm_at_gamma_infty,thm:theta_limits_idxset_def1}}$
\State Let $\dleft \assign \max\brc{1,2k-2}$, $\dright \assign d-\dleft$
\State Sort $\z$ such that $\min\brc{z_1,\ldots,z_\dleft} \geq 
\max\brc{z_{\dleft+1},\ldots,z_d}$
\State Denote $\zleft = \br{z_1,\ldots,z_\dleft}$, $\zright = 
\br{z_{\dleft+1},\ldots,z_d}$
\State Calculate $\mukg\ofz$, $b_{q,\gamma}\ofz$ and $z_\br{q}$ for 
$q=0,\ldots,k$ by $\mbox{\cref{alg:calc_mu_consecutive}}$
\State Calculate $b_{q,\gamma}\ofzleft$ and $\zleftentry_\br{q}$ for 
$q=0,\ldots,\dleft$ by $\mbox{\cref{alg:calc_mu_consecutive}}$
\State Calculate $b_{q,\gamma}\ofzright$ and $\zrightentry_\br{q}$ for 
$q=0,\ldots,\min\brc{k,\dright}$ by $\mbox{\cref{alg:calc_mu_consecutive}}$
\State Calculate $\Delta_{k,t} \br{\zleft,\zright}$ for $t=0,\ldots,k$ by 
$\mbox{\cref{alg:calc_alpha_and_delta}}$
\State Calculate $\log\br{\twosetbinom{\dleft}{\dright}{k}{q}}$ for 
$t=0,\ldots,k$ by \cref{eq:def_alpha_coeffs}
\For{$i=1,\ldots,d$}
    \If{$i > \dleft$ or $\gamma \br{z_i-z_\br{k}} + b_{k-1,\gamma} \ofz - 
    b_{k,\gamma} \ofz \leq \log\br{\frac{d-k+1}{k}}$}
        \State Calculate $\gsmiargs{k}{\gamma}{i}\ofz$ by 
        $\mbox{\cref{alg:calc_theta_forward}}$
    \Else
        \State Calculate $\brc{\gsmiargs{q}{\gamma}{i} \ofzleft}_{q=0}^k$ by 
        $\mbox{\cref{alg:calc_theta_backwards}}$
        \State Calculate $\gsmiargs{k}{\gamma}{i}\ofz$ by 
        $\mbox{\cref{alg:convert_theta}}$
    \EndIf
\EndFor
\State Unsort $\gsmargs{k}{\gamma}\ofz$ to match the original order of $\z$
\end{algorithmic}
\end{algorithm}

Without careful treatment, small roundoff errors may yield 
$\Delta_{k,t}\br{\zleft,\zright} \neq 0$ in cases where  
$\Delta_{k,t}\br{\zleft,\zright} = 0$. Such small errors may be amplified in 
\cref{eq:converting_theta_logarithmic} when multiplied by $\gamma$ and 
exponentiated. Moreover, such errors may lead to negative values, while it can 
be shown that $\Delta_{q,t} \br{\zleft,\zright} \geq 0$ for all $0 \leq q,t 
\leq \dleft + \dright$. To address this issue, we calculate $\Delta_{k,t} 
\br{\zleft,\zright}$ by
\eq{\ensuremath{
\Delta_{q,t} \br{\uvec,\vvec} = \threecase
{0}
{ q=0 \mbox{ or } t \notin \brs{\max\brc{0,q-n}, \min\brc{q,m}} }
{ \Delta_{q-1,t-1} \br{\uvec,\vvec} + \br{w_\br{q} - u_\br{t}}}
{\text{otherwise if } w_\br{q} \geq u_\br{t} }
{\Delta_{q-1,t} \br{\uvec,\vvec} + \br{w_\br{q} - v_\br{q-t}}}
{\text{otherwise,}}}}
where $\uvec \in \R[m]$, $\vvec \in \R[n]$ and $\w = \brs{\uvec,\vvec}$. This 
ensures that the result equals zero whenever $\Delta_{q,t} 
\br{\zleft,\zright}=0$, and is always nonnegative. This  recursive calculation 
is summarized in \cref{alg:calc_alpha_and_delta}. It takes 
$\mathcal{O}\br{k^2}$ operations and only needs to be done once, as its output 
does not depend on an index $i$.

Finally, our main algorithm to calculate $\mukg \ofz$ and $\gsm \ofz$ for $k 
\in \brc{0,\ldots,\floor{\tfrac{d}{2}}}$ and $\gamma \in \brs{0,\infty}$ is 
summarized in \cref{alg:calc_gsm_main}. It takes $\mathcal{O}\br{kd}$ 
operations and $\mathcal{O}\br{k}$ memory.


\subsection{Proofs}
\label{appendix:proofs_computing_gsm}

\begin{proof}[Proof of \cref{thm:theta_recursion_safe}]
    \label{proof:theta_recursion_safe}
It can be shown that for any $i,q \in \brc{1,\ldots,d}$,
\eq[eq:theta_recursion_factor_a_to_b]{\ensuremath{\frac{q}{d-q+1} \exp 
\br{\gamma \br{z_i-z_\br{q}} + b_{q-1,\gamma} \ofz - b_{q,\gamma} \ofz} = \exp 
\br{\gamma \br{z_i-z_\br{q}}} \frac{s_{q-1,\gamma} \ofz}{s_{q,\gamma} \ofz}.}}
For $q=1$, using the definition \cref{eq:def_aqg} of $s_{q,\gamma} \ofz$, the 
right-hand side of \cref{eq:theta_recursion_factor_a_to_b} amounts to
\eq{\ensuremath{\exp\br{\gamma \br{z_i-z_\br{1}}} \frac{s_{0,\gamma} 
\ofz}{s_{1,\gamma} \ofz} = \frac{\exp\br{\gamma 
\br{z_i-z_\br{1}}}}{\summ[j=1][d]\exp\br{\gamma \br{z_j-z_\br{1}}}} < 1.}}
Thus, our claim holds for $q=1$. 
Next, suppose by contradiction that the claim does not hold for $q \geq 2$. 
Then, by \cref{eq:theta_recursion_factor_a_to_b} and the definition of 
$s_{q,\gamma}$ in \cref{eq:def_aqg},
\eq{\ensuremath{ 1
    &< \exp \br{\gamma \br{z_i-z_\br{q}}} \frac{s_{q-1,\gamma} 
    \ofz}{s_{q,\gamma} \ofz}
    = \exp \br{\gamma \br{z_i-z_\br{q}}}
    \frac{    \sum_{\abs{\Lambda} = q-1} \exp \br{\gamma 
    \br{\sum_{j\in\Lambda}z_j - \summ[j=1][q-1] z_{\br{j}}}}}
    {\sum_{\abs{\Lambda} = q} \exp \br{\gamma \br{\sum_{j\in\Lambda}z_j - 
    \summ[j=1][q] z_{\br{j}}}}}
    \\&= \exp \br{\gamma z_i}
    \frac{    \sum_{\abs{\Lambda} = q-1} \exp \br{\gamma \sum_{j\in\Lambda}z_j 
    }}
    {\sum_{\abs{\Lambda} = q} \exp \br{\gamma \sum_{j\in\Lambda}z_j }}.
}}
Thus,
\begin{equation}
\exp \br{\gamma z_i} \summ[\abs{\Lambda} = q-1] \exp\Bigbr{\gamma  
\summ[j\in\Lambda]z_j } > \summ[\abs{\Lambda} = q] \exp\Bigbr{\gamma 
\summ[j\in\Lambda]z_j }.\label{eq:proof_theta_recursion_safe_1}
\end{equation}
The left-hand side of \cref{eq:proof_theta_recursion_safe_1} can be decomposed 
as 
\begin{equation}
\begin{aligned}
\exp \br{\gamma z_i} \summ[\abs{\Lambda} = q-1] \exp\Bigbr{\gamma  
\summ[j\in\Lambda]z_j}
    = \summ[{\scriptstyle\abs{\Lambda} = q,\ }{\scriptstyle i \in \Lambda}] 
    \exp\Bigbr{\gamma  \summ[j\in\Lambda]z_j }
    + \exp \br{2\gamma z_i} \summ[{\scriptstyle\abs{\Lambda} = q-2,\ 
    }{\scriptstyle i \notin \Lambda}] \exp\Bigbr{\gamma  \summ[j\in\Lambda]z_j 
    }.\label{eq:proof_theta_recursion_safe_2} 
\end{aligned}
\end{equation}
Inserting \cref{eq:proof_theta_recursion_safe_2} to 
\cref{eq:proof_theta_recursion_safe_1}, we have
\begin{equation}
\begin{aligned}
\label{eq:proof_theta_recursion_safe_3}
\exp \br{2\gamma z_i} \summ[\abs{\Lambda} = q-2,\ i \notin \Lambda] 
\exp\Bigbr{\gamma  \summ[j\in\Lambda]z_j}
&> \summ[\abs{\Lambda} = q] \exp \Bigbr{\gamma \summ[j\in\Lambda]z_j }
- \summ[\abs{\Lambda} = q,\ i \in \Lambda] \exp\Bigbr{\gamma  
\summ[j\in\Lambda]z_j}
\\&= \summ[\abs{\Lambda} = q,\ i \notin \Lambda] \exp\Bigbr{\gamma  
\summ[j\in\Lambda]z_j}.
\end{aligned}
\end{equation}
We now represent each set $\Lambda$ on the right side of 
\cref{eq:proof_theta_recursion_safe_3} by a disjoint union $\Lambda=\Omega 
\mathbin{\mathaccent\cdot\cup} J$, where $\Omega$ and $J$ are sets of sizes 
$q-2$ and $2$ respectively. To account for the multiple number of such 
representations for each $\Lambda$, we divide by $\binom{q}{2}$. Thus,
\begin{equation}
\label{eq:proof_theta_recursion_safe_4}
\exp \br{2 \gamma z_i} \summ[\twofloors{\scriptstyle\abs{\Omega} = 
q-2}{\scriptstyle i \notin \Omega}] \exp \Bigbr{\gamma \summ[j\in\Omega]z_j } > 
\frac{1}{\binom{q}{2}} \summ[\twofloors{\scriptstyle\abs{\Omega} = 
q-2}{\scriptstyle i \notin \Omega}]  \Biggbrs{ \exp\Bigbr{\gamma\summ[j \in 
\Omega]z_j } \summ[\twofloors{\scriptstyle \abs{J} = 2,\ i \notin 
J}{\scriptstyle J\cap\Omega=\phi}] \exp \Bigbr{\gamma \summ[j\in J]z_j}}.
\end{equation}
For any index-set $\Omega$ of size $q-2$, denote
\eq{\ensuremath{w_\Omega \eqdef \frac{1}{\binom{q}{2}} \exp\Bigbr{\gamma 
\summ[j\in\Omega]z_j},\qquad s_\Omega \eqdef \summ[\twofloors{\scriptstyle 
\abs{J} = 2,\ i \notin J}{\scriptstyle J\cap\Omega=\phi}] 
\exp\Bigbr{\gamma\summ[j \in J]z_j }.}}
Reformulating \cref{eq:proof_theta_recursion_safe_4} with the above 
definitions, we have
\eq{\ensuremath{\exp \br{2 \gamma z_i} \binom{q}{2} \summ[\abs{\Omega} = q-2,\ 
i \notin \Omega] w_\Omega > \summ[\abs{\Omega} = q-2,\ i\notin \Omega] w_\Omega 
s_\Omega.}}
Therefore,
\begin{equation}
\label{eq:proof_theta_recursion_safe_5}
\exp \br{2 \gamma z_i} > \frac{1}{\binom{q}{2}}\frac{\summ[\abs{\Omega} = q-2,\ 
i \notin \Omega]  w_\Omega s_\Omega}{\summ[\abs{\Omega} = q-2,\ i\notin \Omega] 
w_\Omega}
    \geq \frac{1}{\binom{q}{2}} \min_{\abs{\Omega} = q-2,\ i\notin \Omega} 
    s_\Omega,
\end{equation}    
The minimum in the right-hand side of \cref{eq:proof_theta_recursion_safe_5} is 
attained by any index-set $\Omega$ that corresponds to the $q-2$ largest 
entries among $\setst{z_j}{j\neq i}$. By our assumption, the subvector 
$\br{z_1,\ldots,z_{2k-2}}$ {contains the $2k-2$ largest entries of $\z$, and 
note that $2k-2 > q-2$.} Thus, there exists an index-set $\Omega_0 \subseteq 
\brc{1,\ldots,2k-2}$ of size $q-2$ that minimizes $s_\Omega$, and $i \notin 
\Omega_0$. Therefore, by \cref{eq:proof_theta_recursion_safe_5},
\begin{equation}
\label{eq:proof_theta_recursion_safe_6}
\exp \br{2 \gamma z_i} > \frac{1}{\binom{q}{2}} s_{\Omega_0} = 
\frac{1}{\binom{q}{2}}\summ[\twofloors{\scriptstyle \abs{J} = 2,\ i \notin 
J}{\scriptstyle J\cap\Omega_0=\phi}] \exp\Bigbr{\gamma\summ[j \in J]z_j }.
\end{equation}

By our assumption on $\z$, the subvector $\br{z_1,\ldots,z_{2k-2}}$ {contains 
$2k-2-\br{q-2}$ (and thus at least $q$) entries} $z_j$ such that $j \notin 
\Omega_0$ and $z_j \geq z_i$. Let $\Lambda_0$ be the index-set of $q$ such 
entries. Then $\abs{\Lambda_0}=q$, $i \notin \Lambda_0$ and $\Lambda_0 \cap 
\Omega_0 = \phi$. Keeping only the summands for which $J \subseteq \Lambda_0$ 
in \cref{eq:proof_theta_recursion_safe_6}, and discarding of all other 
summands, yields
\eq[eq:proof_theta_recursion_safe_7]{\ensuremath{\exp \br{2 \gamma z_i} 
    > \frac{1}{\binom{q}{2}} \summ[\abs{J} = 2,\ J \subseteq \Lambda_0] 
    \exp\Bigbr{\gamma\summ[j \in J]z_j }.}}
Note that in \cref{eq:proof_theta_recursion_safe_7}, in the sum over possible 
sets $J$, $z_j \geq z_i$ for all $j \in J$. Therefore,
\begin{equation*}
\exp \br{2 \gamma z_i} 
    > \frac{1}{\binom{q}{2}} \summ[\abs{J} = 2,\ J \subseteq \Lambda_0] 
    \exp\br{2\gamma z_i} = \exp\br{2\gamma z_i},
\end{equation*}
which is a contradiction. Thus, our claim is proven.
\end{proof}

\begin{proof}[Proof of \cref{thm:theta_recursion_monotone}]
    \label{proof:theta_recursion_monotone}
    By \cref{eq:theta_recursion_factor_a_to_b}, we need to show that for 
    $i=1,\ldots,d$, 
    the quantity $\exp \br{\gamma \br{z_i-z_\br{q}}} \frac{s_{q-1,\gamma} 
    \ofz}{s_{q,\gamma} \ofz}$
    is strictly increasing with respect to $q$. From \cref{eq:def_aqg},
    \eq{\ensuremath{
        &\frac{\exp \br{\gamma \br{z_i-z_\br{q+1}}} \frac{s_{q,\gamma} 
        \ofz}{s_{q+1,\gamma} \ofz}}
        {\exp \br{\gamma \br{z_i-z_\br{q}}} \frac{s_{q-1,\gamma} 
        \ofz}{s_{q,\gamma} \ofz}}
        = \frac{\exp \br{\gamma z_\br{q}}}{\exp \br{\gamma z_\br{q+1}}} 
        \frac{\br{s_{q,\gamma} \ofz}^2}{s_{q+1,\gamma} \ofz s_{q-1,\gamma} \ofz}
        \\ &= \frac{\exp \br{\gamma z_\br{q}} \summfixed[\abs{\Lambda} = q] 
        \exp \br{\gamma \br{\summfixed[j\in\Lambda] z_j - \summfixed[j=1][q] 
        z_{\br{j}}}}  \summfixed[\abs{\Omega} = q] \exp \br{\gamma 
        \br{\summfixed[j\in\Omega] z_j - \summfixed[j=1][q] z_{\br{j}}}}}{\exp 
        \br{\gamma z_\br{q+1}} \summfixed[\abs{\Lambda} = q+1] \exp \br{\gamma 
        \br{\summfixed[j\in\Lambda] z_j - \summfixed[j=1][q+1] z_{\br{j}}}} 
        \summfixed[\abs{\Omega} = q-1] \exp \br{\gamma 
        \br{\summfixed[j\in\Omega] z_j - \summfixed[j=1][q-1] z_{\br{j}}}}}
        \\ &= \frac{\sum_{\abs{\Lambda} = \abs{\Omega} = q } \exp \br{\gamma 
        \br{\sum_{j\in\Lambda}z_j + \sum_{j\in\Omega}z_j}} 
        }{\sum_{\abs{\Lambda} = q+1,\ \abs{\Omega} = q-1 } \exp \br{\gamma 
        \br{\sum_{j\in\Lambda}z_j + \sum_{j\in\Omega}z_j}}}.
    }}     
    Each term $\exp \br{\gamma \br{\sum_{j\in\Lambda}z_j + 
    \sum_{j\in\Omega}z_j}}$ in the above display is represented by two index 
    sets $\Lambda,\Omega \subseteq \brs{d}$. In the denominator, $\Lambda$ and 
    $\Omega$ consist of $q+1$ and $q-1$ unique indices respectively, whereas in 
    the numerator, both sets consist of $q$ unique indices. Note that each 
    exponential term in the numerator and denominator may appear multiple 
    times, as it may have several representations by different combinations 
    $\Lambda$ and $\Omega$. We shall now show that each exponential term 
    appears more times in the numerator than in the denominator, whence their 
    quotient is larger than 1. Let $\exp \br{\gamma \br{\sum_{j\in\Lambda}z_j + 
    \sum_{j\in\Omega}z_j}}$ be a term that appears in the denominator, with 
    $\abs{\Lambda}=q+1$ and  $\abs{\Omega}=q-1$. Let $h = \abs{\Lambda \cap 
    \Omega}$, and note that $0 \leq h \leq q-1$. By a combinatorial argument, 
    it can be shown that the aforementioned term appears exactly 
    $\binom{2q-2h}{q-h-1}$ times in the denominator and $\binom{2q-2h}{q-h}$ 
    times in the numerator. Furthermore,
    \eq{\ensuremath{\frac{\binom{2q-2h}{q-h}}{\binom{2q-2h}{q-h-1}}
        = \frac{q-h+1}{q-h} > 1,}}
    which completes our proof.
\end{proof}


\section[Appendix E. Computational details]{Computational details} 
\label{appendix:computational_details}
%

\subsection{GSM Internal Parameters}

We implemented our GSM method in Matlab, except for the computation of 
$\taukg\ofx$ and $\wfunc \ofx$, which was coded in C. We now describe several 
of the internal parameters in our implementation.  

\myparagraph{Updating $\gamma$ in \cref{alg:homotopy}} 
Let $\x_0^*$ be a global minimizer of \cref{pr:P2lzero}, which corresponds to 
$\gamma_0=0$. The next value $\gamma_1>0$ is chosen so that the weight vector 
$\w_{k,\gamma_1} \br{\x_0^*}$ is nearly uniform, and thus the first instance of 
\cref{pr:P2lg} is close to the convex \cref{pr:P2lzero}. For a small $\delta_0 
> 0$, we set 
\begin{equation}
\label{eq:def_gamma1}
\begin{split}
\gamma_1 &= \delta_0 \cdot \br{\max_{\abs{\Lambda}=d-k} \sum_{i \in \Lambda} 
\abs{\x_0^*}_i - \min_{\abs{\Lambda}=d-k} \sum_{i \in \Lambda} 
\abs{\x_0^*}_i}^{-1}.
\end{split}
\end{equation}
This rule guarantees that the ratio of the largest to smallest exponents in 
\cref{eq:def_wkg} is $\exp \br{\delta_0} \approx 1 + \delta_0$. This, in turn, 
{can be shown to imply} that all  entries of $\w_{k,\gamma_1} \br{\x_0^*}$ are 
nearly uniform,  
\eq{\ensuremath{\abs{\frac{w^i_{k,\gamma_1}\br{\x_0^*} - 
w^j_{k,\gamma_1}\br{\x_0^*}}{w^i_{k,\gamma_1}\br{\x_0^*}}} \lessapprox  
\delta_0 
\qquad \forall i,j \in \brs{d}.   }}
Next, for $r \geq 2$, $\gamma$ is increased exponentially by a user-chosen 
growth rate parameter $\delta_\gamma>0$, 
 \eq[eq:gamma_increase]{\ensuremath{\gamma_r = \br{1+\delta_\gamma} 
 \gamma_{r-1}.
 }}

To accelerate the homotopy scheme, once every $n_\gamma$ iterations, we 
increase $\gamma$ by a multiplicative factor $\br{1+\bar{\delta}_\gamma}$,
where $\bar{\delta}_{\gamma}$ is much larger than $\delta_{\gamma}$.  If the 
resulting iterate $\x_{r}^*$ satisfies
\eq[eq:cond_gamma_large_increase]{\ensuremath{\norm{\x_{r}^* - \x_{r-1}^*}_1 
\leq \frac{\norm{\y}_2}{\max_{i=1,\ldots,d} \norm{{\bf a}_i}_2} 
\varepsilon_{\x},}}
with $\varepsilon_{\x}= 10^{-6}$, then the larger increase is accepted and the 
algorithm proceeds as usual. Otherwise the increase is revoked, the algorithm 
backtracks to the beginning of iteration $r$, and $\gamma_r$ is set by 
\cref{eq:gamma_increase}.
We used $\delta_0 = 10^{-4}$, $\delta_\gamma = 0.02$, $\bar{\delta}_\gamma = 9$ 
and $n_{\gamma}=10$.


\myparagraph{Stopping criteria}
\Cref{alg:mm} was stopped if
$$\Ftwolg \br{\x^t} \geq \br{1-10^{-6}}\Ftwolg \br{\x^{t-1}},$$
or if for two consecutive iterations
$$\Ftwolg \br{\x^t} \geq \br{1-10^{-3}}\Ftwolg \br{\x^{t-1}}.$$ 

For \cref{alg:homotopy} we used the following stopping criteria: (i) 10 
consecutive iterations with $\x_{r}^*$ all $k$\nbdash sparse and with the same 
support; (ii) 4 consecutive values of $\gamma$ with the weight vector 
$\w_{k,\gamma_r} \br{\x_r^*}$ being approximately $\br{d-k}$-sparse, namely  
$\taudmk\br{\w_{k,\gamma_r} \br{\x_r^*}} \leq \br{d-k} \cdot \varepsilon_w$,
with $\varepsilon_w = 10^{-5}$. Once one of these conditions was met, a final 
iteration was run with $\gamma = \infty$.

\myparagraph{Dealing with ambiguous vectors}
As discussed in \cref{sec:convergence_analysis}, \cref{alg:homotopy} may output 
an ambiguous vector $\xhat$ that is not a local optimum of \cref{pr:P2l} 
or~\cref{pr:P1l}. Empirically, such $\xhat$ invariably has support size smaller 
than $k$. 
To overcome this problem, we coded two post-processing schemes, a slow and 
thorough one and a faster alternative. 
The slow scheme augments the support by running LS-OMP. 
The fast scheme, in contrast, performs only a single iteration of OMP. 
Specifically, given an ambiguous vector
$\xhat$ with $\|\xhat\|_0<k$ and
support set $\Lambda$, it picks the index $i\notin\Lambda$ that maximizes
$|\inprod{{\bf a}_i}{A\xhat-\y}|$. 
It then computes a new solution by least squares minimization $\|A\x-\y\|_2$
over vectors $\x$ with support 
$\Lambda\cup \{i\}$. 

In \cref{sec:sparse_recovery_simulation}, where $d \leq 1000$, we applied the 
slow scheme, whereas in \cref{sec:comparison_mip}, where $d \geq 5000$, we 
applied the faster scheme.

\subsection[Optimizing F by GSM, DC-programming and ADMM]{Optimizing 
$\Ftwol$ 
by GSM, DC-programming and ADMM}

Here we present some technical details on the comparison between 
the DC Programming and ADMM methods, as described in \cite{bertsimas2017trimmed}
and our GSM in solving problem~\cref{pr:P2l}
at a given value of $\lambda$. 

Note that instead of \cref{pr:P2l}, \cite{bertsimas2017trimmed} 
considered the following regularized objective, with $\eta\ll 1$
\[
\min_{\x} \frac{1}{2} \norm{A \x - y}_2^2 + \lambda \tauk \br{\x} + \eta 
\norm{\x}_1.
\]
We implemented in Matlab their DC programming and ADMM schemes, using MOSEK 
\cite{mosekcite} and YALMIP \cite{yalmipcite}. We used two values
for $\eta$: $10^{-2}$ as in \cite{bertsimas2017trimmed}, and $\eta=10^{-6}$. 

In some problem instances, the performance of DC-programming and of ADMM may 
strongly depend on their initial vector $\x^0$. We considered the two following 
initializations: 
(i) the standard one, $\x^0=\zerovec$; (ii) $\x^0$ is the
solution of \cref{pr:P2l} with 
a very small $\lambda = 10^{-5} \lambdabar$, obtained by the same method
started from the zero vector. 

For a fair comparison, we ran two variants of our GSM: (i) vanilla GSM, as 
described in the main text; (ii) we first ran vanilla GSM, but with
$\lambda=10^{-5}\lambdabar$. Denote the
resulting reference solution by $\x_{\mbox{\tiny Ref}}$. 
We then ran \cref{alg:homotopy}
but with a following change in line \ref{line:homotopy_solve}: If for $\gamma = 
\gamma_r$, $\Ftwolg\br{\xtilde} < \Ftwolg\br{\x_{r-1}^*}$, then \cref{alg:mm} 
is initialized by $\x_{\mbox{\tiny Ref}}$ instead of by $\x_{r-1}^*$.

For all three methods, the best of its two outputs, in terms of $\Ftwol\ofx$, 
was chosen.

\subsection[Methods for solving (P0)]{Methods for solving \cref{pr:P0}}
%
We now present technical details for the various methods in solving 
\cref{pr:P0},
at a given sparsity level $k$. Several of these methods output multiple 
solutions that depend on say a regularization parameter $\lambda$ or even two 
parameters. 
For each tested method, we took each of
its candidate solutions, projected it
to its $k$ largest-magnitude entries, and solved a least-squares problem on its 
support. The resulting solution with the smallest $\ell_2$ residual norm was 
chosen.

\myparagraph{GSM}
For the simulations of
\cref{sec:sparse_recovery_simulation}, which involve relatively small matrices,
we ran our GSM method with 50 values of $\lambda$. 
For the power\nbdash 2 variant, we used exponentially increasing values
$\lambda_i = 10^{-8 \frac{50-i}{50-1}} \br{1+\delta_\lambda}\bar{\lambda}$ with 
$i=1,\ldots,50$, $\delta_\lambda = 10^{-4}$ and $\lambdabar$ as in 
\cref{eq:lambdabar}. For the power\nbdash 1 variant, we used the sequence
\eq{\ensuremath{\lambda_i = \br{1+\delta_\lambda} \lambdalarge \cdot \tan \br{ 
\frac{50-i}{50-1} \arctan \br{ \frac{1-\delta_\lambda}{1+\delta_\lambda} \cdot 
\frac{\lambdasmall}{\lambdalarge}}  + \frac{i-1}{50-1} \frac{\pi}{4} }, \quad 
i=1,\ldots,50,}}
with $\lambdasmall$, $\lambdalarge$ as in \cref{eq:def_lambda_b}. This set of 
values is roughly twice denser near $\lambdasmall$ than near $\lambdalarge$. 
For faster runtime, in both power\nbdash 1 and power\nbdash 2 variants, if the 
obtained solutions were $k$\nbdash sparse for 7 consecutive values of 
$\lambda$, the algorithm was stopped without further executions for higher 
values of $\lambda$. The condition used to determine if $\x$ is $k$\nbdash 
sparse was
\eq[eq:def_epsilon_x]{\ensuremath{\tauk\ofx \leq k \cdot \varepsilon_{\x},}}
with $\varepsilon_{\x} = 10^{-6}$. 

For the simulations in \cref{sec:comparison_mip}, which involved much larger 
matrices, we used
only the following 7 values of $\lambda$: $\lambda_i = 10^{-3 \frac{7-i}{7-1}} 
\br{1+\delta_\lambda}\bar{\lambda}$ with $i=1,\ldots,7$. The algorithm was 
stopped at the first value of $\lambda$ that yielded a $k$-sparse solution.

\myparagraph{$\ell_p$ minimization}
IRLS and IRL1 minimize the $\ell_p$-regularized problem 
\begin{equation*}
\min_{\x} \frac{1}{2}\norm{A \x - \y}^2 + \lambda \norm{\x}_p^p
\end{equation*}
as a surrogate for the original problem~\cref{pr:P0}. In our simulations we ran 
both methods with the following 11 values of 
$p \in \brc{10^{-8}, 0.05, 0.1, 0.2, 0.3, \ldots, 0.9}$,
similarly to \cite{foucart2009sparsest}. 
For each $p$, we considered 90 exponentially increasing values of $\lambda$,
$\lambda_i = 10^{-8} \cdot 1.5^{i-1},\ i=1,\ldots,90$. 

As mentioned in \cref{sec:mm}, IRLS and IRL1 require weight regularization.
For IRLS, we used
\eq[eq:irls_a]{\ensuremath{ 
w^{t}_i \eqdef \br{\abs{x^{t-1}_i}^2+\varepsilon^2}^{\frac{p}{2}-1},
\qquad
\x^{t} \eqdef \argmin_{\x} \frac{1}{2}\norm{A \x - \y}_2^2 + \lambda 
\summ[i=1][d] {w^{t}_i \abs{x_i}^2},
}}
initialized by $\varepsilon=1$ and $\x^0 = \argmin_{\x} \norm{\x}_1\ 
\text{s.t.}\ A\x = \y$, 
as in \cite{lai2013improved}. If for three consecutive iterations the objective 
decreases by less than 0.1\%, $\varepsilon$ is updated by the adaptive rule of 
\cite{lai2013improved},
$\varepsilon \assign \min \brc{\varepsilon, \alpha_{\varepsilon} 
\abs{\x^{t}}_{\br{k+1}} }$, 
with $\alpha_\varepsilon = 0.9$. Iterations were stopped once 
$\varepsilon<10^{-8}$, or when the iterates $\x^t$ were $k$\nbdash sparse with 
a constant support for 10 consecutive iterations. 

For IRL1, we used the following reweighting scheme, similar to 
\cite{chartrand2008iteratively}, 
\eq{\ensuremath{ w^{t}_i &\eqdef \br{\abs{x^{t-1}_i}+\varepsilon}^{p-1},
\qquad
\x^{t} \eqdef \argmin_{\x} \frac{1}{2}\norm{A \x - \y}_2^2 + \lambda 
\summ[i=1][d] {w^{t}_i \abs{x_i}},}}
with the same initialization as for IRLS, and using the same update rule for 
$\varepsilon$.




\myparagraph{Least-Squares OMP}
LS-OMP was implemented in Matlab according to \cite[pg. 
37-38]{elad2010sparsebook},
modified to stop when reaching the target sparsity level $k$. 

\myparagraph{Basis Pursuit Denoising}
We solved the following constrained form
of BP, 
\eq{\ensuremath{\min_{\x} \norm{\x}_1\ \text{s.t.}\ \norm{A\x-\y}_2 \leq 
\delta,}}
with the following $61$ values of $\delta \in \setst{2^i 
\norm{A\x_0-\y}}{-30\leq i\leq 30}$.


\myparagraph{Iterative Support Detection}
We used the Matlab implementation of the Threshold-ISD algorithm \cite[sec. 
4.1]{wang2010sparse}, provided by the authors. 
Their method requires as input the noise parameter $\sigma$. We gave ISD a 
slight advantage by providing it with the ground-truth value. 



\myparagraph{Coordinate descent + local combinatorial search}
We used the \texttt{L0Learn} R package version 1.2.0, implemented by the 
authors of \cite{hazimeh2020fast}. 
To comply with our problem setup, 
we used the parameters \texttt{penalty="L0"} and \texttt{intercept=FALSE}. In 
all experiments, we used \texttt{maxSuppSize=$k$}, with $k$ being the 
ground-truth cardinality of $\x_0$.

\end{appendices}

\end{document}